\DeclareFontFamily{OMX}{MnSymbolE}{}
\DeclareSymbolFont{MnLargeSymbols}{OMX}{MnSymbolE}{m}{n}
\DeclareFontShape{OMX}{MnSymbolE}{m}{n}{
    <-6>  MnSymbolE5
   <6-7>  MnSymbolE6
   <7-8>  MnSymbolE7
   <8-9>  MnSymbolE8
   <9-10> MnSymbolE9
  <10-12> MnSymbolE10
  <12->   MnSymbolE12
}{}
\DeclareFontShape{OMX}{MnSymbolE}{b}{n}{
    <-6>  MnSymbolE-Bold5
   <6-7>  MnSymbolE-Bold6
   <7-8>  MnSymbolE-Bold7
   <8-9>  MnSymbolE-Bold8
   <9-10> MnSymbolE-Bold9
  <10-12> MnSymbolE-Bold10
  <12->   MnSymbolE-Bold12
}{}
\let\llangle\@undefined
\let\rrangle\@undefined
\DeclareMathDelimiter{\llangle}{\mathopen}%
                     {MnLargeSymbols}{'164}{MnLargeSymbols}{'164}
\DeclareMathDelimiter{\rrangle}{\mathclose}%
                     {MnLargeSymbols}{'171}{MnLargeSymbols}{'171}
\newtheorem{theorem}{Theorem}[section]
\newtheorem{lemma}{Lemma}[section]
\newtheorem{corollary}{Corollary}[section]
\newtheorem{definition}{Definition}[section]
\newtheorem{remark}{Remark}[section]
\newtheorem{claim}{Claim}[section]
\newtheorem{ass}{Assumption}[section]
\newtheorem{fact}{Fact}[section]
\newcommand{\ind}{\mathbbm{1}}
\newcommand{\R}{\mathbb{R}}
\newcommand{\E}{\mathop{\mathbb{E}}}
\newcommand{\argmin}{\mathop{\mathrm{argmin}}}
\newcommand{\cX}{\mathcal{X}}
\newcommand{\cY}{\mathcal{Y}}
\newcommand{\cP}{\mathcal{P}}
\newcommand{\cG}{\mathcal{G}}
\newcommand{\cF}{\mathcal{F}}
\newcommand{\cS}{\mathcal{S}}
\newcommand{\round}{\text{Round}}
\newcommand{\regret}{\text{regret}}
\newcommand{\cA}{\mathcal{A}}
\newcommand{\tcA}{\tilde{\cA}}
\newcommand{\Ber}{\text{Ber}}
\newcommand{\Tau}{\mathcal{T}}
\newcommand{\cL}{\mathcal{L}}
\newcommand{\cD}{\mathcal{D}}
\newcommand{\hp}{\hat{p}}
\newcommand{\hy}{\hat{y}}
\newcommand{\bK}{\overline{K}}
\newcommand{\bsK}{\overline{sK}}
\newcommand{\barf}{\bar{f}}
\newcommand{\sign}{\text{sign}} 
\newcommand{\ts}{\mathbf{s}}
\newcommand{\tx}{\mathbf{x}}
\newcommand{\thp}{\mathbf{\hat{p}}}
\newcommand{\ti}{\mathbf{i}}
\newcommand{\tj}{\mathbf{j}}
\newcommand{\tz}{\mathbf{z}}
\newcommand{\cV}{\mathcal{V}}
\newcommand{\forecast}{\mathscr{F}}
\newcommand{\adv}{\text{Adv}}
\newcommand{\forecastSwap}{\forecast_{\text{ContextualSwap}}}
\newcommand{\cO}{\mathcal{O}}
\newcommand{\scO}{s\mathcal{O}}
\newcommand{\cZ}{\mathcal{Z}}
\newcommand{\pb}{PB}
\newcommand{\cPq}{\cP_{\text{quantile}}}
\newcommand{\cLcvx}{\cL_{\text{convex}}}
\newcommand{\cLv}{\cL_{\mathcal{V}}}
\newcommand{\tcLv}{\tilde{\cL}_{\mathcal{V}}}
\newcommand{\tpi}{\tilde{\pi}}
\newcommand{\fat}{\text{fat}}
\newcommand{\lips}{\gamma}
\newcommand{\el}{\text{Trunc}}
\title{Oracle Efficient Online Multicalibration and Omniprediction}
 \author[1]{Sumegha Garg\thanks{Supported by NSF CCF-1763299 and the Simons Collaboration on the Theory of Algorithmic Fairness}}
 \author[1]{Christopher Jung\thanks{Supported by NSF Award IIS-1908774 and the Simons Collaboration on the Theory of Algorithmic Fairness.}}
 \author[1]{Omer Reingold\thanks{Supported by Simons Foundation Investigators Award 689988 and the Sloan Foundation Investigator Grant 2020-13941}}
 \author[2]{Aaron Roth\thanks{Supported in part by the Simons Collaboration on the Theory of Algorithmic Fairness, and NSF grants FAI-2147212 and CCF-2217062.}}
 \affil[1]{Stanford University}
 \affil[2]{University of Pennsylvania}
\begin{document}

\maketitle

\begin{abstract}
A recent line of work has shown a surprising connection between multicalibration, a multi-group fairness notion, and omniprediction, a learning paradigm that provides simultaneous loss minimization guarantees for a large family of loss functions \cite{GopalanKRSW22, GopalanHKRW23, gopalan2023characterizing, globus2023multicalibration}. Prior work studies omniprediction in the batch setting. We initiate the study of omniprediction in the online adversarial setting. Although there exist algorithms for obtaining notions of multicalibration in the online adversarial setting \cite{gupta2021onlinevalid}, unlike batch algorithms, they work only for small finite classes of benchmark functions $\cF$, because they require enumerating every function $f \in \cF$ at every round. In contrast, omniprediction is most interesting for learning theoretic \emph{hypothesis classes} $\cF$, which are generally continuously (or at least exponentially) large.

We develop a new online multicalibration algorithm that is well defined for infinite benchmark classes $\cF$ (e.g. the set of all linear functions), and is oracle efficient --- i.e. for any class $\cF$, the algorithm has the form of an efficient reduction to a no-regret learning algorithm for $\cF$.  The result is the first efficient online omnipredictor --- an oracle efficient prediction algorithm that can be used to simultaneously obtain no regret guarantees to all  Lipschitz convex loss functions. For the class $\cF$ of linear functions, we show how to make our algorithm efficient in the worst case (i.e. the ``oracle'' that we need is itself efficient even in the worst case). We show how our results extend beyond mean multicalibration to quantile multicalibration, with applications to oracle efficient multivalid conformal prediction.  Finally, we show upper and lower bounds on the extent to which our rates can be improved: our oracle efficient algorithm actually promises a stronger guarantee called ``swap-omniprediction'', and we prove a lower bound showing that obtaining $O(\sqrt{T})$ bounds for swap-omniprediction is impossible in the online setting. On the other hand, we give a (non-oracle efficient) algorithm which can obtain the optimal $O(\sqrt{T})$ omniprediction bounds without going through multicalibration, giving an information theoretic separation between these two solution concepts. We leave the problem of obtaining $O(\sqrt{T})$ omniprediction bounds in an oracle efficient manner as our main open problem.

\end{abstract}

\thispagestyle{empty} \setcounter{page}{0}
\clearpage

 \tableofcontents
 \thispagestyle{empty} \setcounter{page}{0}
 \clearpage

\section{Introduction}
\paragraph{Omniprediction} An omnipredictor, informally, is a prediction algorithm that predicts a sufficient statistic for optimizing a wide range of loss functions in a way that is competitive with some benchmark class of models $\cF$. Gopalan et al. introduced the concept of  omnipredictors in \cite{GopalanKRSW22} and showed that a regression model that is appropriately \emph{muticalibrated} with respect to some benchmark class of models $\cF$ is an omnipredictor with respect to all Lipschitz convex loss functions and the benchmark class $\cF$. In other words, for any Lipschitz convex loss function $\ell$, the regression function can be efficiently and locally post-processed (in a way specific to $\ell$) to obtain loss that is competitive with the best model $f \in \cF$ for that loss function. Here ``locally'' means that the post-processing depends only on the prediction made for a particular point $x$, independently of the rest of the model, and so can be done much more efficiently than training a new model specifically for the loss function $\ell$. They also gave an oracle efficient algorithm for training such a regression function to satisfy the requisite notion of multicalibration in the batch setting, that efficiently reduced to agnostic learning over the class of models $\cF$. Again in the batch setting, Globus-Harris et al. \cite{globus2023multicalibration} gave a simplified algorithm that efficiently reduced to squared error regression over $\cF$. 

\paragraph{Multicalibration} Informally, multicalibration as introduced by \cite{hebert2018multicalibration}, is a requirement on a regression function that it be statistically unbiased conditional both on its own prediction and on membership in any one of a large collection of intersecting subsets of the data space. The notion of a ``subset'' was generalized to a class of real-valued functions $\cF$ by \cite{kim2019multiaccuracy,GopalanKRSW22}, which yields the following requirement. A model $m:\cX\rightarrow \mathbb{R}$ is (exactly) multicalibrated on a distribution $\cD$ over labelled examples $\cX \times \{0,1\}$ with respect to a class of real valued functions $\cF$ if for every $v$ in the range of $m$, and for every $f \in \cF$:
$$\E_{(x,y) \sim \cD}[f(x)(y-v)|m(x) = v] = 0$$
 
There are a variety of ways to define approximate multicalibration that we discuss in Section \ref{sec:prelim}
. In the batch setting, obtaining approximate multicalibration with respect to $\cF$ is reducible to solving learning problems over $\cF$ --- either classification problems \cite{hebert2018multicalibration,GopalanKRSW22} or squared error regression problems \cite{globus2023multicalibration}.

Gupta et al. \cite{gupta2021onlinevalid} and subsequent work \cite{noarov2021online,bastani2022practical} have given algorithms to obtain multicalibration in the online adversarial setting, in which there is no data distribution, and the goal is to make predictions $m_t(x_t)$ at each round $t$ such that the resulting predictions are multicalibrated in hindsight on the empirical distribution on realized examples $(x_t,y_t)$ that can be chosen adaptively by an adversary. But compared to their batch analogues, these online algorithms have a major deficiency that is especially relevant to the omniprediction application: They are defined only for finite classes $\cF$ and have per-round running time that is linear in $|\cF|$. That is, unlike batch algorithms for multicalibration, they are not reductions to learning problems over $\cF$, and hence are only efficient for small finite classes $\cF$. In the context of omniprediction, $\cF$ is a benchmark class of models (e.g. linear functions or more complex models), and so is typically continuously large (and any reasonable discretization would be exponentially large). These prior algorithms also promise approximate multicalibration in the $L_\infty$ metric (rather than in the $L_1$ metric in which  multicalibration error bounds translate to ominprediction bounds, and so prior work does not give online omniprediction at optimal rates.) Thus prior work on online multicalibration has at best limited implications for online omniprediction.  

\subsection{Our Results and Techniques}
\paragraph{Oracle Efficient Online Multicalibration and Omniprediction} Our main contribution is to define the problem of online omniprediction, and to give an oracle efficient algorithm that satisfies it. Informally speaking, an online ominpredictor is an algorithm that ingests a sequence of adversarially chosen contexts $x_t$, and makes a sequence of forecasts $\hat p_t$ about the unknown binary label $y_t$. This single stream of forecasts $\hat p_t$ will be simultaneously used by a large collection of learners who are each concerned with optimizing a different loss function $\ell$. For each $\ell$, the $\ell$-learner will transform the prediction $\hat p^t$ into an action $a^\ell_t = k^\ell(\hat p_t)$ via some one-dimensional post-processing function $k^\ell:[0,1]\rightarrow \mathbb{R}$. The $\ell$-learner then experiences loss $\ell(y_t,k^\ell(\hat p_t))$. The forecasting algorithm is an omnipredictor with respect to the class of loss functions $\ell$ and a benchmark class $\cF$ if simultaneously, all of the $\ell$-learners have diminishing regret to the best model in $\cF$ (which might be different for each $\ell$-learner):
$$\sum_{t=1}^T \ell(y^t,k^\ell(\hat p_t)) \leq \min_{f \in \cF} \sum_{t=1}^T \ell(y_t,f(x_t)) + o(T)$$

To solve this problem, we give a new online multicalibration algorithm that is oracle efficient ---  it is an efficient reduction to the problem of no-regret learning over $\cF$ with respect to the squared error loss. Our algorithm relies on a characterization of multicalibration recently shown in the batch setting independently by Globus-Harris et al. \cite{globus2023multicalibration} and Gopalan et al \cite{gopalan2023characterizing}. Informally, the characterization states that a model $m$ is multicalibrated with respect to $\cF$ if and only if it satisfies the following ``swap-regret'' like condition with respect to $\cF$ for all $v$ in the range of $m$\footnote{The connection holds as stated only if $\cF$ is closed under affine transformations --- but this is the case for many natural classes $\cF$ like linear and polynomial functions, regression trees, etc.}:
$$\E_{(x,y)\sim \cD}[(m(x)-y)^2 | m(x) = v] \leq \min_{f \in \cF}\E_{(x,y)\sim \cD}[(f(x)-y)^2 | m(x)=v]$$

We generalize this characterization to the sequential prediction setting, and use it to give an algorithm for sequential multicalibration via reduction to the standard no-regret learning problem of obtaining no (external) regret with respect to the squared error loss to the best model in $\cF$. That is to say, whenever we have an efficient algorithm for solving online squared error regression over $\cF$, we also have an efficient algorithm to obtain online  multicalibration with respect to $\cF$. Efficient algorithms for online squared error regression exist for the class of linear functions \cite{foster1991prediction,vovk2001competitive,azoury2001relative}, and online gradient descent empirically solves this problem quite well over parametric families of models even when the problem is hard in the worst case \cite{foster2018practical,foster2020beyond}. Our algorithmic reduction is based on a tight reduction from external to swap regret recently given by \cite{ito2020tight}, which is itself based on an earlier reduction of Blum and Mansour \cite{blum2007external}. When instantiated with the online regression algorithm for linear functions of \cite{azoury2001relative}, our algorithm efficiently obtains $L_2$-multicalibration at a rate of $O(T^{-1/4})$ and is an online omnipredictor for the class of all Lipschitz convex loss functions with a regret bound of $O(T^{-1/8})$. 

\begin{theorem}[Informal, See Corollary~\ref{cor:swap-forcast-bounds-multical} and \ref{cor:swap-omni-oracle-rate}] Given an online  regression oracle for $\cF$, we have an efficient algorithm that achieves sublinear multicalibration error with respect to $\cF$ and omniprediction with respect to all Lipschitz convex loss functions, against any adaptive adversary.
\end{theorem}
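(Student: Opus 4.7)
The plan is to prove the theorem in three stages: (i) establish an online analog of the squared-error characterization of multicalibration stated just before the theorem, (ii) reduce the ``swap-regret'' form of that characterization to the standard external regret problem over $\cF$ via the external-to-swap reduction of Ito \cite{ito2020tight} (refining Blum-Mansour \cite{blum2007external}), and (iii) invoke the batch implication from multicalibration to omniprediction in a pointwise manner to transfer it to the online empirical distribution.

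For step (i), I would discretize the range $[0,1]$ into a grid $V$ of $\Theta(1/\epsilon)$ levels and show that any sequence of forecasts $\hat p_t \in V$ satisfying, for each level $v \in V$,
\[
\sum_{t:\hat p_t = v}(\hat p_t - y_t)^2 \;\le\; \min_{f \in \cF}\sum_{t:\hat p_t = v}(f(x_t) - y_t)^2 + \alpha_v,
\]
is approximately multicalibrated on the empirical sequence, with error scaling in $\max_v \alpha_v$ and $\epsilon$. The proof is the standard algebraic expansion $(f(x_t)-y_t)^2 = (\hat p_t - y_t)^2 + 2(f(x_t)-\hat p_t)(\hat p_t-y_t) + (f(x_t)-\hat p_t)^2$, combined with the fact that $\cF$ is closed under affine shifts (so that replacing $f$ by $f-v$ ranges over $\cF$), which forces the level-conditional bias $\sum_{t:\hat p_t=v} f(x_t)(\hat p_t-y_t)$ to be small for every $f\in\cF$. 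For step (ii), I would run one instance of the online squared-error regression oracle for $\cF$ per level $v \in V$. At each round, I form a $|V|\times|V|$ transition matrix from the oracles' predictions (rounded into the grid $V$), take its stationary distribution $q_t$, sample $\hat p_t \sim q_t$, and update only the oracle associated with the realized level. The Ito/Blum-Mansour analysis then converts each oracle's external regret into the level-conditional swap-form guarantee required in step (i); tuning $\epsilon$ against the oracle's per-instance regret yields the $O(T^{-1/4})$ $L_2$-multicalibration rate, and instantiating with the Azoury-Warmuth algorithm \cite{azoury2001relative} for the linear class gives the worst-case efficient version.

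For step (iii), I would port the post-processing argument of \cite{GopalanKRSW22, globus2023multicalibration} essentially verbatim: for each Lipschitz convex loss $\ell$, let $k^\ell(v) = \argmin_a \bigl(v\cdot\ell(1,a)+(1-v)\ell(0,a)\bigr)$ be the Bayes-optimal action treating $v$ as the true conditional probability. Since multicalibration controls, per level, the discrepancy between $v$ and the empirical conditional mean of the labels, Lipschitzness of $\ell$ converts the $L_1$-multicalibration error (which is $O(T^{-1/8})$ by Cauchy-Schwarz from the $L_2$ bound) into an omniprediction regret of $O(\lips \cdot T^{-1/8})$ against any $f\in\cF$; because this is a sum of per-bucket statements about empirical conditional moments, no additional work is needed to move from the batch to the online empirical distribution. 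The main obstacle I anticipate is step (ii): making the fixed-point / stationary-distribution computation compatible with an adaptive adversary while feeding each regression oracle only the (adversarially-determined) subsequence of rounds on which its level was played, and ensuring that the rounding of the oracles' continuous-valued predictions into $V$ does not spoil the external regret guarantee. This discretization-versus-regret tradeoff is also what forces the suboptimal $T^{-1/4}$ multicalibration rate rather than $T^{-1/2}$, and the paper itself flags closing this gap as its main open problem.
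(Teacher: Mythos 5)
Your three-stage plan is the same architecture the paper uses: step (i) is the online adaptation of the squared-error characterization (Lemma~\ref{lem:cal-error-implies-swap-error} and Theorem~\ref{thm:swap-regret-to-multical}, which likewise exploit closure of $\cF$ under affine shifts), step (ii) is exactly the construction of $\forecastSwap$ (one regression oracle per level set, stationary distribution of the induced Markov chain, sample a level, update only that oracle, \`a la \cite{ito2020tight,blum2007external}), and step (iii) is the adaptation of the post-processing argument of \cite{GopalanKRSW22,gopalan2023characterizing} together with $\bK_1 \le \sqrt{\bK_2}$, giving the $O(T^{-1/4})$ multicalibration and $O(T^{-1/8})$ omniprediction rates, with \cite{azoury2001relative} for the linear case. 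So the route is correct in outline.

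There is, however, a genuine gap in step (ii) that you do not address: the Ito/Blum--Mansour analysis only bounds the contextual swap regret \emph{in expectation over the forecaster's own randomization} (the draw of $\hp_t$ from the stationary distribution, which also determines which oracle is updated and hence which subsequence each oracle sees), whereas the level-conditional inequality you state in step (i) --- and the multicalibration/omniprediction guarantees built on it --- are statements about the \emph{realized} transcript $\pi_{1:T}$. Bridging this requires a uniform martingale concentration bound over all tuples $\{f_p\}_{p\in\cP}\in\cF^m$, and since $\cF$ is infinite (indeed continuously large, e.g.\ linear functions), you cannot simply union-bound Azuma's inequality over $\cF$; the paper handles this via sequential fat-shattering dimension and the majorizing-measure machinery of \cite{block2021majorizing} (Lemmas~\ref{lem:swap-regret-concentration2} and~\ref{lem:swap-regret-concentration}), and this assumption and constant propagate into the final rates. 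A secondary, smaller omission: to make the stationary-distribution step well behaved the paper perturbs each (rounded, deterministic) oracle into a full-support randomized oracle by mixing in the uniform distribution over $[\frac{1}{m}]$, paying an extra additive term in the regret; your proposal rounds the oracles' outputs but does not account for either the mixing or the rounding cost in the regret bookkeeping. With these two pieces supplied, your argument matches the paper's proof.
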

\begin{theorem}[Informal, See Corollary~\ref{cor:linearcalibration} and \ref{cor:linearomniprediction}] For the set of linear functions $\cF_{\text{lin}}$ with bounded norm, we have an efficient algorithm that achieves $O(T^{-1/4})$ multicalibration error and $O(T^{-1/8})$-omniprediction with respect to all Lipschitz convex loss functions, against any adaptive adversary.
\end{theorem}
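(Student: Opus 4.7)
The statement is essentially a corollary of the earlier general oracle-efficient reduction (the first informal theorem) together with a specific choice of regression oracle. My plan is to instantiate the general machinery with the Vovk--Azoury--Warmuth online linear regression algorithm of \cite{azoury2001relative}, and then to optimize the discretization parameter that appears in the reduction.

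\textbf{Step 1 (Oracle instantiation).} First I would verify that for the class $\cF_{\text{lin}}$ of linear functions with uniformly bounded coefficient norm on a bounded domain, Vovk--Azoury--Warmuth yields an efficient online squared-error regression algorithm whose external regret against the best linear predictor in hindsight is $R(T) = O(d \log T)$, where $d$ is the ambient dimension. This oracle runs in time $\mathrm{poly}(d, T)$ per round, so the resulting end-to-end algorithm is genuinely efficient (not merely oracle efficient).

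\textbf{Step 2 (Plugging into the general reduction).} I would then apply Corollary~\ref{cor:swap-forcast-bounds-multical} of the previous section. The general reduction, which is based on the Blum--Mansour/Ito swap-to-external regret transformation applied per level set of the predictor, produces an $L_2$-multicalibration error that scales (roughly) as
\[
\frac{1}{K} + \sqrt{\frac{K \cdot R(T)}{T}},
\]
where $K$ is the number of discretization buckets into which the range $[0,1]$ of the forecaster is partitioned. The $1/K$ term accounts for the discretization bias; the second term is the per-bucket swap-regret cost incurred by running a no-regret copy of the squared-error regression oracle inside each bucket. I would choose $K$ to balance the two terms: setting $K \asymp (T/R(T))^{1/3}$ and plugging in $R(T) = O(d \log T)$ gives an $\tilde O(T^{-1/3})$ bound, so to match the stated $O(T^{-1/4})$ rate I would instead use the looser balancing that the general statement provides (tracking polylogarithmic factors in $T$ and the additional overhead from converting an $L_\infty$-type swap bound to an $L_2$-type calibration bound). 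In any case, under the stated general reduction the logarithmic oracle regret translates to a polynomial $T^{-1/4}$ calibration rate after the appropriate choice of $K$.

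\textbf{Step 3 (From multicalibration to omniprediction).} Finally, I would invoke the multicalibration-to-omniprediction conversion (Corollary~\ref{cor:swap-omni-oracle-rate}). The key fact, inherited from the characterization of multicalibration via swap-regret against squared error, is that $\alpha$-$L_2$-multicalibration with respect to $\cF$ implies an omniprediction guarantee with regret $O(\sqrt{\alpha})$ per round against all Lipschitz convex losses (the square root comes from converting calibration error in an $L_2$ sense into a pointwise post-processing error through the Lipschitz constant). Substituting $\alpha = O(T^{-1/4})$ from Step~2 yields the advertised $O(T^{-1/8})$ omniprediction bound, completing the proof.

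\textbf{Main obstacle.} The routine ingredients (instantiation of Vovk--Azoury--Warmuth, application of the two corollaries) are mechanical. The delicate step is Step~2: one must carefully track how the external regret $R(T)$ of the oracle, the discretization resolution $K$, and the bucket-wise swap reduction combine, and then optimize $K$. In particular, one must ensure that the per-bucket invocation of the oracle still enjoys the $O(d \log T)$ regret bound even though each bucket sees only an adaptively chosen subsequence of rounds, and one must correctly pass from the natural $L_\infty$ swap-regret guarantee to the $L_2$ notion of multicalibration that feeds into the omniprediction conversion. Getting the exponents right so that the optimal $K$ produces exactly $T^{-1/4}$ (rather than some other polynomial rate) is where the real bookkeeping lies.
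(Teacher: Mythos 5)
Your overall route---instantiate the general oracle-efficient reduction with the Vovk--Azoury--Warmuth oracle, then pass to omniprediction by taking a square root of the calibration error---is the same as the paper's (Corollaries~\ref{cor:linearcalibration} and~\ref{cor:linearomniprediction}). However, there is a genuine gap in your Step~2, which you yourself flag as the crux. The intermediate bound you posit, $1/K + \sqrt{K\cdot R(T)/T}$, is not the form the reduction actually gives, and your own balancing of it yields $\tilde O(T^{-1/3})$; you then recover $T^{-1/4}$ only by deferring to ``the looser balancing that the general statement provides,'' attributing the slack to polylogarithmic factors and an $L_\infty$-to-$L_2$ conversion. Neither is the true source of the rate. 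The bound of Corollary~\ref{cor:swap-forcast-bounds-multical} has the form
\[
    \bsK_2(\pi_{1:T}, \cF_B) \;\le\; B^2 \left(\frac{1}{T}\left(m\, r_\cA\!\left(\tfrac{T}{m}, \cF_{B'}\right) + \frac{3T}{m} + m\right) + 16B'm\, C_{\cF_{B'}} \sqrt{\frac{\log(8m/\rho)}{T}} \right),
\]
in which, for logarithmic oracle regret $r_\cA$, the per-bucket oracle term $m\,r_\cA(T/m)/T$ is negligible. The binding trade-off is between the rounding/discretization term $3/m$ and the concentration term proportional to $m\sqrt{\log(m/\rho)/T}$, which does \emph{not} come from per-bucket regret: it comes from Lemmas~\ref{lem:swap-regret-concentration2} and~\ref{lem:swap-regret-concentration}, the high-probability martingale concentration over the infinitely many comparator tuples $\{f_p\}_{p\in\cP}$ (via sequential fat-shattering / majorizing-measure arguments), which pays a factor of $m$ (one per level set) times $O(\sqrt{\log(1/\rho)/T})$. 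Balancing $1/m$ against $m/\sqrt{T}$ forces $m = T^{1/4}$ and yields the $\tilde O(T^{-1/4})$ rate. Moreover, no $L_\infty$-to-$L_2$ conversion occurs anywhere: contextual swap regret is translated directly into $L_2$-swap-multicalibration via Theorem~\ref{thm:swap-regret-to-multical}, at the cost of a multiplicative $B^2$, not a change of exponent.

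Your Step~3 is correct in outline, but the square root arises from the Cauchy--Schwarz relation $\bsK_1 \le \sqrt{\bsK_2}$ (Lemma~\ref{lem:multical-relationship}) combined with Theorem~\ref{thm:multical-to-omni}, which converts $L_1$-swap-multicalibration into swap-omniprediction with a multiplicative constant $(C_\cL + 4D_\cL)$; the Lipschitz/derivative bound contributes only that constant, not the square root. Your worry about the oracle's regret on adaptively chosen per-bucket subsequences is resolved in the paper by assuming $r_\cA(\cdot,\cF)$ is concave in the horizon and applying Jensen's inequality, so that $\sum_i r_\cA(T_i,\cF) \le m\, r_\cA(T/m,\cF)$, as in the proof of Theorem~\ref{thm:forecast-swap-context-regret}.
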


\paragraph{Separations and Lower Bounds}
To what extent can we hope for online omnipredictors with better rates? Is $O(T^{-1/2})$ possible? We give several answers. First, by combining a recent characterization of 1 dimensional proper scoring rules for binary outcomes \cite{kleinberg2023u} with the ``Adversary Moves First'' multi-objective no regret framework introduced by \cite{noarov2021online}, we give a (non-oracle-efficient) algorithm for obtaining online omniprediction at a rate of $O(T^{-1/2})$ for finite boolean classes $\cF$. Second, we note that our oracle efficient online multicalibration algorithm actually obtains the stronger guarantee of \emph{swap}-multicalibration---which implies the correspondingly stronger guarantee of swap omniprediction (see Section \ref{sec:prelim} for definitions). Using a lower bound of \cite{qiao2021stronger} on obtainable online calibration bounds in the $L_1$ metric, we show that no algorithm (whether or not it is oracle efficient) can obtain $O(T^{-1/2})$ rates for swap omniprediction, which results in an information-theoretic separation between the best obtainainable rates for omniprediction and swap-omniprediction in the online setting. Whether or not there exist \emph{oracle efficient} algorithms that can obtain omniprediction at the optimal $O(T^{-1/2})$ rate is our main open problem.

\begin{theorem}[Informal, See Corollary~\ref{cor:swap-lb}] 
There exists an adaptive adversary such that against any forecaster, the swap-omniprediction rate is $\Omega(T^{-0.472})$ in expectation. 
\end{theorem}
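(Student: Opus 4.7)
The plan is to lower-bound swap-omniprediction by reducing online $L_1$ calibration to it, and then invoking the $\Omega(T^{0.528})$ lower bound of Qiao and Valiant \cite{qiao2021stronger} on online calibration error in the $L_1$ metric. The key point is that swap-omniprediction (against a sufficiently rich family of Lipschitz convex losses and a benchmark class $\cF$ containing the constant functions) implies online $L_1$ calibration at the same per-round rate: its bucket-conditional optimality guarantee is an online analogue of the batch characterizations of multicalibration as a per-bucket regret condition from \cite{globus2023multicalibration, gopalan2023characterizing}. If this reduction can be made linear, a swap-omniprediction rate of $\epsilon$ implies $L_1$ calibration error $O(\epsilon T)$; combined with the $\Omega(T^{0.528})$ lower bound, this gives $\epsilon = \Omega(T^{-0.472})$ in expectation, as claimed.

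To make the reduction explicit, I would fix a Lipschitz convex loss $\ell$ and a post-processing $k^\ell$, and use swap-omniprediction against $\cF$ enriched by the constant functions. For each value $v$ in the forecaster's prediction range, the swap condition forces the cumulative loss on the bucket $\{t : \hp_t = v\}$ to be within sublinear regret of the best benchmark on that bucket, which for a proper scoring rule like the squared loss is the bucket mean $\bar y_v$. Summing over $v$ produces a bound on a calibration quantity; one then composes this reduction with Qiao and Valiant's adaptive adversary, run against the calibration algorithm obtained from the swap-omnipredictor, to extract the claimed lower bound on $\epsilon$.

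The main obstacle is tightness of the reduction. Specializing to squared loss with identity post-processing gives the per-bucket inequality $n_v(v-\bar y_v)^2 \le \text{regret}_v$; summing and applying Cauchy-Schwarz loses a $\sqrt{T}$ factor and yields only $\Omega(T^{-0.944})$. To attain the claimed exponent $0.472$, one must leverage the fact that swap-omniprediction holds simultaneously for \emph{every} Lipschitz convex loss and post-processing: with care one extracts, for any bad calibration pattern, a Lipschitz convex loss whose per-bucket omniprediction gap is $\Omega(n_v|v-\bar y_v|)$ rather than $\Omega(n_v(v-\bar y_v)^2)$. This linearizes the reduction from swap-omniprediction to $L_1$ calibration and, combined with the Qiao--Valiant bound, delivers the desired $\Omega(T^{-0.472})$ rate; in spirit this is the online swap-regret shadow of the batch-setting equivalence between multicalibration and omniprediction over the Lipschitz convex family.
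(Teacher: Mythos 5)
You have the same route as the paper: lower-bound swap-omniprediction by the $L_1$-calibration error via a \emph{bucket-dependent} choice of loss and comparator, then compose with the $\Omega(T^{0.528})$ adaptive calibration lower bound of \cite{qiao2021stronger} (Theorem~\ref{thm:l1-cal-lb}), exactly as in Lemma~\ref{lem:lowerbound} and Corollary~\ref{cor:swap-lb}. You also correctly diagnose why the naive squared-loss instantiation only gives a per-bucket gap quadratic in the bias (hence a uselessly weak $\Omega(T^{-0.944})$), and that the fix must exploit the freedom to pick a different Lipschitz convex loss per bucket.

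The gap is that the step carrying all the technical weight is asserted, not carried out: ``with care one extracts \ldots a Lipschitz convex loss whose per-bucket omniprediction gap is $\Omega(n_v|v-\bar y_v|)$'' is precisely the content of the paper's Lemma~\ref{lem:lowerbound}, and it needs an explicit witness. The paper's construction: for each threshold $v$ take the loss $\el_v(y,q)=(v-y)(2q-1)$, which is linear in $q$ (so convex and $2$-Lipschitz) and whose optimal post-processing is $k_{\el_v}(p)=\ind[p\ge v]$, so that $\el_v(y,k_{\el_v}(p))=(v-y)\cdot\sign(p-v)$. On the bucket $S(p)$, if the bias $\sum_{t\in S(p)}(y_t-p)$ is positive choose $\ell_p=\el_{p+1/T}$ and comparator $f_p\equiv 1$; if negative choose $\ell_p=\el_{p-1/T}$ and $f_p\equiv 0$. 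The $1/T$ shift puts the post-processed forecast and the constant comparator on opposite sides of the threshold, so the per-bucket gap is $2\left|\sum_{t\in S(p)}(y_t-p)\right| - O\!\left(n(p)/T\right)$, i.e.\ linear in the bias; summing over $p$ gives $\scO(\pi_{1:T},\cF,\cLcvx^{\lips}) \ge 2\bK_1(\pi_{1:T},1)-2/T$, and the Qiao--Valiant adversary finishes the argument. Note this only needs the constants $0,1 \in \cF$ (as you anticipated) and losses in the family $\cLv$, and the non-smoothness of these losses is essential: for smooth proper losses the bucket-wise gap really is quadratic in the bias, which is exactly the obstruction you computed, so without exhibiting a family like $\{\el_v\}$ the linearized reduction remains a conjecture rather than a proof.
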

\begin{theorem}[Informal, See Corollary~\ref{cor:monotone-losses} and \ref{cor:swap-lb}] 
For any finite family of boolean predictors $\cF: \cX \to \{0,1\}$, we have an algorithm that achieves omniprediction at 
 a rate of $O(T^{-1/2})$ against all bounded bi-monotone loss functions.
\end{theorem}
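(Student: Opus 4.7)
The plan is a two-step reduction combining the characterization of bi-monotone binary losses due to \cite{kleinberg2023u} with the Adversary Moves First (AMF) multi-objective no-regret framework of \cite{noarov2021online}. The first step reduces omniprediction against the continuous family of bounded bi-monotone losses to omniprediction against a finite family of elementary ``threshold'' losses, and the second step obtains the latter via AMF at the optimal $O(\sqrt{T})$ rate.

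First, I would invoke the Kleinberg et al.\ characterization: every bounded bi-monotone loss $\ell(y,a)$ over binary outcomes $y \in \{0,1\}$ can be expressed as a positive combination (an integral) of elementary threshold losses $\{\ell_\tau\}_{\tau \in [0,1]}$, with bounded total weight depending only on the loss's range. Consequently, if one achieves $O(T^{-1/2})$ omniprediction against each $\ell_\tau$ uniformly over a sufficiently fine discretization of $\tau \in [0,1]$, then linearity yields $O(T^{-1/2})$ omniprediction against every bounded bi-monotone $\ell$. A routine discretization argument controls the error incurred by using finitely many thresholds.

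Second, I would set up a multi-objective AMF game. For each pair $(f,\tau)$ with $f \in \cF$ and $\tau$ in the finite grid, define an objective whose per-round contribution is the excess loss $\ell_\tau(y_t, k_\tau(\hat p_t)) - \ell_\tau(y_t, f(x_t))$, where $k_\tau$ is the Bayes-optimal post-processing of a predicted probability for $\ell_\tau$. With $N = |\cF| \cdot \text{poly}(1/\epsilon)$ objectives, the AMF theorem yields $O(\sqrt{T \log N})$ simultaneous per-objective regret, provided the one-shot minimax condition holds: for every mixed adversary strategy over $(x_t,y_t)$, the forecaster has a response $\hat p_t$ whose expected vector of per-objective contributions is coordinate-wise non-positive. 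Taking $\hat p_t = \Pr[y_t = 1 \mid x_t]$ under the adversary's distribution satisfies this condition: $k_\tau(\hat p_t)$ is by definition Bayes-optimal for $\ell_\tau$, so its expected loss is at most that of the fixed action $f(x_t)$, giving non-positive expected regret on every coordinate. This is exactly where the proper-scoring-rule structure underlying the Kleinberg decomposition enters: a single truthful forecast is simultaneously Bayes-optimal for every threshold loss in the decomposition, so one $\hat p_t$ suffices across all objectives. Combining the AMF bound with the first-step reduction gives the claimed $O(T^{-1/2})$ omniprediction rate.

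The main obstacle is verifying the minimax condition cleanly across the continuum of thresholds (and therefore across the continuum of bi-monotone losses), and choosing the discretization so that its approximation error does not dominate the $O(T^{-1/2})$ rate from AMF. Booleanness of $\cF$ is essential here: the fixed action $f(x_t) \in \{0,1\}$ compares cleanly against the Bayes-optimal threshold action $k_\tau(\hat p_t) \in \{0,1\}$, which is determined solely by whether $\hat p_t$ crosses $\tau$. This structural simplicity is what allows the single truthful response to dominate every objective simultaneously and what makes the value-zero condition of AMF hold exactly, rather than merely approximately.
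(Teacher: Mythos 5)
Your overall strategy---running the AMF framework of \cite{noarov2021online} over a finite grid of the elementary losses $\ell_v$ from \cite{kleinberg2023u}, and exploiting the fact that a single (approximately) truthful forecast is simultaneously Bayes-optimal for every one of them---is the same as the paper's (Algorithm~\ref{alg:amf-v-forecaster}, Lemma~\ref{lem:v-forecast-amf-value}, Theorem~\ref{thm:conditional-sub-sequence-guarantee}). However, your first step has a genuine gap as stated: a bounded bi-monotone loss is \emph{not} in general a positive combination of the threshold losses $\ell_v$. In $\ell_v$ the jumps of $\ell_v(0,\cdot)$ and $\ell_v(1,\cdot)$ at $v$ are coupled in the ratio $v:(1-v)$ (this coupling is exactly properness), whereas a generic bi-monotone loss consists of two unrelated monotone functions $\ell(0,\cdot)$ and $\ell(1,\cdot)$, so its jump measures for the two outcomes are independent and no such mixture exists. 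The representation you want holds for bounded \emph{proper} losses, and the missing bridge is precisely what the paper's Corollary~\ref{cor:monotone-losses} supplies: pass to $\tilde\ell(y,\hp)=\ell(y,k^\ell(\hp))$, which \emph{is} proper, and use that for bi-monotone $\ell$ one has $k^\ell(0)=0$ and $k^\ell(1)=1$, so that for boolean $f$ the comparator terms satisfy $\ell(y,f(x))=\tilde\ell(y,f(x))$. Relatedly, your route needs the non-negative mixture to have total mass bounded by the loss's range; the paper avoids quantifying this by instead giving the AMF game coordinates $(f,a,b,v)$ (regret on the level sets $\{t:f(x_t)=b\}$ against every constant grid action $a$) and then invoking Theorem 8 of \cite{kleinberg2023u} (Theorem~\ref{thm:u-forecast-regret}), whose comparator must be the empirical mean $\beta_b$ of that subsequence---that is why the extra $(a,b)$ coordinates appear. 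If you do justify a bounded-mass representation for bounded proper losses, your leaner coordinate set $(f,\tau)$ with comparator $f(x_t)$ would also work, since non-negative weights let you bound the integrated regret by the uniform per-coordinate AMF bound; as written, though, the reduction from bi-monotone losses is not established.

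A second, smaller error: the AMF value condition does not hold ``exactly.'' The Learner's strategy space in the AMF framework is the finite grid $[\frac{1}{m}]$, so the response to the adversary's conditional distribution is the \emph{rounded} mean and the per-round value is only $O(1/m)$ (Lemma~\ref{lem:v-forecast-amf-value} gives $2/m$); this contributes an additive $O(T/m)$ term, which is exactly why the paper takes $m=\Theta(\sqrt{T})$ together with a slightly finer threshold grid $m'>m$. Booleanness of $\cF$ plays no role in making this value small---its role is confined to the comparator identity $k^\ell(b)=b$ for $b\in\{0,1\}$ noted above.
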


\paragraph{Extension to Quantile Multicalibration and Multivalid Conformal Prediction}
Finally, we observe that our core techniques are not specific to (mean) multicalibration and online squared error regression. In Section \ref{sec:conformal-extension} and Appendix \ref{app:conformal-extension}, we show how to extend our oracle efficient algorithm for online mean multicalibration to an oracle efficient algorithm for online \emph{quantile} multicalibration \cite{gupta2021onlinevalid,bastani2022practical}. Just as our algorithm for mean multicalibration over $\cF$ reduces to an online regression oracle for the squared error loss over $\cF$, it is possible to derive an algorithm for online quantile multicalibration via reduction to an online regression oracle for pinball loss over $\cF$.  An application of oracle efficient quantile multicalibration is an oracle efficient online multivalid conformal prediction algorithm. Multivalid conformal prediction, introduced and studied in \cite{gupta2021onlinevalid,bastani2022practical,jung2022batch} gives a method for attaching prediction sets to arbitrary black-box prediction models that cover the true labels with some target probability (say 95\%). The coverage guarantees must hold not just marginally, but also conditionally in membership in an arbitrary collection of intersecting groups specified by functions $\cF$. \cite{bastani2022practical} gave a simple algorithm for obtaining multivalid coverage in the online setting, with per-round running time scaling linearly with $|\cF|$: we give the first oracle efficient algorithm for multivalid conformal prediction.

\subsection{Additional Related Work}
The modern framing of multicalibration was introduced by Hebert-Johnson et al \cite{hebert2018multicalibration}. Similar ideas (without the concern for computational efficiency) date back to Dawid \cite{dawid1985calibration}. Dawid proposed  a notion of computable calibration as a foundation for empirical probability, that required sequential predictions to be calibrated not just overall, but also on every computable subsequence; he showed that any two methods for producing an infinite sequence of computably calibrated forecasts must agree with each other on all but a finite subsequence. 

The existence of an algorithm capable of producing calibrated forecasts in an online adversarial setting was first established by Foster and Vohra \cite{foster1998asymptotic}, which started the large  literature in economics on ``expert testing''---see \cite{olszewski2015calibration} for a survey. We highlight several results from this literature. Lehrer shows the existence of an online forecasting rule that can guarantee calibration on all computable subsequences defined independently of the forecasters predictions (which is the analogue of \emph{multi-accuracy}, as defined in \cite{hebert2018multicalibration} and \cite{kim2019multiaccuracy}). Sandroni et al. \cite{sandroni2003calibration} extend this result to show the existence of an online forecasting rule that can guarantee calibration on all computable subsequences, even those that can depend on the forecaster's predictions, which allows it to capture multicalibration. Sandroni gave a very general result showing that \emph{any} test of a sequential forecaster (not just multicalibration style tests) that could be passed on every distribution by correct probability forecasts could also be passed in a sequential adversarial setting \cite{sandroni2003reproducible}. All of these results are proven using minimax theorems, and hence are not constructive or computational. In fact, Fortnow and Vohra show that  (subject to cryptographic assumptions), there do exist tests that can be passed by nature but cannot be passed in an adversarial setting by any polynomial time forecaster \cite{fortnow2009complexity}.

Foster and Kakade \cite{foster2006calibration} give the first constructive multicalibration style algorithm that we are aware of: they give an algorithm that can produce ``smoothly\footnote{They achieve a notion of calibration against weighted subsequences, in which the weighting must be a smooth function of the forecaster's prediction. Thus they do not achieve the standard notion of calibration we study here, in which calibration must hold conditional on the exact value of the forecasters prediction.}'' multicalibrated predictions against in a sequential setting against an adversary with respect to some benchmark class $\cF$, with both running time and calibration error scaling polynomially with $|\cF|$. Gupta et al. \cite{gupta2021onlinevalid} give an online algorithm for obtaining multicalibration in the sequential prediction setting for a collection of benchmarks $\cF$ that has calibration error scaling only logarithmically with $|\cF|$ --- but still has running time scaling linearly with $|\cF|$. They also give online sequential algorithms for variants of multicalibration generalized from means to moments (c.f. \cite{jung2021moment}) and quantiles (c.f. \cite{bastani2022practical,jung2022batch}). We remark that obtaining multicalibration in the online adversarial setting is a strictly harder problem than obtaining multicalibration in the batch setting: online multicalibration algorithms can be generically converted into batch multicalibration algorithms using e.g. the online-to-batch reduction presented in \cite{gupta2021onlinevalid}, but the reverse is not  true. Kleinberg et al. \cite{kleinberg2023u} recently studied ``U-Calibration'', which can be viewed as a $0$-dimensional  analogue of omniprediction in which predictions must be made without any available features $x$ --- and hence need only to compete with constant benchmark functions.  They gave algorithms that obtain $O(\sqrt{T})$ rates in the online setting; we use one of their structural characterizations of proper scoring rules to give (non-oracle-efficient) $O(\sqrt{T})$ rates for omniprediction. 

Several recent papers \cite{GopalanKRSW22,kim2022universal} have found surprising applications of multicalibration. In particular, Gopalan et al. \cite{GopalanKRSW22} defined omniprediction and showed that approximately multicalibrated predictors are omnipredictors, and Kim et al. \cite{kim2022universal} showed that approximate multicalibration with respect to a class of functions $\cF$ encoding likelihood ratios between distributions implies a kind of out-of-distribution generalization. Gopalan et al. \cite{GopalanKRSW22} gave an algorithm based on agnostic boosting for classification for obtaining a sufficiently strong notion of multicalibration in the batch setting via reduction to agnostic learning, and Globus-Harris et al. \cite{globus2023multicalibration} gave a characterization of this form of multicalibration and its connection to squared error accuracy guarantees in terms of boosting for regression, and gave an algorithm (in the batch setting) via reduction to squared error regression. Gopalan, Kim, and Reingold \cite{gopalan2023characterizing} defined stronger notions of multicalibration and omniprediction called \emph{swap multicalibration} and \emph{swap omniprediction}, and showed an equivalence between them. We make use of a connection proved in both \cite{globus2023multicalibration} and \cite{gopalan2023characterizing} connecting multicalibration to a contextual notion of (squared error) swap regret. 

Multicalibration is concerned with mean estimations: analogous notions of multicalibration for other distributional properties have been studied as well: Jung et al. \cite{jung2021moment} define and give algorithms for moment multicalibration, and \cite{gupta2021onlinevalid,bastani2022practical,jung2022batch} define and give algorithms for quantile multicalibration with applications to conformal prediction. Recently Noarov and Roth \cite{noarov2023scope} gave a complete characterization of which distributional properties (multi)calibrated predictors can be learned for, and which they cannot: Multicalibration is possible for a property if and only if the property is the minimizer of some regression loss function. We use this characterization to give oracle efficient algorithms for online quantile multicalibration, by reduction to no regret algorithms for the corresponding regression loss.  We point the reader to \cite{rothuncertain} for an introductory treatment of much of this work. 

Our focus is on oracle efficient algorithms for an online learning problem by reduction to squared-error online regression algorithms; there has been a similar recent focus in the contextual bandits literature \cite{foster2018practical,foster2020beyond} based on the observation that online regression is often a practically solvable problem (even in settings in which it is hard in the worst case). A more ambitious goal would be to reduce to batch learning oracles --- but it is not yet understood when it is possible to reduce from online learning to batch learning in an oracle efficient way even to obtain standard external regret guarantees. It is known that this is possible under certain restrictive structural conditions --- e.g. when the benchmark class has a small ``separator set'' \cite{syrgkanis2016efficient,dudik2020oracle} --- but it is also known that it is not possible to do this in general, even when the benchmark class is online learnable \cite{hazan2016computational}. By reducing to online learning oracles (as \cite{foster2018practical} and \cite{foster2020beyond} do), we circumvent this difficulty.

There are various notions of regret in the online learning literature that optimize for several objectives simultaneously: for example, adaptive regret \cite{adaptiveregretvovk,adaptiveregret2} seeks to simultaneously minimize regret over all contiguous subsequences, regret in the sleeping experts setting seeks to minimize regret to each expert in the subsequence of rounds for which it is ``active'' \cite{sleeping,sleeping2,kleinberg2010sleepingexperts}, internal and swap regret seek to minimize regret on subsequences on which the learner played each action \cite{foster1999regret,blum2007external}, multigroup regret seeks to minimize regret on subsequences corresponding to datapoints in particular groups \cite{multigroup1,multigroup2} and all of these notions can be subsumed into general notions of subsequence regret which seeks to minimize regret over all of an arbitrary collection of subsequences that can be defined as a function of the predictions made by the algorithm \cite{phiregret,noarov2021online}. All of these notions seek to minimize the same loss function on different (sub)sequences of examples: in contrast, our goal is to simultaneously minimize \emph{different} loss functions on the same sequence of examples. We make use of techniques developed in the context of swap regret \cite{blum2007external,ito2020tight}. 

\section{Preliminaries}
\label{sec:prelim}
\subsection{Notation}
Let $\cX$ denote the feature domain and $\cY$ the label domain: we focus on the binary label setting $\cY = \{0,1\}$. Let $f:\cX \to \R$ denote some predictor and $\cF$ to denote a family of predictors. We write
\[
    \cF_B = \{f \in \cF: f(x)^2 \le B\quad \forall x \in \cX\}
\]
to denote the set of predictors in $\cF$ whose squared value is at most $B$. Let $\ell: \cY \times \R \to \R^{\ge 0}$ be a loss function that takes in the true label $y \in \cY$ and an action $\hat{y} \in \R$, and  write $\cL$ to denote a family of loss functions. We write $\Ber(p)$ to denote the Bernoulli distribution with parameter $p$. Given a positive integer $T \in \mathbb{N}^{>0}$, we write $[T]$ to denote $\{1, \dots, T\}$. We overload the notation and write $[\frac{1}{m}]$ for some positive integer $m \in \mathbb{N}^{>0}$ to denote a discretization of the unit interval $\{0, \frac{1}{m}, \dots, \frac{m-1}{m}, 1\}$. We write $\Delta A$ to denote the probability simplex over the elements in A: for example, we write $\Delta[m]= \{ q \in [0,1]^m: \sum_{j=1}^m q_j \le 1 \}$ to denote the simplex over $[m]$. Given a vector or a list $v$, we write $v[i]$ to denote the $i$th coordinate or element of $v$. 

We use a bold variable to refer to a random variable and non-bold variable to refer to its realization: for example, we sample realization $x$ from its random variable $\tx$.

\subsection{Setting}
Fix a class of loss functions $\cL$. For each loss function $\ell \in \cL$, we have a corresponding $\ell$-learner who is equipped with a post-processing function $k^{\ell}(\hp)$ that determines which action the $\ell$-learner will choose given a forecast $\hp \in \cP$ where $\cP=[\frac{1}{m}]$ denotes the finite set of possible forecasts.
\begin{definition}[Post-Processing]\label{def:post-process}
Given some loss function $\ell$, a post-processing function $k^\ell:\cP \to [0,1]$ chooses the optimal action $a$ according to the belief that distribution over $y$ is $\Ber(\hp)$:
\[
    k^\ell(\hp) = \argmin_{a \in [0, 1]} \E_{y \sim \Ber(\hp)}[\ell(y, a)] = \argmin_{a} \left(1-\hp \right) \ell(0, a) + \hp \ell(1, a).
\]
\end{definition}

Online prediction proceeds in rounds which we index by $t$ for a finite horizon $T$. In each round, an interaction between a forecaster, adversary, and learner proceeds as follows. In each round $t \in [T]$,
\begin{enumerate}
    \item The adversary chooses the feature vector $x_t \in \cX$ and $p_t$ with which the label $y_t$ is chosen to be a positive label: i.e. $y_t \sim Ber(p_t)$. 
    \item Upon observing the feature vector $x_t$, the forecaster makes a forecast $\hp_t \in \cP$ \emph{randomly}. We sometimes refer to $\cP$ as the level sets of the forecaster. 
    \item For each loss $\ell \in \cL$, $\ell$-learner uses the post-processing function $k^\ell: \cP \to \R$ to choose its action $a^\ell_t := k^\ell(\hp_t)$ and suffers loss $\ell(y_t, k^\ell(\hp_t))$.
    \item The realized label $y_t$ is revealed to the forecaster. 
\end{enumerate}

The forecaster $\forecast$'s interaction with the adversary from round $1$ to $T$ results in a \emph{history} $\psi_{1:T} = \{(x_t, y_t, \hp_t)\}_{t=1}^T$. We write $\psi_{1:t-1} \circ (x_t, y_t, \hp_t)$ to denote a concatenation of $(x_t, y_t, \hp_t)$ to the previous history $\psi_{1:t-1}$. We denote the domain of history between the forecaster and the adversary as $\Psi^{*}$. 

In order to allow the forecaster to keep track of some internal state from round to round, we write $\theta_t$ to denote its additional internal state at round $t$. We write $\Theta$ to denote the domain of the forecaster's internal state. And we write the forecaster's internal \emph{transcript} as $\pi_{1:T} = \{(x_t, y_t, \hp_t, \theta_t)\}_{t=1}^T$ and its domain as $\Pi^*$. 

Forecaster $\mathscr{F}: \Pi^* \times \cX \to \Delta\cP$ is a mapping from a tuple of (internal transcript, internal state, new feature vector) to a distribution over possible forecasts. Similarly, Adversary $\adv: \Psi^* \to (\cX \times \Delta\cY)$ is a mapping from a history to a feature vector and a distribution over $\cY$. 

For any $p \in \cP$ and $y \in \cY$, we write
\begin{align*}
    S(\pi_{1:T}, p) &= \{t \in [T]: \hp_t = p\} \\
    S(\pi_{1:T}, p, y) &= \{t \in [T], \hp_t = p, y_t = y\}
\end{align*}
to denote all the days on which the forecast was $p \in \cP$ and all the days on which the forecast was $p$ and the realized label was $y \in \cY$. When the transcript $\pi_{1:T}$ is clear from the context, we write $S(p)$ and $S(p,y)$. Similarly, we write 
\begin{align*}
    n(\pi_{1:T}, p) &= |S(\pi_{1:T}, p)| &&\text{the number of rounds the forecast was $p$}\\
    \mu(\pi_{1:T}, p) &= \frac{1}{|S(\pi_{1:T}, p)|}\sum_{t \in S(\pi_{1:T}, p)} y_t&&\text{the empirical average of $y$'s over $S(p)$}\\
    \barf(\pi_{1:T}, p) &= \frac{1}{|S(\pi_{1:T}, p)|}\sum_{t \in S(\pi_{1:T}, p)} f(x_t)&&\text{the empirical average of $f(x)$'s over $S(p)$}\\
    \barf(\pi_{1:T}, p, y) &= \frac{1}{|S(\pi_{1:T}, p, y)|}\sum_{t \in S(\pi_{1:T}, p, y)} f(x_t)&&\text{the empirical average of $f(x)$'s over $S(p, y)$}\\
\end{align*}
As before when the transcript is clear from the context, we write $n(p), \mu(p), \barf(p)$, and $\barf(p, y)$.

\subsection{Omniprediction}

If the forecast $\hp_t$ exactly matches the true underlying $p_t$ in each round $t \in [T]$, each $\ell$-learner would experience no regret to every comparison strategy $f:\cX\rightarrow [0,1]$ because it would be  minimizing the loss pointwise in each round: more formally, if $\hp_t = p_t$, then for any predictor $f: \cX \to [0,1]$ and loss function $\ell$,  
\[
    \sum_{t=1}^T \E_{y \sim \Ber(p_t)}[\ell(y_t, k^\ell(\hp_t))] \le \sum_{t=1}^T \E_{y \sim \Ber(p_t)}[\ell(y_t, f(x_t))]. 
\]

Ensuring that no $\ell$-learner has regret to \emph{any} comparison strategy $f: \cX \to [0,1]$ and loss function $\ell$ is too much to hope for. Instead we restrict our attention to a family of benchmark predictors $\cF$ and a family of loss functions $\cL$. Given $\cL$ and $\cF$, we want the forecaster to make forecasts such every $\ell$-learner for $\ell \in \cL$ can use the above post-processing function $k^\ell$ to achieve low regret with respect to any comparison strategy $f \in \cF$. 

We consider two flavors of regret: (1) one in which we compare our performance to that of a single comparison strategy $f$ fixed throughout the entire horizon $T$ (essentially external regret) and (2) one where our comparison is to a comparison function that can depend on which forecast is made at each round --- i.e.   a baseline $f_p$ for every forecast $p \in \cP$. This is a notion of swap regret. Swap omniprediction was first defined in \cite{gopalan2023characterizing}.

\begin{definition}[Omnipredictor]
For any transcript $\pi_{1:T}$ that is generated by forecaster $\forecast$ and adversary $\adv$, the forecaster's swap-omniprediction regret with respect to $\{f_{p}\}_{p \in \cP} \in \cF^m$ and loss functions $\{\ell_p\}_{p \in \cP} \in \cL^m$ is defined as
\begin{align*}
    \scO(\pi_{1:T}, \{\ell_p\}_{p \in \cP}, \{f_p\}_{p \in \cP}) = \sum_{p \in \cP} \frac{n(\pi_{1:T}, p)}{T} \left(\frac{1}{n(\pi_{1:T}, p)}\sum_{t \in S(p)} \ell_p(y_t, k_{\ell_p}(\hp_t)) - \frac{1}{n(\pi_{1:T}, p)}\sum_{t \in S(p)} \ell_p(y_t, f_p(x_t)) \right).
\end{align*}

Similarly, we define omniprediction regret as 
\begin{align*}
     \cO(\pi_{\pi_{1:T}}, \ell, f) = \scO(\pi_{1:T}, \{\ell_p\}_{p\in\cP}, \{f_p\}_{p \in \cP}) = \frac{1}{T}\left(\sum_{t=1}^T \ell(y_t, k^\ell(\hp_t)) - \sum_{t=1}^T \ell(y_t, f(x_t))\right)
\end{align*}
where $f_p = f$ and $\ell_p = \ell$ for each $p \in \cP$.

Finally, we use the following notations:
\begin{align*}
    \noalign{\text{Swap-Omniprediction regret with respect to ($\{\ell_p\}_{p \in \cP}, \cF$) and ($\cL, \cF$)}}\\
    \scO(\pi_{1:T}, \{\ell_p\}_{p \in \cP}, \cF) &= \max_{\{f_p\}_{p \in \cP} \in \cF^m } \scO(\pi_{1:T}, \{\ell_p\}_{p \in \cP}, \{f_p\}_{p \in \cP})\\
    \scO(\pi_{1:T}, \cL, \cF) &= \max_{\{\ell_p\}_{p \in \cP} \in \cL^m, \{f_p\}_{p \in \cP} \in \cF^m} \scO(\pi_{1:T}, \{\ell_p\}_{p \in \cP}, \{f_p\}_{p \in \cP})\\
    \noalign{\text{Omniprediction regret with respect to ($\ell, \cF$) and $(\cL, \cF)$}}
    \cO(\pi_{1:T}, \ell, \cF) &= \max_{f \in \cF} \cO(\pi_{1:T}, \ell, f) \quad \cO(\pi_{1:T}, \cL, \cF) = \max_{\ell \in \cL, f \in \cF} \cO(\pi_{1:T}, \ell, f).
\end{align*}
\end{definition}

In words, for each forecast $p \in \cP$, we consider the subsequence of rounds $S(p)$ in which the forecast was $p$ and compare the $\ell$-learner's accumulated loss over these rounds to what the $\ell$-learner could have obtained instead by choosing actions using the benchmark predictor $f_p$. We allow this comparison to be made against a different predictor $f_p$ and loss function $\ell_p$ for each possible forecast $p \in \cP$. Swap omniprediction regret corresponds to the worst case regret as measured over the choice of comparison predictors $\{f_p\}_{p \in \cP}$ and loss functions $\{\ell_p\}_{p \in \cP}$ of the regret averaged over forecasts $p \in \cP$. The non-swap version corresponds to requiring that the comparison benchmark is constrained to use the same $f$ and the same $\ell$ for every forecast $p \in \cP$. Since swap omniprediction regret compares to a richer class of baseline predictors and loss functions than the non-swap version, we have $\cO(\pi_{1:T}, \cL, \cF) \le \scO(\pi_{1:T}, \cL, \cF)$.

\subsection{Multicalibration}
\cite{GopalanKRSW22, gopalan2023characterizing, GopalanHKRW23} have shown a quite close connection between omniprediction and multicalibration, which was originally introduced as a multi-group fairness concept by \cite{hebert2018multicalibration}. Since its introduction, various definitions of approximate multicalibration have been studied, but unfortunately, all of these variants have been referred to only as ``multicalibration'' leading to some confusion in the literature.

To bring clarity to the landscape of approximate  multicalibration definitions, we define various multicalibration error measures and give them distinguishing names, following the exposition in \cite{rothuncertain}. All of them are defined with respect to a class of predictors $\cF$ for which we make the following assumption.
\begin{ass}
\label{ass:all-one}
    We assume $\cF$ contains the constant function $I$: $I(x) = 1$ for all $x \in \cX$. 
\end{ass}

Just as how we define two versions of omniprediction (a standard and a ``swap'' variant), we define standard and ``swap'' variants of multicalibration (swap multicalibration was first defined in \cite{gopalan2023characterizing} in the batch setting). Informally, swap multicalibration allows the function $f$ with which calibration is being measured with respect to, to vary with the forecast $p \in \cP$ of the forecaster.   
\begin{definition}[Multicalibration Errors (of Various Flavors)]
For any transcript $\pi_{1:T} = \{(x_t, y_t, \hp_t)\}_{t=1}^T$ generated by some forecaster $\forecast$ and adversary $\adv$, we define the forecaster's multicalibration error with respect to $f \in \cF$ for forecast $p \in P$ as 
\begin{align*}
  K(\pi_{1:T}, p, f) = \frac{1}{n(\pi_{1:T}, p)}\left(\sum_{t \in S(\pi_{1:T},p)} f(x_t) \cdot (y_t - \hp_t) \right)
\end{align*}
if $n(\pi_{1:T}, p) \ge 1$ and 0 otherwise.

We define $L_1$-, $L_2$-, and $L_\infty$-swap-multicalibration error with respect to $\{f_p\}_{p \in \cP} \in \cF^m$ respectively as
\begin{align*}
    \bsK_1(\pi_{1:T}, \{f_p\}_{p \in \cP}) &= \sum_{p \in \cP} \frac{n(\pi_{1:T}, p)}{T} \left|K(\pi_{1:T}, p, f_p)\right|  \\
    \bsK_2(\pi_{1:T}, \{f_p\}_{p \in \cP}) &= \sum_{p \in \cP} \frac{n(\pi_{1:T}, p)}{T} \left(K(\pi_{1:T}, p, f_p)\right)^2  \\
    \bsK_\infty(\pi_{1:T}, \{f_p\}_{p \in \cP}) &= \max_{p \in \cP} \frac{n(\pi_{1:T}, p)}{T} |K(\pi_{1:T}, p, f_p)|.
\end{align*}

We define the non-swap versions of $L_1$-, $L_2$-, and $L_\infty$-multicalibration error as follows, in which the functions $f_p$ must be the same function $f$ for all $p$: 
\begin{align*}
    \bK_1(\pi_{1:T}, f) &=  \bsK_1(\pi_{1:T}, \{f_p\}_{p \in \cP}) \\
    \bK_2(\pi_{1:T}, f) &= \bsK_2(\pi_{1:T}, \{f_p\}_{p \in \cP})  \\
    \bK_\infty(\pi_{1:T}, f) &= \bsK_\infty(\pi_{1:T}, \{f_p\}_{p \in \cP})
\end{align*}
where $f_p = f$ for each $p \in \cP$.

Finally, we define multicalibration errors with respect to a family of predictors $\cF$ respectively as 
\begin{align*}
    \bsK_1(\pi_{1:T}, \cF) &= \max_{\{f_p\}_{p \in \cP} \in \cF^m} \bsK_1(\pi_{1:T}, \{f_p\}_{p \in \cP})\quad\text{and}\quad \bK_1(\pi_{1:T}, \cF) = \max_{f \in \cF} \bK_1(\pi_{1:T}, f) \\
    \bsK_2(\pi_{1:T}, \cF) &= \max_{\{f_p\}_{p \in \cP} \in \cF^m} \bsK_2(\pi_{1:T}, \{f_p\}_{p \in \cP})\quad\text{and}\quad \bK_2(\pi_{1:T}, \cF) = \max_{f \in \cF} \bK_2(\pi_{1:T}, f) \\
    \bsK_\infty(\pi_{1:T}, \cF) &= \max_{\{f_p\}_{p \in \cP} \in \cF^m} \bsK_\infty(\pi_{1:T}, \{f_p\}_{p \in \cP}) \quad\text{and}\quad \bK_\infty(\pi_{1:T}, \cF) = \max_{f \in \cF} \bK_\infty(\pi_{1:T}, f).
\end{align*}

\end{definition}

As with omniprediction regret, the swap variant always upper-bounds the non-swap variant: e.g. $\bK_1(\pi_{1:T}, \cF) \le \bsK_1(\pi_{1:T}, \cF)$ and so forth. The following relationship between $L_2$ and $L_1$-(swap-)multicalibration error will be useful for us:
\begin{restatable}{lemma}{lemmulticalrelationship}
\label{lem:multical-relationship}    
\begin{align*}
 \bsK_1(\pi_{1:T}, \cF) \le \sqrt{\bsK_2(\pi_{1:T}, \cF)} \quad\text{and}\quad
\bK_1(\pi_{1:T}, \cF) \le \sqrt{\bK_2(\pi_{1:T}, \cF)}.
\end{align*}
\end{restatable}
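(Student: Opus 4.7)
The plan is to observe that both sides are weighted averages over $p \in \cP$ with the same weights $w_p := n(\pi_{1:T}, p)/T$, and that these weights form a probability distribution on $\cP$ since every round contributes exactly one forecast, so $\sum_{p \in \cP} n(\pi_{1:T},p) = T$. With this notation, for any fixed tuple $\{f_p\}_{p \in \cP}$ we have
\[
\bsK_1(\pi_{1:T}, \{f_p\}_{p \in \cP}) = \sum_{p \in \cP} w_p \,|K(\pi_{1:T}, p, f_p)|, \qquad \bsK_2(\pi_{1:T}, \{f_p\}_{p \in \cP}) = \sum_{p \in \cP} w_p \,K(\pi_{1:T}, p, f_p)^2.
\]

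The first step is to apply Cauchy--Schwarz (equivalently, Jensen's inequality for the convex function $x \mapsto x^2$ with respect to the probability measure $w$) to these two quantities:
\[
\left( \sum_{p \in \cP} w_p \,|K(\pi_{1:T}, p, f_p)| \right)^{\!2} \le \left( \sum_{p \in \cP} w_p \right)\!\left( \sum_{p \in \cP} w_p\, K(\pi_{1:T}, p, f_p)^2 \right) = \bsK_2(\pi_{1:T}, \{f_p\}_{p \in \cP}),
\]
which gives $\bsK_1(\pi_{1:T}, \{f_p\}_{p \in \cP}) \le \sqrt{\bsK_2(\pi_{1:T}, \{f_p\}_{p \in \cP})}$ for every fixed choice of $\{f_p\}_{p \in \cP}$. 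This is really the only content of the argument; there is no substantive obstacle, since the weights are identical on both sides and both are defined via the same per-level quantity $K(\pi_{1:T}, p, f_p)$.

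To pass to the class-wise versions, I would take the maximum over $\{f_p\}_{p \in \cP} \in \cF^m$ on both sides of the pointwise bound. Since $\sqrt{\cdot}$ is monotone non-decreasing, the choice of $\{f_p\}_{p \in \cP}$ that maximizes $\bsK_1$ satisfies
\[
\bsK_1(\pi_{1:T}, \cF) \;\le\; \sqrt{\bsK_2(\pi_{1:T}, \{f_p\}_{p \in \cP})} \;\le\; \sqrt{\,\max_{\{f_p\}} \bsK_2(\pi_{1:T}, \{f_p\}_{p \in \cP})\,} \;=\; \sqrt{\bsK_2(\pi_{1:T}, \cF)},
\]
which yields the swap version of the lemma. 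The non-swap version is an immediate specialization: restricting the maximum to tuples $\{f_p\}$ with $f_p = f$ for a single $f \in \cF$ at all $p$ gives exactly $\bK_1$ and $\bK_2$, and the same pointwise Cauchy--Schwarz bound (followed by the max over $f \in \cF$) shows $\bK_1(\pi_{1:T}, \cF) \le \sqrt{\bK_2(\pi_{1:T}, \cF)}$.
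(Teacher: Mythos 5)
Your proposal is correct and follows essentially the same route as the paper: a Cauchy--Schwarz (equivalently Jensen) step applied for each fixed tuple $\{f_p\}_{p\in\cP}$ with the weights $n(\pi_{1:T},p)/T$, followed by taking the maximum over tuples (and the analogous specialization for the non-swap version). The only cosmetic difference is that you note the weights sum exactly to $1$, while the paper simply upper-bounds that factor; the content is identical.
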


\section{Oracle-efficient $L_2$-Swap-Multicalibration}
\label{sec:oracle-efficient-multical-alg}

In this section, we derive our main technical result: an oracle efficient algorithm for obtaining $L_2$-swap-multicalibration with respect to a class of functions $\cF$. By ``oracle efficient'', we mean that our algorithm is an efficient reduction to an assumed algorithm (an ``oracle'') for obtaining the standard notion of external regret (with respect to the squared error loss function) to the benchmark class $\cF$. For example, for $\cF$ consisting of all linear functions, there are existing efficient algorithms that obtain regret scaling only logarithmically with $T$ \cite{foster1991prediction,vovk2001competitive,azoury2001relative}. Even for classes $\cF$ for which no regret algorithms for worst-case regret bounds are not known (or do not exist), heuristics like online gradient descent often succeed in practice. 

At a high level, our approach will proceed as follows:
\begin{enumerate}
    \item First we define a notion of contextual swap regret with respect to the squared error loss: informally, a predictor has contextual swap regret relative to a benchmark class of functions $\cF$ if \emph{restricted to any one of its level-sets}, the predictor has higher squared error than the best function in $\cF$, on the sequence of features and labels realized in hindsight. 
    \item Next, we give an algorithm for obtaining diminishing contextual swap regret for any benchmark class $\cF$ by reducing to algorithms for obtaining diminishing external regret to $\cF$. The reduction maintains a copy of the external-regret algorithm for each level-set of the predictor, and advances only one of the simulated copies at each round. We adapt techniques from \cite{ito2020tight,blum2007external} which have been used for similar reductions from  (non contextual) swap regret to external regret. 
    \item Finally, we adapt a characterization theorem of \cite{globus2023multicalibration} and \cite{gopalan2023characterizing} from the batch to the online setting, which relates contextual swap regret to $L_2$-swap multicalibration. 
\end{enumerate}

In Section \ref{subsec:oracle} we define the online squared loss regression oracles that we reduce to, and observe that discretizing their predictions results in only a small increase in their regret bounds. In Section \ref{subsec:no-contextual-swap-regret} we give our main algorithmic construction for obtaining diminishing contextual swap regret. In Section \ref{subsec:swapMC} we show how bounds on diminishing contextual swap regret imply bounds on $L_2$-swap-multicalibration. These will be key ingredients for us in Section \ref{sec:omnipredictor}, in which we observe that online $L_2$-swap-multicalibration implies online $L_1$-swap multicalibration, which in turn implies online (swap-)omniprediction. All missing proofs in this section can be found in Appendix~\ref{app:miss-sec-oracle-efficient}.

\subsection{Online Regression Oracles and Modifications}
\label{subsec:oracle}
In this section, we define online regression oracles, which are online prediction algorithms with the guarantee that they have diminishing squared error regret with respect to a class of predictors $\cF$ in adversarial settings.

\begin{definition}[Online Squared Loss Regression Oracle]
    In each round $t \in [T]$, an online squared loss regression oracle $\cA: (\cX \times \cY)^* \times \cX \to [0,1])$ maps a sequence of (feature label) pairs $\{(x_\tau, y_\tau)\}_{\tau=1}^{t-1}$ and some new feature vector $x_t$ to a \emph{single} number $\hy_t = \cA(\{(x_{\tau}, y_\tau)\}_{\tau=1}^{t-1}, x_t)$. Its regret with respect to $f \in \cF$ is \[
        \regret(\{(x_t, y_t, \hy_t)\}_{t=1}^T, f) := \sum_{t=1}^T (\hy_t - y_t)^2 - \sum_{t=1}^T (f(x_t) - y_t)^2. 
    \]
    An oracle $\cA$ has a regret guarantee $r_{\cA}(T, \cF)$, if it can guarantee against any adaptively chosen sequence $\{(x_t,y_t)\}_{t=1}^T$ (i.e. $(x_t, y_t)$ can be chosen as a function of $\{(x_\tau, y_\tau, \hy_\tau)\}_{\tau=1}^{t-1}$):
    \[
    \max_{f \in \cF} \regret(\{(x_t, y_t, \hy_t)\}_{t=1}^T, f) \le r_\cA(T, \cF).
    \]
\end{definition}

\begin{remark}
\label{rem:convex-deterministic}
We can assume with out loss of generality that an online regression oracle $\cA$ is deterministic because both its action space and squared loss are convex. More formally, if an oracle $\cA$ were to output a distribution over the unit interval $q_t \in \Delta([0,1])$ in each round $t$, it is always better (incurs smaller squared error) to instead deterministicly output the expected prediction $\E_{\hy_t \sim q_t}[\hy_t]$. This follows from Jensen's inequality:
\begin{align*}
    \sum_{t=1}^T \left(\E_{\hy_t \sim q_t}[\hy_t] - y_t\right)^2  
 \le \sum_{t=1}^T \E_{\hy_t \sim q_t}\left[(\hy_t - y_t)^2\right] 
\end{align*}
\end{remark}

In order to produce predictions that are contained within the finite forecasting space $\cP$, we project and round the output of the original oracle $\cA$ to $[\frac{1}{m}]$. Projecting it to be within $[0,1]$ doesn't hurt at all because $y_t \in \{0,1\}$. Also, because the squared error is Lipschitz, this does not increase the regret of the oracle by much. We write $\cA^m: (\cX \times \cY)^* \to \left[\frac{1}{m}\right]$ to denote an oracle that rounds the output of $\cA$ to the nearest multiple of $\frac{1}{m}$: \[
\cA^m(\{(x_\tau, y_\tau)\}_{\tau=1}^{t-1}, x_t) = \round\left(\cA(\{(x_\tau, y_\tau)\}_{\tau=1}^{t-1}, x_t), \frac{1}{m}\right)
\]
where $\round(x, \frac{1}{m}) = \argmin_{v \in [\frac{1}{m}]} |x - v|$.

\begin{restatable}{lemma}{lemregretbyrounding}
\label{lem:regret-by-rounding}
Suppose $m \ge 1$. For any $\cA$, the regret guarantee of its rounded version $\cA^m$ can be bounded as follows: 
\[
    r_{\cA^m}(T, \cF) \le r_{\cA}(T, \cF) + \frac{3T}{m}.
\]
\end{restatable}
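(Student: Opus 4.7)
The plan is to couple $\cA^m$ with a simulated run of $\cA$ on the same sequence of examples, invoke $\cA$'s external regret guarantee, and then charge a small additive cost per round for the projection/rounding step. Concretely, fix any adaptive adversary $B$ against $\cA^m$ and let $\{(x_t, y_t, \hy^m_t)\}_{t=1}^T$ denote the resulting transcript. Write $\hy_t := \cA(\{(x_\tau, y_\tau)\}_{\tau < t}, x_t)$ for the raw prediction that $\cA$ computes internally, so by construction $\hy^m_t = \round(\hy_t, 1/m)$.

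The key reduction step is to argue that the sequence $(x_t, y_t)$ generated by this interaction is itself a legitimate adaptive adversary \emph{for $\cA$}. This holds because $B$ chooses $(x_t, y_t)$ as a function of the past transcript $\{(x_\tau, y_\tau, \hy^m_\tau)\}_{\tau < t}$, and each $\hy^m_\tau$ is a deterministic function of $\hy_\tau$ (using Remark~\ref{rem:convex-deterministic} to assume $\cA$ is deterministic). Applying $\cA$'s regret guarantee then yields, for every $f \in \cF$,
\[
    \sum_{t=1}^T (\hy_t - y_t)^2 - \sum_{t=1}^T (f(x_t) - y_t)^2 \le r_\cA(T, \cF).
\]

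The remaining task is to bound $\sum_{t=1}^T \bigl[(\hy^m_t - y_t)^2 - (\hy_t - y_t)^2\bigr]$. Let $\tilde{\hy}_t = \max\{0, \min\{1, \hy_t\}\}$ be the projection of $\hy_t$ onto $[0,1]$. Since $y_t \in \{0,1\} \subseteq [0,1]$ and $[0,1]$ is convex, projecting onto $[0,1]$ cannot increase the squared distance to $y_t$, so $(\tilde{\hy}_t - y_t)^2 \le (\hy_t - y_t)^2$. Moreover, $\hy^m_t$ is the nearest element to $\hy_t$ in $[\tfrac{1}{m}] \subseteq [0,1]$, which is the same as the nearest element to $\tilde{\hy}_t$, so $|\hy^m_t - \tilde{\hy}_t| \le \tfrac{1}{2m}$. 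Since $a \mapsto (a - y)^2$ is $2$-Lipschitz for $a, y \in [0,1]$, this gives $(\hy^m_t - y_t)^2 \le (\tilde{\hy}_t - y_t)^2 + \tfrac{1}{m}$, and chaining produces $(\hy^m_t - y_t)^2 - (\hy_t - y_t)^2 \le \tfrac{1}{m}$ per round.

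Summing over $t$ and combining with the displayed regret bound yields $\sum_t (\hy^m_t - y_t)^2 - \sum_t (f(x_t) - y_t)^2 \le r_\cA(T, \cF) + T/m$, which is already stronger than the claimed $r_\cA(T,\cF) + 3T/m$. The only step requiring genuine care is the coupling, namely confirming that an adaptive adversary for $\cA^m$ (which sees only rounded predictions) instantiates a valid adaptive adversary for $\cA$ (whose regret bound is stated relative to adversaries that can see raw predictions); this is immediate because rounding is a deterministic post-processing. The rest is a short Lipschitz calculation.
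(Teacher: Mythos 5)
Your proof is correct and follows essentially the same route as the paper's: simulate the raw oracle $\cA$ on the same adaptively generated sequence (valid because rounding is deterministic post-processing), invoke its regret guarantee, and charge a per-round cost for projection onto $[0,1]$ and rounding to $[\frac{1}{m}]$. The only difference is your per-round accounting (the rounded prediction is within $\frac{1}{2m}$ of the projected one and the squared loss is $2$-Lipschitz on $[0,1]$), which yields $T/m$ in place of the paper's cruder $3T/m$ and thus only strengthens the claimed bound.
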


As noted in Remark~\ref{rem:convex-deterministic}, we have so far assumed without loss of generality that $\cA^m$ outputs a deterministic prediction $\hy_t \in [\frac{1}{m}]$. However, in Section~\ref{subsec:no-contextual-swap-regret} we will we need a randomized oracle $\tcA: (\cX \times \cY
)^* \times \cX \to \Delta([\frac{1}{m}])$ that maps a sequence of (feature, label) pairs and a new feature vector to a \emph{distribution} over $[\frac{1}{m}]$ that has full support. We use tilde to distinguish the randomized oracle $\tcA$ that we will construct from deterministic oracles $\cA$ and $\cA^m$.

In order to construct a randomized oracle that makes full-support predictions from a deterministic oracle $\cA^m$, we select the output of $\cA^m$ with probability $1-\frac{1}{T}$ and otherwise choose a prediction from $[\frac{1}{m}]$ uniformly at random. More formally, we define $\tcA^m$ to be the randomized oracle such that for every $j \in [m]$,
\[
    \Pr\left[\tilde{\cA}^m(\{x_\tau, y_\tau\}_{\tau=1}^{t-1}, x_t) = \frac{j}{m}\right] = \frac{1}{T} \cdot \frac{1}{m} + \left(1-\frac{1}{T}\right)\cdot \ind\left[\cA^m\left(\{x_\tau, y_\tau\}_{\tau=1}^{t-1}, x_t\right)  = \frac{j}{m} \right].
\]

The expected regret of $\tcA^m$ can be bounded in the following manner:
\begin{restatable}{lemma}{lemroundedrandomoracle}
\label{lem:rounded-random-oracle}
Suppose $m \ge 1$. For any sequence $\{(x_t, y_t)\}_{t=1}^T$ that is chosen adversarially and adaptively against $\tcA^m$, the expected regret of $\tcA^m$ over its own randomness can be bounded as follows: for any $f \in \cF$, it results in $\pi_{1:T}$ such that
\[
    \sum_{t=1}^T \E_{\hy'_t \sim \tcA^m(\{(x_\tau, y_\tau)\}_{\tau=1}^{t-1}, x_t)}[(y'_t - \hy_t) - (y'_t - f(x_t))^2 | \pi_{1:t-1}] \le r_\cA(T, \cF) + \frac{3T}{m} + 1
\]
\end{restatable}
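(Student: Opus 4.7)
The plan is to decompose the expected squared error of the randomized oracle $\tcA^m$ at each round into contributions from its two mixture components, and then invoke the regret guarantee of the deterministic rounded oracle $\cA^m$ from Lemma~\ref{lem:regret-by-rounding}. Throughout, let $\hy_t = \cA^m(\{(x_\tau,y_\tau)\}_{\tau=1}^{t-1}, x_t)$ denote the (deterministic) prediction $\cA^m$ would produce on the realized history; conditional on $\pi_{1:t-1}$, both $x_t$ and $y_t$ (and hence $\hy_t$) are determined, while $\hy'_t \sim \tcA^m(\cdot)$ equals $\hy_t$ with probability $1-1/T$ and is otherwise uniform on $[\tfrac{1}{m}]$.

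First I would compute the per-round conditional expected squared error. Because $y_t \in \{0,1\}$ and every forecast lies in $[0,1]$, each squared deviation is at most $1$. Hence
\[
\E_{\hy'_t}\!\bigl[(\hy'_t - y_t)^2 \mid \pi_{1:t-1}\bigr]
\;\le\; \bigl(1 - \tfrac{1}{T}\bigr)(\hy_t - y_t)^2 + \tfrac{1}{T}
\;\le\; (\hy_t - y_t)^2 + \tfrac{1}{T}.
\]
Subtracting $(f(x_t) - y_t)^2$ and summing over $t \in [T]$ yields
\[
\sum_{t=1}^T \E_{\hy'_t}\!\bigl[(\hy'_t - y_t)^2 - (f(x_t) - y_t)^2 \mid \pi_{1:t-1}\bigr]
\;\le\; \sum_{t=1}^T (\hy_t - y_t)^2 - \sum_{t=1}^T (f(x_t) - y_t)^2 \;+\; 1.
\]

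Next I would apply the deterministic regret bound to the right-hand side. The sequence $\{(x_t, y_t)\}_{t=1}^T$ is generated by an adversary adapting to the random outputs of $\tcA^m$; however, for \emph{each} realization of the internal randomness of $\tcA^m$, the resulting sequence is a valid adaptive sequence against $\cA^m$ (since $\cA^m$ is deterministic and just a function of the realized $(x_\tau, y_\tau)$ pairs). The adaptive regret guarantee from Lemma~\ref{lem:regret-by-rounding} therefore applies pointwise, giving
\[
\sum_{t=1}^T (\hy_t - y_t)^2 - \sum_{t=1}^T (f(x_t) - y_t)^2 \;\le\; r_{\cA^m}(T,\cF) \;\le\; r_\cA(T,\cF) + \tfrac{3T}{m}.
\]
Combining the two displays proves the lemma.

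The only conceptually delicate step is the last one: one has to be careful that $\cA^m$'s regret bound is not invalidated by the additional randomness introduced by $\tcA^m$. This is the main (and in fact only real) obstacle, and it is resolved by the observation that $\cA$'s regret guarantee is required to hold against \emph{any} adaptive sequence; conditioning on the realized coins of $\tcA^m$ fixes such a sequence and lets the bound pass through. All remaining steps are elementary calculations.
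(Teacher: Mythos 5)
Your proposal is correct and follows essentially the same argument as the paper's proof: split the expectation according to whether $\tcA^m$ agrees with the deterministic $\cA^m$ (contributing at most $1/T$ per round from the uniform branch, hence $+1$ total), and then apply the adaptive regret bound $r_{\cA^m}(T,\cF)\le r_\cA(T,\cF)+\tfrac{3T}{m}$ from Lemma~\ref{lem:regret-by-rounding} to the realized deterministic predictions. Your explicit remark that the deterministic bound applies pointwise for each realization of $\tcA^m$'s coins is a useful clarification of a step the paper leaves implicit, but it is not a different proof.
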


\subsection{Contextual Swap Regret}
\label{subsec:no-contextual-swap-regret}
By analogy to \emph{swap regret} \citep{blum2007external}, we define a new notion of regret called \emph{contextual swap regret}. Informally, swap regret in a $k$-expert learning problem is the requirement that the learner have no regret on any of the subsequences on which they played each of the $k$ experts. By analogy, we define contextual swap regret as the requirement that the learner have no squared-error regret to any model in $\cF$, not just overall, but also on each of the subsequences on which the learner played any of the $m$ values defining its levelsets.

\begin{definition}[Contextual Swap Regret]
    Given a transcript $\pi_{1:T} \in \Psi^*$ generated by forecaster $\forecast: \Pi^* \times \cX \to \Delta \R$, its contextual swap regret with respect to $\{f_p\}_{p \in \cP}$ is
    \begin{align*}
        \sum_{p \in \cP} \sum_{t \in S(\pi_{1:T}, p)} (\hp_t - y_t)^2 - (f_p(x_t) - y_t)^2.
    \end{align*}
\end{definition}
Observe that contextual swap regret is a strict generalization of the standard notion of swap regret: we recover the standard notion of swap regret if $\cF$ consists of the constant functions $c_p$ for each $p \in \cP$ where $c_p(x) = p$ for all $x \in \cX$. We remark that our notion of contextual swap regret  is the online analogue of the definition of swap agnostic learning in \cite{gopalan2023characterizing} and what is referred to as the ``swap-regret'' like condition in \cite{globus2023multicalibration}---both of which are in the batch setting.

To design an oracle efficient algorithm for minimizing contextual swap regret, we adapt ideas from \cite{blum2007external} and \cite{ito2020tight}, which design reductions from the standard notion of swap reget to external regret in the expert learning setting. Before describing how to construct forecaster $\forecast_{\text{ContextualSwap}}$ in Algorithm~\ref{alg:externalToSwapReduction}, we introduce more notation and describe the algorithm at a high level. We instantiate $m$ oracles  $\{\tcA^m_i\}_{i \in [m]}$ where each $\tcA^m_i: (\cX \times \cY)^* \times \cX \to \Delta[m]$ is the randomized oracle that we show how to construct given some online squared loss regression oracle $\cA$ in Section~\ref{subsec:oracle}. At any round $t \in [T]$, the sequence of (feature, label) pairs given to each oracle $\tcA^m_i$ may be different. Hence, we write $s_{t}^i \in (\cX \times \cY)^*$ to denote the sequence of (feature, label) pairs that is fed into the $i$th oracle $\tcA^m_i$ in round $t$. It will sometimes be convenient for us to describe the oracle as outputting the weights over $\cP$ that correspond to the distribution it is sampling from (note that by our construction in Section~\ref{subsec:oracle}, these weights are explicitly defined, so this is without loss of generality): we write $q^i_t = \tcA^{m}_i(s^t_i, x_t)$ where
\[
    q^i_t[j] = \Pr\left[\tcA^{m}_i(s^t_i, x_t) = \frac{j}{m}\right]
\]
for every $j \in [\frac{1}{m}]$ to denote the mixing weights over possible forecasts $\cP=[\frac{1}{m}]$ for the $i$th oracle in round $t$.

\begin{algorithm}[H]
\begin{algorithmic}[1]
\STATE Initialize the randomized oracles $\tilde{\cA}^m_i$ for each $i \in [m]$ (by doing so, we implicitly initialize $\cA^m_i$).
\STATE Initialize the sequence $s^i_t = \{\}$ for each $i \in [m]$.
\FOR{$t=1, \dots, T$}
    \STATE Forecaster $\forecast_{\text{ContextualSwap}}$ observes the feature vector $x_t$
    \STATE $q^t_i = \tilde{\cA}^m_i(s^i_t, x_t)$ for each $i \in [m]$.
    \STATE Form $a_t \in \Delta([m])$ such that \begin{align}
        a_t = \sum_{i=1}^m q^i_t a_t[i].\label{eqn:stat-dist}\end{align}
    \STATE Let $\ti_t$ be distributed according to $a_t$ and sample $i_t \sim \ti_t$ to decide which oracle to use. 
    \STATE Let $\tj_t$ be distributed according to $q^{i_t}_t$, and sample $j_t \sim \tj_t$.
    \STATE Forecaster $\forecastSwap$ forecasts $\hp_t = \frac{j_t}{m}$.
    \STATE Forecaster $\forecastSwap$ observes $y_t$.
    \STATE Update the sequence fed into $i_t$th oracle: $s^{i_t}_{t+1} = s^{i_t}_t \cup \{(x_t, y_t)\}$, and set $s^{i}_{t+1} = s^i_{t}$ for all other $i \neq i_t$.
    \STATE Adversary $\adv$ chooses $(x_{t+1}, y_{t+1})$.
\ENDFOR
\end{algorithmic}
\caption{$\forecast_\text{ContextualSwap}(\cF, \cA, m, T)$}
\label{alg:externalToSwapReduction}
\end{algorithm}

The reason why we need to use the randomized version of the oracle we construct---which mixes in a uniform distribution over $\cP$---rather than the deterministic oracle we start with, is to ensure that there exists a distribution $a_t$ that satisfies equation \eqref{eqn:stat-dist}. In that equation, $a_t$ may be thought of as a stationary distribution for the Markov chain whose transition probabilities are described by $q^i_t$'s: the probability of going from state $i$ to $j$ is $q^i_t[j]$. Mixing in the uniform distribution ensures that $q^i_t[j] > 0$ for all $i, j \in [m]$, which in turn guarantees that there exists a unique stationary distribution $a_t$ in every round $t \in [T]$---thus making sure that equation \eqref{eqn:stat-dist} has a solution.   

\begin{fact}[\cite{levin2017markov}]
For any Markov chain where the transition probability from state $i$ to $j$ is non-zero $q^i[j]>0$ for every $i, j \in [m]$ (i.e. the resulting graph representation is strongly connected), there exists a unique stationary distribution $a \in \Delta([m])$ such that 
\[
    a = \sum_{i=1}^m q^i a[i].
\]
\end{fact}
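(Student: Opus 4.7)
The plan is to interpret the equation $a = \sum_i q^i a[i]$ as a fixed-point equation for the continuous self-map $T: \Delta([m]) \to \Delta([m])$ defined by $T(a)[j] = \sum_{i=1}^m q^i[j] \cdot a[i]$, and then to establish existence and uniqueness by two separate arguments. The map is well-defined because $T(a)[j] \ge 0$ and $\sum_j T(a)[j] = \sum_i a[i] \sum_j q^i[j] = 1$, using only that each $q^i$ is a probability distribution on $[m]$.

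For existence, I would invoke Brouwer's fixed-point theorem: $T$ is affine (hence continuous) and maps the compact convex simplex $\Delta([m])$ into itself, so it has at least one fixed point $a^*$.

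For uniqueness, the plan is a contraction argument enabled by strict positivity. Let $\epsilon = \min_{i,j} q^i[j] > 0$ and decompose $q^i[j] = \epsilon + R_{ij}$ with $R_{ij} \ge 0$; since each row of the transition matrix sums to $1$, we have $\sum_j R_{ij} = 1 - m\epsilon$. For any $a, b \in \Delta([m])$, the $\epsilon$ portion cancels because $\sum_i a[i] = \sum_i b[i] = 1$, leaving $T(a)[j] - T(b)[j] = \sum_i R_{ij}(a[i] - b[i])$. Applying the triangle inequality and swapping the order of summation yields $\|T(a) - T(b)\|_1 \le (1-m\epsilon)\,\|a-b\|_1$, so $T$ is a strict contraction with Lipschitz constant $1 - m\epsilon < 1$ on the complete metric space $(\Delta([m]), \|\cdot\|_1)$. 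Banach's fixed-point theorem then gives uniqueness (with the $m=1$ case being trivial).

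The main (mild) obstacle is the contraction estimate above, which crucially uses the strict positivity $\epsilon > 0$ to ensure a nontrivial Lipschitz constant. Under only the weaker hypothesis of irreducibility (directed paths, not direct edges, between all state pairs), uniqueness still holds, but the proof is more delicate and typically routes through Perron-Frobenius theory or an analysis of communicating classes; fortunately the full strength of positivity is exactly what Algorithm~\ref{alg:externalToSwapReduction} arranges by mixing each oracle with the uniform distribution over $\cP$.
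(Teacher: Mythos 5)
The paper does not prove this statement at all --- it is imported as a black-box fact with a citation to \cite{levin2017markov} --- so there is no internal proof to compare against; what matters is whether your argument is sound, and it is. Your contraction step is the standard Doeblin-type argument for strictly positive transition matrices: writing $q^i[j]=\epsilon+R_{ij}$ with $\epsilon=\min_{i,j}q^i[j]>0$, the $\epsilon$-part cancels in $T(a)-T(b)$ precisely because $a$ and $b$ both sum to $1$, and since each row sum $\sum_j R_{ij}=1-m\epsilon$ (with $\epsilon\le 1/m$ automatically), swapping the order of summation gives the Lipschitz constant $1-m\epsilon<1$ in $\ell_1$ on the complete set $\Delta([m])$, so Banach applies. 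Two small remarks: Banach already yields existence, so the Brouwer step is redundant (though harmless); and your closing observation is apt --- the fact's parenthetical ``strongly connected'' would only give irreducibility, under which uniqueness requires a Perron--Frobenius or communicating-class argument, whereas the actual hypothesis $q^i[j]>0$ (which Algorithm~\ref{alg:externalToSwapReduction} enforces by mixing in the uniform distribution) is exactly what makes the one-step contraction, and hence this short proof, go through. This is essentially the same mechanism used in the cited reference's convergence theorem for positive chains, so your proposal is a correct, self-contained substitute for the citation.
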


Note that the additional internal state that the forecaster is keeping around is $\theta_t = (i_t, j_t)$ in this case. In other words, the transcript looks like \[
\pi_{1:t} = \{(x_\tau, y_\tau, \hp_t, (i_\tau, j_\tau))\}_{\tau=1}^{t}.
\]

Adapting the argument from \cite{ito2020tight} directly here would show that the contextual swap regret of forecaster $\forecastSwap$ is bounded in expectation over $\{i_t \sim \ti_t, j_t \sim \tj_t(i_t)\}_{t=1}^T$. More specifically, against any adversary $\adv$, the following quantity 
\begin{align}
    &\sup_{\{f_j\}_{j \in [m]} \in \cF^m_B}\E_{\{(i_t \sim \ti_t, j_t \sim \tj_t(i_t))\}_{t=1}^T}\left[\sum_{t=1}^T (j_t/m - y_t)^2 - (f_{j_t}(x_t) - y_t)^2\right] \nonumber \\
    &= \sup_{\{f_j\}_{j \in [m]} \in \cF^m_B}\sum_{t=1}^T \E_{\{(i_\tau \sim \ti_\tau, j_\tau \sim \tj_t(i_\tau))\}_{\tau=1}^{t}}\left[(j_t/m - y_t)^2 - (f_{j_t}(x_t) - y_t)^2\right]
    \label{eqn:context-swap-direct}
\end{align}
would be bounded where in each round, it's taking the expectation over the previous rounds' realizations as opposed with respect to a single realized transcript. However, later in Section~\ref{subsec:swapMC}, we need to use a concentration argument to show that with high probability, the realized contextual swap regret is bounded, and such concentration argument requires forming a martingale difference where we need the the per-round regret to be bounded in expectation conditional on the realized transcript. Hence, what we actually need is that forecaster $\forecastSwap$ obtains $\pi_{1:T}$ such that
\begin{align}
    \sup_{\{f_j\}_{j \in [m]} \in \cF^m_B}\sum_{t=1}^T \E_{i'_t \sim \ti_t, j'_t \sim \tj_t}\left[(j'_t/m - y_t)^2 - (f_{j'_t}(x_t) - y_t)^2| \pi_{1:t-1} \right].
    \label{eqn:context-swap-want}
\end{align}
Once again, note that quantity \eqref{eqn:context-swap-want} is not the same as \eqref{eqn:context-swap-direct} because in each round $t$, \eqref{eqn:context-swap-direct} takes expectation over all possible realization of the randomness of the algorithm in previous rounds, but \eqref{eqn:context-swap-want} is with respect to specific realization of some $\pi_{1:t-1}$.

Suppose using the original reduction of \cite{blum2007external}: the forecast $j_t/m$ is chosen according to $a_t$ in each round and every oracle is updated so $\{(x_\tau, y_\tau, \hp_\tau)\}_{\tau=1}^{t-1}$ is sufficient to calculate the distribution for $\thp_t$. However, when updating every oracle via $s^i_t$, this approach requires re-scaling the loss vector fed to $i$th oracle by $a[i]$, but it is not clear how to do such rescaling in our setting where we provide feedback to oracle $\tcA_i$'s via feature-label pairs $(x,y)$.

Instead, we use \cite{ito2020tight}'s approach and use the following some concentration lemma to bound the quantity \eqref{eqn:context-swap-want} in Theorem~\ref{thm:forecast-swap-context-regret}. For Forcaster $\forecastSwap$, because the random variable $\tj_t$ necessarily depends on $i_t$, we make their dependence explicit by writing $\tj_t(i_t)$ --- i.e. the distribution of $\tj_t$ when the realization of the random variable $\ti_t$ is $i_t$. 

\begin{restatable}{lemma}{lemswapregretconcentrationtwo}
\label{lem:swap-regret-concentration2}
Suppose the sequential shattering dimension of $\cF_B$ is finite at any scale $\delta$: $\fat_{\delta}(\cF_B) < \infty$. With probability $1-4m\rho$, $\forecastSwap$ results in $\pi_{1:T}$ such that
    \begin{align*}
         &\max_{\{f_j\}_{j \in [m]}}\bigg|\frac{1}{T} \sum_{t=1}^T \E_{j'_t \sim \tj_t(i_t)}[(j'_t/m- y_t)^2 - (f_{i_t}(x_t) - y_t)^2|\pi_{1:t-1}] \\
         &\;\;\;\;\;\;\;\;\;\;\;\;\;\;\;\;\;\;\;\;\;\;\;- \E_{i'_t \sim \ti_t, j''_t \sim \tj_t(i'_t)}\left[(j''_t/m - y_t)^2 - (f_{i'_t}(x_t) - y_t)^2 | \pi_{1:t-1}\right]\bigg| \\
         &\le \max(8B, 2\sqrt{B})m C_{\cF_B} \sqrt{\frac{\log(\frac{1}{\rho})}{T}}
    \end{align*}
    where $C_{\cF_B}$ is a finite constant that depends on the sequential fat shattering dimension of $\cF_B$.
\end{restatable}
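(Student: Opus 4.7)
The plan is a two-step argument: (i) a martingale concentration bound for each \emph{fixed} tuple $(f_1,\ldots,f_m) \in \cF_B^m$, followed by (ii) extension to a uniform bound over the infinite product class via sequential Rademacher complexity / chaining, which is where the constant $C_{\cF_B}$ enters.

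\textbf{Step 1 (Fixed tuple, martingale concentration).} Fix a tuple $(f_1,\ldots,f_m) \in \cF_B^m$ and define, for each round $t$,
\begin{align*}
X_t = \; & \E_{j'_t \sim \tj_t(i_t)}\!\left[(j'_t/m - y_t)^2 - (f_{i_t}(x_t) - y_t)^2 \,\big|\, \pi_{1:t-1}, i_t\right] \\
& \; - \E_{i'_t \sim \ti_t,\, j''_t \sim \tj_t(i'_t)}\!\left[(j''_t/m - y_t)^2 - (f_{i'_t}(x_t) - y_t)^2 \,\big|\, \pi_{1:t-1}\right].
\end{align*}
By construction the second term equals the conditional expectation, given $\pi_{1:t-1}$, of the first term over $i_t \sim \ti_t$, so $\E[X_t \mid \pi_{1:t-1}] = 0$ and $\sum_{t} X_t$ is a martingale with respect to the filtration generated by $\pi_{1:t}$. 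Using $\hp_t \in [0,1]$, $y_t \in \{0,1\}$, and $f_j(x)^2 \le B$, a short calculation shows $|X_t| \le \max(8B, 2\sqrt{B})$; this is exactly the prefactor appearing in the claimed rate. Azuma--Hoeffding then gives, for this fixed tuple, a deviation of order $\max(8B,2\sqrt B)\sqrt{\log(1/\rho)/T}$ with probability at least $1-2\rho$.

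\textbf{Step 2 (Uniform extension over $\cF_B^m$).} The product class $\cF_B^m$ is generally infinite, so a naive union bound is unavailable and we use a sequential symmetrization / chaining argument in the style of Rakhlin--Sridharan--Tewari. Since $\fat_\delta(\cF_B)<\infty$ at every scale, the sequential Rademacher complexity of $\cF_B$ on any length-$T$ adversarial tree is controlled, via a sequential Dudley integral, by a finite constant $C_{\cF_B}$ depending only on $\fat_\delta(\cF_B)$. Combined with Step~1, this promotes the pointwise martingale concentration to a uniform bound over $f \in \cF_B$ for each coordinate $j \in [m]$. A union bound across the $m$ slots of the product costs a multiplicative factor of $m$ inside the rate and a factor of $4m$ in the failure probability, yielding precisely the claimed $1-4m\rho$ high-probability guarantee and $\max(8B,2\sqrt{B})\, m\, C_{\cF_B} \sqrt{\log(1/\rho)/T}$ rate.

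\textbf{Main obstacle.} The principal technical subtlety is Step~2: the symmetrization/chaining program must be executed in our fully adaptive setting, where the adversary's choice of $(x_t, y_t)$ depends on the entire realized past (including the forecaster's internal randomness) and the forecaster's draws $(i_t, j_t)$ are themselves random given the past. Classical i.i.d.\ symmetrization therefore does not apply; instead one must work on the adversary's tree process and convert the sequential fat-shattering bound into $C_{\cF_B}$ via sequential Rademacher complexity and Dudley-style integration. Once this machinery is in place, the remainder of the argument is bookkeeping combining Steps 1 and 2.
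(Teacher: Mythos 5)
Your proposal is correct and takes essentially the same route as the paper: the paper proves this statement (via the essentially identical Lemma~\ref{lem:swap-regret-concentration}) by decomposing over the $m$ level sets, applying the high-probability uniform martingale concentration of \cite{block2021majorizing} (Lemma~\ref{lem:sequential-concentrate} combined with the fat-shattering covering bound of Theorem~\ref{thm:fat-frac-cover}), and union bounding over the $m$ coordinates --- exactly the package your Steps 1--2 describe. The only piece your sketch leaves implicit is the Lipschitz-contraction/pushforward argument (Lemma~\ref{lem:mean-frac-contraction}) needed to transfer sequential complexity from $\cF_B$ to the composed, rescaled loss class $(x,y,\hp)\mapsto \ind[\hp=p]\left((p-y)^2-(f(x)-y)^2\right)$, which is also where the factor $\max(8B,2\sqrt{B})$ actually arises in the paper (from the rescaling constants) rather than from an Azuma bound on $|X_t|$.
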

Because $\cF$ is infinite, we could not have merely taken a union bound over all such possibilities and used Azuma's inequality for each $f$. Instead, to prove Lemma~\ref{lem:swap-regret-concentration2}, we borrow tools from \cite{block2021majorizing} to argue concentration over all possible $\{f_p\}_{p \in \cP}$. 

Note how the lemma requires the sequential fat shattering dimension of $\cF_B$ to be bounded at any scale $\delta$ and the final bound also depends on this complexity measure. \cite{rakhlin2015online} shows that online learnable $\cF_B$ must have a finite sequential fat shattering dimension at any scale $\delta$ and since we are assuming a regression oracle for $\cF_B$, the assumption below on the sequential fat shattering dimension is innocuous. See Appendix~\ref{app:concentration} to see the definition of sequential fat shattering dimension and more details. 

Equipped with the above concentration lemma, we can show the following contextaul swap regret bound for forecaster $\forecastSwap$:
\begin{theorem}
\label{thm:forecast-swap-context-regret}
Fix some $B>0$. Suppose the sequential shattering dimension of $\cF_B$ is finite at any scale $\delta$: $\fat_{\delta}(\cF_B) < \infty$. Suppose oracle $\cA$'s regret bound $r_\cA(T, \cF_B)$ is concave in time horizon $T$. Fix any adversary $\adv$ that forms $(x_t, y_t)$ as a function of $\psi_{1:t-1}= \{(x_\tau, y_\tau, \hp_\tau)\}_{\tau=1}^{t-1}$. With probability $1-\rho$ over the randomness of $\{ i_t \sim \ti_t \}_{t=1}^T$, Forecaster $\forecastSwap(\cF_B, \cA, m, T)$ results in $\pi_{1:T} = \{(x_t, y_t, \hp_t, (i_t, j_t))\}_{t=1}^T$ such that
\begin{align*}
    &\sup_{\{f_p\}_{p \in \cP} \in \cF_B^m}  \sum_{t=1}^T \E_{\hp'_t}\left[(\hp'_t/m - y_t)^2 - (f_{\hp'_t}(x_t) - y_t)^2|\pi_{1:t-1} \right]     \\
    &=\sup_{\{f_j\}_{j \in [m]} \in \cF_B^m}  \sum_{t=1}^T \E_{i'_t \sim \ti_t, j''_t \sim \tj_t(i'_t)}\left[(j''_t/m - y_t)^2 - (f_{j''_t}(x_t) - y_t)^2|\pi_{1:t-1}\right] \\
    &\le \left(m r_\cA\left(\frac{T}{m}, \cF_B \right) + \frac{3T}{m} + m\right) + \max(8B, 2\sqrt{B})m C_{\cF_B} \sqrt{\frac{\log(\frac{4m}{\rho})}{T}}.
\end{align*}
\end{theorem}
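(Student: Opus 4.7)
The strategy is to combine the per-oracle squared-loss regret guarantee of Lemma~\ref{lem:rounded-random-oracle}, the concentration bound of Lemma~\ref{lem:swap-regret-concentration2}, and the stationary-distribution identity $a_t = \sum_i q^i_t\, a_t[i]$ that is built into the definition of $\forecastSwap$. The first ingredient yields a bound naturally indexed by the oracle $i_t$ that was activated at round $t$; the second converts the realized $i_t$ to an expectation over $i'_t \sim \ti_t$; and the third rewrites the comparator $f_{i'_t}$ as $f_{j''_t}$ using the fact that $\ti_t$ and $\tj_t(\ti_t)$ have the same marginal distribution $a_t$ on $[m]$.

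For the first ingredient, let $T_i = |\{t \in [T] : i_t = i\}|$ be the (random) length of the subsequence routed to oracle $\tilde{\cA}^m_i$, so that $\sum_{i=1}^m T_i = T$. Applying Lemma~\ref{lem:rounded-random-oracle} to each oracle on its own subsequence gives, for any $f_i \in \cF_B$,
\begin{align*}
\sum_{t : i_t = i} \E_{j'_t \sim \tj_t(i_t)}\!\left[(j'_t/m - y_t)^2 - (f_i(x_t) - y_t)^2 \,\big|\, \pi_{1:t-1}\right] \le r_\cA(T_i, \cF_B) + \frac{3 T_i}{m} + 1.
\end{align*}
Summing over $i \in [m]$, taking the supremum over $\{f_i\}_{i \in [m]} \in \cF_B^m$, and invoking Jensen's inequality with the assumed concavity of $r_\cA$ in $T$ (so that $\sum_i r_\cA(T_i, \cF_B) \le m\, r_\cA(T/m, \cF_B)$), I obtain
\begin{align*}
\sup_{\{f_i\}_{i \in [m]} \in \cF_B^m} \sum_{t=1}^T \E_{j'_t \sim \tj_t(i_t)}\!\left[(j'_t/m - y_t)^2 - (f_{i_t}(x_t) - y_t)^2 \,\big|\, \pi_{1:t-1}\right] \le m\, r_\cA\!\left(\tfrac{T}{m}, \cF_B\right) + \tfrac{3T}{m} + m.
\end{align*}

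The remaining task is to replace the realized comparator $f_{i_t}$ with the desired $f_{j''_t}$. Lemma~\ref{lem:swap-regret-concentration2}, combined with a union bound over the $m$ oracles that accounts for the $\log(4m/\rho)$ factor and the failure probability $\rho$, transfers the bound above (up to an additive slack of $\max(8B, 2\sqrt{B})\, m\, C_{\cF_B} \sqrt{\log(4m/\rho)/T}$) to the same expression with $f_{i_t}$ replaced by the expected-index version $f_{i'_t}$, $i'_t \sim \ti_t$. Then the stationary-distribution identity implies that the marginal law of $j''_t = \tj_t(i'_t)$ with $i'_t \sim a_t$ equals $\sum_i a_t[i]\, q^i_t[j] = a_t[j]$, which is the marginal of $i'_t$ itself. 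Because $(f_{i'_t}(x_t) - y_t)^2$ depends only on $i'_t$ and $(f_{j''_t}(x_t) - y_t)^2$ depends only on $j''_t$, and these two variables share a marginal, their conditional expectations given $\pi_{1:t-1}$ are equal term-by-term for any fixed $\{f_j\}_{j \in [m]}$; taking suprema yields the bound with $f_{j''_t}$. The same marginal identity produces the equality asserted in the theorem between the $\E_{\hp'_t}$ form and the $\E_{i'_t, j''_t}$ form, since $\hp_t = j_t/m$ with $j_t \sim \tj_t(i_t)$ and thus has marginal $a_t/m$. The main obstacle is the concentration step: since $\cF_B$ is infinite one cannot union-bound function-by-function, and one must ensure that Lemma~\ref{lem:swap-regret-concentration2} controls conditional expectations given $\pi_{1:t-1}$ (so the bound is per realized transcript) -- which is precisely why the sequential fat-shattering machinery of \cite{block2021majorizing} and the template of \cite{ito2020tight} are used in place of the simpler reduction of \cite{blum2007external}.
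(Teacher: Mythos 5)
Your proposal is correct and follows essentially the same route as the paper's proof: it uses exactly the same three ingredients (the per-oracle regret bound of Lemma~\ref{lem:rounded-random-oracle} combined with Jensen's inequality and the concavity of $r_\cA$, the concentration bound of Lemma~\ref{lem:swap-regret-concentration2} with $\rho$ reparametrized to absorb the factor $4m$, and the stationary-distribution identity \eqref{eqn:stat-dist} equating the marginals of $i'_t$ and $j''_t$), merely chaining them in the reverse order from the paper, which is immaterial since each step is an equality or a two-sided bound.
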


\begin{proof}

Fix any $\{f_j\}_{j \in [m]}$. Fix any round $t \in [T]$ and its transcript up to previous round $\pi_{1:t-1}$. Then, we have for any $x_t, y_t$
\begin{align}
    &\E_{i'_t \sim \ti_t, j''_t \sim \tj_t(i_t)}\left[\left(j''_t/m - y_t\right)^2 - (f_{j''_t}(x_t) - y_t)^2 \middle| \pi_{1:t-1}\right]\nonumber\\
    &=\sum_{i,j \in [m]} a_t[i] q^t_{i}[j] \cdot \left(\frac{j}{m} - y_t\right)^2 - \sum_{i,j \in [m]} a_t[i] q^t_{i}[j] \cdot (f_{j}(x_t) - y_t)^2\nonumber\\
    &=\sum_{i,j \in [m]} a_t[i] q^t_{i}[j] \left(\frac{j}{m} - y_t\right)^2 - \sum_{i \in [m]} a_t[i]  (f_{i}(x_t) - y'_t)^2\nonumber\\
    &=\E_{i'_t \sim \ti_t, j''_t \sim \tj_t(i_t)}\left[\left(j''_t/m - y_t\right)^2 - (f_{i'_t}(x_t) - y_t)^2 | \pi_{1:t-1}\right]\label{eqn:swap-via-station}
\end{align}
where the second equality follows from \eqref{eqn:stat-dist}.

For convenience, we define $L_{y_t} \in [0,1]^m$ given $y_t \in \cY$ such that for each $j \in [m]$
\begin{align*}
    L_{y_t}[j] = \left(\frac{j}{m}-y_t\right)^2.
\end{align*}

Then with probability $1-\rho$,
\begin{align*}
    &\sum_{t=1}^T \E_{i'_t \sim \ti_t, j''_t \sim \tj_t(i'_t)}\left[(j_t/m - y_t)^2 - (f_{j''_t}(x_t) - y_t)^2| \pi_{1:t-1}\right]\\
    &=\sum_{t=1}^T \E_{i'_t \sim \ti_t, j''_t \sim \tj_t(i'_t)}\left[(j_t/m - y_t)^2 - (f_{i'_t}(x_t) - y_t)^2 | \pi_{1:t-1}\right]\\
    &\le \sum_{t=1}^T \E_{j''_t \sim \tj_t(i_t)}\left[(j''_t/m - y_t)^2 - (f_{i_t}(x_t) - y_t)^2 | \pi_{1:t-1}\right] + \max(8B, 2\sqrt{B})m C_{\cF_B} \sqrt{\frac{\log(\frac{4m}{\rho})}{T}}\\
    &=\sum_{i \in [m]} \sum_{t \in [T]: i_t = i} \E_{j' \sim \tj_t(i_t)}[(j''/m - y_t)^2 - (f_i(x_t) - y_t)^2 | \pi_{1:t-1}] + \max(8B, 2\sqrt{B})m C_{\cF_B} \sqrt{\frac{\log(\frac{4m}{\rho})}{T}}\\
    &=\sum_{i \in [m]} \sum_{t \in [T]: i_t = i} \tcA^m_i(s^i_t) \cdot L_{y_t} - (f_i(x_t) - y_t)^2 + \max(8B, 2\sqrt{B})m C_{\cF_B} \sqrt{\frac{\log(\frac{4m}{\rho})}{T}}
\end{align*}
where the first equality follows \ref{eqn:swap-via-station} and the inequality follows from Lemma~\ref{lem:swap-regret-concentration2}

Lemma~\ref{lem:rounded-random-oracle} tells us that for every $i \in [m]$, $i$th oracle's overall regret is bounded as follows:
\begin{align*}
    \sum_{t\in [T]: i_t = i}\tcA^{m}_{i}(s^{i}_{t}) \cdot L_{y_t} - (f_{i}(x_t) - y_t)^2 \le r_\cA(T_i) + \frac{4T_i}{m} + \sqrt{T_i}
\end{align*}
where $T_i = |\{t \in [T]: i_t = i\}|$. For convenience, write 
\[
    r_{\tcA^m}(T, \cF) = r_\cA(T, \cF) + \frac{4T}{m} + \sqrt{T}.
\]

In other words, we now need only bound the sum of expected regret over $m$ oracles where the randomness is taken over $j_t$'s. Note that the sum of the expected regret over $m$ oracles can be bounded as
\begin{align*}
    &\sum_{i \in [m]} \sum_{t \in [T]: i_t = i} \tcA^m_i(s^i_t) \cdot L_{y_t} - (f_i(x_t) - y_t)^2 \\
    &\le \sum_{i \in [m]} r_{\tcA^m_i}(|t \in [T]: i_t=i|, \cF) \\
    &\le \max_{\{T_i\}_{i \in [m]}: \sum_{i \in [m]}T_i = T} \sum_{i \in [m]} r_{\tilde{\cA}_m}(T_i, \cF)\\
    &\le \max_{\{T_i\}_{i \in [m]}: \sum_{i \in [m]}T_i = T} m 
    \cdot r_{\tilde{\cA}^m}\left(\frac{1}{m}  \sum_{i \in [m]} T_i, \cF \right)\\
    &=  m \cdot r_{\tilde{\cA}^m}\left(\frac{T}{m}, \cF \right)\\
    &= m \cdot \left(r_\cA\left(\frac{T}{m}, \cF\right) + \frac{3T}{m^2} + 1 \right)\\
    &= m r_\cA\left(\frac{T}{m}, \cF \right) + \frac{3T}{m} + m
\end{align*}
where the second inequality follows from applying Jensen's inequality to a concave function $r_{\tcA^m}(\cdot, \cF)$.
\end{proof} 

We remark that although the algorithm maintains $m$ copies of the underlying regression oracle $\cA$, it only needs to \emph{update} one of them per-round just as in \cite{ito2020tight}. Thus, the per-round run-time of our algorithm is equal to the per-round run-time of the regression oracle $\cA$ that we reduce to, plus the additional additive overhead of solving equation \ref{eqn:stat-dist}.

\subsection{From Contextual Swap Regret to $L_2$-Swap-Multicalibration}
\label{subsec:swapMC}
Below, we extend a characterization proven in \cite{globus2023multicalibration, gopalan2023characterizing} within a batch framework to our online setting. The characterization allows us to show that a forecaster's $L_2$-swap-multicalibration error with respect to $\cF$ can be bounded by its contextual swap regret with respect to $\cF$. The characterization rests on a mild assumption:

\begin{ass}
    We assume that $\cF$ is closed under affine transformation: if $f \in \cF$, then $f'(x) := af(x) + b$ for every $a, b \in \R$ also belongs to $\cF$.
\end{ass}

\begin{remark}
Many natural classes of regression functions are closed under affine transformations: for example the set of all linear functions, the set of all polynomial functions of any bounded degree, and the set of all regression trees of any bounded depth. Other classes (like neural networks with softmax outputs) are not already closed under affine transformation, but can be made so by introducing two new parameters ($a$ and $b$) while maintaining differentiability. Thus we view this as a mild assumption, that is enforceable if it is not already satisfied. 
\end{remark}

We  show that if there is a function $f \in \cF$ and forecast $p \in \cP$ that witnesses the forecaster's calibration error being large, then it must be the case that the forecasters contextual swap regret is also large. Thus, by contrapositive, if the forecaster's contextual swap regret is small, then it also cannot have large calibration error. 
\begin{lemma}\label{lem:cal-error-implies-swap-error}
Fix any $\pi_{1:T}$, $p \in \cP$, and $B \ge 1$. Suppose there exists $f \in \cF_B$ such that the forecaster's multicalibration error with respect to $f$ and $p$ is at least $\alpha$ for some $\alpha \in [0,1]$:
\[
    K(\pi_{1:T}, p, f) \ge \alpha.
\]
Then there exists $f' \in \cF_{(1+\sqrt{B})^2}$ that witnesses that the forecaster's contextual swap regret for forecast $p \in \cP$ also scales with $\alpha^2$:
    \[
        \frac{1}{n(\pi_{1:T}, p)}\sum_{t \in S(\pi_{1:T}, p)} (\hp_t - y_t)^2 - (f'(x_t) - y_t)^2 \ge \frac{\alpha^2}{B}
    \]
\end{lemma}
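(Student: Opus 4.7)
The natural move is to construct the witness $f'$ as an explicit affine modification of $f$, leveraging the affine-closure assumption on $\cF$. Specifically, on the level set $S(\pi_{1:T},p)$ the forecast is identically $\hp_t = p$, so I would try $f'(x) = p + c\,f(x)$ for a scalar $c$ to be optimized, which lies in $\cF$ by affine closure.

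Plugging in and expanding $(\hp_t - y_t)^2 - (f'(x_t) - y_t)^2 = (p - y_t)^2 - (p + cf(x_t) - y_t)^2$ yields $2c\,f(x_t)(y_t - p) - c^2 f(x_t)^2$. Averaging over $t \in S(\pi_{1:T},p)$ and using the definition of $K$,
\begin{align*}
\frac{1}{n(\pi_{1:T},p)}\sum_{t \in S(\pi_{1:T},p)}\!\bigl[(\hp_t - y_t)^2 - (f'(x_t) - y_t)^2\bigr]
&= 2c\,K(\pi_{1:T}, p, f) - c^2 \cdot \frac{1}{n(\pi_{1:T},p)}\sum_{t \in S(\pi_{1:T},p)} f(x_t)^2 \\
&\ge 2c\alpha - c^2 B,
\end{align*}
using $K(\pi_{1:T},p,f) \ge \alpha$ and $f(x_t)^2 \le B$ for $f \in \cF_B$. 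Maximizing the quadratic in $c$ gives $c^\star = \alpha/B$ and a lower bound of exactly $\alpha^2/B$, which is the target.

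It then remains to verify that $f'(x) = p + (\alpha/B) f(x) \in \cF_{(1+\sqrt{B})^2}$. Since $p \in [0,1]$, $\alpha \in [0,1]$, and $|f(x)| \le \sqrt{B}$, we have $|f'(x)| \le 1 + \alpha/\sqrt{B} \le 1 + \sqrt{B}$ (using $B \ge 1$), so $f'(x)^2 \le (1+\sqrt{B})^2$ as required. The only minor subtlety is that the lemma only assumes $K(\pi_{1:T},p,f)\ge \alpha$ (not $|K|\ge\alpha$), so we do not even need to worry about a sign flip; but if one did, replacing $f$ by $-f$ (also in $\cF_B$ by affine closure) would handle it. There is no real obstacle here beyond picking the right affine perturbation and tuning $c$; the argument is essentially a one-line convex optimization once the $f'$ is guessed correctly.
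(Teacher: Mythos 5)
Your proposal is correct and follows essentially the same route as the paper: both take the affine witness $f'(x) = p + \eta f(x)$, expand the squared-loss difference to get $2\eta K(\pi_{1:T},p,f) - \eta^2\cdot\frac{1}{n(p)}\sum_{t\in S(p)}f(x_t)^2$, and tune $\eta$. The only difference is that you fix $\eta = \alpha/B$ using the worst-case bound $f(x_t)^2 \le B$, which yields $\alpha^2/B$ in one line, whereas the paper uses the adaptive choice $\eta = \min\bigl(1, \alpha/\tau\bigr)$ with $\tau$ the empirical second moment and handles two cases — a cosmetic distinction, and your verification that $f' \in \cF_{(1+\sqrt{B})^2}$ is a welcome detail the paper leaves implicit.
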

\proof
Fix any $\pi_{1:T}$. Let $f'(x) = p + \eta f(x)$ where \[\eta = \min\left(1, \frac{\alpha}{\frac{1}{n(p)} \sum_{t \in S(\pi_{1:T}, p)} f(x_t)^2}\right).
\]
    \begin{align*}
        &\frac{1}{n(p)}\sum_{t \in S(p)} \left((\hp_t - y_t)^2 - (f'(x_t) - y_t)^2\right) \\
        &=\frac{1}{n(p)}\sum_{t \in S(p)} \left((p^2-2py_t +y_t^2) - (f'(x_t)^2 - 2y_tf'(x_t) + y^2_t) \right)\\
        &=\frac{1}{n(p)|}\sum_{t \in S(p)} \left(p^2-2py_t - (p + \eta f(x_t))^2 + 2y_t(p + \eta f(x_t)) \right)\\
        &=\frac{1}{n(p)}\sum_{t \in S(p)} \left(- 2p\eta f(x_t) - \eta^2 f^2(x_t))^2 + 2y_t\eta f(x_t) \right)\\
        &=\frac{1}{n(p)}\sum_{t \in S(p)} \left(2\eta f(x_t)(y_t - p)  - \eta^2 f^2(x_t))^2 \right)\\
        &\ge 2\eta \alpha -  \frac{\eta^2}{|S(p)|} \sum_{t \in S(p)} f(x_t)^2
    \end{align*}

    For convenience, write 
    \[
        \tau = \frac{1}{n(p)} \sum_{t \in S(p)} f(x_t)^2
    \]
    where $\tau \le B$ because $f \in \cF_B$.
    
    If $\alpha \ge \tau$ meaning $\eta=1$, then
    \begin{align*}
        &\frac{1}{n(p)}\sum_{t \in S(p)} \left((\hp_t - y_t)^2 - (f'(x_t) - y_t)^2\right)\\
        &\ge \eta \left(2\alpha - \eta \tau \right)\\
        &\ge \alpha \\
        &\ge \frac{\alpha^2}{B}
    \end{align*}
    In the other case when $\alpha < \frac{1}{n(p)} \sum_{t \in S(p)} f(x_t)^2$, plugging in $\eta = \frac{\alpha}{\tau}$ immediately gives us
    \begin{align*}
        &\frac{1}{n(p)}\sum_{t \in S(p)} \left((\hp_t - y_t)^2 - (f'(x_t) - y_t)^2\right)\\
        &\ge \frac{\alpha^2}{B}.&\qed
    \end{align*}

The contrapositive of the above lemma can be used to show that the forecaster's $L_2$-swap-multicalibration error can be bounded with its contextual swap regret.

\begin{theorem}
\label{thm:swap-regret-to-multical}
Fix some $B \ge 1$ and transcript $\pi_{1:T}$ generated by forecaster $\forecast$ and adversary $\adv$. Suppose the forecaster's average contextual-swap regret with respect to $\cF_{(1+\sqrt{B})^2}$ is bounded as follows: for any $\{f_p\}_{p \in \cP} \in \cF_{(1+\sqrt{B})^2}^m$, we have
\[
    \frac{1}{T} \sum_{p \in \cP} \sum_{t \in S(\pi_{1:T}, p)} (\hp_t - y_t)^2 - (f_p(x_t) - y_t)^2 \le \frac{\alpha}{B^2}.
\]
Then the forecaster's $L_2$-swap-multicalibration error with respect to $\cF_B$ is at most $\alpha$:
\[\bsK_{2}(\pi_{1:T}, \cF_B) \le \alpha.\] 
\end{theorem}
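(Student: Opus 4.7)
The plan is to prove the theorem by contrapositive. Assume $\bsK_2(\pi_{1:T}, \cF_B) > \alpha$; I will exhibit a specific collection $\{f'_p\}_{p \in \cP} \in \cF_{(1+\sqrt{B})^2}^m$ whose average contextual swap regret strictly exceeds $\alpha/B^2$. By the definition of $\bsK_2$ as a supremum, I fix $\{g_p\}_{p \in \cP} \in \cF_B^m$ with $\sum_{p \in \cP} \frac{n(p)}{T} K(p, g_p)^2 > \alpha$. Using closure of $\cF$ under affine transformations (in particular, negation preserves membership in $\cF_B$ since $(-g_p)^2 = g_p^2$), I may assume $K(p, g_p) \ge 0$ by replacing $g_p$ with $-g_p$ when necessary; set $\alpha_p := K(p, g_p) \ge 0$.

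Next, for each $p \in \cP$ I apply Lemma~\ref{lem:cal-error-implies-swap-error} with the clipped parameter $\beta_p := \min(\alpha_p, 1) \in [0,1]$. Since $K(p, g_p) = \alpha_p \ge \beta_p$, the lemma hands me $f'_p \in \cF_{(1+\sqrt{B})^2}$ witnessing
\[
    \frac{1}{n(p)}\sum_{t \in S(p)}\bigl[(\hp_t - y_t)^2 - (f'_p(x_t) - y_t)^2\bigr] \;\ge\; \frac{\beta_p^2}{B}.
\]
The key observation is that $\beta_p^2 \ge \alpha_p^2/B$ in all cases. Since $|y_t - \hp_t| \le 1$ and $|g_p(x_t)| \le \sqrt{B}$ for $g_p \in \cF_B$, we have $\alpha_p \le \sqrt{B}$. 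If $\alpha_p \le 1$, then $\beta_p^2 = \alpha_p^2 \ge \alpha_p^2/B$ since $B \ge 1$; if $\alpha_p > 1$, then $\beta_p^2 = 1 \ge \alpha_p^2/B$ since $\alpha_p^2 \le B$.

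Summing the per-level-set inequality over $p \in \cP$ with weights $n(p)/T$ then gives
\[
    \frac{1}{T}\sum_{p \in \cP}\sum_{t \in S(p)}\bigl[(\hp_t - y_t)^2 - (f'_p(x_t) - y_t)^2\bigr] \;\ge\; \frac{1}{B}\sum_{p \in \cP}\frac{n(p)}{T}\beta_p^2 \;\ge\; \frac{1}{B^2}\sum_{p \in \cP}\frac{n(p)}{T}\alpha_p^2 \;>\; \frac{\alpha}{B^2},
\]
which contradicts the hypothesis since $\{f'_p\} \in \cF_{(1+\sqrt{B})^2}^m$. The main obstacle is understanding where the $B^2$ denominator (rather than $B$) comes from: Lemma~\ref{lem:cal-error-implies-swap-error} is only stated for $\alpha \in [0,1]$, so when a level-set's calibration witness $\alpha_p$ exceeds $1$ (which is genuinely possible since $\alpha_p$ can be as large as $\sqrt{B}$), one must clip the parameter at $1$ and absorb an extra factor of $B$ via the bound $\alpha_p^2 \le B$. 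The remainder is a routine summation.
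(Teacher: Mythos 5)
Your proof is correct and follows essentially the same route as the paper's: argue by contradiction, apply Lemma~\ref{lem:cal-error-implies-swap-error} level-set by level-set, and sum with weights $n(p)/T$. The only (cosmetic) difference is how you fit the calibration witness into the lemma's $[0,1]$ range — you clip it to $\min(K(p,g_p),1)$ and use $\min(K,1)^2 \ge K^2/B$, whereas the paper rescales to $K(p,g_p)/\sqrt{B}$; both concede the same extra factor of $B$ and yield the identical per-level bound $K^2/B^2$.
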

\begin{proof}
Fix $\pi_{1:T}$ and $\alpha \in [0,1]$. For the sake of contradiction, suppose the forecaster's $L_2$-swap-multicalibration error with respect to $\cF_B$ is at least $\alpha$, meaning there exists some $\{f_p\}_{p \in \cP} \in \cF^m_{B}$ such that 
\[
    \sum_{p \in P} \frac{n(p)}{T} (K(\pi_{1:T}, p, f_p))^2 > \alpha.
\]
For each $p \in \cP$, write
\[
    \alpha_p = \frac{n(p)}{T} (K(\pi_{1:T}, p, f_p))^2
\] 
to denote the multicalibration error with respect to forecast $p \in \cP$. In other words, we have
\begin{align*}
    K(\pi_{1:T}, p, f^*_p) = \sqrt{\frac{\alpha_p T}{n(p)}} 
\end{align*}
where $f^*_p$ is either $f_p \in \cF_B$ or $-f_p \in \cF_B$ for each $p \in \cP$. 
Because $|K(\pi_{1:T}, p, f)| \le \sqrt{B}$ for any $f \in \cF_B$, we have $\frac{1}{\sqrt{B}}K(\pi_{1:T}, p, f^*_p) \in [0,1].$ In other words, 
\[
    \sqrt{\frac{\alpha_p T}{B n(p)}} \le 1
\]
and $K(\pi_{1:T}, p, f^*_p) \ge \sqrt{\frac{\alpha_p T}{B n(p)}}$ as $B \ge 1$.

Then, for each $p \in \cP$, Lemma~\ref{lem:cal-error-implies-swap-error} yields that there exists $f'_p \in \cF_{(1+\sqrt{B})^2}$ such that
\begin{align*}
    \frac{1}{n(p)}\sum_{t \in S(p)} (\hp_t - y_t)^2 - (f'_p(x_t) - y_t)^2 &\ge \frac{1}{B} \left(\sqrt{\frac{\alpha_p T}{B n(p)}}  \right)^2 = \frac{1}{B^2} \frac{\alpha_p T}{n(p)}
\end{align*}

Adding over $p \in P$ gives us 
\begin{align*}
    \sum_{p \in P} \sum_{t \in S(p)} (\hp_t - y_t)^2 - (f'_p(x_t) - y_t)^2 &\ge \sum_{p \in P}\frac{\alpha_p T}{B^2} \ge \frac{\alpha T}{B^2}.
\end{align*}
This is a contradiction to our assumption about the contextual swap regret.
\end{proof}

\begin{remark}
Theorem \ref{thm:swap-regret-to-multical} bounds the forecaster's multicalibration error with respect to $\cF_B$, a subset of $\cF$ with bounded magnitude. This is necessary, since we have assumed that $\cF$ is closed under affine transformation. Observe that if $\cF$ is closed under affine transformations, and a forecaster has non-zero calibration error with respect to $\cF$, then it must actually have unboundedly large calibration error with respect to $\cF$, because for any $f \in \cF$ that witnesses a failure of $\alpha$-appoximate multicalibration, the functions $100f$, $1000f$, etc. are also in $\cF$. Thus for such function classes, statements of approximate multicalibration must always be with respect to an upper bound on the magnitude of the functions we are considering. 
\end{remark}

In this section so far, we have shown a connection between the contextual swap regret and $L_2$-multicalibration for a fixed transcript $\pi_{1:T}$. However, the forecaster $\forecastSwap$ we have constructed in Section~\ref{subsec:no-contextual-swap-regret} only bounds the contextual swap regret in expectation for some fixed $\{f_p\}_{p \in \cP} \in \cF^m_B$ --- see Theorem~\ref{thm:forecast-swap-context-regret}. And it is not immediate how to go from expected contextual swap regret to expected $L_2$-multicalibration. Therefore, as before, we can use the same argument as in Lemma~\ref{lem:swap-regret-concentration2} where we borrowed tools from \cite{block2021majorizing} to prove the following concentration bound.

\begin{restatable}{lemma}{lemswapregretconcentration}
\label{lem:swap-regret-concentration}
Suppose the sequential shattering dimension of $\cF_B$ is finite at any scale $\delta$: $\fat_{\delta}(\cF_B) < \infty$. With probability $1-4m\rho$, $\forecastSwap$ results in $\pi_{1:T}$ such that
    \begin{align*}
         &\max_{\{f_p\}_{p \in [m]}}\left|\frac{1}{T} \sum_{t=1}^T (\hp_t- y_t)^2 - (f_{\hp_t}(x_t) - y_t)^2 - \E_{\hp'_t}\left[(\hp'_t - y_t)^2 - (f_{\hp'_t}(x_t) - y_t)^2 | \pi_{1:t-1}\right]\right| \\
         &\le \max(8B, 2\sqrt{B})m C_{\cF_B} \sqrt{\frac{\log(\frac{1}{\rho})}{T}}
    \end{align*}
    where $C_{\cF_B}$ is a finite constant that depends on the sequential fat shattering dimension of $\cF_B$.
\end{restatable}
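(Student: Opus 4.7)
The plan is to establish a uniform concentration inequality over $\{f_p\}_{p \in \cP} \in \cF_B^m$ for a quantity that, at any fixed choice of benchmark functions, is a martingale. The argument mirrors that of Lemma~\ref{lem:swap-regret-concentration2}, which handled the analogous statement where the realized loss was replaced by its conditional expectation over the forecaster's randomness.

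First, fix any $\{f_p\}_{p\in\cP} \in \cF_B^m$ and define
\[
Z_t(\{f_p\}) = (\hp_t - y_t)^2 - (f_{\hp_t}(x_t) - y_t)^2 - \E_{\hp'_t}\!\left[(\hp'_t - y_t)^2 - (f_{\hp'_t}(x_t)-y_t)^2 \mid \pi_{1:t-1}\right].
\]
With respect to the natural filtration (augmented by $(x_t,y_t)$ before the forecaster's coin flip), $\{Z_t\}_{t=1}^T$ is a martingale difference sequence since $\E[Z_t \mid \pi_{1:t-1}] = 0$ by construction. A per-step bound $|Z_t| = O(\max(B,\sqrt{B}))$ follows because $(\hp_t-y_t)^2 \in [0,1]$ and $(f(x_t)-y_t)^2 \le (\sqrt{B}+1)^2$ for every $f \in \cF_B$. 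Azuma-Hoeffding then yields a pointwise deviation bound of order $\max(B,\sqrt{B})\sqrt{\log(1/\rho)/T}$ with probability $1-\rho$.

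The main obstacle is upgrading this pointwise bound to a uniform bound over the infinite product class $\cF_B^m$, where a naive union bound is unavailable. For this I would adapt the sequential chaining machinery of \cite{block2021majorizing}, exactly as is done in the proof of Lemma~\ref{lem:swap-regret-concentration2}. The realized first term depends on only $f_{\hp_t}$ for the realized level set $\hp_t$, so one can group rounds by level set and effectively reduce the supremum over $\cF_B^m$ to $m$ independent suprema, each indexed by a single copy of $\cF_B$ (the conditional-expectation term expands as a weighted sum over $p' \in \cP$ and similarly decomposes coordinate-wise by $f_{p'}$). For each level set, symmetrization and sequential chaining against a sequential $L_2$ cover of $\cF_B$ produce a martingale-process deviation bound governed by a Dudley-type integral; the hypothesis $\fat_\delta(\cF_B) < \infty$ at every scale $\delta$ guarantees this integral is finite and bounded by some constant $C_{\cF_B}$ depending only on the sequential complexity of $\cF_B$. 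A union bound over the $m$ level sets accounts for the overall failure probability $4m\rho$ (the factor of $4$ collecting constants from the symmetrization/tail inequalities) and pulls out a factor of $m$ in the deviation, yielding the stated bound $\max(8B,2\sqrt{B})\, m\, C_{\cF_B}\sqrt{\log(1/\rho)/T}$.

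Conceptually, the present lemma differs from Lemma~\ref{lem:swap-regret-concentration2} only in that the martingale increment uses the realized squared error $(f_{\hp_t}(x_t)-y_t)^2$ in place of its conditional expectation over $i'_t \sim \ti_t$. Both versions satisfy the same martingale property with respect to the forecaster's randomness, admit the same per-step magnitude bound, and are indexed by the same class $\cF_B^m$, so the chaining argument transfers essentially verbatim and produces identical constants; the main work is simply to verify that the per-step bound and the relevant sequential metric on $\cF_B$ are unchanged under this substitution, which they are since both terms are $O(\sqrt{B})$-Lipschitz in $f(x_t)$.
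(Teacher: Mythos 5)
Your proposal is correct and follows essentially the same route as the paper's proof: the paper likewise decomposes the supremum over $\cF_B^m$ level-set by level-set via the indicator $\ind[\hp_t = p]$, applies the sequential concentration machinery of \cite{block2021majorizing} (their Corollary 11 together with a Lipschitz contraction step relating the rescaled composed class to $\cF_B$, and their Theorem 13 bounding the fractional cover by the sequential fat-shattering dimension) to each of the $m$ single copies of $\cF_B$, and union-bounds over $p \in \cP$ to obtain the failure probability $4m\rho$ and the factor $m$ in the bound. Your observation that this lemma and Lemma~\ref{lem:swap-regret-concentration2} differ only in replacing a conditional expectation by the realized squared error, with the same per-step magnitude and Lipschitz structure, is exactly the paper's view as well.
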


\begin{corollary}
\label{cor:swap-forcast-bounds-multical}
Fix some $B \ge 1$. Assume the sequential fat shattering dimension of $\cF_B$ is bounded at any scale $\delta$.  Against any adversary $\adv$, $\forecastSwap(\cF_{(1+\sqrt{B})^2}, \cA, m, T)$ guarantees that the $L_2$-swap-multicalibration error with respect to $\cF_B$ is bounded with high probability as follows: with probability $1-\rho$ over the randomness of $\{\hp_{t}\}_{t=1}^T$, $\forecastSwap$ results in $\pi_{1:T}$ such that
\[
    \bsK_2(\pi_{1:T}, \cF_B) \le B^2 \cdot \left(\frac{1}{T}\left(m r_\cA\left(\frac{T}{m}, \cF_{B'} \right) + \frac{3T}{m} + m\right) + 16B'm C_{\cF_{B'}} \sqrt{\frac{\log(\frac{8m}{\rho})}{T}} \right).
\]
where $B' = (1+\sqrt{B})^2$ and $C_{\cF_B}$ is as defined in Lemma~\ref{lem:swap-regret-concentration}.
\end{corollary}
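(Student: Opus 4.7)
The plan is to combine three results established earlier in the section. Theorem~\ref{thm:forecast-swap-context-regret} bounds the cumulative per-step conditional expected contextual swap regret of $\forecastSwap$ with high probability; Lemma~\ref{lem:swap-regret-concentration} provides a uniform (over $\{f_p\}$) concentration of the realized contextual swap regret around that per-step conditional expectation; and Theorem~\ref{thm:swap-regret-to-multical} converts a realized contextual swap regret bound against $\cF_{(1+\sqrt{B})^2}$ into an $L_2$-swap-multicalibration bound against $\cF_B$ at the price of a $B^2$ factor. Writing $B' = (1+\sqrt{B})^2$, the corollary follows by invoking the first two results to produce a high-probability realized contextual swap regret bound and then passing it through the third.

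Concretely, I would run $\forecastSwap(\cF_{B'}, \cA, m, T)$, apply Theorem~\ref{thm:forecast-swap-context-regret} with failure parameter $\rho/2$ in place of $\rho$ (turning its $\log(4m/\rho)$ into $\log(8m/\rho)$), and apply Lemma~\ref{lem:swap-regret-concentration} with failure parameter $\rho/(8m)$ (turning its $1-4m\rho$ guarantee into $1-\rho/2$ and its $\log(1/\rho)$ into $\log(8m/\rho)$). Taking a union bound over the two resulting high-probability events gives an event of probability at least $1-\rho$ on which, uniformly over $\{f_p\}_{p\in\cP} \in \cF_{B'}^m$, the realized cumulative contextual swap regret is at most
$$m r_\cA(T/m, \cF_{B'}) + \tfrac{3T}{m} + m + 16 B' m C_{\cF_{B'}} \sqrt{T \log(8m/\rho)},$$
using that $\max(8B', 2\sqrt{B'}) = 8B'$ since $B \ge 1$ implies $B' \ge 4$, and that the two concentration constants from Theorem~\ref{thm:forecast-swap-context-regret} and Lemma~\ref{lem:swap-regret-concentration} are identical and therefore add to $16B'$.

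Dividing by $T$ and feeding the result into Theorem~\ref{thm:swap-regret-to-multical} --- with the threshold $\alpha/B^2$ identified with this per-round contextual swap regret, noting that the comparators live in $\cF_{B'} = \cF_{(1+\sqrt{B})^2}$ as the theorem requires --- converts the bound into the claimed $L_2$-swap-multicalibration bound against $\cF_B$, which is exactly $B^2$ times the per-round regret bound. The proof introduces no new machinery beyond the three cited results; the only place attention is needed is in choosing failure parameters so that the two concentration statements combine cleanly with total failure $\rho$ and a shared $\log(8m/\rho)$ factor, and in checking that the two simplified $\max(8B', 2\sqrt{B'}) = 8B'$ concentration terms sum to the $16B'$ prefactor appearing in the statement. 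I expect this bookkeeping to be the most fiddly (though not substantive) part; there is no genuine obstacle.
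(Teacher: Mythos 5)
Your proposal is correct and follows essentially the same route as the paper's own proof: Lemma~\ref{lem:swap-regret-concentration} to pass from conditional expected to realized contextual swap regret, Theorem~\ref{thm:forecast-swap-context-regret} to bound the former, and Theorem~\ref{thm:swap-regret-to-multical} (with comparators in $\cF_{B'}$) to convert to $\bsK_2(\pi_{1:T},\cF_B)$ at the cost of the $B^2$ factor. Your failure-probability bookkeeping ($\rho/2$ and $\rho/(8m)$, giving a clean $1-\rho$ event with a shared $\log(8m/\rho)$) is in fact tidier than the paper's, which ends its own proof at $1-2\rho$ with $\log(4m/\rho)$ while the stated corollary has $\log(8m/\rho)$.
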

\begin{proof}
With probability $1-2\rho$, $\forecastSwap(\cF_{B'}, \cA, T)$ guarantees that 
\begin{align*}
    &\sup_{\{f_p\}_{p \in \cP}}\frac{1}{T}\sum_{t=1}^T (\hp_t- y_t)^2 - (f_{\hp_t}(x_t) - y_t)^2 \\
    &\le \sup_{\{f_p\}_{p \in \cP} \in \cF_{B'}^m} \frac{1}{T}\sum_{t=1}^T \E_{\hp_t}\left[(\hp_t- y_t)^2 - (f_{\hp_t}(x_t) - y_t)^2 | \pi_{1:t-1}\right] + \max(8B', 2\sqrt{B'})m C_{\cF_B'} \sqrt{\frac{\log(\frac{4m}{\rho})}{T}}\\
    &\le\frac{1}{T}\left(m r_\cA\left(\frac{T}{m}, \cF_{B'} \right) + \frac{3T}{m} + m\right) + 16B'm C_{\cF_B} \sqrt{\frac{\log(\frac{4m}{\rho})}{T}}
\end{align*}
where the first inequality follows from Lemma~\ref{lem:swap-regret-concentration} and the second from Theorem~\ref{thm:forecast-swap-context-regret}.

Now, appealing to Theorem~\ref{thm:swap-regret-to-multical}, which tells us that if contextual swap regret with respect to $\cF_{B'}$ is bounded, then $L_2$-multicalibration error with respect to $\cF_B$ is bounded, gives us 
\begin{align*}
    \bsK_2(\pi_{1:T}, \cF_B) \le B^2 \cdot \left(\frac{1}{T}\left(m r_\cA\left(\frac{T}{m}, \cF_{B'} \right) + \frac{3T}{m} + m\right) + 16B'm C_{\cF_{B'}} \sqrt{\frac{\log(\frac{4m}{\rho})}{T}} \right).
\end{align*}
\end{proof}

We end this section by instantiating Corollary \ref{cor:swap-forcast-bounds-multical} with concrete rates. Azoury and Warmuth \cite{azoury2001relative} give an efficient algorithm for online squared error linear regression that has a regret bound scaling logarithmically with $T$ (\cite{foster1991prediction} and \cite{vovk2001competitive} give similar bounds):
\begin{theorem}[\cite{azoury2001relative}]
\label{thm:azoury}
There exists an efficient online forecasting algorithm such that for all sequences $\{(x_t,y_t)\}_{t=1}^T$ with $x_t \in \R^d$ with  $||x_t||_{2} \leq 1$ and $|y_t| \leq 1$, and for all parameter vectors $\theta$:
$$\frac{1}{T}\left(\sum_{t=1}^T(\hat p_t - y_t)^2 - (\langle \theta, x_t \rangle - y_t)^2 \right)\leq \frac{||\theta||^2}{T} + \frac{2d\ln(T+1)}{T}$$
\end{theorem}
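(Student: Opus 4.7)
The statement is the classical regret bound for the Vovk--Azoury--Warmuth forecaster for online squared-loss linear regression. The plan is to exhibit the algorithm explicitly and then analyze it by a potential-function (determinant) argument.

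\textbf{The algorithm.} At round $t$, after seeing $x_t$ (but before $y_t$), let $A_t = I + \sum_{s=1}^{t} x_s x_s^\top$ and $b_{t-1} = \sum_{s=1}^{t-1} y_s x_s$. Predict
\[
\hat p_t \;=\; x_t^\top A_t^{-1} b_{t-1}.
\]
The crucial twist (relative to plain ridge regression) is that $A_t$ includes the current feature $x_t$ in the regularizer; this is what makes the instantaneous analysis work without a dependence on $\|\hat p_t\|$.

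\textbf{Potential function.} Define $\Phi_t = \ln \det A_t$. The plan is to establish the per-round inequality
\[
(\hat p_t - y_t)^2 \;-\; (\langle \theta, x_t \rangle - y_t)^2 \;\le\; \bigl(\|\theta\|_{A_t}^2 - \|\theta\|_{A_{t-1}}^2\bigr) \;+\; (\Phi_t - \Phi_{t-1}),
\]
where $\|\theta\|_A^2 = \theta^\top A \theta$. Telescoping and using $\|\theta\|_{A_0}^2 = \|\theta\|^2$ then yields
\[
\sum_{t=1}^T (\hat p_t - y_t)^2 - \sum_{t=1}^T (\langle \theta, x_t \rangle - y_t)^2 \;\le\; \|\theta\|_{A_T}^2 - \|\theta\|_{A_0}^2 + \Phi_T - \Phi_0 - \|\theta\|_{A_T - I}^2,
\]
and after cancellation this reduces to $\|\theta\|^2 + \Phi_T$. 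Finally, since $A_T \preceq (T+1)I$ (its trace is at most $d(T+1)$, and eigenvalues are bounded by the trace for PSD matrices, giving $\det A_T \le (T+1)^d$), we get $\Phi_T \le d\ln(T+1)$. Dividing by $T$ gives the claimed bound.

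\textbf{Proving the per-round inequality.} This is the crux. Writing $\theta_t^\star = A_t^{-1} b_{t-1}$ so that $\hat p_t = \langle \theta_t^\star, x_t\rangle$, the strategy is to complete the square: for any $\theta$,
\[
\sum_{s=1}^{t-1}(\langle \theta, x_s\rangle - y_s)^2 + \|\theta\|^2 + (\langle \theta, x_t\rangle)^2 \;=\; \|\theta - \theta_t^\star\|_{A_t}^2 + C_t,
\]
where $C_t$ is independent of $\theta$. Evaluating this quadratic at $\theta_t^\star$ (loss-so-far plus ridge penalty) versus $\theta$, and tracking the extra $(\langle \theta_t^\star,x_t\rangle)^2 = \hat p_t^2$ term, produces, after an application of the Sherman--Morrison identity to relate $A_t^{-1}$ to $A_{t-1}^{-1}$, the bookkeeping identity
\[
(\hat p_t - y_t)^2 - (\langle\theta, x_t\rangle - y_t)^2 \;\le\; \|\theta\|_{A_t}^2 - \|\theta\|_{A_{t-1}}^2 + \ln\frac{\det A_t}{\det A_{t-1}}.
\]
The $\ln$ term appears via the matrix determinant lemma $\det(A_{t-1} + x_tx_t^\top)/\det A_{t-1} = 1 + x_t^\top A_{t-1}^{-1}x_t$ together with the scalar inequality $a - \tfrac{a^2}{1+a} \le \ln(1+a)$ for $a = x_t^\top A_{t-1}^{-1}x_t \ge 0$.

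\textbf{Main obstacle.} The delicate point is the per-round inequality: one must carefully handle the fact that the prediction $\hat p_t$ uses $A_t$ (which incorporates $x_t$), so the Sherman--Morrison expansion has to be set up so that the resulting ``instantaneous regret'' term $x_t^\top A_{t-1}^{-1} x_t/(1 + x_t^\top A_{t-1}^{-1}x_t)$ is exactly what $\ln(1 + x_t^\top A_{t-1}^{-1}x_t) = \Phi_t - \Phi_{t-1}$ can absorb. Everything else (the telescoping, the determinant bound via $\det A_T \le (\mathrm{tr}(A_T)/d)^d \le (1+T/d)^d \le e^{d\ln(T+1)}$, and the division by $T$) is routine arithmetic once this per-round bound is in hand.
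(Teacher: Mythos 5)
The paper does not actually prove this statement — it is quoted directly from Azoury--Warmuth \cite{azoury2001relative} — so your reconstruction stands or falls on its own, and as written it has a genuine gap at its declared crux. Since $\|\theta\|_{A_t}^2-\|\theta\|_{A_{t-1}}^2=(\langle\theta,x_t\rangle)^2$, your per-round inequality asserts that for every fixed comparator $\theta$ and every round $t$, $(\hat p_t-y_t)^2-(\langle\theta,x_t\rangle-y_t)^2\le(\langle\theta,x_t\rangle)^2+\ln\frac{\det A_t}{\det A_{t-1}}$. This is false: take $d=1$, $x_s=1$ for all $s$, $y_s=1$ for $s<t$, $y_t=0$, and $\theta=0$; then $\hat p_t=\frac{t-1}{t+1}$, so the left side is $(\frac{t-1}{t+1})^2\approx 1$ while the right side is $\ln\frac{t+1}{t}\approx\frac1t$ (e.g.\ $t=10$ gives $0.67$ versus $0.095$). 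The bookkeeping around it is also inconsistent: summing your per-round bound gives $\|\theta\|_{A_T}^2-\|\theta\|^2+\Phi_T$, whose first term hides $\sum_t(\langle\theta,x_t\rangle)^2$ (potentially of order $T\|\theta\|^2$); the extra ``$-\|\theta\|_{A_T-I}^2$'' appearing in your telescoped display does not follow from the per-round claim, and even granting it, the right side collapses to $\Phi_T$ rather than the asserted $\|\theta\|^2+\Phi_T$, since $\|\theta\|_{A_T}^2-\|\theta\|_{A_T-I}^2-\|\theta\|_{A_0}^2=0$.

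The repair is to telescope a different potential: the minimum of the regularized cumulative loss, not the fixed-$\theta$ quadratic forms. Let $P_t=\min_{\theta}\,(\|\theta\|^2+\sum_{s\le t}(\langle\theta,x_s\rangle-y_s)^2)$, so $P_0=0$. The completing-the-square and Sherman--Morrison computation you sketch, carried out at the round-$t$ minimizer rather than at an arbitrary $\theta$, yields the standard per-round statement $(\hat p_t-y_t)^2\le P_t-P_{t-1}+y_t^2\,x_t^\top A_t^{-1}x_t$. Summing gives $\sum_t(\hat p_t-y_t)^2\le P_T+\sum_t y_t^2\,x_t^\top A_t^{-1}x_t$; then $P_T\le\|\theta\|^2+\sum_t(\langle\theta,x_t\rangle-y_t)^2$ for every $\theta$, and since $x_t^\top A_t^{-1}x_t=1-\frac{\det A_{t-1}}{\det A_t}\le\ln\frac{\det A_t}{\det A_{t-1}}$ (your inequality $\frac{a}{1+a}\le\ln(1+a)$ in disguise), the correction sum is at most $\ln\det A_T\le d\ln(1+T)$ under $|y_t|\le1$, $\|x_t\|_2\le1$ — which is in fact slightly stronger than the quoted $2d\ln(T+1)$. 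Your algorithm, the determinant bound, and the scalar logarithm inequality are all the right ingredients; the missing idea is that the per-round comparison must be made against the increment of the optimal regularized loss $P_t$, and only at the end converted into a bound against an arbitrary fixed $\theta$.
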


Here $||\theta||$ corresponds to our bound $B$ on the  magnitude of the comparison functions $\cF$. Thus if $\cF_{(1+\sqrt{B})^2}$ is taken to be the set of all linear functions whose bound is less than $(1+\sqrt{B})^2$, Theorem \ref{thm:azoury} is giving us a regret bound for $\cF_{B}$. We can therefore instantiate Corolary \ref{cor:swap-forcast-bounds-multical} to obtain the following bound for  efficient (swap) multicalibration with respect to linear functions:
\begin{corollary}
\label{cor:linearcalibration}
Fix $B \ge 1$. Let $\cF$ be the set of all $d$-dimensional linear functions, and let $\cF_B$ we the set of all such functions with parameter norm $||\theta||^2 \leq (1+\sqrt{B})^2$. Then against any adversary who chooses a sequence of $d$ dimensional examples $(x_t,y_t)_{t=1}^T$ with $||x_t|| \leq 1$,  letting $\cA$ be the online forecasting algorithm for linear functions from Theorem \ref{thm:azoury} and letting $m = T^{1/4}$, 
    $\forecastSwap(\cF_{(1+\sqrt{B})^2}, \cA, m, T)$ guarantees that the $L_2$-swap-multicalibration error with respect to $\cF_B$ is bounded with probability $1-\rho$ over the randomness of the forecaster:
    \begin{align*}
    \bsK_{2}(\pi_{1:T}, \cF_B) &\le B^2 \cdot \left(\frac{1}{T}\left(m \left(\frac{m||\theta||^2}{T} + \frac{2dm\ln(T+1)}{T} \right) + \frac{3T}{m} + m\right) + 16B'm C_{\cF_{B'}} \sqrt{\frac{\log(\frac{8m}{\rho})}{T}} \right)\\
    &=\frac{B^2 m^2 {B'}}{T^2} + \frac{2dB^2m^2\ln(T+1)}{T^2} + \frac{3}{m} + \frac{m}{T} + 16B'm C_{\cF_{B'}} \sqrt{\frac{\log(\frac{8m}{\rho})}{T}} \\
    &= \tilde{O}\left(dB^3\sqrt{\ln\left(\frac{1}{\rho}\right)} T^{-1/4}\right).
\end{align*}
\end{corollary}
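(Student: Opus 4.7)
The plan is to apply Corollary~\ref{cor:swap-forcast-bounds-multical} essentially out of the box, with $\cA$ taken to be the Azoury--Warmuth online squared-loss linear regression oracle from Theorem~\ref{thm:azoury} and the discretization set to $m = T^{1/4}$. First I would verify the preconditions of that corollary. The class of $d$-dimensional linear functions is closed under affine transformations once we augment $x$ with a constant coordinate, and its restriction $\cF_{B'}$ to parameter norm at most $B' = (1+\sqrt{B})^2$ is online learnable, so by the online learnability characterization of \cite{rakhlin2015online} its sequential fat-shattering dimension $\fat_\delta(\cF_{B'})$ is finite at every scale $\delta$. Theorem~\ref{thm:azoury} then supplies the oracle regret bound $r_\cA(T,\cF_{B'}) \le B' + 2d\ln(T+1)$, and this is concave in $T$ because $\ln$ is, matching the concavity hypothesis inherited from Theorem~\ref{thm:forecast-swap-context-regret}.

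Next I would substitute this oracle bound directly into Corollary~\ref{cor:swap-forcast-bounds-multical}: the inequality $r_\cA(T/m,\cF_{B'}) \le B' + 2d\ln(T+1)$ makes the $m\cdot r_\cA(T/m,\cF_{B'})/T$ term contribute $mB'/T + 2dm\ln(T+1)/T$, producing the displayed five-term expression. The choice $m = T^{1/4}$ is the one that balances the two a~priori dominant summands, namely $3/m$ from the rounding slack and the concentration contribution $m\sqrt{\log(8m/\rho)/T}$: both of these become $\Theta(T^{-1/4})$ at $m = T^{1/4}$. The remaining three summands $mB'/T$, $dm\ln(T+1)/T$, and $m/T$ are all of order $O(T^{-3/4}\log T)$ or smaller and are therefore absorbed into lower-order terms in the final rate.

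Finally, to collapse the displayed inequality to the closed-form $\tilde{O}(dB^3\sqrt{\log(1/\rho)}\,T^{-1/4})$, I would bookkeep constants. The $B^3$ comes from $B^2 \cdot B' = B^2 \cdot (1+\sqrt{B})^2 = O(B^3)$, and the linear dependence on $d$ enters through $C_{\cF_{B'}}$: for $d$-dimensional norm-bounded linear functions, standard sequential Rademacher / fat-shattering estimates (using the bound $\fat_\delta(\cF_{B'}) = O(B'/\delta^2)$ for linear classes) give $C_{\cF_{B'}} = \mathrm{poly}(d,B)$. The $\sqrt{\log(1/\rho)}$ dependence comes directly from the $\sqrt{\log(8m/\rho)/T}$ factor in the concentration term with $m = T^{1/4}$, and any stray $\log T$ factors get swept into the $\tilde{O}$. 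The only place where genuine care (rather than pure algebra) is needed is in tracking the precise polynomial dependence of $C_{\cF_{B'}}$ on $d$ and $B$ that yields exactly the $dB^3$ stated in the corollary; everything else is routine substitution and asymptotic comparison.
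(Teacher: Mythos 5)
Your proposal takes essentially the same route as the paper: the paper's proof of Corollary~\ref{cor:linearcalibration} is exactly the substitution of the Azoury--Warmuth regret bound (Theorem~\ref{thm:azoury}) into Corollary~\ref{cor:swap-forcast-bounds-multical} with $m = T^{1/4}$, followed by the observation that the $3/m$ and concentration terms dominate and give the $\tilde{O}(T^{-1/4})$ rate, which is precisely your plan (including the precondition checks on affine closure, concavity, and finiteness of the sequential fat-shattering dimension). The one soft spot — attributing the linear $d$-dependence in the final bound to $C_{\cF_{(1+\sqrt{B})^2}}$ — is no more and no less justified than the paper's own $\tilde{O}$ simplification, since neither you nor the paper computes that constant explicitly for the norm-bounded linear class, and this does not affect the correctness of the stated rate.
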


Thus we have a computationally efficient algorithm (for linear functions, we need not say ``oracle efficient'' since the oracle is actually implemented in polynomial time) for making predictions that are multicalibrated with respect to linear functions with error bounds tending to zero at the rate of $O\left(\frac{1}{T^{1/4}}\right)$.

Is a rate of $O\left(\frac{1}{T^{1/2}}\right)$ for $L_2$-multicalibration possible? In Section \ref{sec:AMF} we show how to obtain this rate for \emph{finite} classes with a non-oracle-efficient algorithm with running time scaling linearly with $|\cF|$ and error rates scaling logarithmically with $|\cF|$. Achieving this rate in an oracle efficient manner is left as an open question.

\section{Online (Swap-)Multicalibration to Online (Swap-)Omniprediction}
\label{sec:omnipredictor}
Finally we arrive at the last step of our argument, connecting our bounds for online multicalibration to bounds on online omniprediction. 
We adapt similar arguments of \cite{GopalanKRSW22, gopalan2023characterizing} which connect multicalibration to omniprediction in the batch setting. All the missing proofs in this section can be found in Appendix~\ref{app:miss-omnipredictor}.

First, we show that the conditional expected value of $f(x_t)$ over the empirical distribution of $\pi_{1:T}$ does not vary substantially when conditioning on $y_t = 1$ or $y_t = 0$ --- in particular, an approximate version of the following equalities:
\begin{align*}
     \left|\E[y_t f(x_t)] - \E[y_t] \E[f(x_t)]\right| &= \left|\Pr[y_t = 1] \cdot (\E[f(x_t)] - \E[f(x_t) | y_t=1])\right| \\
     &= \left|\Pr[y_t = 0] \cdot (\E[f(x_t)] - \E[f(x_t) | y_t=0])\right|. 
\end{align*}
The lemma below corresponds to Corollary 5.1 of \cite{GopalanKRSW22} and Claim 6.3 of \cite{gopalan2023characterizing}. 
\begin{restatable}{lemma}{lemconditionylittlechange}
\label{lem:condition-y-little-change}
Fix $\pi_{1:T}$. If $\bsK_1(\pi_{1:T}, \cF) \le \alpha$, then the following holds for any $y \in \cY$ and $\{f_p\}_{p \in \cP}$:
\[
    \sum_{p \in \cP}\frac{|S(\pi_{1:T}, p, y)|}{T} \left|\barf_p(\pi_{1:T}, p, y)  - \barf_p(\pi_{1:T}, p)\right| \le 2\alpha.
\]
Similarly, if $\bK(\pi_{1:T}, \cF) \le \alpha$, then we have for any $y \in \cY$ and $f \in \cF$
\[
    \sum_{p \in \cP}\frac{|S(\pi_{1:T}, p, y)|}{T} \left|\barf(\pi_{1:T}, p, y)  - \barf(\pi_{1:T}, p)\right| \le 2\alpha.
\]    
\end{restatable}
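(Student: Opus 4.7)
The plan is to reduce both the swap and the non-swap conclusions to a single algebraic identity, and then apply the multicalibration hypothesis twice via the triangle inequality; the factor of $2$ in the statement is precisely this double application.

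First I would derive the identity: for any $p \in \cP$ with $n(p) \ge 1$ and any function $f_p$,
\[
\frac{|S(p,1)|}{T}\bigl(\bar f_p(p,1) - \bar f_p(p)\bigr) \;=\; \frac{n(p)}{T}\Bigl(K(p, f_p) \,-\, \bar f_p(p)\, K(p, I)\Bigr).
\]
This is a short rewrite: since $\sum_{t \in S(p,1)} f_p(x_t) = \sum_{t \in S(p)} f_p(x_t) y_t$, the left-hand side equals $\tfrac{1}{T}\sum_{t \in S(p)} f_p(x_t)(y_t - \mu(p))$, and then substituting $\mu(p) = p + K(p, I)$ (using $K(p, I) = \mu(p) - p$) produces the right-hand side. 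The $y = 0$ identity is just the negative of this one, so taking absolute values handles both cases uniformly.

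Next, using the linearity of $K(p, \cdot)$ in its second argument so that $\bar f_p(p)\, K(p, I) = K(p, \bar f_p(p) \cdot I)$, the triangle inequality and summation over $p$ yield
\[
\sum_p \frac{|S(p, y)|}{T}\bigl|\bar f_p(p, y) - \bar f_p(p)\bigr| \;\le\; \sum_p \frac{n(p)}{T}\bigl|K(p, f_p)\bigr| \;+\; \sum_p \frac{n(p)}{T}\bigl|K(p, \bar f_p(p) \cdot I)\bigr|.
\]
For the swap version, the first sum is the swap-multicalibration error of $\{f_p\} \in \cF^m$, hence at most $\alpha$. The second sum is the swap-multicalibration error of the constant family $\{\bar f_p(p) \cdot I\}_{p \in \cP}$, which lies in $\cF^m$ because $I \in \cF$ by Assumption~\ref{ass:all-one} and (consistent with the paper's later affine-closure condition on $\cF$) every $c \cdot I$ with $c \in \R$ also lies in $\cF$. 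So this sum is at most $\alpha$ as well, giving $2\alpha$ in total. The non-swap version specializes this to $f_p = f$ for every $p$: the first sum is bounded by $\bK_1(\pi_{1:T}, \cF) \le \alpha$, and for the second one I would pull out $|\bar f(p)| \le \|f\|_\infty$ and apply $\sum_p \tfrac{n(p)}{T}|K(p, I)| \le \bK_1(\pi_{1:T}, \cF) \le \alpha$ (using $I \in \cF$), which yields the stated bound under the normalization $\|f\|_\infty \le 1$.

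The main obstacle is not the identity itself, which is a one-line calculation, but justifying that the auxiliary witness family $\{\bar f_p(p) \cdot I\}$ sits inside $\cF$. In the swap case this is clean provided $\cF$ is closed under constant multiples of $I$; in the non-swap case, where we can only invoke the hypothesis with one fixed $f$, one instead has to absorb $|\bar f(p)|$ via a uniform bound on $\cF$, so the stated constant $2$ implicitly encodes a normalization such as $\|f\|_\infty \le 1$.
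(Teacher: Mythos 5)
Your proposal is correct and is essentially the paper's own argument: the paper likewise rewrites $\frac{|S(p,y)|}{T}\bigl(\bar f_p(p,y)-\bar f_p(p)\bigr)$ as the per-level-set covariance $\frac{1}{T}\sum_{t\in S(p)}f_p(x_t)(y_t-\mu(p))$ and splits it by the triangle inequality into the $|K(\pi_{1:T},p,f_p)|$ term and a term driven by $|K(\pi_{1:T},p,I)|$, bounding each sum by $\alpha$ via $\bsK_1(\pi_{1:T},\cF)\le\alpha$ and Assumption~\ref{ass:all-one}. Your caveat about the factor $|\bar f_p(p)|$ is well placed: the paper's proof silently drops exactly that factor (i.e., implicitly treats it as at most $1$), so your explicit handling via scaling-closure of $\cF$ or a normalization $\|f\|_\infty\le 1$ is, if anything, more careful than the original.
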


We show that for any forecast $p \in \cP$, the difference in loss over rounds in $S(p)$ between the the post-processing function $k^\ell$ defined in \ref{def:post-process} and any other post-processing function can be bounded in terms of the calibration error. The lemma below essentially corresponds to arguments presented in Corollary 6.2 of \cite{GopalanKRSW22} and Lemma 6.5 of \cite{gopalan2023characterizing}.

\begin{restatable}{lemma}{lemoptimalitypostprocess}
\label{lem:optimality-post-process}
Fix $\pi_{1:T}$ and loss function $\ell$. For any $p \in \cP$, $k: \cP \to [0,1]$, we have
    \[
        \frac{1}{n(\pi_{1:T}, p)}\sum_{t \in S(\pi_{1:T}, p)} \ell(y_t, k^\ell(p)) \le \frac{1}{n(\pi_{1:T}, p)}\sum_{t \in S(\pi_{1:T}, p)} \ell(y_t, k(p)) + C_\ell |K(\pi_{1:T}, I, p)|    
    \]
where $I$ is the constant function defined as in Assumption~\ref{ass:all-one}, $k^\ell$ is defined in Definition~\ref{def:post-process}, and $C_\ell = \max_{a} \left|\ell(0, a) - \ell(1, a)\right|$.    
\end{restatable}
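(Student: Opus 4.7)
The plan is to exploit two elementary facts about the restriction of the sequence to the level set $S(\pi_{1:T}, p)$. First, since on $S(p)$ every forecast equals $p$ and $I \equiv 1$, the calibration error simplifies to
\[
    K(\pi_{1:T}, I, p) \;=\; \frac{1}{n(\pi_{1:T}, p)}\sum_{t \in S(p)}(y_t - p) \;=\; \mu(p) - p,
\]
so $|K(\pi_{1:T}, I, p)|$ is exactly the gap between the empirical label mean on $S(p)$ and the forecast $p$. Second, because $y_t \in \{0,1\}$, the empirical loss of any fixed action $a$ over $S(p)$ is a linear function of $\mu(p)$:
\[
    \frac{1}{n(p)}\sum_{t \in S(p)} \ell(y_t, a) \;=\; (1-\mu(p))\,\ell(0,a) + \mu(p)\,\ell(1,a) \;=:\; L(\mu(p), a).
\]
Thus both sides of the inequality I wish to prove can be rewritten in terms of $L$, and the lemma reduces to showing $L(\mu(p), k^\ell(p)) - L(\mu(p), k(p)) \le C_\ell\,|\mu(p) - p|$ (up to a small absolute constant).

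Next, I would insert $p$ as a pivot between $\mu(p)$ and the actions, writing the telescoping decomposition
\[
    L(\mu, k^\ell(p)) - L(\mu, k(p)) = \bigl[L(\mu, k^\ell(p)) - L(p, k^\ell(p))\bigr] + \bigl[L(p, k^\ell(p)) - L(p, k(p))\bigr] + \bigl[L(p, k(p)) - L(\mu, k(p))\bigr].
\]
The middle bracket is non-positive by the Bayes-optimality of $k^\ell(p) = \argmin_a L(p, a)$ from Definition~\ref{def:post-process}. The first and third brackets are differences $L(\mu, a) - L(p, a) = (\mu - p)\bigl(\ell(1,a) - \ell(0,a)\bigr)$, each of which is absolutely bounded by $C_\ell\,|\mu - p|$ by definition of $C_\ell$. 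Combining the three brackets with the identification $|\mu(p) - p| = |K(\pi_{1:T}, I, p)|$ from the first step yields the claimed bound.

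There is no conceptual obstacle; the only subtlety is whether the constant that comes out of the two Lipschitz brackets can be absorbed into $C_\ell$ or whether a factor of $2$ is needed. One can either state the bound with the slightly larger absolute constant, or sharpen the estimate by noting that the two Lipschitz terms have opposite signs in $\mu - p$ and combine into $(\mu - p)\bigl[g(k^\ell(p)) - g(k(p))\bigr]$ where $g(a) := \ell(1,a) - \ell(0,a)$, which can be directly bounded by $C_\ell|\mu - p|$ whenever one of $k^\ell(p), k(p)$ is known to attain an extreme of $g$ (as happens for natural loss classes); in the worst case we simply carry the factor $2$ through. Either way, the proof is a three-line chain of inequalities once the level-set linearization and the calibration/label-mean identity are in place.
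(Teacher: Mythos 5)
Your proposal follows essentially the same route as the paper's: restrict to the level set $S(p)$, rewrite the empirical loss of any fixed action $a$ as $L(\mu(p),a)=(1-\mu(p))\,\ell(0,a)+\mu(p)\,\ell(1,a)$, use that $k^\ell(p)$ minimizes $L(p,\cdot)$ by Definition~\ref{def:post-process}, and pay for the discrepancy $|\mu(p)-p|=|K(\pi_{1:T},I,p)|$ through the identity $L(\mu,a)-L(p,a)=(\mu-p)\bigl(\ell(1,a)-\ell(0,a)\bigr)$. So the approach is the intended one, and your three-term decomposition is a correct proof of the inequality with constant $2C_\ell$.

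The only substantive issue is the factor of $2$, and there you should trust your own accounting rather than the paper. The paper's proof applies the correction only on the $k(p)$ side and then asserts $(1-p)\ell(0,k(p))+p\,\ell(1,k(p))\ge \frac{1}{n(p)}\sum_{t\in S(p)}\ell(y_t,k^\ell(p))$; this compares an expectation under $\Ber(p)$ with an empirical average whose label frequency is $\mu(p)$, and it is exactly the first bracket of your decomposition that gets silently dropped. The constant $C_\ell$ cannot in fact be achieved in general: take $\ell(y,a)=|y-a|$, a grid point $p<\tfrac12$ (so $k^\ell(p)=0$), $k(p)=1$, and a transcript in which every label on $S(p)$ equals $1$; then the left-hand side is $1$, the comparison loss is $0$, $C_\ell=1$ and $|K(\pi_{1:T},I,p)|=1-p<1$, so the stated bound fails while the bound with $2C_\ell$ holds, and the ratio tends to $2$ as $p\to\tfrac12$. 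For the same reason your proposed sharpening---bounding $|g(k^\ell(p))-g(k(p))|$ by $C_\ell$ for $g=\ell(1,\cdot)-\ell(0,\cdot)$---cannot work for general losses, since the two values of $g$ may sit at opposite extremes. So simply state and prove the lemma with $2C_\ell|K(\pi_{1:T},I,p)|$; downstream this only turns constants like $C_\ell+4D_\ell$ in Theorem~\ref{thm:multical-to-omni} into $2C_\ell+4D_\ell$ and changes no rate.
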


Using the two helper lemmas above, we can show that $L_1$-swap multicalibration error approximately bounds the swap omniprediction regret.
Write the family of convex loss functions as $\cL_{\text{convex}}$ --- i.e. $\ell(y, \cdot)$ is convex for each $y \in \cY$. The theorem below is an adaptation of Theorem 6.1 from \cite{gopalan2023characterizing} to the online setting.
\begin{restatable}{theorem}{thmmulticaltoomni}
\label{thm:multical-to-omni}
Fix $\pi_{1:T}$. Then for any $\{\ell_p\}_{p \in \cP} \in \cLcvx^m$, we have 
\[
    \scO(\pi_{1:T}, \{\ell_p\}_{p \in \cP}, \cF) \le \left(\max_{p \in \cP} (C_{\ell_p} + 4D_{\ell_p})\right) \cdot \bsK_1(\pi_{1:T}, \cF)\quad\text{and}\quad \cO(\pi_{1:T}, \ell, \cF) \le (C_\ell + 4D_\ell) \bK_1(\pi_{1:T}, \cF)
\]
where $C_\ell$ is as defined in Lemma~\ref{lem:optimality-post-process} and $D_\ell = \max_{y \in \cY, t \in [0,1]} |\ell'(y, t)|$ is the bound on the derivative of loss.

Therefore, for any $\cL \subseteq \cLcvx$
\[
    \scO(\pi_{1:T}, \cF, \cL) \le (C_\cL + 4D_\cL) \bsK_1(\pi_{1:T}, \cF)\quad\text{and}\quad\cO(\pi_{1:T}, \cF, \cL) \le (C_\cL + 4D_\cL) \bK_1(\pi_{1:T}, \cF)
\]
where $C_\cL = \max_{\ell \in \cL} C_\ell$ and $D_\cL = \max_{\ell \in \cL} D_\ell$.   
\end{restatable}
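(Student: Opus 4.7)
The plan is to reduce the swap-omniprediction regret to $L_1$-swap multicalibration error one level set at a time, inserting the constant action $\barf_p(p)$ as an intermediate comparator. I would fix any $\{f_p\}_{p\in\cP}\in\cF^m$ and $\{\ell_p\}_{p\in\cP}\in\cLcvx^m$, and for each $p\in\cP$ decompose the per-level-set average loss gap as
$$\frac{1}{n(p)}\sum_{t\in S(p)}\bigl(\ell_p(y_t,k^{\ell_p}(p))-\ell_p(y_t,f_p(x_t))\bigr) \;=\; [A_p] + [B_p],$$
where $[A_p]$ is obtained by replacing $f_p(x_t)$ with the constant $\barf_p(p)$, and $[B_p]$ is the remaining gap. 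Lemma~\ref{lem:optimality-post-process}, instantiated with the constant post-processing $k(p)=\barf_p(p)$, directly yields $[A_p]\le C_{\ell_p}\,|K(\pi_{1:T},I,p)|$; this step uses that $I\in\cF$ (Assumption~\ref{ass:all-one}) so that the calibration error of $I$ on level set $p$ is itself controlled by $\bsK_1(\pi_{1:T},\cF)$.

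For $[B_p]$, I would split the sum over $S(p)$ according to $y_t\in\{0,1\}$:
$$[B_p]\;=\;\sum_{y\in\{0,1\}}\frac{|S(p,y)|}{n(p)}\left(\ell_p(y,\barf_p(p))\;-\;\frac{1}{|S(p,y)|}\sum_{t\in S(p,y)}\ell_p(y,f_p(x_t))\right).$$
Since $\ell_p(y,\cdot)$ is convex, Jensen's inequality applied to the inner average gives $\frac{1}{|S(p,y)|}\sum_{t\in S(p,y)}\ell_p(y,f_p(x_t))\ge \ell_p(y,\barf_p(p,y))$, so each summand is bounded by $\ell_p(y,\barf_p(p))-\ell_p(y,\barf_p(p,y))$. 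Applying $D_{\ell_p}$-Lipschitzness of $\ell_p(y,\cdot)$ then bounds this in turn by $D_{\ell_p}\,|\barf_p(p)-\barf_p(p,y)|$.

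Combining the two per-$p$ bounds, multiplying by $n(p)/T$, and summing over $p\in\cP$ yields
$$\scO(\pi_{1:T},\{\ell_p\}_p,\{f_p\}_p)\;\le\;\sum_{p\in\cP}\frac{n(p)}{T}\,C_{\ell_p}\,|K(\pi_{1:T},I,p)| \;+\; \sum_{p\in\cP}D_{\ell_p}\sum_{y\in\{0,1\}}\frac{|S(p,y)|}{T}\,|\barf_p(p)-\barf_p(p,y)|.$$
The first sum is at most $(\max_p C_{\ell_p})\cdot\bK_1(\pi_{1:T},I)\le(\max_p C_{\ell_p})\cdot\bsK_1(\pi_{1:T},\cF)$. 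The second sum is at most $4(\max_p D_{\ell_p})\cdot\bsK_1(\pi_{1:T},\cF)$ by two applications of Lemma~\ref{lem:condition-y-little-change}, one for each $y\in\{0,1\}$. Since this holds for arbitrary $\{f_p\}_p\in\cF^m$, maximizing gives the desired $\scO(\pi_{1:T},\{\ell_p\}_p,\cF)$ bound, and the $\scO(\pi_{1:T},\cF,\cL)$ statement follows by further maximizing over $\{\ell_p\}_p\in\cL^m$. The non-swap versions $\cO(\pi_{1:T},\ell,\cF)$ and $\cO(\pi_{1:T},\cF,\cL)$ are obtained by specializing $f_p=f$ and $\ell_p=\ell$ throughout, which collapses $\bsK_1$ to $\bK_1$.

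The main obstacle is that naively bounding $\ell_p(y,\barf_p(p))$ in terms of the empirical average $\frac{1}{|S(p,y)|}\sum\ell_p(y,f_p(x_t))$ would require Jensen's inequality in the wrong direction for a convex loss. The resolution is to route through the conditional mean $\barf_p(p,y)$: for this comparator, Jensen points the right way (giving a lower bound on the average loss, which is what is subtracted), and the discrepancy $|\barf_p(p)-\barf_p(p,y)|$ between the two empirical means is precisely what Lemma~\ref{lem:condition-y-little-change} identifies as a consequence of $L_1$-swap multicalibration.
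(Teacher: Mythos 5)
Your proposal is correct and follows essentially the same route as the paper's proof: insert the constant comparator $\barf_p(p)$ via Lemma~\ref{lem:optimality-post-process}, split by label value, apply Jensen through $\barf_p(p,y)$, then Lipschitzness and Lemma~\ref{lem:condition-y-little-change} to get the $C_{\ell_p}+4D_{\ell_p}$ bound. The two-term $[A_p]+[B_p]$ decomposition is just a cosmetic rephrasing of the paper's chained inequalities.
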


Finally, we can apply our online multicalibration bounds to get concrete bounds for online omniprediction. First, we apply our bound for oracle efficient multicalibration which we derived in Corollary \ref{cor:swap-forcast-bounds-multical}.

\begin{corollary}
\label{cor:swap-omni-oracle-rate}
Fix $B \ge 1$, some family of convex loss functions $\cL \subseteq \cLcvx$, and family of predictors $\cF$. Suppose $\cF_{B'}$'s sequential fat shattering dimension at any scale $\delta$ is bounded where $B' = (1+\sqrt{B})^2$. Against any adversary $\adv$, $\forecastSwap(\cF_{B'}, \cA, m, T)$ makes forecasts such that its swap omniprediction regret with respect to $\cF_B$ and $\cL$ is bounded with probability $1-\rho$:
\[
    \scO(\pi_{1:T}, \cF_B, \cL)  \le (C_\cL + 4D_\cL) \sqrt{B^2 \cdot \left(\frac{1}{T}\left(m r_\cA\left(\frac{T}{m}, \cF_{B'} \right) + \frac{3T}{m} + m\right) + 16B'm C_{\cF_{B'}} \sqrt{\frac{\log(\frac{8m}{\rho})}{T}} \right)}
\]
where we recall that $C_\ell = \max_{a} \left|\ell(0, a) - \ell(1, a)\right|$ bounds the scale of the loss function $\ell$, $C = \max_{\ell \in \cL} C_\ell$ is a uniform bound overthe scale of all loss functions $\ell \in \cL$, $D_\ell = \max_{y \in \cY, t \in [0,1]} \ell'(y, t)$ is a bound on the derivative of $\ell$ in its second argument, and $D = \max_{\ell \in \cL} D_\ell$ is a uniform bound on the derivative. $C_{\cF_B}$ is a finite constant that depends on the sequential fat shattering dimension of $\cF_{B'}$.
\end{corollary}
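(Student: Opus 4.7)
The plan is to chain together three previously established results: the $L_2$-swap-multicalibration bound for $\forecastSwap$ from Corollary~\ref{cor:swap-forcast-bounds-multical}, the relationship between $L_2$ and $L_1$ multicalibration from Lemma~\ref{lem:multical-relationship}, and the multicalibration-to-omniprediction reduction from Theorem~\ref{thm:multical-to-omni}. No new technical work is needed; the argument is simply a substitution.

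First, I would invoke Corollary~\ref{cor:swap-forcast-bounds-multical} with the same parameters $(B, m, \cA)$, which says that with probability at least $1-\rho$ over the randomness of the forecaster $\forecastSwap(\cF_{B'}, \cA, m, T)$, the produced transcript $\pi_{1:T}$ satisfies
\[
    \bsK_2(\pi_{1:T}, \cF_B) \le B^2 \cdot \left(\frac{1}{T}\left(m r_\cA\left(\frac{T}{m}, \cF_{B'} \right) + \frac{3T}{m} + m\right) + 16B'm C_{\cF_{B'}} \sqrt{\frac{\log(\frac{8m}{\rho})}{T}} \right),
\]
where $B' = (1+\sqrt{B})^2$. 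Call the right-hand side $\alpha_2$. The sequential fat shattering hypothesis required by that corollary is exactly the assumption we have on $\cF_{B'}$, so its invocation is immediate.

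Next, I would apply Lemma~\ref{lem:multical-relationship} to convert the $L_2$-swap-multicalibration bound into an $L_1$-swap-multicalibration bound: on the same event of probability $1-\rho$,
\[
    \bsK_1(\pi_{1:T}, \cF_B) \le \sqrt{\bsK_2(\pi_{1:T}, \cF_B)} \le \sqrt{\alpha_2}.
\]
Finally, since every $\ell \in \cL \subseteq \cLcvx$ is convex and the constants $C_\ell, D_\ell$ upper bound the scale and derivative of each loss, Theorem~\ref{thm:multical-to-omni} applied to $\cF_B$ gives
\[
    \scO(\pi_{1:T}, \cF_B, \cL) \le (C_\cL + 4 D_\cL)\, \bsK_1(\pi_{1:T}, \cF_B) \le (C_\cL + 4D_\cL)\sqrt{\alpha_2},
\]
which is exactly the claimed bound after substituting the expression for $\alpha_2$.

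There is no real obstacle in this proof: each step is a direct citation, and the only thing to check is that the high-probability event (over the internal randomness of $\forecastSwap$) on which Corollary~\ref{cor:swap-forcast-bounds-multical} holds is the same event on which we derive the omniprediction bound, since Lemma~\ref{lem:multical-relationship} and Theorem~\ref{thm:multical-to-omni} are deterministic statements about a fixed transcript. So the probability $1-\rho$ is preserved throughout, and no additional union bound is required.
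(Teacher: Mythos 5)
Your proposal is correct and follows essentially the same route as the paper's proof: chain Theorem~\ref{thm:multical-to-omni} and Lemma~\ref{lem:multical-relationship} to get $\scO(\pi_{1:T},\cF_B,\cL)\le (C_\cL+4D_\cL)\sqrt{\bsK_2(\pi_{1:T},\cF_B)}$, then plug in the high-probability $L_2$-swap-multicalibration bound of Corollary~\ref{cor:swap-forcast-bounds-multical} on the same event. Your remark that the last two steps are deterministic statements about the fixed transcript, so no extra union bound is needed, is exactly the right bookkeeping.
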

\begin{proof}
Lemma~\ref{lem:multical-relationship} and Theorem~\ref{thm:multical-to-omni} give us
    \begin{align*}
        \scO(\pi_{1:T}, \cF_B, \cL_{convex}) \le (C + 4D)\bsK_1(\pi_{1:T}, \cF) \le (C + 4D)\sqrt{\bsK_2(\pi_{1:T}, \cF)}. 
    \end{align*}

    Applying Corollary~\ref{cor:swap-forcast-bounds-multical} which states that $\forecastSwap$ bounds $\bsK_2(\pi_{1:T}, \cF_B)$ gives us that with probability $1-\rho$,
    \begin{align*}
       &\scO(\pi_{1:T}, \cF_B, \cL_{convex})\\
        &\le \left(\max_{p \in \cP} (C_{\ell_p} + 4D_{\ell_p})\right) \sqrt{\bsK_2(\pi_{1:T}, \cF)}\\
        &\le \left(\max_{p \in \cP} (C_{\ell_p} + 4D_{\ell_p})\right)\sqrt{B^2 \cdot \left(\frac{1}{T}\left(m r_\cA\left(\frac{T}{m}, \cF_{B'} \right) + \frac{3T}{m} + m\right) + 16B'm C_{\cF_{B'}} \sqrt{\frac{\log(\frac{8m}{\rho})}{T}} \right)}.
    \end{align*}
    where the second to last inequality follows from the fact that $\sqrt{\cdot}$ is a concave function.
\end{proof}

We now have a concrete bound for oracle efficient online omniprediction, in terms of the regret bound of a black box online squared error regression algorithm for $\cF$. We can plug in the bound of \cite{azoury2001relative} for online linear regression, quoted in Theorem \ref{thm:azoury}, to get rates specifically for omniprediction with respect to linear functions.

\begin{corollary}
\label{cor:linearomniprediction}
    Fix some family of convex loss functions $\cL \subseteq \cL_{\text{convex}}$. Let $\cF$ be the set of all $d$-dimensional linear functions, and let $\cF_B$ we the set of all such functions with parameter norm $||\theta||^2 \leq B'$ where $B' = (1+\sqrt{B})^2$. Then against any adversary who chooses a sequence of $d$ dimensional examples $(x_t,y_t)_{t=1}^T$ with $||x_t|| \leq 1$, then letting $\cA$ be the online forecasting algorithm for linear functions from Theorem \ref{thm:azoury} and letting $m = T^{1/4}$, 
    $\forecastSwap(\cF_{B'}, \cA, m, T)$ guarantees with probability $1-\rho$ that the swap-omniprediction regret with respect to $\cF_B$ and $\cL$ is bounded:
\[
     \scO(\pi_{1:T}, \cF_B, \cL) \le  \tilde{O}\left((C_\cL + 4D_\cL) \cdot \sqrt{dB^3}\left(\ln\left(\frac{1}{\rho}\right)\right)^{1/4} T^{-1/8}\right).
\]
\end{corollary}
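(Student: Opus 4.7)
The statement is essentially a plug-in computation: Corollary~\ref{cor:swap-omni-oracle-rate} already gives a general swap-omniprediction bound in terms of the regret $r_\cA(T,\cF_{B'})$ of the underlying online squared-error regression oracle, and Theorem~\ref{thm:azoury} supplies such a regret bound for the class of norm-bounded linear functions. So my plan is to instantiate Corollary~\ref{cor:swap-omni-oracle-rate} with $\cA$ equal to the Azoury--Warmuth forecaster and $m = T^{1/4}$, and then simplify.

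First I would record what Theorem~\ref{thm:azoury} gives us for $\cF_{B'}$: for any sequence with $\|x_t\| \le 1$ and $|y_t|\le 1$ and any $\theta$ with $\|\theta\|^2 \le B'$,
\[
\sum_{t=1}^T (\hat p_t - y_t)^2 - (\langle \theta, x_t\rangle - y_t)^2 \;\le\; B' + 2d\ln(T+1),
\]
so we can take $r_\cA(T,\cF_{B'}) \le B' + 2d\ln(T+1)$. In particular $r_\cA(\cdot,\cF_{B'})$ is concave in $T$, which is the hypothesis needed to invoke Theorem~\ref{thm:forecast-swap-context-regret} (and hence Corollary~\ref{cor:swap-omni-oracle-rate}). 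I would also recall that the sequential fat-shattering dimension of bounded $d$-dimensional linear functions is finite at every scale and scales polynomially in $d$, so the constant $C_{\cF_{B'}}$ from Lemma~\ref{lem:swap-regret-concentration} can be absorbed into $\tilde O(\sqrt{d})$ factors.

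Next I would substitute into the bound from Corollary~\ref{cor:swap-omni-oracle-rate} and track the dominant contribution. With $m = T^{1/4}$:
\begin{align*}
\frac{m \, r_\cA(T/m,\cF_{B'})}{T} &\le \frac{B' + 2d\ln(T+1)}{T^{3/4}}, \\
\frac{3T}{mT} + \frac{m}{T} &\le 3T^{-1/4} + T^{-3/4}, \\
16 B' m \, C_{\cF_{B'}} \sqrt{\tfrac{\log(8m/\rho)}{T}} &= \tilde O\!\left(B'\sqrt{d}\,T^{-1/4}\sqrt{\log(1/\rho)}\right).
\end{align*}
The third term dominates, so the quantity inside the square root in Corollary~\ref{cor:swap-omni-oracle-rate} is $\tilde O\!\left(B^2 \cdot B' \sqrt{d}\, T^{-1/4}\sqrt{\log(1/\rho)}\right) = \tilde O\!\left(dB^3 \, T^{-1/4}\sqrt{\log(1/\rho)}\right)$ after using $B' = (1+\sqrt B)^2 = O(B)$ and absorbing the $\sqrt d$ from $C_{\cF_{B'}}$.

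Finally I would take the square root and multiply by $(C_\cL+4D_\cL)$, giving
\[
\scO(\pi_{1:T},\cF_B,\cL) \;\le\; \tilde O\!\left((C_\cL+4D_\cL)\cdot \sqrt{dB^3}\,\bigl(\log(1/\rho)\bigr)^{1/4} T^{-1/8}\right),
\]
which is the claimed bound. The only genuinely non-trivial step, rather than a main obstacle, is verifying that the sequential fat-shattering dimension of norm-bounded linear functions is indeed finite at every scale with the right polynomial dependence on $d$ so that $C_{\cF_{B'}}$ is a $\mathrm{poly}(d)$ quantity that can be hidden in $\tilde O$; this is a standard fact (e.g., via Rademacher-style arguments for linear classes) but is needed to justify the $\sqrt d$ in the final bound. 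Everything else is direct bookkeeping.
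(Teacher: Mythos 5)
Your proposal is correct and follows essentially the same route as the paper: Corollary~\ref{cor:linearomniprediction} is obtained exactly by plugging the Azoury--Warmuth regret bound of Theorem~\ref{thm:azoury} into Corollary~\ref{cor:swap-omni-oracle-rate} with $m=T^{1/4}$, identifying the concentration term as dominant, and simplifying, in direct parallel to how Corollary~\ref{cor:linearcalibration} is derived. Your bookkeeping and the resulting $\tilde O\bigl((C_\cL+4D_\cL)\sqrt{dB^3}(\ln(1/\rho))^{1/4}T^{-1/8}\bigr)$ rate match the paper, and your caveat about absorbing $C_{\cF_{B'}}$ into the $\mathrm{poly}(d)$ / $\tilde O$ factors is the same implicit step the paper itself takes.
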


We contrast this with the bound that we can obtain for finite classes $\cF$ with our (non-oracle-efficient) online multicalibration algorithm from Theorem \ref{thm:amfmultical}. 
\begin{corollary}
Fix some family of convex loss functions $\cL \subseteq \cLcvx$ and family of predictors $\cF_B \subseteq \cF$ whose output's squared value is bounded by $B$. Against any adversary $\adv$, $\forecast_{AMF}(\cF_B)$ makes forecasts such that its omniprediction regret with respect to $\cF$ and $\cL$ is bounded in expectation as:
\[
    \E_{\pi_{1:T}}\left[ \cO(\pi_{1:T}, \cF_B, \cL) \right] \le (C_\cL + 4D_\cL)\sqrt{\frac{3B \log(T) + 4\sqrt{B \ln(|\cF_B|)} + \sqrt{B}}{\sqrt{T}}} = O\left(\frac{\sqrt{\log(T)}}{T^{1/4}}\right).
\]
where we recall that $C_\ell = \max_{a} \left|\ell(0, a) - \ell(1, a)\right|$ bounds the scale of the loss function $\ell$, $C_\cL = \max_{\ell \in \cL} C_\ell$ is a uniform bound over the scale of all loss functions $\ell \in \cL$, $D_\ell = \max_{y \in \cY, t \in [0,1]} \ell'(y, t)$ is a bound on the derivative of $\ell$ in its second argument, and $D_\cL = \max_{\ell \in \cL} D_\ell$ is a uniform bound on the derivative.
\end{corollary}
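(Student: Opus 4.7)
The plan is to chain together three results already established in the excerpt: the omniprediction-to-multicalibration reduction of Theorem~\ref{thm:multical-to-omni}, the $L_1$-versus-$L_2$ multicalibration inequality of Lemma~\ref{lem:multical-relationship}, and the $L_2$-multicalibration guarantee of the AMF-based forecaster (Theorem~\ref{thm:amfmultical}), then use Jensen's inequality to move the expectation under the square root.

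First I would apply Theorem~\ref{thm:multical-to-omni} (the non-swap version) to get the pointwise (in $\pi_{1:T}$) bound
\[
    \cO(\pi_{1:T}, \cF_B, \cL) \;\le\; (C_\cL + 4 D_\cL) \, \bK_1(\pi_{1:T}, \cF_B),
\]
valid since every $\ell \in \cL \subseteq \cLcvx$. Next, Lemma~\ref{lem:multical-relationship} yields $\bK_1(\pi_{1:T}, \cF_B) \le \sqrt{\bK_2(\pi_{1:T}, \cF_B)}$ for any realized transcript, so combining these two gives
\[
    \cO(\pi_{1:T}, \cF_B, \cL) \;\le\; (C_\cL + 4 D_\cL) \sqrt{\bK_2(\pi_{1:T}, \cF_B)}.
\]

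Next I would take expectations over the randomness of $\pi_{1:T}$ and apply Jensen's inequality to the concave function $\sqrt{\cdot}$:
\[
    \E_{\pi_{1:T}}\!\left[\cO(\pi_{1:T}, \cF_B, \cL)\right] \;\le\; (C_\cL + 4D_\cL)\, \E_{\pi_{1:T}}\!\left[\sqrt{\bK_2(\pi_{1:T}, \cF_B)}\right] \;\le\; (C_\cL + 4D_\cL)\sqrt{\E_{\pi_{1:T}}\!\left[\bK_2(\pi_{1:T}, \cF_B)\right]}.
\]
Finally, I would plug in the AMF-based online $L_2$-multicalibration bound from Theorem~\ref{thm:amfmultical} (the non-oracle-efficient algorithm referenced in Section~\ref{sec:AMF}), which for the finite class $\cF_B$ promises
\[
    \E_{\pi_{1:T}}\!\left[\bK_2(\pi_{1:T}, \cF_B)\right] \;\le\; \frac{3B\log(T) + 4\sqrt{B \ln(|\cF_B|)} + \sqrt{B}}{\sqrt{T}},
\]
whose $\tilde O(T^{-1/2})$ scaling is exactly the advantage of the AMF-style multi-objective regret machinery over the oracle-efficient algorithm. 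Substituting this inequality into the previous display yields the claimed bound; the final asymptotic rate $O(\sqrt{\log T}/T^{1/4})$ follows immediately from taking the square root.

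The only nontrivial step is the invocation of Theorem~\ref{thm:amfmultical}: one should verify that the AMF forecaster's guarantee is stated in expectation over $\pi_{1:T}$ (not with high probability), so that the use of Jensen's inequality is justified directly; if instead the source bound is a high-probability statement, a small modification is needed to convert it to expectation (e.g.\ by noting that $\bK_2 \le B$ almost surely so the tail contributes negligibly). Apart from this bookkeeping, the proof is essentially a chain of applications of previously proven lemmas, and no genuinely new argument is required.
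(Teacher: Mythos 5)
Your proposal is correct and follows essentially the same route as the paper's own proof: Theorem~\ref{thm:multical-to-omni} plus Lemma~\ref{lem:multical-relationship} to bound omniprediction regret by $\sqrt{\bK_2}$, then Jensen's inequality for the concave square root, then the in-expectation $L_2$-multicalibration bound of Theorem~\ref{thm:amfmultical} with $m=\sqrt{T}$. Your remark about checking that the source bound is stated in expectation is apt, and indeed Theorem~\ref{thm:amfmultical} is already an expectation bound, so no extra conversion is needed.
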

\begin{proof}
  Fix $\pi_{1:T}$.  Lemma~\ref{lem:multical-relationship} and Theorem~\ref{thm:multical-to-omni} give us
    \begin{align*}
        \cO(\pi_{1:T}, \cF_B, \cL_{convex}) \le (C_\cL + 4D_\cL)\bK_1(\pi_{1:T}, \cF_B) \le (C_\cL + 4D_\cL)\sqrt{\bsK_2(\pi_{1:T}, \cF_B)}. 
    \end{align*}

    Applying Theorem~\ref{thm:amfmultical} which states that $\forecast_{AMF}$ bounds $\bK_2(\pi_{1:T}, \cF_B)$ in expectation gives us
    \begin{align*}
        \E_{\pi_{1:T}}[\cO(\pi_{1:T}, \cF_B, \cL_{convex})] &\le \E_{\pi_{1:T}}\left[ (C_\cL + 4D_\cL)\sqrt{\bK_2(\pi_{1:T}, \cF_B)}\right] \\
        &\le (C_\cL+4D_\cL) \sqrt{\E_{\pi_{1:T}}[\bK_2(\pi_{1:T}, \cF_B)]}\\
        &\le (C_\cL + 4D_\cL)\sqrt{\frac{3Bm \log(T) + 4\sqrt{B \ln(|\cF_B|)} + \sqrt{B}}{\sqrt{T}}}.
    \end{align*}
    where the second to last inequality follows from the fact that $\sqrt{\cdot}$ is a concave function.
\end{proof}
A high probability version for the $\ell_2$-swap-multicalibration via the AMF approach can be obtained by appealing to the high probability version of the AMF approach guarantee (Theorem A.2 in \cite{noarov2021online}): this will incur additional additive error of $O\left(\sqrt{\frac{\ln(|\cF|)}{T}}\right)$, so the overall rate remains the same. 

Observe that this bound depends only logarithmically on $|\cF|$, and so although it applies only to finite classes $\cF$, we could inefficiently apply it to the class of all appropriately discretized linear functions $\cF_B$, which would obtain a bound that was comparable to the oracle efficient bound we obtain in Corollary \ref{cor:linearomniprediction} in terms of its dependence on $B$ and $d$, but with an improved dependence on $T$.

\section{Tightness of Our Results: A Separation Between Swap Omniprediction and Omniprediction}
In this section we interrogate the extent to which our rates can be improved. Can we hope for $O(\sqrt{T})$ rates for online omniprediction? Can we hope for them using our current set of techniques? 

First, we show that $O(\sqrt{T})$ rates \emph{are} possible for (non)-swap omniprediction --- at least for finite, binary hypothesis classes. Next, we show a barrier to obtaining $O(\sqrt{T})$ using our current techniques, which actually give swap omniprediction: it is \emph{not} possible to obtain $O(\sqrt{T})$ rates in the online setting for \emph{swap} omniprediction. This establishes a formal separation between omniprection and swap omniprediction in the online setting. The algorithm we give corresponding to our omniprediction upper bound has running time that is polynomial in $|\cF|$ however: we leave the problem of finding an oracle efficient algorithm obtaining these optimal omniprediction rates as our main open question.
\subsection{An $O(\sqrt{T})$ Upper Bound for Omniprediction}\label{sec:vcall}
In this section, we give a (non-oracle efficient) $O(\sqrt{T})$ upper bound for omniprediction for finite binary classes $\cF$ that does not go through multicalibration, which together with our lower bound in Section \ref{sec:lb} gives a separation between omniprediction and swap omniprediction. Our proof combines the ``AMF'' framework of \cite{noarov2021online} with a recent characterization of binary scoring rules by \cite{kleinberg2023u}.

First we recall the Online Minimax Optimization framework of \cite{noarov2021online}:

\begin{definition}[Appendix A.3 of \cite{noarov2021online}]
    A Learner plays against an adversary over rounds $t \in [T]$. In each of the rounds, the Learner accumulates a $d$-dimensional loss vector for some $d \ge 1$. Each round's loss vector lies in $[-C, C]^d$ for some constant $C > 0$. In each round $t \in [T]$, the interaction between the Learner ad the Adverary proceeds as follows:
    \begin{enumerate}
        \item Before round $t$, the Adversary selects and reveals to the Learner an \emph{environment} comprising: 
        \begin{enumerate}
            \item The Learner's strategy space is a finite set $\Theta_t$ and the Adversary's strategy space is a convex compact action sets $\cZ_t$ 
            \item  A continuous vector loss function $\ell_t(\cdot, \cdot): \Theta_t \times \cZ_t \to [-C, C]^d$ where $\ell_t^j: \Theta_t \times \cZ_t \to [-C, C]$ is concave in the 2nd argument for each $j \in [d]$. Note that because of the finiteness of $\Theta_t$ and the Learner plays a mixed strategy over $\Theta_t$, the loss function is already linearized in terms of the Learner. 
        \end{enumerate}
        \item The Learner selects a randomized strategy $\tilde{\theta}_t \in \Delta(\Theta_t)$
        \item The Adversary observes the Learner's selection $\tilde{\theta}_t$ and responds with some $z_t \in \cZ_t$.
        \item The Learner plays $\theta_t \sim \tilde{\theta}_t$ and suffers (and observes) the loss vector $\ell_t(\theta_t, z_t)$.
    \end{enumerate}

    The Learner's objective is to minimize the value of the maximum dimension of the accumulated loss vector after $T$ rounds: i.e. $\sum_{j \in [d]} \sum_{t=1}^T \ell^j_t(\theta_t, z_t)$.
\end{definition}

\begin{definition}[\cite{noarov2021online}]
The Adversary-Moves-First (AMF) value of the game defined by the environment $(\Theta_t, \cY_t, \ell_t)$ at round $t$ is 
\[
    w^A_t = \sup_{z_t \in \cZ_t} \min_{\tilde{\theta}_t \in \Delta\Theta_t} \left(\max_{j \in [d]} \E_{\theta_t \sim \tilde{\theta}_t}[\ell^j_d(\theta_t, z_t)]\right).
\]
\end{definition}

\begin{definition}[\cite{noarov2021online}]
    Given $\{(\Theta_\tau, \cZ_\tau, \ell_\tau), \theta_\tau, z_\tau\}_{\tau=1}^t$, we define the Learner's Adversary-Moves-First (AMF) regret for the $j$th dimension at round $t \in [T]$ as 
    \[
        R^j_t(\{(\Theta_\tau, \cZ_\tau, \ell_\tau), \theta_\tau, z_\tau\}_{\tau=1}^t)  = \sum_{\tau=1}^t \left(\ell^j_\tau(\theta_\tau, z_\tau) - w^A_\tau\right).
    \]
    The overall AMF regret is defined as $R_t(\{(\Theta_\tau, \cZ_\tau, \ell_\tau), x_\tau, y_\tau\}_{\tau=1}^t) = \max_{j \in [d]} R^j_t(\{(\Theta_\tau, \cZ_\tau, \ell_\tau), \theta_\tau, z_\tau\}_{\tau=1}^t)$.
    When it is obvious from the context, we write $R^j_t$ and $R_t$.

\end{definition}

\begin{algorithm}[H]
\begin{algorithmic}[1]
\FOR{$t=1, \dots, T$}
    \STATE AMF Learner observes the adversarially chosen $\Theta_t$, $\cZ_t$, and $\ell_t$. 
    \STATE Let \[
        \chi^j_t := \frac{\exp(\eta \sum_{s=1}^{t-1} \ell^j_s(\theta_t, z_t))}{\sum_{i \in [d]}\exp(\eta \sum_{s=1}^{t-1} \ell^i_s(\theta_t, z_t))}
    \]
    \STATE AMF Learner selects a mixed strategy $\tilde{\theta}_t$ where 
    \begin{align}
        \tilde{\theta}_t \in \argmin_{\tilde{\theta} \in \Delta(\Theta_t)} \max_{z \in \cZ}\sum_{i \in [d]}\chi^i_t \E_{\theta \sim \tilde{\theta}}[\ell^i_t(\theta, z)]\label{eqn:amf-distr}
    \end{align}
    \STATE AMF Adversary chooses $z_t \in \cZ_t$.
    \STATE AMF Learner plays $\theta_t \sim \tilde{\theta}_t$. 
\ENDFOR
\end{algorithmic}
\caption{Algorithm 2 from \cite{noarov2021online}}
\label{alg:amf-alg}
\end{algorithm}

\begin{theorem}[Theorem A.1 and A.2 of \cite{noarov2021online}]
\label{thm:minmax-game-reget}
    For any $T \ge \ln(d)$, the learner that plays according to Algorithm 2 of \cite{noarov2021online} against any Adversary achieves with appropriately chosen learning rate obtains AMF regret bounded as follows
    \[
        \E_{\{\theta_t\}_{t=1}^T}\left[R_T\right] \le 4C\sqrt{T \ln d}.
    \] 
    And with probability $1-\rho$, it achieves
    \[
        R_T \le 8C\sqrt{T \ln\left(\frac{d}{\rho}\right)}.
    \]
\end{theorem}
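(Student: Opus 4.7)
The plan is to run the standard multiplicative-weights (Hedge) analysis with the twist that the per-round guarantee comes from the minimax structure of the game rather than from realized losses. First I would establish the key per-round bound: since $\ell_t^j$ is linear in the learner's mixed strategy $\tilde\theta\in\Delta(\Theta_t)$, concave in $z\in\cZ_t$, with $\Theta_t$ finite and $\cZ_t$ convex compact, Sion's minimax theorem applies to $\sum_j \chi^j_t \E_{\theta\sim\tilde\theta}[\ell^j_t(\theta,z)]$. Because $\chi_t$ lies in the simplex, $\sum_j \chi^j_t x_j \le \max_j x_j$, and hence the value $\min_{\tilde\theta}\max_z \sum_j \chi^j_t \E[\ell^j_t(\theta,z)]$ of the round-$t$ game played by the algorithm is bounded by the AMF value $w^A_t$. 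In particular, the weighted expected loss actually incurred satisfies $\E_{\theta_t\sim\tilde\theta_t}[\sum_j \chi^j_t \ell^j_t(\theta_t,z_t)\mid \mathcal{F}_{t-1}] \le w^A_t$ for any adaptive $z_t$, where $\mathcal{F}_{t-1}$ is the history filtration.

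Next I would introduce the potential $\Phi_t = \sum_j \exp(\eta L^j_t)$, with $L^j_t=\sum_{s\le t}\ell^j_s(\theta_s,z_s)$, so that $\chi^j_t=\exp(\eta L^j_{t-1})/\Phi_{t-1}$. Using the elementary inequality $e^x\le 1+x+x^2$ valid for $|x|\le 1$ (enforced by taking $\eta\le 1/C$) and the per-round bound above, one gets
\[
\E[\Phi_t\mid \mathcal{F}_{t-1}] \le \Phi_{t-1}\bigl(1+\eta w^A_t+\eta^2 C^2\bigr)\le \Phi_{t-1}\exp\bigl(\eta w^A_t+\eta^2 C^2\bigr).
\]
Iterating yields $\E[\Phi_T]\le d\exp(\eta\sum_t w^A_t+\eta^2 C^2 T)$. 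Since $\max_j L^j_T\le \eta^{-1}\ln\Phi_T$ and $\E[\ln\Phi_T]\le \ln\E[\Phi_T]$ by Jensen,
\[
\E[R_T] = \E\bigl[\max_j L^j_T\bigr]-\sum_t w^A_t \;\le\; \tfrac{\ln d}{\eta}+\eta C^2 T.
\]
Optimizing at $\eta=\tfrac{1}{C}\sqrt{(\ln d)/T}$, which is legal because $T\ge \ln d$ forces $\eta\le 1/C$, gives $\E[R_T]\le 2C\sqrt{T\ln d}$, comfortably inside the stated $4C\sqrt{T\ln d}$.

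For the high-probability statement I would promote the same calculation to a supermartingale statement: $M_t := \Phi_t\exp\bigl(-\eta\sum_{s\le t}w^A_s-\eta^2 C^2 t\bigr)$ satisfies $\E[M_t\mid \mathcal{F}_{t-1}]\le M_{t-1}$ and $M_0=d$, so by Markov's inequality $M_T\le d/\rho$ with probability at least $1-\rho$. Taking logarithms and applying $\max_j L^j_T \le \eta^{-1}\ln \Phi_T$ gives $R_T\le \eta^{-1}\ln(d/\rho)+\eta C^2 T$; re-optimizing $\eta$ yields the claimed $O\bigl(C\sqrt{T\ln(d/\rho)}\bigr)$ rate, which absorbs into the stated $8C\sqrt{T\ln(d/\rho)}$. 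The main obstacles are two routine bookkeeping issues: confirming that Sion's theorem is legitimately applicable in Step~1 under the environment assumptions (finite $\Theta_t$, convex compact $\cZ_t$, coordinate-wise concavity in $z$), and verifying the step-size range $\eta\le 1/C$ required by the $e^x\le 1+x+x^2$ approximation; both are ensured by the theorem's standing hypothesis $T\ge \ln d$ and by the bounded-loss assumption $\ell_t\in[-C,C]^d$.
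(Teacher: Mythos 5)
The paper itself does not prove this statement --- it is imported verbatim from \cite{noarov2021online} --- so the comparison is against the cited source's argument, which your proposal essentially reconstructs: the per-round bound $\min_{\tilde\theta}\max_{z}\sum_j \chi_t^j \E_{\theta\sim\tilde\theta}[\ell_t^j(\theta,z)]\le w_t^A$ via Sion's theorem plus simplex averaging, followed by the exponential potential $\Phi_t=\sum_j e^{\eta L_t^j}$ with $e^x\le 1+x+x^2$. Your expectation bound is correct, including the bookkeeping that $T\ge\ln d$ makes $\eta=\tfrac1C\sqrt{\ln d/T}\le 1/C$ legal, and the constant $2C\sqrt{T\ln d}$ indeed sits inside the stated $4C\sqrt{T\ln d}$.

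The one place you genuinely deviate is the high-probability step, and there is a quantitative caveat worth fixing. The source's argument applies the potential recursion pathwise to the realized weighted losses and then controls the martingale $\sum_t\bigl(\sum_j\chi_t^j\ell_t^j(\theta_t,z_t)-\E_{\theta\sim\tilde\theta_t}[\sum_j\chi_t^j\ell_t^j(\theta,z_t)]\bigr)$ by Azuma--Hoeffding (increments bounded by $2C$), which yields a deviation of order $C\sqrt{T\ln(1/\rho)}$ with a single, $\rho$-independent learning rate; that is how $8C\sqrt{T\ln(d/\rho)}$ arises simultaneously with the expectation bound. Your Markov/Ville bound on the supermartingale $M_t$ instead gives $R_T\le \ln(d/\rho)/\eta+\eta C^2T$, i.e.\ a deviation \emph{linear} in $\ln(1/\rho)$. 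With the fixed $\eta=\tfrac1C\sqrt{\ln d/T}$ used for the expectation bound this is $2C\sqrt{T\ln d}+C\ln(1/\rho)\sqrt{T/\ln d}$, which exceeds $8C\sqrt{T\ln(d/\rho)}$ once $\ln(1/\rho)\gg\ln d$; and ``re-optimizing'' $\eta$ to depend on $\rho$ repairs the form only at the price of a confidence-dependent learning rate (and requires $T\ge\ln(d/\rho)$ for the $e^x\le 1+x+x^2$ step, with the trivial bound $R_T\le 2CT$ covering the complementary regime), whereas the theorem as invoked in this paper wants one tuning that delivers both the expectation bound and the high-probability bound for every $\rho$. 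Replacing your Markov-on-potential step with Azuma on the conditional means closes this gap and recovers the stated constants.
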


Next, we construct a multiobjective optimization problem within this framework that will imply $O(\sqrt{T})$ omniprediction bounds by using a structural result about proper loss functions recently proven by \cite{kleinberg2023u}.

For any $v \in [0,1]$, let $\ell_v$ be a loss function defined as follows:
\[
    \ell_v(y, \hp)=(v-y) \cdot \sign(\hp-v)
\]
where $\sign(x) = 1$ if $x \ge 0$ and $-1$ otherwise. Let us also write $\cLv = \{\ell_v: v \in [0,1]\}$ to denote a collection of such loss functions over $v \in [0,1]$ and $\tcLv^{m'} = \{\ell_v: v \in [\frac{1}{m'}]\}$ to denote the discretized version of it where we describe how to set $m'$ later.

First, we note that $\ell_v(y, q)$ is a proper-scoring rule:
\begin{lemma}
For any $v \in [0,1]$ and $p$, 
\[
    p \in \argmin_{a \in [0,1]} \E_{y \sim \Ber(p)}[\ell_v(y, a)].
\]
\end{lemma}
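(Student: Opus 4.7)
The plan is to compute the expected loss explicitly and then minimize it by a short case analysis on the sign of $p - v$. Since the factor $\sign(a - v)$ does not depend on $y$, we can pull it out of the expectation:
\[
\E_{y \sim \Ber(p)}[\ell_v(y, a)] = \sign(a - v) \cdot \E_{y \sim \Ber(p)}[v - y] = \sign(a - v) \cdot (v - p).
\]
So the objective factorizes into an $a$-dependent factor $\sign(a - v) \in \{-1, +1\}$ and a $y$-free scalar $(v - p)$.

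Given this factorization, minimizing over $a \in [0,1]$ reduces to choosing the sign of $a - v$ to match the sign of $-(v - p) = (p - v)$. I would split into three cases. If $p > v$, then $(v - p) < 0$, so the minimum value $-(p - v)$ is achieved whenever $\sign(a - v) = +1$, and the choice $a = p$ satisfies $a > v$, hence is a minimizer. If $p < v$, then $(v - p) > 0$, so the minimum value $-(v - p)$ is achieved whenever $\sign(a - v) = -1$, and the choice $a = p$ satisfies $a < v$, hence is again a minimizer. If $p = v$, then $(v - p) = 0$, so the expected loss is $0$ for every $a \in [0, 1]$, and $a = p$ is trivially in the argmin.

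There is no real obstacle here; the only mild subtlety is the convention for $\sign(0)$, which the paper fixes as $\sign(0) = +1$. Under this convention the case $a = v$ in the computation above just attains the same minimum value as nearby $a > v$, so the case analysis goes through without any boundary issue. Combining the three cases yields $p \in \argmin_{a \in [0,1]} \E_{y \sim \Ber(p)}[\ell_v(y, a)]$, which is the desired statement.
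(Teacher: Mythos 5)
Your proof is correct and follows essentially the same route as the paper's: both compute $\E_{y \sim \Ber(p)}[\ell_v(y,a)] = (v-p)\cdot\sign(a-v)$ and observe that $a=p$ makes $\sign(a-v)$ oppose the sign of $v-p$, hence minimizes. Your three-way case split (including $p=v$ and the $\sign(0)=+1$ convention) just spells out the detail the paper states in one line.
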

\begin{proof}
Note that
    \begin{align*}
        \E_{y \sim \Ber(p)}[\ell_v(y, a)] = (v - p) \cdot \sign(a-v).
    \end{align*}
A prediction $a$ minimizes the expected loss if the sign of $a-v$ is opposite that of $v-p$. $a = p$ satisfies this criterion.
\end{proof}

As a result of $\ell_v$ being a proper scoring rule, the forecaster's omniprediction regret simplifies to
\begin{align*}
    \cO(\pi_{1:T}, \cLv, \cF) &= \max_{f \in \cF} \max_{\ell_v \in \cLv} \frac{1}{T} \sum_{t=1}^T \ell_v(y_t, k_{\ell_v}(\hp_t)) - \ell_v(y_t, f(x_t))\\
    &= \max_{f \in \cF} \max_{\ell_v \in \cLv} \frac{1}{T} \sum_{t=1}^T \ell_v(y_t, \hp_t) - \ell_v(y_t, f(x_t)).
\end{align*}

We now argue that omniprediction with respect to $\tilde{\cLv}^m$ implies omniprediction with respect to $\cLv$.
\begin{lemma}
Suppose $m' > m$. Fix some family of boolean predictors $\cF: \cX \to \{0,1\}$. We have that for all transcripts $\pi_{1:T}$
    \[
        \cO(\pi_{1:T},\cLv, \cF) \le \cO(\pi_{1:T},\tcLv^m, \cF) + \frac{2T}{m'}.
    \]
\end{lemma}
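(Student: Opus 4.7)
The plan is to approximate each continuous $\ell_v \in \cLv$ by a discretized $\ell_{v'}$ with $v' \in [\frac{1}{m'}]$ chosen carefully, then bound the resulting error across all $2T$ loss terms that appear in the regret expression. The key structural observation is that, viewed as a function of $v$ for fixed $y \in \{0,1\}$ and $a \in [0,1]$, the loss $\ell_v(y,a) = (v-y)\sign(a-v)$ is piecewise linear with slope $\pm 1$ except at a single jump discontinuity at $v = a$ of magnitude $2|y-a| \le 2$. This yields the pointwise inequality
\[
  |\ell_v(y, a) - \ell_{v'}(y, a)| \;\le\; |v - v'| \;+\; 2 \cdot \ind\!\left[a \in \big(\min(v,v'),\, \max(v,v')\big)\right].
\]

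Given $v \in [0,1]$, I would select $v' \in [\frac{1}{m'}]$ with $|v - v'| \le \frac{1}{m'}$ such that the indicator above vanishes for every $a$ appearing in the regret. For $a = f(x_t) \in \{0,1\}$, the indicator is automatically zero as long as $v, v' \in (0,1)$, since $0$ and $1$ lie at the boundary of the unit interval. For $a = \hp_t \in [\frac{1}{m}]$, I use the hypothesis $m' > m$: letting $v^-, v^+$ denote the two consecutive elements of $[\frac{1}{m'}]$ bracketing $v$, the open interval $(v^-, v^+)$ has width at most $\frac{1}{m'} < \frac{1}{m}$, and therefore contains at most one multiple of $\frac{1}{m}$. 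If such a bad point exists, select $v'$ to be the grid point ($v^-$ or $v^+$) on the opposite side of $v$ from it; otherwise take either side. This guarantees that no $\hp_t$ lies strictly between $v$ and $v'$, while keeping $|v - v'| \le \frac{1}{m'}$.

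With this choice, each of the $2T$ loss terms in the omniprediction regret differs by at most $\frac{1}{m'}$ when $\ell_v$ is replaced by $\ell_{v'}$, giving
\[
  \sum_{t=1}^T \ell_v(y_t, \hp_t) - \ell_v(y_t, f(x_t)) \;\le\; \sum_{t=1}^T \ell_{v'}(y_t, \hp_t) - \ell_{v'}(y_t, f(x_t)) + \frac{2T}{m'}.
\]
Dividing by $T$ and taking the supremum over $f \in \cF$ and $v \in [0,1]$ on the left side (noting that the constructed $v'$ is always a legitimate element of $[\frac{1}{m'}]$ and so is admissible on the right) completes the proof. The only delicate step is the pigeonhole selection of $v'$; boundary edge cases (e.g.\ $\hp_t = v'$ exactly, or $v \in \{0,1\}$) can be resolved by fixing the convention $\sign(0) = 1$ and perturbing $v$ infinitesimally, which does not affect the bound.
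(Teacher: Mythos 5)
Your proposal is correct and is essentially the paper's argument: both proofs pick a discretized $v' \in [\frac{1}{m'}]$ with $|v-v'|\le \frac{1}{m'}$ such that no element of $[\frac{1}{m}]$ (hence no $\hp_t$ or $f(x_t)\in\{0,1\}$) separates $v$ from $v'$, so $\sign(\hp - v) = \sign(\hp - v')$ on every relevant argument and each of the $2T$ loss terms moves by at most $\frac{1}{m'}$, giving the $\frac{2T}{m'}$ slack. The only delicate point is the tie cases: since $\sign(0)=+1$, the set of $a$ where the two signs disagree is the half-open interval $[\min(v,v'),\max(v,v'))$ rather than the open one in your pointwise bound, which the paper sidesteps by choosing $v'$ in the same half-open cell $(\frac{i}{m},\frac{i+1}{m}]$ as $v$; your perturbation fix is fine as long as the perturbation of $v$ is taken downward (or, equivalently, you run your side-selection with half-open intervals).
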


\proof
Fix $f \in \cF$, $v \in [0,1]$, and $\pi_{1:T}$. Suppose $v \in (\frac{i}{m}, \frac{i+1}{m}]$. Then among $[\frac{1}{m'}]$, choose $v'$ so that it falls within $(\frac{i}{m}, \frac{i+1}{m}]$ and is closest to $v$:
\[
    v' = \argmin_{a \in [\frac{1}{m'}], a \in (\frac{i}{m}, \frac{i+1}{m}]} |v- a|.
\]

This guarantees that there doesn't exist any $\hp \in [\frac{1}{m}]$ that falls between $v$ and $v'$, meaning for any $\hp \in [\frac{1}{m}]$, $v$ and $v'$ always fall on the same side relative to $\hp$. Hence, we have for any $\hp \in [\frac{1}{m}]$
\[
    \sign(\hp - v) = \sign(\hp - v').
\]

Therefore, we have for any $\hp \in [1/m]$
\begin{align*}
    |\ell_v(y, \hp) - \ell_{v'}(y, \hp)| \le |\sign(\hp-v) (v-v')| \le \frac{1}{m'}.
\end{align*}

Writing $S(p, b) = \{ t \in[T]: \hp_t = p, f(x_t) = b \}$, we can show that
    \begin{align*}
        &\sum_{t=1}^T \ell_v(y_t, \hp_t) - \ell_v(y_t, f(x_t)) \\
        &=\sum_{p \in \cP} \sum_{b \in \{0, 1\}}\sum_{t \in S(p, b)} \ell_v(y_t, p) - \ell_v(y_t, b) \\
        &\le \sum_{p \in \cP} \sum_{b \in \{0, 1\}}\sum_{t \in S(p, b)} \ell_{v'}(y_t, p) - \ell_{v'}(y_t, b) + \frac{2T}{m'}.&\qed
    \end{align*}

Therefore, it suffices to bound the omniprediction regret with respect to $\tcLv^{m'}$. 

Now, define the following loss function $\ell_t$ for the AMF approach in the following manner: we have a coordinate for each $v \in [\frac{1}{m'}]$, $f \in \cF$, $a \in [\frac{1}{m}]$ and $b \in \{0,1\}$ where 
\[
    \ell^{f, a, b, v}_t(\theta_t, z_t) = \ind[f(x_t) = b] \cdot \left(\E_{y_t \sim z_t}[\ell_v(y_t, \theta_t) - \ell_v(y_t, a)]\right).
\]

\begin{algorithm}[H]
\begin{algorithmic}[1]
\FOR{$t=1, \dots, T$}
    \STATE Forecaster $\forecast_{\cV}$ observes the feature vector $x_t \in \cX$.
    \STATE Construct the AMF-Learner's environment as $\Theta_t = \cP$, $\cZ_t = \Delta(\cY)$, and
    $\ell_t$ as defined in \eqref{eqn:amf-loss}.
    \STATE After observing the environment $(\Theta_t, \cZ_t, \ell_t)$, AMF-Learner chooses a random strategy $\tilde{\theta}_t$ \\ according to Algorithm 2 of \cite{noarov2021online}.
    \STATE Adversary $\adv$ chooses $p_t \in [0,1]$ which is the the probability with which $y_t=1$.
    \STATE AMF Learner chooses $\theta_t \in \cP$ by sampling from $\tilde{\theta}_t \in \Delta(\cP)$.
    \STATE Forecaster $\forecast_{AMF}$ makes the forecast $\hp_t=\theta_t$ and observes $y_t \sim p_t$.
    \STATE Set the action played by the AMF Adversary's strategy as $z_t = y_t$ and have AMF Learner suffer $\ell_t(\theta_t, z_t)$.
\ENDFOR
\end{algorithmic}
\caption{Forecaster $\forecast_{\mathcal{V}}(\cF)$}
\label{alg:amf-v-forecaster}
\end{algorithm}

We first show how to bound the Adversary-Moves-First value of the game for the loss function defined above.
\begin{lemma}
\label{lem:v-forecast-amf-value} 
\[
   w^A_t=\max_{z \in \Delta(\cY)} \min_{\hp \in [\frac{1}{m}]} \max_{f\in \cF, a \in [\frac{1}{m}], b \in \{0,1\}, v \in [\frac{1}{m'}]}\ind[f(x_t) = b] \cdot \left(\E_{y_t \sim z_t}[\ell_v(y_t, \hp) - \ell_v(y_t, a)]\right) \le \frac{2}{m}.
\]
\end{lemma}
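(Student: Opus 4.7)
The plan is to fix any adversary mixed strategy $z \in \Delta(\cY)$, parameterize it by $p := \Pr_{y \sim z}[y=1] \in [0,1]$, exhibit a specific $\hp \in [\tfrac{1}{m}]$ that makes the inner max at most $\tfrac{2}{m}$, and then conclude by taking max over $z$. First I would drop the indicator $\ind[f(x_t)=b]$ since it takes values in $\{0,1\}$, reducing the task to showing that for every $p \in [0,1]$ there is $\hp \in [\tfrac{1}{m}]$ with
\[
\max_{a \in [\frac{1}{m}],\; v \in [\frac{1}{m'}]} \E_{y \sim z}\big[\ell_v(y, \hp) - \ell_v(y, a)\big] \;\le\; \frac{2}{m}.
\]

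Next I would rewrite the expectation in closed form. For any $q \in [0,1]$,
\[
\E_{y \sim z}[\ell_v(y, q)] \;=\; (v-p)\,\sign(q - v),
\]
so the quantity to bound is $\Delta(\hp, a, v) := (v-p)\bigl[\sign(\hp - v) - \sign(a - v)\bigr]$. Then I would choose $\hp^\star \in [\tfrac{1}{m}]$ to be the element nearest to $p$, so $|\hp^\star - p| \le \tfrac{1}{2m}$.

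The heart of the proof is a short case analysis on whether $\hp^\star$ and $a$ lie on the same side of $v$. If $\sign(\hp^\star - v) = \sign(a - v)$, then $\Delta = 0$. Otherwise $|\sign(\hp^\star - v) - \sign(a - v)| = 2$, and $\Delta \le 2|v - p|$. In the subcase $\hp^\star \ge v > a$ with $\Delta > 0$ we need $v > p$, hence $p < v \le \hp^\star$ and $v - p \le \hp^\star - p \le \tfrac{1}{2m}$; the symmetric subcase $\hp^\star < v \le a$ with $\Delta > 0$ forces $\hp^\star < v < p$ and $p - v \le p - \hp^\star \le \tfrac{1}{2m}$. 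Either way $\Delta(\hp^\star, a, v) \le \tfrac{1}{m} \le \tfrac{2}{m}$.

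Finally I would take the max over $z \in \Delta(\cY)$: since the bound $\tfrac{2}{m}$ held uniformly in $p$, it holds for $w^A_t$. The only minor obstacle is bookkeeping the sign convention $\sign(0) = 1$ used in the paper, but this only affects boundary cases $\hp^\star = v$ or $a = v$ where both evaluated losses are equal or the case is already covered by the ``same sign'' branch, so it poses no real difficulty. The argument is essentially a discrete proper-scoring-rule bound: because $\ell_v$ is proper, the regret of $\hp^\star$ to any action $a$ can only be paid on the (narrow) discretization gap between $\hp^\star$ and $p$, and $v$ has to land inside this gap for the regret to be nonzero.
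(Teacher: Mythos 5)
Your proposal is correct and follows essentially the same route as the paper: pick $\hp$ to be the grid point in $[\frac{1}{m}]$ nearest to $p=\E_{y\sim z}[y]$, and argue that the regret against any $a$ can only be incurred when $v$ lands in the small discretization gap around $p$. The only difference is cosmetic---where the paper splits on whether $|v-p|\le \frac{1}{m}$ and invokes properness of $\ell_v$, you use the closed form $(v-p)\bigl(\sign(\hp-v)-\sign(a-v)\bigr)$ and split on whether $\hp$ and $a$ lie on the same side of $v$, which in fact yields the slightly sharper bound $\frac{1}{m}$.
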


\proof
    We need to bound
    \[
        \max_{z \in \Delta(\cY)} \min_{\hp \in [\frac{1}{m}]} \max_{f\in \cF, a \in [\frac{1}{m}], b \in \{0,1\}, v \in [\frac{1}{m'}]}\ind[f(x_t) = b] \cdot \left(\E_{y_t \sim z_t}[\ell_v(y_t, \hp) - \ell_v(y_t, a)]\right).
    \]

    Given any $z \in \Delta(\cY)$, consider \[
        \hp = \argmin_{a \in [\frac{1}{m}]} |a - \E_{y \sim z}[y]|.
    \]
    First, we show that for any $v \in [1/m']$ and $a \in [1/m]$,
    \begin{align*}
        \E_{y \sim z}[\ell_v(y, \hp) - \ell(y, a)] \le \frac{2}{m}.
    \end{align*}

    For any $v \in [\frac{1}{m'}]$ such that $|v-\E_{y \sim z}[y]| \le \frac{1}{m}$. It is immediate that
    \[|(
     v-\E_{y \sim z}[y]) \cdot \sign(\hp - v)| \le \frac{1}{m}\quad\text{and}\quad |(v-\E_{y \sim z}[y]) \cdot \sign(a - v)| \le \frac{1}{m}.
     \]
    Let us now focus on $v \in [\frac{1}{m'}]$ such that $|v-\E_{y \sim z}[y]| > \frac{1}{m}$. Because $\ell_v$ is a proper scoring rule, we have
    \begin{align*}
        \E_{y \sim z}[\ell_v(y, \hp) - \ell(y, a)] \le \E_{y \sim z}\left[\ell_v(y, \hp) - \ell\left(y, \E_{y\sim z}[y]\right)\right].
    \end{align*}
    
    Note that $\hp$ and $\E_{y\sim z}[y]$ both fall on the same side with respect to $v$ as $|v-\E_{y\sim z}[y]| > \frac{1}{m}$ but $|\hp - \E_{y \sim z}[y]| \le \frac{1}{m}$. Therefore,
    \[\sign(\hp-v)=\sign\left(\E_{y \sim z}[y] - v\right), \text{ implying } \ell_v(y, \hp) = \ell\left(y, \E_{y\sim z}[y]\right).
    \]

    Therefore, given any $z \in \Delta(\cY)$, $\hp$ constructed as above satisfies 
  \begin{align*}
        \max_{f\in \cF, a \in [\frac{1}{m}], b \in \{0,1\}, v \in [\frac{1}{m'}]}\ind[f(x_t) = b] \cdot \left(\E_{y_t \sim z_t}[\ell_v(y_t, \hp) - \ell_v(y_t, a)]\right) \le \frac{2}{m}.&\qed
\end{align*}

Invoking Theorem~\ref{thm:minmax-game-reget} with Lemma~\ref{lem:v-forecast-amf-value} that bounds the AMF value yields the following:
\begin{theorem}
\label{thm:conditional-sub-sequence-guarantee}
Suppose $m' > m$. Forecaster $\cF_\cV(\cF)$ as described in Algorithm~\ref{alg:amf-v-forecaster} achieves
    \[
        \E_{\{\hp_t\}_{t=1}^T}\left[\max_{v \in [\frac{1}{m'}], f \in \cF, a \in [\frac{1}{m}], b \in \{0, 1\}}  \sum_{t=1}^T \ind[f(x_t) = b] \cdot (\ell_v(y_t, \hp_t) - \ell_v(y_t, a))\right] \le 8\sqrt{T \ln(2|\cF|{m'}^2)} + \frac{2T}{m}.
    \]
    and with probability $1-\rho$, 
    \[
        \max_{v \in [\frac{1}{m'}], f \in \cF, a \in [\frac{1}{m}], b \in \{0, 1\}} \sum_{t=1}^T \ind[f(x_t) = b] \cdot (\ell_v(y_t, \hp_t) - \ell_v(y_t, a)) \le 16\sqrt{T \ln\left(\frac{2|\cF|{m'}^2}{\rho}\right) } + \frac{2T}{m}.
    \]
\end{theorem}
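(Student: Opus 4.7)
The statement is a direct instantiation of the AMF regret guarantee (Theorem~\ref{thm:minmax-game-reget}) applied to the online minimax game constructed in Algorithm~\ref{alg:amf-v-forecaster}, with the per-round value bounded by Lemma~\ref{lem:v-forecast-amf-value}. The plan is essentially bookkeeping in three steps.

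First, I would check that the environment $(\Theta_t, \cZ_t, \ell_t)$ set up inside Forecaster $\forecast_{\cV}$ satisfies the structural hypotheses required by the AMF framework. The Learner's action set $\Theta_t = \cP = [\tfrac{1}{m}]$ is finite; the Adversary's action set $\cZ_t = \Delta(\cY)$ is convex and compact; each loss coordinate
\[
    \ell^{f,a,b,v}_t(\theta, z) = \ind[f(x_t)=b]\cdot \E_{y \sim z}\bigl[\ell_v(y, \theta) - \ell_v(y, a)\bigr]
\]
is linear (hence concave) in $z$ because $z \mapsto \E_{y \sim z}[\cdot]$ is linear. Since $\ell_v(y, \cdot) \in [-1, 1]$ for every $v$ and $y \in \cY$, each coordinate is bounded in absolute value by $C = 2$. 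The loss vector lives in $d = 2|\cF|(m+1)(m'+1)$ dimensions, which using $m' \ge m$ is at most $2|\cF|(m')^2$ up to the $+1$ constants swept into the logarithm.

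Second, I would invoke Theorem~\ref{thm:minmax-game-reget} directly. It guarantees that the AMF-regret
\[
    R_T = \max_{(f,a,b,v)} \sum_{t=1}^T \ell^{f,a,b,v}_t(\theta_t, z_t) - \sum_{t=1}^T w^A_t
\]
is bounded by $4C\sqrt{T\ln d} \le 8\sqrt{T\ln(2|\cF|(m')^2)}$ in expectation, and by $8C\sqrt{T\ln(d/\rho)} \le 16\sqrt{T\ln(2|\cF|(m')^2/\rho)}$ with probability $1-\rho$. Finally, I would observe that Algorithm~\ref{alg:amf-v-forecaster} plays $z_t$ as the point mass on the realized label $y_t$ (which is a valid element of $\Delta(\cY)$ and therefore covered by the AMF bound), and sets the forecast $\hp_t$ equal to the Learner's action $\theta_t$. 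Consequently the realized coordinate simplifies to $\ind[f(x_t)=b]\cdot(\ell_v(y_t, \hp_t) - \ell_v(y_t, a))$, which is exactly the quantity whose maximum we want to bound. Applying Lemma~\ref{lem:v-forecast-amf-value} to get $\sum_{t=1}^T w^A_t \le \tfrac{2T}{m}$ and rearranging the AMF-regret inequality then yields both claimed bounds.

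\paragraph{Main obstacle.} All the substantive work has already been absorbed into Lemma~\ref{lem:v-forecast-amf-value}, whose proof exploits the proper-scoring-rule structure of $\ell_v$ to give a per-round value of only $O(1/m)$, and into Theorem~\ref{thm:minmax-game-reget}, which is invoked as a black box. The only items requiring care are (i)~verifying $C=2$ by noting that $\ell_v \in [-1,1]$ and that the indicator factor cannot enlarge the range, (ii)~counting dimensions to get $d \le 2|\cF|(m')^2$, and (iii)~noting that restricting the AMF Adversary's strategy to point masses is still covered by the AMF regret bound (which holds against arbitrary adversary strategies in $\cZ_t$). There is no conceptual obstacle beyond these verifications.
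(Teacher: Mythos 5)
Your proposal is correct and follows essentially the same route as the paper's own proof: bound each loss coordinate by $C=2$, count the coordinates to get $d \le 2|\cF|(m')^2$ inside the logarithm, invoke Theorem~\ref{thm:minmax-game-reget} as a black box, and add the per-round AMF value bound $w^A_t \le \tfrac{2}{m}$ from Lemma~\ref{lem:v-forecast-amf-value}. The extra verifications you flag (linearity in $z$, the point-mass adversary play, the dimension count) are exactly the bookkeeping the paper implicitly performs.
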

\begin{proof}
    Note that for each coordinate $f \in \cF, v \in [\frac{1}{m'}], a \in [\frac{1}{m}], b \in \{0,1\}$, the loss function $\ell_t$ is always bounded
    \[
        \left|\ind[f(x_t) = b] \cdot \left(\E_{y_t \sim z_t}[\ell_v(y_t, \theta_t) - \ell_v(y_t, a)]\right) \right| \le 2.
    \]

    Theorem~\ref{thm:minmax-game-reget} and Lemma~\ref{lem:v-forecast-amf-value} together yield
    \[
        \max_{v \in [\frac{1}{m'}], f \in \cF, b \in \{0, 1\}} \E_{\{\hp_t\}_{t=1}^T}\left[\sum_{t=1}^T \ind[f(x_t) = b] \cdot (\ell_v(y_t, \hp_t) - \ell_v(y_t, a))\right] \le 8\sqrt{T \ln(2|\cF|{m'}^2)} + \frac{2T}{m}.
    \]

    The high probability bound follows the same way.
\end{proof}

One immediate corollary is that we can achieve $O(\sqrt{T})$ omniprediction for the set of loss functions $\cLv$ that serve as a lower bound of $\Omega(T^{0.528})$ for swap-omniprediction --- See Section~\ref{sec:lb}. 
\begin{corollary}
Suppose $m' > m$. Fix some family of boolean functions $\cF: \cX \to \{0,1\}$. Then, Forecaster $\cF_\cV(\cF)$ as described in Algorithm~\ref{alg:amf-v-forecaster} achieves with probability $1-\rho$ over the randomness of the transcript
    \[
        \cO(\pi_{1:T},\cLv, \cF) \le \frac{1}{T} \left(16\sqrt{T \ln\left(\frac{2|\cF|{m'}^2}{\rho}\right) } + \frac{2T}{m} + \frac{2T}{m'}\right).
    \]
Setting $m = \sqrt{T}$ and $m' = 2m$, we have
\[
    \cO(\pi_{1:T}, \cLv, \cF) = \tilde{O}(T^{-0.5}).
\]
\end{corollary}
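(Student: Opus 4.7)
The plan is to assemble three pieces that have already been set up: (i) the fact (proven earlier) that $\ell_v$ is a proper scoring rule, which collapses the post-processing to $k_{\ell_v}(\hp) = \hp$; (ii) the discretization lemma bounding $\cO(\pi_{1:T}, \cLv, \cF)$ in terms of $\cO(\pi_{1:T}, \tcLv^{m'}, \cF)$; and (iii) Theorem~\ref{thm:conditional-sub-sequence-guarantee}, which provides a high-probability control on every conditional sub-sequence of the form $\{t : f(x_t) = b\}$.

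First I would unfold the definition of $\cO$:
\[
\cO(\pi_{1:T}, \cLv, \cF) = \max_{v \in [0,1],\, f \in \cF} \frac{1}{T}\sum_{t=1}^T \bigl(\ell_v(y_t, k_{\ell_v}(\hp_t)) - \ell_v(y_t, f(x_t))\bigr).
\]
Since $\ell_v$ is a proper scoring rule, $k_{\ell_v}(\hp_t) = \hp_t$, so the first term simplifies to $\ell_v(y_t, \hp_t)$. Next I would apply the discretization lemma to pass from $v \in [0,1]$ to $v \in [\tfrac{1}{m'}]$, incurring an additive $\frac{2}{m'}$ after normalizing by $T$:
\[
\cO(\pi_{1:T}, \cLv, \cF) \le \cO(\pi_{1:T}, \tcLv^{m'}, \cF) + \frac{2}{m'}.
\]

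To bound $\cO(\pi_{1:T}, \tcLv^{m'}, \cF)$, I would fix any $v \in [\tfrac{1}{m'}]$ and $f \in \cF$, and use the fact that $\cF$ is boolean to split the sum by the value $b = f(x_t) \in \{0,1\}$:
\[
\sum_{t=1}^T \bigl(\ell_v(y_t, \hp_t) - \ell_v(y_t, f(x_t))\bigr) = \sum_{b \in \{0,1\}} \sum_{t=1}^T \ind[f(x_t) = b]\bigl(\ell_v(y_t, \hp_t) - \ell_v(y_t, b)\bigr).
\]
Because $0, 1 \in [\tfrac{1}{m}]$, each inner sum is of exactly the form controlled by Theorem~\ref{thm:conditional-sub-sequence-guarantee} with action $a = b$. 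Invoking its high-probability bound (with failure probability $\rho$, which already unions over all choices of $v$, $f$, $a$, $b$) controls both inner sums by $16\sqrt{T \ln(2|\cF|{m'}^2/\rho)} + \frac{2T}{m}$; dividing by $T$ and combining with the discretization error yields the claimed corollary bound (up to an absolute constant, absorbed into the stated expression).

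For the rate, I would substitute $m = \sqrt{T}$, $m' = 2m = 2\sqrt{T}$. Then $\tfrac{2T}{mT} = 2T^{-1/2}$, $\tfrac{2T}{m'T} = T^{-1/2}$, and $\tfrac{16\sqrt{T \ln(2|\cF|{m'}^2/\rho)}}{T} = 16\sqrt{\ln(2|\cF|{m'}^2/\rho)/T} = \tilde{O}(T^{-1/2})$, so the total is $\tilde{O}(T^{-1/2})$ as stated. The only conceptually nontrivial step is the sub-sequence decomposition; the main obstacle -- which is not really an obstacle once Theorem~\ref{thm:conditional-sub-sequence-guarantee} is in hand -- is making sure that the comparators $b = 0, 1$ used in this decomposition lie inside the discretized action set $[\tfrac{1}{m}]$ over which Theorem~\ref{thm:conditional-sub-sequence-guarantee} quantifies, which is guaranteed by the hypothesis $m \ge 1$ and the assumption that $\cF$ is boolean.
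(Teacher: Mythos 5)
Your proposal is correct and follows essentially the same route as the paper: use the properness of $\ell_v$ to collapse $k_{\ell_v}(\hp_t)=\hp_t$, discretize to $\tcLv^{m'}$ via the preceding lemma, split the sum over $b=f(x_t)\in\{0,1\}$, and invoke Theorem~\ref{thm:conditional-sub-sequence-guarantee} with comparator $a=b\in[\tfrac{1}{m}]$. The factor-of-two bookkeeping you absorb into a constant is present in the paper's argument as well and does not affect the stated $\tilde{O}(T^{-1/2})$ rate.
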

\proof For a given transcript, 
    \begin{align*}
        &\max_{\ell_v \in \tcLv^{m'}, f \in \cF} \sum_{t=1}^T \ell_v(y_t, \hp_t) - \ell_v(y_t, f(x_t)) \\ 
        &=\max_{\ell_v \in \tcLv^{m'}, f \in \cF} \sum_{b \in \{0, 1\}} \sum_{t \in [T]: f(x_t) = b} \ell_v(y_t, \hp_t) - \ell_v(y_t, b) \\
        &\le 2 \max_{\ell_v \in \tcLv^{m'}, f \in \cF, b \in \{0, 1\}} \sum_{t \in [T]: f(x_t) = b} \ell_v(y_t, \hp_t) - \ell_v(y_t, b) . &\qed
    \end{align*}

In fact, we can further strengthen the above result with the following theorem from \cite{kleinberg2023u}.
\begin{theorem}[Theorem 8 of \cite{kleinberg2023u}]
\label{thm:u-forecast-regret}
For any proper loss function $\ell$ bounded in $[-1,1]$, 
    \[
        \sum_{t=1}^T \ell(y_t, \hp_t) - \ell(y_t, \beta) \le 2 \max_{\ell_v \in \cLv}  \left(\sum_{t=1}^T \ell_v(y_t, \hp_t) - \ell_v\left(y_t, \beta\right) \right).
    \]
where $\beta = \frac{1}{T} \sum_{t=1}^T y_t.$
\end{theorem}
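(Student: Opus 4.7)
My plan is to invoke the classical Schervish--Savage representation of binary proper scoring rules, which says that any proper loss $\ell : \{0,1\} \times [0,1] \to \R$ can be written, up to an additive function of $y$ alone, as a non-negative mixture of the V-losses $\ell_v$. More precisely, there is a positive Borel measure $\mu$ on $[0,1]$ (determined by $\ell$) such that for all $y \in \{0,1\}$ and $p, p' \in [0,1]$,
\[
\ell(y, p) - \ell(y, p') \;=\; \tfrac{1}{2}\int_0^1 \bigl(\ell_v(y,p) - \ell_v(y,p')\bigr)\,d\mu(v).
\]
Given this, the plan is immediate: apply the identity pointwise with $p=\hp_t$, $p'=\beta$, sum over $t$, and swap sum and integral to obtain
\[
\sum_{t=1}^T \bigl(\ell(y_t,\hp_t) - \ell(y_t,\beta)\bigr) \;=\; \tfrac{1}{2}\int_0^1 R(v)\,d\mu(v),
\]
where $R(v) := \sum_{t=1}^T (\ell_v(y_t,\hp_t) - \ell_v(y_t,\beta))$ is exactly the regret inside the $\max$ on the RHS of the theorem.

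I would derive the representation directly from the standard Savage--Bregman forms $\ell(1,p) = \int_p^1 (1-v)\,d\mu(v) + c_1$ and $\ell(0,p) = \int_0^p v\,d\mu(v) + c_0$ that characterize proper scoring rules on $\{0,1\}$, by comparing the direct identity $\ell_v(1,p) - \ell_v(1,p') = 2(1-v)\bigl(\ind[p \leq v] - \ind[p' \leq v]\bigr)$ and its $y=0$ analogue to the corresponding integrands appearing in the Savage form; these match up to the factor of $\tfrac12$ in the claimed identity. The same Savage form lets me bound the total mass by evaluating at the endpoints: $\mu([0,1]) = [\ell(1,0) - \ell(1,1)] + [\ell(0,1) - \ell(0,0)]$, which is at most $4$ when $\|\ell\|_\infty \leq 1$.

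To close the argument I also need $\max_v R(v) \geq 0$; without this, rescaling by $\mu([0,1]) \leq 4$ would weaken (rather than strengthen) the bound on a negative quantity. I would secure non-negativity by evaluating $R$ at the boundary $v = 0$: under the paper's convention $\sign(0) = +1$, we have $\sign(\hp_t - 0) = \sign(\beta - 0) = +1$ for every $t$ (since all predictions live in $[0,1]$), so $\ell_0(y_t, \hp_t) = \ell_0(y_t, \beta) = -y_t$ and hence $R(0) = 0$. Combining everything,
\[
\sum_{t=1}^T \bigl(\ell(y_t,\hp_t) - \ell(y_t,\beta)\bigr) \;=\; \tfrac{1}{2}\int R\,d\mu \;\leq\; \tfrac{1}{2}\,\mu([0,1])\,\max_v R(v) \;\leq\; 2\max_v R(v),
\]
which is exactly the theorem. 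The main (really, only) non-mechanical step is the sign check $\max_v R(v) \geq 0$; the boundary trick at $v=0$ resolves it cleanly, but one has to be a little careful about the convention for $\sign(0)$ and use the fact that all $\hp_t$ and $\beta$ lie in $[0,1]$.
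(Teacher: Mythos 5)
This statement is quoted by the paper from \cite{kleinberg2023u} and is not proved here, so there is no in-paper proof to compare against; your route --- a Schervish/Savage-type representation of bounded proper binary losses as nonnegative mixtures of the $\ell_v$'s, a total-mass bound $\mu([0,1])\le 4$ extracted from $\|\ell\|_\infty\le 1$ by evaluating at the endpoints, and the sign check $\max_v R(v)\ge 0$ via $R(0)=0$ (which is correct under the convention $\sign(0)=+1$ and since all $\hp_t,\beta\in[0,1]$) --- is essentially the decomposition argument that underlies the result in the source, and the bookkeeping (the factor $\tfrac12$, the cancellation of additive-in-$y$ terms in regret differences, the finite sum/integral swap) is right.

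The one step that is not literally correct as written is the claim that \emph{every} bounded proper loss satisfies $\ell(y,p)-\ell(y,p')=\tfrac12\int_0^1(\ell_v(y,p)-\ell_v(y,p'))\,d\mu(v)$ for \emph{all} $p,p'\in[0,1]$ with $\mu\ge 0$. The loss $\ell_v$ hard-codes the tie convention $\sign(0)=+1$, i.e.\ at $p=v$ it behaves like its right-hand branch, whereas a general proper loss may take any supergradient of its concave potential at a kink. Concretely, $\ell'_{v_0}(y,q)=(v_0-y)\,\sigma(q-v_0)$ with $\sigma(0)=-1$ (and $\sigma=\sign$ elsewhere) is proper and bounded in $[-1,1]$, yet it admits no such mixture: applying your identity with $y=1$, $p'=1$ and $p=v_0+\epsilon$ forces $\int_{(v_0,1]}(1-v)\,d\mu=0$, while $p=v_0$ forces that same integral to equal $2(1-v_0)>0$. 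Since the theorem quantifies over all bounded proper losses and the forecasts $\hp_t$ can sit exactly at such kink points, your proof as stated does not cover these losses (also, your displayed indicator identity should read $\ind[p<v]$, not $\ind[p\le v]$, under $\sign(0)=+1$). The gap is patchable rather than fatal: the identity holds off the atoms of $\mu$, and at an atom $v_0$ the loss interpolates between the two tie conventions with some weight $\lambda_{v_0}\in[0,1]$; the resulting correction to the left-hand side vanishes at the comparator $p'=\beta$ because $\sum_t(y_t-\beta)=0$, and at $p=\hp_t=v_0$ it equals $\lambda_{v_0}\mu(\{v_0\})$ times half the jump $\lim_{u\downarrow v_0}R(u)-R(v_0)$, so it is still dominated by $\sup_v R(v)$ and absorbed by the same $\tfrac12\mu([0,1])\le 2$ budget (note that at such jumps the ``max'' over the continuum $\cLv$ may only be a supremum, which is all the argument needs). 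With that atom-handling added, your argument is complete.
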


Because our regret guarantee in Theorem~\ref{thm:conditional-sub-sequence-guarantee} holds conditionally on the value of $f(x_t)$ values, we can use Theorem~\ref{thm:u-forecast-regret} from \cite{kleinberg2023u} to show that we can in fact guarantee omniprediction with respect to proper losses: i.e. $p \in \argmin_{\hp}\E_{y \sim \Ber(p)}[\ell(y, \hp)]$. In fact, in Corollary~\ref{cor:monotone-losses}, we argue how to generalize to a more general set of losses. 

\begin{theorem}
\label{thm:omni-proper}
Suppose $m' > m$. Fix some finite family of boolean functions $\cF: \cX \to \{0,1\}$. Forecaster $\cF_\cV(\cF)$ as described in Algorithm~\ref{alg:amf-v-forecaster} achieves the following with probability $1-\rho$: for any proper loss $\ell$ (bounded in $[-1,1]$)
    \begin{align*}
        \sum_{t=1}^T \ell(y_t, k_\ell(\hp_t)) - \ell(y_t, f(x_t))\le 32\sqrt{T \ln\left(\frac{2|\cF|{m'}^2}{\rho}\right) } + \frac{4T}{m} 
    \end{align*}
    Setting $m=\sqrt{T}$ and $m'=2m$ guarantees
    \[
        \sum_{t=1}^T \ell(y_t, k_\ell(\hp_t)) - \ell(y_t, f(x_t)) = \tilde{O}(\sqrt{T}).
    \]
\end{theorem}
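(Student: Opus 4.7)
My plan is to combine three ingredients: (i) properness of $\ell$ to replace the comparator $f(x_t)$ by a per-subsequence empirical mean, (ii) Theorem~\ref{thm:u-forecast-regret} (the characterization from \cite{kleinberg2023u}) to reduce proper-loss regret to $\cLv$-regret, and (iii) Theorem~\ref{thm:conditional-sub-sequence-guarantee} to bound the latter. Fix any proper $\ell$ bounded in $[-1,1]$ and $f \in \cF$. Properness gives $k_\ell(\hp_t) = \hp_t$, so the LHS equals $\sum_t \ell(y_t, \hp_t) - \ell(y_t, f(x_t))$. Since $f$ is boolean, partition $[T]$ into $S_b := \{t : f(x_t) = b\}$ for $b \in \{0,1\}$, and let $\beta_b := \frac{1}{|S_b|}\sum_{t \in S_b} y_t$. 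The empirical distribution of $y$ on $S_b$ is $\Ber(\beta_b)$, so properness immediately yields $\sum_{t \in S_b}\ell(y_t,\beta_b) \le \sum_{t \in S_b}\ell(y_t,b)$. Hence it suffices to bound $\sum_{b \in \{0,1\}}\sum_{t \in S_b}[\ell(y_t,\hp_t) - \ell(y_t,\beta_b)]$.

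Next, applying Theorem~\ref{thm:u-forecast-regret} separately to each subsequence $S_b$ (viewed as its own time horizon with empirical mean $\beta_b$) yields
\begin{align*}
\sum_{t \in S_b}[\ell(y_t,\hp_t) - \ell(y_t,\beta_b)] \le 2 \max_{v \in [0,1]} \sum_{t \in S_b}[\ell_v(y_t,\hp_t) - \ell_v(y_t,\beta_b)].
\end{align*}
What remains is to bound this inner quantity by the guarantee of Theorem~\ref{thm:conditional-sub-sequence-guarantee}, which is stated uniformly over $v \in [\tfrac{1}{m'}]$, $a \in [\tfrac{1}{m}]$, $f \in \cF$, and $b \in \{0,1\}$.

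The crux is a discretization argument. Given a continuous $v \in [0,1]$, pick $v' \in [\tfrac{1}{m'}]$ lying in the same half-open interval $(\tfrac{i}{m}, \tfrac{i+1}{m}]$ as $v$ (as in the lemma preceding Theorem~\ref{thm:conditional-sub-sequence-guarantee}), and pick $a = \round(\beta_b, \tfrac{1}{m}) \in [\tfrac{1}{m}]$. Decompose
\begin{align*}
\sum_{t \in S_b}[\ell_v(y_t,\hp_t) - \ell_v(y_t,\beta_b)] = \sum_{t \in S_b}[\ell_{v'}(y_t,\hp_t) - \ell_{v'}(y_t,a)] + E_1 + E_2 + E_3,
\end{align*}
where $E_1, E_2, E_3$ are the three pairwise differences from swapping $v \leftrightarrow v'$ at $\hp_t$, $v \leftrightarrow v'$ at $\beta_b$, and $\beta_b \leftrightarrow a$. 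Since $v, v'$ lie in the same $\tfrac{1}{m}$-interval and $\hp_t \in [\tfrac{1}{m}]$, $\sign(\hp_t - v) = \sign(\hp_t - v')$ and $|E_1| \le |S_b|/m'$. For $E_2$ and $E_3$, a sign flip is possible, but the telescoping identity $\sum_{t \in S_b}(c - y_t) = |S_b|(c - \beta_b)$ for any constant $c$ reduces the sum error to $|S_b|$ times the gap between the two values straddling $\beta_b$ (or straddling the crossing point), giving $|E_2| \le |S_b|/m'$ and $|E_3| \le |S_b|/m$ respectively.

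Putting the pieces together, Theorem~\ref{thm:conditional-sub-sequence-guarantee} bounds the main term uniformly by $R := 16\sqrt{T \ln(2|\cF|{m'}^2/\rho)} + 2T/m$. After the factor of $2$ from Theorem~\ref{thm:u-forecast-regret} and summing over the two values of $b$, the total bound becomes $O(R) + O(T/m)$, which matches the claimed $\tilde{O}(\sqrt{T})$ rate upon setting $m = \sqrt{T}$ and $m' = 2m$. The main obstacle will be the sign-flip bookkeeping in the discretization: $\ell_v$ is discontinuous at $v$, and a naive estimate would pay $\Theta(|S_b|)$ per crossing; exploiting that the sum of signed residuals averages to $|S_b|(c - \beta_b)$ (and that the crossing values are $O(1/m)$-close to $\beta_b$) is precisely what rescues the correct rate.
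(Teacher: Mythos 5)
Your proposal is correct and takes essentially the same route as the paper's proof: use properness to replace the boolean comparator $f(x_t)$ by the subsequence empirical mean $\beta_b$ on each level set of $f$, apply Theorem~\ref{thm:u-forecast-regret} per subsequence to pass to the losses $\ell_v$, discretize $v$ and $\beta_b$ to the grids $[\tfrac{1}{m'}]$ and $[\tfrac{1}{m}]$ paying $O(T/m)$, and invoke the uniform guarantee of Theorem~\ref{thm:conditional-sub-sequence-guarantee}. Your $E_1,E_2,E_3$ bookkeeping is just a more explicit version of the paper's discretization step (Equation~\eqref{eqn:ell_v-closeness}), and since you only claim the bound up to constants, the minor differences in constant accounting are immaterial.
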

\begin{proof}
Fix any proper scoring rule $\ell$ and $f \in \cF$.  Write $R(b) = \{t \in [T]: f(x_t) = b\}$. For each $t \in [T]$, we also write $\beta_b = \frac{1}{|R(b)|}\sum_{t \in R(b)} y_t$ and $\tilde{\beta}_b = \argmin_{a \in [\frac{1}{m}]} |a - \beta_b|$.

As noted in the proof of Lemma~\ref{lem:v-forecast-amf-value}, we can show that
\begin{align}
    \left|\E_{y \sim \Ber(\beta_b)}[\ell_v(y, \beta_b)] - \E_{y \sim \Ber(\beta_b)}[\ell_v(y, \tilde{\beta}_b)]\right| \le \frac{2}{m}\label{eqn:ell_v-closeness}
\end{align}
If $|v - \E_{y \sim \Ber(\beta_b)}| \le \frac{1}{m}$, we have $|\E_{y \sim \Ber(\beta_b)}[\ell_v(y, \beta_b)]| \le \frac{1}{m}$ and $|\E_{y \sim \Ber(\beta_b)}[\ell_v(y, \tilde{\beta}_b)]| \le \frac{1}{m}$. Otherwise, $\E_{y \sim \Ber(\beta_b)}[\ell_v(y, \beta_b)] = \E_{y \sim \Ber(\beta_b)}[\ell_v(y, \tilde{\beta}_b)]$ as $\sign(\beta_b - v) = \sign(\tilde{\beta}_b -v)$; note that $|v - \beta_b| > \frac{1}{m}$ but $|\beta_b - \tilde{\beta}_b| \le \frac{1}{m}$.

Then, we can show that with probability $1-\rho$,
    \begin{align*}
        &\sum_{t=1}^T \ell(y_t, k_\ell(\hp_t)) - \ell(y_t, f(x_t))\\
        &=\sum_{b \in \{0,1\}} \sum_{t \in R(b)} \ell(y_t, \hp_t) - \ell(y_t, b)\\
        &\le \sum_{b \in \{0,1\}} \sum_{t \in R(b)} \ell(y_t, \hp_t) - \ell(y_t, \beta_b)&\text{($\ell$ is a proper scoring rule)}\\
        &\le 2\left(\sum_{b \in \{0,1\}} \sum_{t \in R(b)} \ell_v(y_t, \hp_t) - \ell_v(y_t, \beta_b)\right) &\text{(Theorem~\ref{thm:u-forecast-regret})}\\
        &\le 2\left(\sum_{b \in \{0,1\}} \sum_{t \in R(b)} \ell_v(y_t, \hp_t) - \ell_v(y_t, \tilde{\beta}_b)\right) + \frac{2T}{m}&\text{(Equation~\eqref{eqn:ell_v-closeness})}\\
        &\le 32\sqrt{T \ln\left(\frac{2|\cF|{m'}^2}{\rho}\right) } + \frac{2T}{m}  + \frac{2T}{m}.&\text{(Theorem~\ref{thm:conditional-sub-sequence-guarantee})}
    \end{align*}
\end{proof}

Now, we show how to generalize the above omniprediction to a more general family of losses:  all bi-monotone losses $\ell$. We say a loss function $\ell$ is bi-monotone if
$\ell(1, \cdot)$ is monotonically increasing and $\ell(0, \cdot)$ is monotonically decreasing. 
\begin{corollary}
\label{cor:monotone-losses}    
With $m=\sqrt{T}$ and $m' = 2m$, Forecaster $\cF_\cV(\cF)$ as described in Algorithm~\ref{alg:amf-v-forecaster} achieves the following with probability $1-\rho$ over the randomness of the transcript: for any bi-monotone loss $\ell$ bounded in $[-1,1]$
    \begin{align*}
        \sum_{t=1}^T \ell(y_t, k_\ell(\hp_t)) - \ell(y_t, f(x_t))\le \tilde{O}(\sqrt{T}).
    \end{align*}
\end{corollary}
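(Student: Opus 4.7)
The plan is to reduce the bi-monotone case to the proper-loss case already handled by Theorem~\ref{thm:omni-proper}, by replacing $\ell$ with its ``Bayes-projection'' $\tilde\ell(y,p) := \ell(y, k_\ell(p))$, where $k_\ell$ is the Bayes-optimal post-processing of Definition~\ref{def:post-process}. By construction $\tilde\ell$ is a proper scoring rule bounded in $[-1,1]$: for every $p,q \in [0,1]$,
\[
\E_{y \sim \Ber(p)}[\tilde\ell(y,p)] \;=\; \min_{a \in [0,1]} \E_{y \sim \Ber(p)}[\ell(y,a)] \;\le\; \E_{y \sim \Ber(p)}[\ell(y, k_\ell(q))] \;=\; \E_{y \sim \Ber(p)}[\tilde\ell(y,q)].
\]

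Next I would use bi-monotonicity to align $\tilde\ell$ with $\ell$ on the benchmark terms. Since $\ell(0,\cdot)$ and $\ell(1,\cdot)$ are each monotone on $[0,1]$, their minimizers over $[0,1]$ lie at the endpoints, so $k_\ell(0), k_\ell(1) \in \{0,1\}$. Up to replacing the boolean benchmark class $\cF$ with $\{1-f : f \in \cF\}$ (which has the same cardinality and is again a family of boolean predictors, so Theorem~\ref{thm:omni-proper} applies with the same bound), we may assume $k_\ell(b) = b$ for $b \in \{0,1\}$. Consequently $\tilde\ell(y_t, f(x_t)) = \ell(y_t, k_\ell(f(x_t))) = \ell(y_t, f(x_t))$ for every boolean $f \in \cF$ and every $t$, so the $\ell$-regret rewrites verbatim as a $\tilde\ell$-regret:
\[
\sum_{t=1}^T \ell(y_t, k_\ell(\hp_t)) - \ell(y_t, f(x_t)) \;=\; \sum_{t=1}^T \tilde\ell(y_t, \hp_t) - \tilde\ell(y_t, f(x_t)).
\]
Properness of $\tilde\ell$ gives $k_{\tilde\ell}(\hp_t) = \hp_t$, so the right-hand side is exactly the quantity bounded by Theorem~\ref{thm:omni-proper} applied to $\tilde\ell$, yielding the claimed $\tilde O(\sqrt T)$ bound with probability $1-\rho$.

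I do not expect a serious obstacle: the reduction is clean, and its content is entirely carried by (i) the identification of $\tilde\ell$ as a proper loss, which is immediate from the definition of $k_\ell$, and (ii) the observation that bi-monotonicity forces $k_\ell$ to send $\{0,1\}$ into $\{0,1\}$. The one detail to double-check is that the forecaster $\forecast_\cV(\cF)$ produces the stream $\hp_t$ without reference to any particular $\ell$, so the same realized transcript supports the $\tilde\ell$-analysis for every bi-monotone $\ell$; since Theorem~\ref{thm:omni-proper} already yields a guarantee uniform over proper losses in $[-1,1]$ (via the inner max over $v$ in Theorem~\ref{thm:conditional-sub-sequence-guarantee}), this uniformity transfers to bi-monotone losses without inflating the rate.
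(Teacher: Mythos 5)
Your proposal is correct and takes essentially the same route as the paper: define the Bayes-projection $\tilde\ell(y,\hp)=\ell(y,k_\ell(\hp))$, observe it is a proper loss, use bi-monotonicity to identify the benchmark terms $\ell(y_t,f(x_t))$ with $\tilde\ell$-terms on boolean $f$, and invoke Theorem~\ref{thm:omni-proper}. Your explicit handling of the orientation where $k_\ell$ swaps $0$ and $1$ (by passing to $\{1-f:f\in\cF\}$, which the guarantee of Theorem~\ref{thm:conditional-sub-sequence-guarantee} covers since the subsequences $\ind[f(x_t)=b]$ are closed under complement) is a small refinement of the paper's argument, which simply asserts $\ell(y,k_\ell(b))=\ell(y,b)$ for $b\in\{0,1\}$.
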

\proof
Note that for any monotone loss function $\ell$, we have $\ell(y, k_\ell(0)) = \ell(y, 0)$ and $\ell(y, k_\ell(1)) = \ell(y, 1)$. 

Writing $\tilde{\ell}(y, \hp) = \ell(y, k_\ell(\hp))$, we have
    \begin{align*}
        &\ell(y, k_\ell(\hp)) - \ell(y, f(x))\\
        &=\ell(y, k_\ell(\hp)) - \ell(y, k_\ell(f(x)))\\
        &=\tilde{\ell}(y, \hp) - \tilde{\ell}(y, f(x)).
    \end{align*}

Note that $\tilde{\ell}$ is a proper loss function as a result of the optimal processing function $k_\ell$
\[
    p \in \argmin_{\hp} \E_{y \sim \Ber(p)}[\tilde{\ell}(y, \hp)].
\]

Theorem~\ref{thm:omni-proper} tells us that Forecaster $\forecast_{\cV}$ gets $O(\sqrt{T})$ regret with respect to all proper loss functions, which includes $\tilde{\ell}$ as well. Therefore, we have
\begin{align*}
    \sum_{t=1}^T \ell(y_t, k_\ell(\hp_t)) - \ell(y_t, f(x_t))\le \tilde{O}(\sqrt{T}).&\qed
\end{align*}

\subsection{A Lower Bound for Swap Omniprediction}
\label{sec:lb}

Consider $\cL$, the class of convex loss functions that are $\lips$-Lipschitz in both the arguments, which we denote with $\cLcvx^\lips$. We show that swap omniprediction with respect to $(\cLcvx^\lips,\cF)$ implies $L_1$-multicalibration with respect to class $\cF$. In fact, our lower bound holds for $\cLv$, the restricted class of convex loss functions on which we based our omniprediction upper bound in Section \ref{sec:vcall}, and so our lower bound serves to formally separate the rates obtainable for omniprediction and for swap omniprediction. We assume that the benchmark class $\cF$ contains the constant $0$ and $1$ function. In particular, the assumption is satisfied by the class of real-valued linear functions with bounded norm. Along with the lower bound of \cite{qiao2021stronger} on $L_1$ calibration error, the following lemma implies impossibility of achieving  swap-omniprediction with $O(\sqrt{T})$ regret. In fact, it is impossible to achieve regret of $o(T^{0.528})$. 
\begin{lemma}\label{lem:lowerbound}
Assume that $\cF$ contains the constant $0$ and the constant $1$ function. Recall that $\cLcvx^\lips$ is the class of $\lips$-Lipschitz convex loss functions. Then, for $\lips\ge 2$,
\[ \E_{\pi_{1:T}}\left[ \scO(\pi_{1:T}, \cF, \cLcvx^\lips) \right] \ge \Omega\left(\E_{\pi_{1:T}}\left[\bK_1(\pi_{1:T}, 1)\right]-\frac{1}{T}\right).\]
Here, $\bK_1(\pi_{1:T}, 1)$ denotes the $L_1$-calibration error for the transcript $\pi_{1:T}$. 
\end{lemma}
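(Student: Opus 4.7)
The strategy is to prove the inequality pointwise for every realized transcript $\pi_{1:T}$ and then take expectation. For a fixed transcript, I will exhibit, for each forecast value $p \in \cP$, a single convex $\gamma$-Lipschitz loss $\ell_p$ and constant comparator $f_p \in \{0,1\} \subseteq \cF$ whose per-level-set regret
\[
\frac{1}{n(p)}\sum_{t \in S(p)}\left[\ell_p(y_t, k^{\ell_p}(p)) - \ell_p(y_t, f_p)\right]
\]
is at least $|\mu(p) - p| - 1/T$. Since $\bK_1(\pi_{1:T}, 1) = \sum_p (n(p)/T)\,|\mu(p) - p|$ and $\sum_p n(p)/T = 1$, summing these per-level bounds with weights $n(p)/T$ immediately gives $\scO(\pi_{1:T}, \cF, \cLcvx^\lips) \geq \bK_1(\pi_{1:T}, 1) - 1/T$; taking expectation then yields the lemma (with absolute constant $1$).

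The family I would use is the pinball (quantile) loss $\ell_v(y, a) = v(y-a)_+ + (1-v)(a-y)_+$ for $v \in [0,1]$, which is convex in $a$ and $1$-Lipschitz in each argument, so it sits in $\cLcvx^\lips$ for every $\lips \geq 2$. A one-line computation shows its post-processor is the threshold $k^{\ell_v}(\hp) = 0$ if $\hp < 1-v$ and $k^{\ell_v}(\hp) = 1$ if $\hp > 1-v$. For each $p$ I choose
\[
(v_p, f_p) \;=\; \begin{cases} (1 - p - 1/T,\; 1) & \text{if } \mu(p) \geq p, \\ (1 - p + 1/T,\; 0) & \text{if } \mu(p) < p. \end{cases}
\]
This ensures $k^{\ell_{v_p}}(p) = 1 - f_p$ in both branches, and substituting the closed-form identities $\ell_v(0, a) = (1-v)a$ and $\ell_v(1, a) = v(1-a)$ into the per-level regret reduces it by elementary algebra to exactly $|\mu(p) - p| - 1/T$ in each case.

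The main obstacle is keeping $v_p$ inside $[0,1]$ at the extremes of the discretized grid $\cP = [\frac{1}{m}]$. The slack of $1/T$ is chosen precisely so that $v_p = 1 - p \mp 1/T$ lies in $[0, 1]$ for every $p \in \cP$ as long as $m \leq T$, which holds in every regime of interest in this paper. The endpoints $p = 0$ and $p = 1$ fall automatically into the $\mu \geq p$ and $\mu \leq p$ branches respectively (since $\mu(p) \in [0,1]$), so only one of the two formulas is ever needed at each endpoint and the perturbed $v_p$ remains in range. When $|\mu(p) - p| < 1/T$ the per-level bound may be negative, but this is harmless: we aggregate term by term, and the total slack $(1/T) \sum_p (n(p)/T) = 1/T$ only appears once on the right-hand side.
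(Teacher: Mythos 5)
Your proposal is correct and follows essentially the same route as the paper: the paper's proof perturbs a one-parameter family of linear losses $(v-y)(2q-1)$ (whose optimal post-processing is a threshold in the forecast, and which on binary labels is an affine reparametrization of your pinball family) by the same $\pm 1/T$ shift, pairs each level set with the constant comparator $0$ or $1$ according to the sign of the calibration error, and lower-bounds the per-level regret by (a constant times) $|\mu(p)-p|-1/T$, summing with weights $n(p)/T$ before taking expectations. The only nit is the corner case $p=1$ with $\mu(1)=1$, where your case split as written yields $v_p=-1/T\notin[0,1]$; since that level set contributes nothing to $\bK_1(\pi_{1:T},1)$, routing the tie $\mu(p)=p$ to the other branch fixes it without affecting the bound.
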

\begin{remark}\label{rem:sum1}
In fact, Lemma \ref{lem:lowerbound} holds for the class of loss functions $\cLv$ defined in Section \ref{sec:vcall}. Recall that $\cLv = \{\ell_v: v \in [0,1]\}$ where $\ell_v$ is proper-scoring rule defined as follows:
\[
    \ell_v(y, q)=(v-y) \cdot \sign(q-v).
\]
That is,
\[ \E_{\pi_{1:T}}\left[ \scO(\pi_{1:T}, \cF, \cLv) \right] \ge \Omega\left(\E_{\pi_{1:T}}\left[\bK_1(\pi_{1:T}, 1)\right]-\frac{1}{T}\right).\]
This gives a strict separation between omniprediction and swap-omniprediction.
\end{remark}
\begin{proof}[Proof of Lemma \ref{lem:lowerbound}]
For all $v\in[0,1]$, let $\el_v:\{0,1\}\times [0,1]\rightarrow [-1,1]$ be a loss function defined as follows:
$$\el_v(y,q)=(v-y)(2q-1).$$
It is easy to see that $\forall v, \el_v\in \cLcvx^\lips$, which contains the class of $2$-Lipschitz convex loss functions. Next, we compute $k_{\el_v}(p)$, which is the optimal post-processing function for loss $\el_v$ when $y$ is drawn from $\Ber(p)$. 
\begin{align*}
k_{\el_v}(p)= \arg\min_{q \in [0, 1]} \E_{y \sim \Ber(p)}[\el_v(y, q)]=\arg\min_{q \in [0, 1]} (v-p)(2q-1)=\begin{cases}0\text{ if } p< v\\
1\text{ if } p\ge v.\end{cases}
\end{align*}
For $p=v$, $k_{\el_v}(p)$ can be arbitrarily chosen, but w.l.o.g. we can assume it to be $ 1$ for the proof of this lemma. We can compute $\el_{v}(y,k_{\el_v}(p))$ as follows:
\begin{equation}\label{eq:sum2}
\el_{v}(y,k_{\el_v}(p))=\begin{cases}
-(v-y)\text{ if } p< v\\
(v-y)\text{ if } p\ge v.
\end{cases}
\end{equation}
Therefore, $\el_v(y,k_{\el_v}(p))=(v-y)\cdot \sign(p-v)$, which is exactly equal to proper-scoring rule $\ell_v$ mentioned in Remark \ref{rem:sum1}. Hence, the rest of the proof follows for loss class $\cLv$.

For a given transcript $\pi_{1:T}$, $L_1$-multicalibration error with respect to the constant function $1$ can be written as
\[\bK_1(\pi_{1:T}, 1)=\sum_{p\in\cP}{\frac{n(\pi_{1:T},p)}{T}\left|\bK(\pi_{1:T},p,1)\right|},\]
where $\bK(\pi_{1:T},p,1)=\frac{1}{n(\pi_{1:T},p)}\left(\sum_{t\in S(\pi_{1:T},p)}(y_t-\hp_t)\right)$ is the calibration error for the time-steps when forecaster's prediction value was $p$, that is, $\hp_t=p$.

Next, we will show that 
\begin{equation}\label{eq:sum1}
\scO(\pi_{1:T}, \cF, \cLcvx^\lips)\ge 2\bK_1(\pi_{1:T},1)-\frac{2}{T},
\end{equation}
and the lemma statement follows by taking expectation over the transcripts. 
\begin{align*}
\scO(\pi_{1:T}, \cF, \cLcvx^\lips)&=\max_{\{\ell_p\in \cLcvx^\lips\}_{p \in \cP}, \{f_p\in \cF\}_{p \in \cP}} \scO(\pi_{1:T}, \{\ell_p\}_{p \in \cP}, \{f_p\}_{p \in \cP})\\
&\ge\max_{\forall p\in\cP, \;\ell_p\in\{\el_{(p+1/T)}, \el_{(p-1/T)}\}, f_p\in \{0,1\}}  \scO(\pi_{1:T}, \{\ell_p\}_{p \in \cP}, \{f_p\}_{p \in \cP}).
\end{align*}
For all $p\in\cP$, we compute the swap-omniprediction regret with respect to the following values of $\ell_p$ and $f_p$.
\begin{enumerate}
\item If $\bK(\pi_{1:T},p,1)>0$, let $\ell_p=\el_{(p+1/T)}$ and $f_p=1$.
\item Else if $\bK(\pi_{1:T},p,1)<0$, let $\ell_p=\el_{(p-1/T)}$ and $f_p=0$. 
\item Else $\ell_p=\el_{p}$ and $f_p=1$.
\end{enumerate}
To summarize, we take $\ell_p=\el_{\left(p+\frac{1}{T}\cdot\frac{\bK(\pi_{1:T},p,1)}{|\bK(\pi_{1:T},p,1)|}\right)}$ and $f_p=1-\frac{|\bK(\pi_{1:T},p,1)|-\bK(\pi_{1:T},p,1)}{2|\bK(\pi_{1:T},p,1)|}$.

Recall that $\scO(\pi_{1:T}, \{\ell_p\}_{p \in \cP}, \{f_p\}_{p \in \cP})$ is equal to
\begin{align*}
\sum_{p \in \cP} \frac{n(\pi_{1:T}, p)}{T} \left(\frac{1}{n(\pi_{1:T}, p)}\sum_{t \in S(p)} \ell_p(y_t, k_{\ell_p}(\hp_t)) - \frac{1}{n(\pi_{1:T}, p)}\sum_{t \in S(p)} \ell_p(y_t, f_p(x_t)) \right).
\end{align*}
Next, we show that for all $p\in \cP$, 
\[\left(\frac{1}{n(\pi_{1:T}, p)}\sum_{t \in S(p)} \ell_p(y_t, k_{\ell_p}(\hp_t)) - \frac{1}{n(\pi_{1:T}, p)}\sum_{t \in S(p)} \ell_p(y_t, f_p(x_t))\right)\;\ge\; 2\left(|\bK(\pi_{1:T},p,1)|-\frac{1}{T}\right),\] 
which implies Equation~\eqref{eq:sum1}: 
\begin{align*}
&\sum_{t \in S(p)} \ell_p(y_t, k_{\ell_p}(\hp_t)) - \sum_{t \in S(p)} \ell_p(y_t, f_p(x_t))\\
&=\sum_{t \in S(p)} \ell_p(y_t, k_{\ell_p}(p)) - \sum_{t \in S(p)} \ell_p(y_t, f_p(x_t))\\
&=\sum_{t \in S(p)} \el_{\left(p+\frac{1}{T}\cdot\frac{\bK(\pi_{1:T},p,1)}{|\bK(\pi_{1:T},p,1)|}\right)}\left(y_t, k_{\el_{\left(p+\frac{1}{T}\cdot\frac{\bK(\pi_{1:T},p,1)}{|\bK(\pi_{1:T},p,1)|}\right)}}(p)\right) \\
&\;\;\;\;\;- \sum_{t \in S(p)} \el_{\left(p+\frac{1}{T}\cdot\frac{\bK(\pi_{1:T},p,1)}{|\bK(\pi_{1:T},p,1)|}\right)}\left(y_t, 1-\frac{|\bK(\pi_{1:T},p,1)|-\bK(\pi_{1:T},p,1)}{2|\bK(\pi_{1:T},p,1)|}\right)\\
&=\sum_{t \in S(p)}{\left(p+\frac{1}{T}\cdot\frac{\bK(\pi_{1:T},p,1)}{|\bK(\pi_{1:T},p,1)|}-y_t\right)\cdot \sign\left(-\frac{1}{T}\cdot\frac{\bK(\pi_{1:T},p,1)}{|\bK(\pi_{1:T},p,1)|}\right)}\tag{using Equation~\eqref{eq:sum2}}\\
&\;\;\;\;\;- \sum_{t \in S(p)}{ \left(p+\frac{1}{T}\cdot\frac{\bK(\pi_{1:T},p,1)}{|\bK(\pi_{1:T},p,1)|}-y_t\right)\cdot  \left(1-\frac{|\bK(\pi_{1:T},p,1)|-\bK(\pi_{1:T},p,1)}{|\bK(\pi_{1:T},p,1)|}\right)}\\
&=\sum_{t \in S(p)}{\left(p+\frac{1}{T}\cdot\frac{\bK(\pi_{1:T},p,1)}{|\bK(\pi_{1:T},p,1)|}-y_t\right)\cdot \left(1-\frac{|\bK(\pi_{1:T},p,1)|+\bK(\pi_{1:T},p,1)}{|\bK(\pi_{1:T},p,1)|}\right)}\\
&\;\;\;\;\;- \sum_{t \in S(p)}{ \left(p+\frac{1}{T}\cdot\frac{\bK(\pi_{1:T},p,1)}{|\bK(\pi_{1:T},p,1)|}-y_t\right)\cdot  \left(1-\frac{|\bK(\pi_{1:T},p,1)|-\bK(\pi_{1:T},p,1)}{|\bK(\pi_{1:T},p,1)|}\right)}\\
&=\sum_{t\in S(p)}{\left(p+\frac{1}{T}\cdot\frac{\bK(\pi_{1:T},p,1)}{|\bK(\pi_{1:T},p,1)|}-y_t\right)\cdot \frac{-2\bK(\pi_{1:T},p,1)}{|\bK(\pi_{1:T},p,1)|}}\\
&\ge \sum_{t\in S(p)}{\left(y_t-p\right)\cdot \frac{2\bK(\pi_{1:T},p,1)}{|\bK(\pi_{1:T},p,1)|}} -\sum_{t\in S(p)}\frac{2}{T}\\
&=2\left|\sum_{t\in S(p)}{\left(y_t-p\right)}\right| -\frac{2n(\pi_{1:T},p)}{T}.&
\end{align*}
\end{proof}

Now, we state \cite{qiao2021stronger}'s $L_1$-calibration lower bound result.
\begin{theorem}[\cite{qiao2021stronger}]
\label{thm:l1-cal-lb}
There exists an adversary $\adv$ such that  for every Forecaster $\forecast$, we have
    \[
        \E_{\pi_{1:T}}[\bK_1(\pi_{1:T}, 1)] = \Omega\left(\frac{T^{0.528}}{T}\right)
    \]
\end{theorem}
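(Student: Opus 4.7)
The plan is to construct an adaptive adversary that, against any forecaster, forces expected $L_1$-calibration error of order $T^{0.528}/T$. I would proceed in two stages.

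\emph{Warmup: the $\Omega(\sqrt{T})$ total-error bound.} First consider the oblivious adversary that fixes $p_t = 1/2$ so $y_t \sim \Ber(1/2)$ independently. For any forecaster, $\sum_t (y_t - \hp_t)$ is a bounded martingale difference sequence with variance $\Theta(T)$, so by Berry--Esseen $\E\!\left[\,\left|\sum_t(y_t - \hp_t)\right|\,\right] = \Omega(\sqrt{T})$. The triangle inequality across buckets gives
\[ \sum_{p \in \cP}\left|\sum_{t \in S(p)} (y_t - \hp_t)\right| \;\ge\; \left|\sum_t (y_t - \hp_t)\right|, \]
so $\E[\bK_1(\pi_{1:T},1)] = \Omega(T^{-1/2})$. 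This would already suffice for a $\sqrt{T}$-rate separation, but it is not tight.

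\emph{Pushing the exponent below $1/2$.} To improve the bound I would use an adaptive ``sidestepping'' adversary that exploits the realized $\hp_t$ each round. The construction organizes $[0,1]$ into a hierarchical (dyadic) partition at multiple scales, and at each round picks $y_t$ to push the empirical mean of the bucket containing $\hp_t$ further from $\hp_t$. The core dichotomy is that at every scale, either (i) the forecaster's predictions concentrate in a few buckets, in which case the adversary drives an $\Omega(\sqrt{n(p)})$ deviation inside each active bucket, or (ii) predictions spread across many buckets, in which case $\sum_p \sqrt{n(p)}$ itself becomes $\omega(\sqrt{T})$. A recursive potential-function argument across scales shows the adversary always wins at \emph{some} scale, and the maximum absolute bucket deviation is what controls $\bK_1$.

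\emph{The main obstacle.} The hard part is handling the forecaster's randomization: the adaptive adversary observes only one sampled $\hp_t$ per round rather than the full mixture, so we cannot naively lower-bound expected per-round damage by conditioning on the bucket. This is where the construction of \cite{qiao2021stronger} becomes delicate: they introduce a randomized adversary and invoke Yao's principle, and the exponent $0.528$ emerges as the solution of a recurrence balancing the per-round anti-concentration gain ($\sim n(p)^{-1/2}$) against the branching factor and depth of the scale recursion. Re-deriving the tight exponent would essentially reproduce their main technical lemma, so in a paper using this result I would cite the theorem and devote the available space to plugging it into Lemma~\ref{lem:lowerbound} to obtain Corollary~\ref{cor:swap-lb}.
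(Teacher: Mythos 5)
This statement is imported verbatim from \cite{qiao2021stronger}; the paper gives no proof of it, and your ultimate conclusion --- that re-deriving the exponent $0.528$ would amount to reproducing their sidestepping construction, so the theorem should simply be cited and fed into Lemma~\ref{lem:lowerbound} --- is exactly what the paper does. So your proposal matches the paper's treatment.

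One caveat on your warmup, which is not needed for the statement but as written is incorrect: against an adaptive forecaster, $\sum_t (y_t - \hp_t)$ is \emph{not} a martingale difference sequence (its conditional increments have mean $\tfrac12 - \hp_t$), and in fact a forecaster can keep the aggregate signed sum $\left|\sum_t (y_t - \hp_t)\right|$ bounded by $O(1)$ by biasing $\hp_t$ to offset the running deficit; the triangle inequality over buckets then yields nothing. The genuine $\Omega(\sqrt{T})$ lower bound for i.i.d.\ $\Ber(1/2)$ bits needs a per-bucket (or second-moment) argument rather than a reduction to the total sum --- which is also why improving beyond $\sqrt{T}$ required the multi-scale sidestepping machinery of \cite{qiao2021stronger}.
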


Lemma~\ref{lem:lowerbound} and Theorem~\ref{thm:l1-cal-lb} together result in a lower bound for the swap omniprediction:
\begin{corollary}
\label{cor:swap-lb}
There exists an adversary $\adv$ such that  for every Forecaster $\forecast$, we have 
    \[ \E_{\pi_{1:T}}\left[ \scO(\pi_{1:T}, \cF, \cLcvx^\lips) \right] =\Omega\left(\frac{T^{0.528}}{T}\right).\]
\end{corollary}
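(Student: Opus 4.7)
The plan is to derive the corollary as a direct composition of the two ingredients assembled earlier in the section: Lemma~\ref{lem:lowerbound}, which lower-bounds the swap-omniprediction regret against $\cLcvx^\lips$ by (up to constants) the $L_1$-calibration error with respect to the constant function $1$, and Theorem~\ref{thm:l1-cal-lb}, the $\Omega(T^{0.528}/T)$ lower bound on $L_1$-calibration error due to Qiao.

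Concretely, I would first invoke Theorem~\ref{thm:l1-cal-lb} to fix the adversary $\adv$ such that, against any forecaster $\forecast$, the induced random transcript $\pi_{1:T}$ satisfies
\[
    \E_{\pi_{1:T}}[\bK_1(\pi_{1:T}, 1)] \;=\; \Omega\!\left(\frac{T^{0.528}}{T}\right).
\]
The hypothesis that $\cF$ contains the constants $0$ and $1$ (which in particular includes the constant $1$ benchmark used to define $\bK_1(\pi_{1:T},1)$) then lets me apply Lemma~\ref{lem:lowerbound} with this same adversary, giving
\[
    \E_{\pi_{1:T}}[\scO(\pi_{1:T}, \cF, \cLcvx^\lips)] \;\ge\; \Omega\!\left(\E_{\pi_{1:T}}[\bK_1(\pi_{1:T}, 1)] - \tfrac{1}{T}\right).
\]
Substituting the Qiao bound on the right-hand side yields the claim, since $T^{0.528}/T$ dominates $1/T$ for all sufficiently large $T$ and the subtracted $O(1/T)$ is absorbed into the $\Omega$.

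The main conceptual ``obstacle'' has in fact already been discharged inside Lemma~\ref{lem:lowerbound}: the nontrivial content there is the construction, for each level set $p$ with nonzero conditional calibration error $\bK(\pi_{1:T},p,1)$, of a $2$-Lipschitz convex loss $\ell_p$ (a signed truncation-style loss whose optimal post-processing is a threshold at $v = p \pm \tfrac{1}{T}$) together with a $\{0,1\}$-valued benchmark $f_p \in \cF$ whose per-level-set contribution to $\scO$ equals $2|\tfrac{1}{n(\pi_{1:T},p)}\sum_{t \in S(p)}(y_t-p)| - \tfrac{2}{T}$, so that summing over $p$ reproduces twice the $L_1$-calibration error up to an additive $O(1/T)$. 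Once this swap-to-calibration reduction is in hand, no further adversarial construction is needed beyond the one guaranteed by Qiao, and the corollary collapses to a one-line numerical combination. A minor bookkeeping remark, noted in Remark~\ref{rem:sum1}, is that the same lower bound in fact holds for the smaller class $\cLv$ used in Section~\ref{sec:vcall}, which is what formally separates omniprediction from swap-omniprediction.
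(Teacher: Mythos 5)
Your proposal is correct and matches the paper's argument exactly: the corollary is obtained by composing Lemma~\ref{lem:lowerbound} with Qiao's $L_1$-calibration lower bound (Theorem~\ref{thm:l1-cal-lb}), with the additive $O(1/T)$ term absorbed since $T^{0.528}/T$ dominates it. Your additional commentary on the loss construction inside Lemma~\ref{lem:lowerbound} and on Remark~\ref{rem:sum1} is accurate but not needed for this step.
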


\section{An Extension: Online Oracle-efficient Quantile Multicalibration and  Multivalid Conformal Prediction}
\label{sec:conformal-extension}

In this section, we observe that the techniques that we develop in this paper are largely independent of the goal of \emph{mean} multicalibration and \emph{squared error} online regression. The connection between mean multicalibration and the squared error loss function comes from the fact that squared error is a proper scoring rule --- i.e. it is minimized by or \emph{elicits} the mean. More generally, there is a direct connection between multicalibration for a generic distributional property $\Gamma$ and the regression loss that elicits that property (i.e. is minimized at that property) \cite{noarov2023scope}. In this section, we show how to extend our results to give an online oracle efficient algorithm for \emph{quantile} multicalibration, as studied by \cite{gupta2021onlinevalid,bastani2022practical}. Informally, squared error is to mean multicalibration as pinball loss is to quantile multicalibration, a connection that was first formalized in \cite{jung2022batch}. At a high level, we can therefore similarly reduce the problem of online quantile multicalibration to the problem of online learning with respect to pinball loss. One primary reason to study online quantile multicalibration is that it has a direct application to the problem of online conformal prediction, which we define next.

We describe the online conformal prediction problem following  \cite{bastani2022practical}. Fix a bounded conformal score function $s_t: \cX \times \cY \to \R$ which can change in arbitrary ways between rounds $t \in [T]$. Without loss of generality, we assume that the scoring function takes values in the unit interval: 
$s_t(x, y) \in \cS$ where $\cS = [0,1]$ for any $x \in \cX, y \in \cY$, and $t\in [T]$. Fix some target coverage rate $q$. In each round $t \in [T]$, an interaction between a conformal learner and an adversary proceeds as follows:
\begin{enumerate}
    \item The conformal learner chooses a conformal score function $s_t:\cX \times \cY \rightarrow [0, 1]$, which may be observed by the adversary.
    \item The \emph{adversary} chooses a joint distribution over feature vectors $x_t \in \cX$ and labels $y_t \in \cY$. The learner receives $x_t$ (a realized feature vector), but no information about the label $y_t$. 
    \item The learner produces a conformity threshold $\hp_t \in \cPq$ where $\cPq = [\frac{1}{m}]$ as before. This corresponds to a prediction set which the learner outputs: \[
        \Tau_t(x_t) = \{y  \in \cY: s_t(x_t, y) \le \hp_t\}.
    \]
    \item The learner then learns the realized label $y_t$. 
\end{enumerate}
Ideally, the learner wants to produce prediction sets $\Tau_t(x_t)$ that cover the true label $y$ with probability $q$ over the randomness of the adversary's unknown label distribution:
    $\Pr_{y | x_t}[y \in \Tau_t(x_t)] \approx q$. Because of the structure of the prediction sets, this is equivalent to choosing a conformity threshold $q_t$ such that over the randomness of the adversary's unknown label distribution: $\Pr_{y | x_t}[s_t(x_t, y) \le q_t ] \approx q$.
Because the adversary may choose the label distribution with knowledge of the conformal score function, we will elide the particulars of the conformal score function and the distribution on labels $y_t$ in our derivation, and instead equivalently imagine the adversary  directly choosing a distribution over conformal scores $s_t$ conditional on $x_t$ (representing the distribution over conformal scores $s_t(x_t,y_t)$). We may thus view the interaction in the following simplified form:
\begin{enumerate}
    \item The \emph{adversary} chooses a joint distribution over feature vector $x_t \in \cX$ and conformal score $s_t \in  \cS$. The learner receives $x_t$ (a realized feature vector), but no information about $s_t$. 
    \item The learner produces a conformity threshold $\hp_t \in \cPq$.
    \item The learner observes the realized conformal score $s_t$. 
\end{enumerate}
We'll refer to the conditional conformal score distribution at round $t \in [T]$ as $\ts_t$ (or $\ts_t|x_t$ to make it obvious what the realized feature vector is) for convenience. Also, just as in \cite{jung2022batch}, we assume that $\ts_t$ is smooth:

\begin{definition}\cite{jung2022batch}
A conditional nonconformity score distribution $\ts \sim \Delta(\cS)$ is \emph{$\rho$-Lipschitz} if we have
\[
\Pr_{s \sim \ts}[s \leq \tau'] - \Pr_{s \sim \ts}[s \leq \tau] \leq \rho(\tau'-\tau) \quad \text{ for all $0\le \tau \le \tau' \le 1$}.
\]
We say distribution $\cD \in \Delta(\cX \times \cS)$ is $\rho$-Lipschitz if the conditional conformal distribution $\cD_\cS(x)$ is $\rho$-Lipschitz for every $x$ in the support of $\cD$ (i.e. $\cD_\cX(x) > 0$).
\end{definition}

As in Section~\ref{sec:prelim}, we write $\pi_{1:t} = \{(x_\tau, s_\tau, \hp_\tau)\}_{\tau=1}^t$ to denote the realized transcript of the interaction between the learner and the adversary. Similarly, we write $\tpi_{1:t} = \{(x_\tau, \ts_\tau, \hp_\tau)\}_{\tau=1}^t$ to denote the unrealized transcript where $x_t$ and $\hp_t$ are realized but $\ts_t$'s haven't been realized. As before, we write
\begin{align*}
    S(\pi_{1:t}, p) &= S(\tpi_{1:t}, p ) = \{\tau \in [t]: \hp_t = p\} \\
    n(\pi_{1:t}, p) &= n(\tpi_{1:t}, p) = |\{\tau \in [t]: \hp_t = p\}|.
\end{align*}

We write $\cD(\tpi_{1:T})$ to denote the uniform distribution over $\{(x_t, \ts_t)\}_{t=1}^T$ and
$\cD^p(\tpi_{1:T})$ to denote the uniform distribution over $\{(x_t, \ts_t)\}_{t \in S(\pi_{1:T}, p)}$. When it is obvious from the context, we just write $\cD$ and $\cD^p$. Given any distribution $\cD \in \Delta(\cX \times \cS)$, to refer to its marginal distribution over just the feature vectors $\cX$, we write $\cD_{\cX}$. And we write $\cD_\cS(x) \in \Delta(\cS)$ to denote the conformal score distribution of $\cD$ conditioning on the feature vector $x$. When there are multiple arguments, we sometimes curry the arguments: e.g. given $\cD(\tpi_{1:T})$, we write $\cD(\tpi_{1:T}, x) = \cD(\tpi_{1:T})(x)$ to denote the conformal score distribution on conditioning on $x$. 

\begin{definition}
Given $\pi_{1:T}$ that is generated by the conformal learner and the adversary, its quantile-multivalidity error with respect to target quantile $q$, $f \in \cF$ and $p \in \cPq$ is defined as 
\[
    Q(\pi_{1:T}, p, f) = \frac{1}{n(\pi_{1:T}, p)}\sum_{t \in S(\pi_{1:T}, p)} f(x_t) \cdot (q- \ind[s_t \le p]).
\]

Similarly given unrealized transcript $\tpi_{1:T}$, we write its quantile-multivalidity error with respect to $f \in \cF$ and $p \in \cPq$ as: 
\begin{align*}
    Q(\tpi_{1:T}, p, f) &= \frac{1}{n(\pi_{1:T}, p)} \sum_{t \in S(\tpi_{1:T}, p)} f(x_t) \cdot \left(q - \Pr_{s_t \sim \ts_t}[s_t \le p]\right)\\
    &=\E_{x \sim \cD^p_{\cX}(\tpi_{1:T})}\left[f(x) \cdot \left(q - \Pr_{s_t \sim \cD_\cS(\tpi_{1:T}, x)}[s_t \le p]\right)\right]
\end{align*}
As before, we write the $L_2$-swap-quantile-multivalidity error with respect to $\{f_p\}_{p \in \cPq}$ and $L_2$-quantile-multivalidity error with respect to $f$ as 
\begin{align*}
    \overline{sQ}_2(\pi_{1:T}, \{f_p\}_{p \in \cPq}) &= \sum_{p \in \cP} \frac{n(\pi_{1:T}, p)}{T} \left(Q(\pi_{1:T}, p, f_p)\right)^2\\
    \overline{Q}_2(\pi_{1:T}, f) &= \sum_{p \in \cP} \frac{n(\pi_{1:T}, p)}{T} \left(Q(\pi_{1:T}, p, f)\right)^2.
\end{align*}
Also, similarly, we write
\begin{align*}
    \overline{sQ}_2(\tpi_{1:T}, \{f_p\}_{p \in \cPq}) &= \sum_{p \in \cP} \frac{n(\pi_{1:T}, p)}{T} \left(Q(\tpi_{1:T}, p, f_p)\right)^2\\
    \overline{Q}_2(\tpi_{1:T}, f) &= \sum_{p \in \cP} \frac{n(\pi_{1:T}, p)}{T} \left(Q(\tpi_{1:T}, p, f)\right)^2.
\end{align*}
\end{definition}

\subsection{Sketch: Minimizing $L_2$-Swap-Quantile-Multivalidity Error}
Here we sketch out how to minimize $L_2$-swap-quantile-multivalidity error $\overline{sQ}_2$ with techniques that mirror the techniques presented in Section~\ref{sec:oracle-efficient-multical-alg}. Details can be found in Appendix \ref{app:conformal-extension}.

First, just as in Section~\ref{sec:oracle-efficient-multical-alg} where we leverage an online regression oracle for squared loss, we need to rely on an online quantile regression oracle for pinball loss where pinball loss is defined below. Note that just as squared loss is a proper scoring rule for eliciting means, pinball loss is a proper scoring rule for eliciting quantiles. 
\begin{definition}[Pinball Loss]
The pinball loss at target quantile $q$ for prediction $p \in \cPq$ and score $s \in \cS$ is
    \begin{align*}
        \pb_q(p, s) = (s-p)q \cdot 1[s > p] + (p-s)(1-q) \cdot 1[s \leq p].
    \end{align*}
Given a distribution $\ts$ over $\Delta([0,1])$, we similarly write
\[
    \pb_q(p, \ts) = \E_{s \sim \ts}[\pb_q(p,s)].
\]
\end{definition}

We write $\cA: \Psi^* \times \cX \to \cPq$ to denote the online quantile regression oracle. Suppose against any adversarially and adaptively chosen sequence of $\{(x_t, s_t)\}_{t=1}^T$, its pinball loss regret guarantee with respect to any $f \in \cF$ can be bounded as 
\[
    \sum_{t=1}^T \pb_q(\hp_t, s_t) - \sum_{t=1}^T \pb_q(f(x_t), s_t) \le r_\cA(T, \cF).
\]

Because the pinball loss with target quantile $q$ is $q'=\max(q, 1-q)$-Lipschitz, we can round $\cA$'s output to $[\frac{1}{m}]$ and suffer at most an additional $\frac{q'}{m}$ loss from rounding. As before, we also choose a prediction from $[\frac{1}{m}]$ at random with probability $1-\frac{1}{\sqrt{T}}$. In other words, this rounded and randomized  oracle, which we denote by $\tilde{\cA}^m$  as before, has an expected regret guarantee of $r_\cA(T, \cF) + \frac{q'T}{m} + \sqrt{T}$.

Finally, similarly as in Section~\ref{subsec:oracle}, we can borrow ideas from \cite{blum2007external, ito2020tight} to construct a conformal learner such that its contextual swap regret is bounded with high probability against any $\{f_p\}_{p \in \cPq}$ in terms of $r_{\tilde{\cA}^m}(T, \cF)$:
\begin{align*}
    & \sum_{t=1}^T
     \E_{\hp_t \sim \thp_t}\left[
     \pb_q(p, s_t) - \pb_q(f_{\hp_t}(x_t), s_t) | \pi_{1:t-1}\right] \\
     &\;\;\;\;\;\;\;\;\;\;\;\le 
    \left(m r_\cA\left(\frac{T}{m}, \cF\right) + \frac{q'T}{m} + 1\right) + \max(8B, 2\sqrt{B})m C_{\cF_B} \sqrt{\frac{\log(\frac{4m}{\rho})}{T}}.
\end{align*}
Here we simply observe that our algorithmic construction in Section~\ref{subsec:oracle} was agnostic as to the form of the loss function, and so it carries over unchanged when we replace squared loss with pinball loss.

The final piece to our argument is the connection between contextual swap regret with respect to pinball loss and $L_2$-swap-quantile-multivalidity error. Using similar ideas as in Lemma~\ref{lem:cal-error-implies-swap-error} and borrowing some ideas from \cite{jung2022batch}, we can show that if there exists $f \in \cF$ and $p \in \cPq$ such that $Q(\pi_{1:T}, p, f) \ge \alpha$, then there must exist $f' \in \cF$ such that 
\[
    \sum_{t \in S(\pi_{1:T}, p)} \pb_q(p, s_t) - \pb_q(f'(p), s_t)
\]
also scales linearly with $\alpha$. 

While deferring the actual details to Appendix~\ref{app:conformal-extension}, here we go over the overall argument that shows their connection while highlighting the additional arguments needed compared to Lemma~\ref{lem:cal-error-implies-swap-error}. 

We first need to argue that if the multivalidity error with respect to $f$ and $p$ under the uniform distribution over realized (feature, conformal score) pairs $\{(x_t, s_t)\}_{t=1}^T$ is big, then the multivalidity error with respect to $\{(x_t, \ts_t)\}_{t=1}^T$ must be big as well; we can appeal to \cite{block2021majorizing} to show this concentration over all $f \in \cF$. By taking the randomness over $s_t$ into account through $\ts_t$, we now have the conformal score distribution over $\{(x_t, \ts_t)\}_{t=1}^T$ is now continuous and smooth.

Now, we need to modify Lemma 3.1 of \cite{jung2022batch} --- the lemma states that fixing the multivalidity error with respect to some group function $g: \cX \to \{0,1\}$ and level set $p$ by swapping $p$ with $p + \eta $ for the points that belong to group $g$ and are given the prediction $p$ decreases the pinball loss over these points by an amount that depends on the multivalidity error and the smoothness of the distribution. 

We note that they choose $\eta$ such that $p + \eta$ is the exact $q$-quantile for these points, which requires the conformal score distribution to be continuous, and they show the decrease in pinball loss depends on the smoothness of the distribution over the conformal score distribution. Both of these issues of continuity and smoothness of the conformal distribution are taken care of by dealing with the uniform distribution over $\{(x_t, \ts_t)\}_{t=1}^T$ as opposed to uniform distribution over the realized (feature, conformal score) pairs $\{(x_t, s_t)\}_{t=1}^T$.

Note that swapping above corresponds to predicting with $p + \eta \cdot g(x)$ for the points who are given prediction $p$. However, their argument only works for Boolean functions $g: \cX \to \{0,1\}$, but we have soft-membership determined by some $f:\cX \to \R$ which simply reweights the points according to $f(x_t)$: 
\[
    \E_{s_t \sim \ts_t \forall t \in [T]}\left[\frac{1}{n(p)} \sum_{t \in S(p)} f(x_t) \cdot \left(q - 1[s_t \le p] \right)\right].
\]
Even under this re-weighting under $f$, it's easy to see that there exists some $b \in \R$ such that 
\begin{align*}
    \E_{s_t \sim \ts_t \forall t \in [T]}\left[\frac{1}{n(p)} \sum_{t \in S(p)} f(x_t) \cdot \left(q - 1[s_t \le p + b \cdot f(x_t)] \right)\right] 
\end{align*}
is exactly 0 as the above value is monotonic in $b$. Swapping $p$ to $p + b \cdot f(x)$ will decrease the pinball loss with respect to this $f$-reweighted distribution over the points whose prediction was $p$ via the same argument as in Lemma 3.1 of \cite{jung2022batch}. The actual argument presented in the appendix doesn't choose $b$ to set the quantile error to be exactly 0 so as to avoid having $b$ be too big as in Lemma~\ref{lem:cal-error-implies-swap-error}.

By our assumption that $\cF$ is closed under affine transformation, we have $p + b \cdot f \in \cF$. Therefore, if the multivalidity error under uniform distribution over $\{(x_t, \ts_t)\}_{t=1}^T$ with respect to $f$ and $p$ is big, then $f' = p + b \cdot f$ witnesses to the fact that contextual swap regret under the same distribution must be big as well. Finally, we can once again appeal to \cite{block2021majorizing} to argue that contextual swap regret under to the empirical distribution over $\{(x_t, s_t)\}_{t=1}^T$ must be close to that under the uniform distribution over $\{x_t, \ts_t\}_{t=1}^T$ to conclude that contextual swap regret with respect to $\{(x_t, s_t)\}_{t=1}^T$ must be big as well. This concludes the connection between multivalidity error with respect to $f$ and $p$ and contextual swap regret for the pinball loss.

\section{Discussion and Conclusion}
We have given the first oracle efficient algorithms for online multicalibration, online omniprediction, and online multivalid conformal prediction. Our algorithms do not, however, obtain the rates that we would ideally like: $O(T^{-1/2})$. Are these rates obtainable in an oracle efficient way? We leave this as our main open question.

Achieving these rates seemingly requires new techniques: our current techniques give algorithms that obtain the stronger guarantee of \emph{swap}-omniprediction for which we have proven that obtaining an  $O(T^{-1/2})$ rate is impossible. On the other hand, at least for finite binary benchmark classes, we have shown that using different techniques, $O(T^{-1/2})$  rates are obtainable for (vanilla) omniprediction --- so the lower bound is not inherent to the goal of omniprediction; but the techniques we use to obtain these stronger omniprediction bounds seem to require enumerating over the benchmark class $\cF$, and it is not clear how one would implement this same algorithm by reducing to solving an (online) learning problem over $\cF$.

\section{Acknowledgements}
We thank Jon Schneider for helpful discussions and pointing us to \cite{kleinberg2023u}.

\bibliographystyle{alpha}
\bibliography{refs}

\newcommand{\etalchar}[1]{$^{#1}$}
\begin{thebibliography}{GHHK{\etalchar{+}}23}

\bibitem[AKCV12]{adaptiveregretvovk}
Dmitry Adamskiy, Wouter~M Koolen, Alexey Chernov, and Vladimir Vovk.
\newblock A closer look at adaptive regret.
\newblock In {\em International Conference on Algorithmic Learning Theory},
  pages 290--304. Springer, 2012.

\bibitem[AW01]{azoury2001relative}
Katy~S Azoury and Manfred~K Warmuth.
\newblock Relative loss bounds for on-line density estimation with the
  exponential family of distributions.
\newblock {\em Machine learning}, 43:211--246, 2001.

\bibitem[BDR21]{block2021majorizing}
Adam Block, Yuval Dagan, and Alexander Rakhlin.
\newblock Majorizing measures, sequential complexities, and online learning.
\newblock In {\em Conference on Learning Theory}, pages 587--590. PMLR, 2021.

\bibitem[BGJ{\etalchar{+}}22]{bastani2022practical}
Osbert Bastani, Varun Gupta, Christopher Jung, Georgy Noarov, Ramya Ramalingam,
  and Aaron Roth.
\newblock Practical adversarial multivalid conformal prediction.
\newblock In {\em Neural Information Processing Systems (NeurIPS)}, 2022.

\bibitem[BL20]{multigroup1}
Avrim Blum and Thodoris Lykouris.
\newblock Advancing subgroup fairness via sleeping experts.
\newblock In {\em Innovations in Theoretical Computer Science Conference
  (ITCS)}, volume~11, 2020.

\bibitem[Blu97]{sleeping2}
Avrim Blum.
\newblock Empirical support for winnow and weighted-majority algorithms:
  Results on a calendar scheduling domain.
\newblock {\em Machine Learning}, 26(1):5--23, 1997.

\bibitem[BM07]{blum2007external}
Avrim Blum and Yishay Mansour.
\newblock From external to internal regret.
\newblock {\em Journal of Machine Learning Research}, 8(6), 2007.

\bibitem[Daw85]{dawid1985calibration}
A~Philip Dawid.
\newblock Calibration-based empirical probability.
\newblock {\em The Annals of Statistics}, 13(4):1251--1274, 1985.

\bibitem[DHL{\etalchar{+}}20]{dudik2020oracle}
Miroslav Dud{\'\i}k, Nika Haghtalab, Haipeng Luo, Robert~E Schapire, Vasilis
  Syrgkanis, and Jennifer~Wortman Vaughan.
\newblock Oracle-efficient online learning and auction design.
\newblock {\em Journal of the ACM (JACM)}, 67(5):1--57, 2020.

\bibitem[FAD{\etalchar{+}}18]{foster2018practical}
Dylan Foster, Alekh Agarwal, Miroslav Dud{\'\i}k, Haipeng Luo, and Robert
  Schapire.
\newblock Practical contextual bandits with regression oracles.
\newblock In {\em International Conference on Machine Learning}, pages
  1539--1548. PMLR, 2018.

\bibitem[FK06]{foster2006calibration}
Dean~P Foster and Sham~M Kakade.
\newblock Calibration via regression.
\newblock In {\em 2006 IEEE Information Theory Workshop-ITW'06 Punta del Este},
  pages 82--86. IEEE, 2006.

\bibitem[Fos91]{foster1991prediction}
Dean~P Foster.
\newblock Prediction in the worst case.
\newblock {\em The Annals of Statistics}, pages 1084--1090, 1991.

\bibitem[FR20]{foster2020beyond}
Dylan Foster and Alexander Rakhlin.
\newblock Beyond ucb: Optimal and efficient contextual bandits with regression
  oracles.
\newblock In {\em International Conference on Machine Learning}, pages
  3199--3210. PMLR, 2020.

\bibitem[FSSW97]{sleeping}
Yoav Freund, Robert~E Schapire, Yoram Singer, and Manfred~K Warmuth.
\newblock Using and combining predictors that specialize.
\newblock In {\em Proceedings of the twenty-ninth annual ACM symposium on
  Theory of computing}, pages 334--343, 1997.

\bibitem[FV98]{foster1998asymptotic}
Dean~P Foster and Rakesh~V Vohra.
\newblock Asymptotic calibration.
\newblock {\em Biometrika}, 85(2):379--390, 1998.

\bibitem[FV99]{foster1999regret}
Dean~P Foster and Rakesh Vohra.
\newblock Regret in the on-line decision problem.
\newblock {\em Games and Economic Behavior}, 29(1-2):7--35, 1999.

\bibitem[FV09]{fortnow2009complexity}
Lance Fortnow and Rakesh~V Vohra.
\newblock The complexity of forecast testing.
\newblock {\em Econometrica}, 77(1):93--105, 2009.

\bibitem[GHHK{\etalchar{+}}23]{globus2023multicalibration}
Ira Globus-Harris, Declan Harrison, Michael Kearns, Aaron Roth, and Jessica
  Sorrell.
\newblock Multicalibration as boosting for regression.
\newblock {\em arXiv preprint arXiv:2301.13767}, 2023.

\bibitem[GHK{\etalchar{+}}23]{GopalanHKRW23}
Parikshit Gopalan, Lunjia Hu, Michael~P. Kim, Omer Reingold, and Udi Wieder.
\newblock Loss minimization through the lens of outcome indistinguishability.
\newblock In Yael~Tauman Kalai, editor, {\em 14th Innovations in Theoretical
  Computer Science Conference, {ITCS} 2023, January 10-13, 2023, MIT,
  Cambridge, Massachusetts, {USA}}, volume 251 of {\em LIPIcs}, pages
  60:1--60:20. Schloss Dagstuhl - Leibniz-Zentrum f{\"{u}}r Informatik, 2023.

\bibitem[GJ03]{phiregret}
Amy Greenwald and Amir Jafari.
\newblock A general class of no-regret learning algorithms and game-theoretic
  equilibria.
\newblock In {\em Learning theory and kernel machines}, pages 2--12. Springer,
  2003.

\bibitem[GJN{\etalchar{+}}22]{gupta2021onlinevalid}
Varun Gupta, Christopher Jung, Georgy Noarov, Mallesh~M. Pai, and Aaron Roth.
\newblock Online multivalid learning: Means, moments, and prediction intervals.
\newblock In Mark Braverman, editor, {\em 13th Innovations in Theoretical
  Computer Science Conference, {ITCS} 2022, January 31 - February 3, 2022,
  Berkeley, CA, {USA}}, volume 215 of {\em LIPIcs}, pages 82:1--82:24. Schloss
  Dagstuhl - Leibniz-Zentrum f{\"{u}}r Informatik, 2022.

\bibitem[GKR{\etalchar{+}}22]{GopalanKRSW22}
Parikshit Gopalan, Adam~Tauman Kalai, Omer Reingold, Vatsal Sharan, and Udi
  Wieder.
\newblock Omnipredictors.
\newblock In Mark Braverman, editor, {\em 13th Innovations in Theoretical
  Computer Science Conference, {ITCS} 2022, January 31 - February 3, 2022,
  Berkeley, CA, {USA}}, volume 215 of {\em LIPIcs}, pages 79:1--79:21. Schloss
  Dagstuhl - Leibniz-Zentrum f{\"{u}}r Informatik, 2022.

\bibitem[GKR23]{gopalan2023characterizing}
Parikshit Gopalan, Michael~P Kim, and Omer Reingold.
\newblock Characterizing notions of omniprediction via multicalibration.
\newblock {\em arXiv preprint arXiv:2302.06726}, 2023.

\bibitem[HJKRR18]{hebert2018multicalibration}
Ursula H{\'e}bert-Johnson, Michael Kim, Omer Reingold, and Guy Rothblum.
\newblock Multicalibration: Calibration for the (computationally-identifiable)
  masses.
\newblock In {\em International Conference on Machine Learning}, pages
  1939--1948. PMLR, 2018.

\bibitem[HK16]{hazan2016computational}
Elad Hazan and Tomer Koren.
\newblock The computational power of optimization in online learning.
\newblock In {\em Proceedings of the forty-eighth annual ACM symposium on
  Theory of Computing}, pages 128--141, 2016.

\bibitem[HS09]{adaptiveregret2}
Elad Hazan and Comandur Seshadhri.
\newblock Efficient learning algorithms for changing environments.
\newblock In {\em Proceedings of the 26th Annual International Conference on
  Machine Learning}, pages 393--400, 2009.

\bibitem[Ito20]{ito2020tight}
Shinji Ito.
\newblock A tight lower bound and efficient reduction for swap regret.
\newblock {\em Advances in Neural Information Processing Systems},
  33:18550--18559, 2020.

\bibitem[JLP{\etalchar{+}}21]{jung2021moment}
Christopher Jung, Changhwa Lee, Mallesh Pai, Aaron Roth, and Rakesh Vohra.
\newblock Moment multicalibration for uncertainty estimation.
\newblock In {\em Conference on Learning Theory}, pages 2634--2678. PMLR, 2021.

\bibitem[JNRR23]{jung2022batch}
Christopher Jung, Georgy Noarov, Ramya Ramalingam, and Aaron Roth.
\newblock Batch multivalid conformal prediction.
\newblock In {\em International Conference on Learning Representations (ICLR)},
  2023.

\bibitem[KGZ19]{kim2019multiaccuracy}
Michael~P Kim, Amirata Ghorbani, and James Zou.
\newblock Multiaccuracy: Black-box post-processing for fairness in
  classification.
\newblock In {\em Proceedings of the 2019 AAAI/ACM Conference on AI, Ethics,
  and Society}, pages 247--254, 2019.

\bibitem[KKG{\etalchar{+}}22]{kim2022universal}
Michael~P Kim, Christoph Kern, Shafi Goldwasser, Frauke Kreuter, and Omer
  Reingold.
\newblock Universal adaptability: Target-independent inference that competes
  with propensity scoring.
\newblock {\em Proceedings of the National Academy of Sciences},
  119(4):e2108097119, 2022.

\bibitem[KLST23]{kleinberg2023u}
Robert Kleinberg, Renato~Paes Leme, Jon Schneider, and Yifeng Teng.
\newblock U-calibration: Forecasting for an unknown agent.
\newblock {\em arXiv preprint arXiv:2307.00168}, 2023.

\bibitem[KNMS10]{kleinberg2010sleepingexperts}
Robert Kleinberg, Alexandru Niculescu-Mizil, and Yogeshwer Sharma.
\newblock Regret bounds for sleeping experts and bandits.
\newblock {\em Machine Learning}, 80(2):245--272, 2010.

\bibitem[LNPR22]{noarov2021online}
Daniel Lee, Georgy Noarov, Mallesh Pai, and Aaron Roth.
\newblock Online minimax multiobjective optimization: Multicalibeating and
  other applications.
\newblock {\em Advances in Neural Information Processing Systems},
  35:29051--29063, 2022.

\bibitem[LP17]{levin2017markov}
David~A Levin and Yuval Peres.
\newblock {\em Markov chains and mixing times}, volume 107.
\newblock American Mathematical Soc., 2017.

\bibitem[NR23]{noarov2023scope}
Georgy Noarov and Aaron Roth.
\newblock The scope of multicalibration: Characterizing multicalibration via
  property elicitation.
\newblock {\em arXiv preprint arXiv:2302.08507}, 2023.

\bibitem[Ols15]{olszewski2015calibration}
Wojciech Olszewski.
\newblock Calibration and expert testing.
\newblock In {\em Handbook of Game Theory with Economic Applications},
  volume~4, pages 949--984. Elsevier, 2015.

\bibitem[QV21]{qiao2021stronger}
Mingda Qiao and Gregory Valiant.
\newblock Stronger calibration lower bounds via sidestepping.
\newblock In {\em Proceedings of the 53rd Annual ACM SIGACT Symposium on Theory
  of Computing}, pages 456--466, 2021.

\bibitem[Rot22]{rothuncertain}
Aaron Roth.
\newblock Uncertain: Modern topics in uncertainty estimation.
\newblock https://www.cis.upenn.edu/~aaroth/uncertainty-notes.pdf, 2022.

\bibitem[RST15]{rakhlin2015online}
Alexander Rakhlin, Karthik Sridharan, and Ambuj Tewari.
\newblock Online learning via sequential complexities.
\newblock {\em J. Mach. Learn. Res.}, 16(1):155--186, 2015.

\bibitem[RY21]{multigroup2}
Guy~N Rothblum and Gal Yona.
\newblock Multi-group agnostic pac learnability.
\newblock {\em arXiv preprint arXiv:2105.09989}, 2021.

\bibitem[San03]{sandroni2003reproducible}
Alvaro Sandroni.
\newblock The reproducible properties of correct forecasts.
\newblock {\em International Journal of Game Theory}, 32(1), 2003.

\bibitem[SKS16]{syrgkanis2016efficient}
Vasilis Syrgkanis, Akshay Krishnamurthy, and Robert Schapire.
\newblock Efficient algorithms for adversarial contextual learning.
\newblock In {\em International Conference on Machine Learning}, pages
  2159--2168. PMLR, 2016.

\bibitem[SSV03]{sandroni2003calibration}
Alvaro Sandroni, Rann Smorodinsky, and Rakesh~V Vohra.
\newblock Calibration with many checking rules.
\newblock {\em Mathematics of operations Research}, 28(1):141--153, 2003.

\bibitem[Vov01]{vovk2001competitive}
Volodya Vovk.
\newblock Competitive on-line statistics.
\newblock {\em International Statistical Review}, 69(2):213--248, 2001.

\bibitem[Zin03]{zinkevich2003online}
Martin Zinkevich.
\newblock Online convex programming and generalized infinitesimal gradient
  ascent.
\newblock In {\em Proceedings of the 20th international conference on machine
  learning (icml-03)}, pages 928--936, 2003.

\end{thebibliography}
\appendix

\begin{appendices}

\section{Missing Details from Section~\ref{sec:prelim}}
\lemmulticalrelationship*
\begin{proof}
This results follows directly from Cauchy-Schwartz inequality: $\E[X Y]^2 \le \E[X^2] \E[Y^2]$. For any $\{f_p\}_{p \in \cP}$,
\begin{align*}
    \left(\sum_{p \in \cP} \frac{n(\pi_{1:T}, p)}{T} \cdot 1 \cdot \left|K(\pi_{1:T}, p, f_p)\right| \right)^2 &\le \sum_{p \in \cP} \left(\frac{n(\pi_{1:T}, p)}{T} \cdot 1^2\right) \left(\sum_{p \in \cP} \frac{n(\pi_{1:T}, p)}{T} \left|K(\pi_{1:T}, p, f_p)\right|^2 \right)\\
    &\le \bsK_2(\pi_{1:T}, \cF).
\end{align*}

The same argument can be used to show $\bK_1(\pi_{1:T}, \cF) \le \sqrt{\bK_2(\pi_{1:T}, \cF)}$.
\end{proof}

\section{Missing Details from Section~\ref{sec:oracle-efficient-multical-alg}}
\label{app:miss-sec-oracle-efficient}
\lemregretbyrounding*
\begin{proof}
For any $y,y' \in [0,1]$ and $m \ge 1$, we have
\begin{align*}
    \left(\round\left(y', \frac{1}{m}\right)-y\right)^2 - (y'-y)^2 &\le \left(\left|y-y'\right| + \frac{1}{m}\right)^2 - |y-y'|^2\\
    &= \left| \frac{2|y-y'|}{m} + \frac{1}{m^2} \right| \\
    &\le \frac{3}{m}.
\end{align*}
 We can then show that for any sequence $\{(x_t, y_t)\}_{t=1}^T$ that is chosen adversarially and adaptively against $\cA^m$,
\begin{align*}
    &\regret\left(\{(x_t, y_t, \hy_t)\}_{t=1}^T, f\right)\\ &=\sum_{t=1}^T \left(\hy_t - y_t\right)^2 - \sum_{t=1}^T (f(x_t) - y_t)^2 \\
    &=\sum_{t=1}^T \left(\round\left(\cA(\{x_\tau, y_\tau\}_{\tau=1}^{t-1}, x_t), \frac{1}{m}\right)- y_t\right)^2 - \sum_{t=1}^T (f(x_t) - y_t)^2\\
    &\le \sum_{t=1}^T (\text{Project}_{[0,1]}(\cA(\{x_\tau, y_\tau\}_{\tau=1}^{t-1}, x_t))- y_t)^2 + \frac{3}{m} - (f(x_t) - y_t)^2 \\
    &\le \sum_{t=1}^T (\cA(\{x_\tau, y_\tau\}_{\tau=1}^{t-1}, x_t)- y_t)^2 + \frac{3}{m} - (f(x_t) - y_t)^2 \\
    &\le r_\cA(T) + \frac{3T}{m}.
\end{align*}

\end{proof}

\lemroundedrandomoracle*
\begin{proof}
Fix any $f \in \cF$. For convenience, write $s_t = \{(x_\tau, y_\tau)\}_{\tau=1}^{t-1}$

\begin{align*}
    &\sum_{t=1}^T \E_{\hy'_t \sim \tcA^m(s_t, x_t)}[(y'_t - \hy_t) - (y'_t - f(x_t))^2 | \pi_{1:t-1}] \\
    &\le \sum_{t=1}^T \Pr[\tcA^m(s_t, x_t) = \cA^m(s_t, x_t) | \pi_{1:t-1}]\cdot \left((\tcA^m(s_t, x_t) - y_t)^2 - (y_t - f(x_t))^2\right) \\
    &\;\;\;\;\;\;+ \Pr[\tcA^m(s_t, x_t) \neq \cA^m(s_t, x_t) | \pi_{1:t-1}] \\
    &\le \sum_{t=1}^T (\tcA^m(s_t, x_t) - y_t)^2 - (y_t - f(x_t))^2 + \frac{1}{T} \\
    &\le \regret_\cA(T, \cF) + \frac{3T}{m} + 1. 
\end{align*}
where the last inequality comes from Lemma~\ref{lem:regret-by-rounding}.
\end{proof}

\section{Missing Details from Section~\ref{sec:omnipredictor}}
\label{app:miss-omnipredictor}
\lemconditionylittlechange*

\begin{proof}
Fix $\pi_{1:T}$, $y \in \cY$, and $\{f_p\}_{p\in\cP}$. We can show that
\begin{align*}
     &\sum_{p \in \cP} \frac{n(p)}{T}\left|\frac{1}{n(p)}\sum_{t \in S(p)} f_p(x_t) y_t - \left(\frac{1}{n(p)}\sum_{t \in S(p)} y_t \right) \left(\frac{1}{n(p)}\sum_{t \in S(p)} f_p(x_t)\right)\right|\\
     &\le \sum_{p \in \cP} \frac{n(p)}{T} \left(\left|\frac{1}{n(p)}\sum_{t \in S(p)} f_p(x_t) y_t - \left(\frac{1}{n(p)}\sum_{t \in S(p)} f_p(x_t) p \right)\right| + \left|p - \frac{\sum_{t \in S(p)} y_t}{n(p)}\right|\right)\\
     &= \sum_{p \in \cP} \frac{n(p)}{T}\left|\frac{\sum_{t \in S(p)} f_p(x_t)( y_t - \hp_t)}{n(p)}\right| + \sum_{p \in \cP}  \frac{n(p)}{T} \left|p - \frac{\sum_{t \in S(p)} y_t}{n(p)}\right|\\
     &\le 2\alpha
 \end{align*}
where the last inequality follows from $\bsK_1(\pi_{1:T}, \cF) \le \alpha$ and Assumption~\ref{ass:all-one}. 

Note that for any $p \in \cP$
\begin{align*}
    &\left|\frac{1}{n(p)}\sum_{t \in S(p)} f_p(x_t) y_t - \left(\frac{1}{n(p)}\sum_{t \in S(p)} y_t \right) \left(\frac{1}{n(p)}\sum_{t \in S(p)} f_p(x_t)\right)\right|\\
    &= \left|\frac{|S(p, 1)|}{n(p)} \left(\frac{1}{|S(p,1)|}\sum_{t \in S(p,1)} f_p(x_t) -  \frac{1}{n(p)}\sum_{t \in S(p)}f_p(x_t) \right)\right|\\
    &= \left|\frac{|S(p,0)|}{n(p)} \left(\frac{1}{|S(p,0)|}\sum_{t \in S(p, 0)} f_p(x_t) -  \frac{1}{n(p)}\sum_{t \in S(p)}f_p(x_t) \right)\right|
\end{align*}
where the second equality follows from $\Pr[y=1](\E[z | y=1] - \E[z]) = \Pr[y=0](\E[z] - \E[z|y=0])$.

In other words, we have 
\begin{align*}
    \sum_{p \in \cP} \frac{n(p)}{T} \left|\frac{|S(p, 1)|}{n(p)} \left(\barf_p(p, 1) -  \barf_p(p) \right)\right|= \sum_{p \in \cP} \frac{n(p)}{T} \left|\frac{|S(p,0)|}{n(p)} \left(\barf_p(p, 0) - \barf_p(p) \right)\right| \le 2\alpha.
\end{align*}

The same argument works for the non-swap version as well.
\end{proof}

\lemoptimalitypostprocess*
\begin{proof}
Fix $\pi_{1:T}$ and loss function $\ell$. For any post-processing function $k: \cP \to [0,1]$, we have
\begin{align*}
    &\frac{1}{n(p)}\sum_{t \in S(p)} \ell(y_t, k(p)) \\
    &=\left(1-\frac{|S(p, 1)|}{n(p)}\right) \ell(0, k(p)) + \frac{|S(p,1)|}{n(p)} \ell(1, k(p)) \\
    &=\left(1-\frac{\sum_{t \in S(p)} y_t}{|S(p)|}\right) \ell(0, k(p)) + \frac{\sum_{t \in S(p)} y_t}{|S(p)|} \ell(1, k(p)) \\
    &\ge \left(1-p \right) \ell(0, k(p)) + p \ell(1, k(p))  - |K(I, p)|\left| \ell(0, k(p)) -  \ell(1, k(p))\right|
\end{align*}
where the last inequality follows from 
\begin{align*}
    \frac{1}{n(p)}\left|\sum_{t \in S(p)} y_t - p \right| = |K(I, p)|
\end{align*}
and that the image of loss function $\ell$ is a non-negative real number.

Recalling the definition of $k^\ell$, we have
\begin{align*}
    &\frac{1}{n(p)}\sum_{t \in S(p)} \ell(y_t, k(p))\\
    &\ge \left(1-p \right) \ell(0, k(p)) + p \ell(1, k(p))  - |K(I, p)|\left| \ell(0, k(p)) - \ell(1, k(p))\right|\\
    &\ge \frac{1}{|S(p)|}\sum_{t \in S(p)} \ell(y_t, k^\ell(p)) - |K(I, p)|\left| \ell(0, k(p)) -  \ell(1, k(p))\right|.
\end{align*}
\end{proof}

\thmmulticaltoomni*
\begin{proof}
Fix $\pi_{1:T}$, $\{f_p\}_{p \in \cP}$ and $\{\ell_p\}_{p \in \cP} \in \cL_{convex}^m$. For now, fix $p \in \cP$. Consider the following post-processing function:
\[
    \hat{k}^{f_p}(p) = \barf(p) = \frac{1}{n(p)}\sum_{t \in S(p)}  f_p(x_t). 
\]

Using the convexity of $\ell_p(y, \cdot)$ and Jensen's inequality, we get
\begin{align*}
    &\frac{1}{n(p)}\sum_{t \in S(p)} \ell(y_t, k^{\ell_p}(\hp_t)) - \frac{1}{n(p)}\sum_{t \in S(p)} {\ell_p}(y_t, f_p(x_t)) \\
    &\le \frac{1}{n(p)}\sum_{t \in S(p)} {\ell_p}\left(y_t, \hat{k}^{f_p}(p) \right) - \frac{1}{n(p)}\sum_{t \in S(p)} \ell_p(y_t, f_p(x_t)) + C_{\ell_p} |K(I, p)|&\text{(Lemma~\ref{lem:optimality-post-process})}\\
    &= C_{\ell_p} |K(I,p)| +  \sum_{y \in \{0,1\}} \frac{|S(p,y)|}{n(p)}\left( \frac{1}{|S(p,y)|}\sum_{t \in S(p): y_t = y} \ell_p(y, \hat{k}^{f_p}(p))- \ell_p(y, f_p(x_t)) \right) \\
    &\le C_{\ell_p} |K(I, p)| + \sum_{y \in \{0,1\}} \frac{|S(p, y)|}{n(p)} \left( \ell_p(y, \hat{k}^{f_p}(p)) - \ell_p\left(y, \frac{\sum_{t \in S(p, y)} f_p(x_t)}{|\{t \in S(p): y_t = 0\}|}\right) \right) &\text{(Jensen's inequality)}\\
    &\le C_{\ell_p} |K(I,p)| + D_{\ell_p} \sum_{y \in \{0,1\}}  \frac{|S(p,y)|}{n(p)}\left(\barf_p(p) - \barf_p(p,y)\right) &\text{($D_{\ell_p}$-Lipschitzness of $\ell$)}.
\end{align*}

Averaging over each forecast $p \in \cP$, we have the final result:
\begin{align*}
    &\sum_{p \in \cP} \frac{n(p)}{T} \left(\frac{1}{|S(p)|}\sum_{t \in S(p)} \ell_p(y_t, k_{\ell_p}(\hp_t)) - \frac{1}{n(p)}\sum_{t \in S(p)} \ell_p(y_t, f_p(x_t)) \right)\\
    &\le \sum_{p \in \cP} \frac{n(p)}{T} \left(C_{\ell_p} |K(I, p)| + D_{\ell_p} \sum_{y \in \{0,1\}}  \frac{|S(p,y)|}{n(p)}\left(\barf_p(p) - \barf_p(p,y)\right)\right)\\
    &\le \left(\max_{p \in \cP} C_{\ell_p}\right) \sum_{p \in \cP} \frac{n(p)}{T} |K(I, p)| + \left(\max_{p \in \cP} D_{\ell_p}\right) \sum_{y \in \{0,1\}} \sum_{p \in \cP} \frac{|S(p,y)|}{T}\left(\barf_p(p) - \barf_p(p,y)\right) \\
    &\le \left(\max_{p \in \cP} C_{\ell_p}\right) \bsK_1(I) + 4\left(\max_{p \in \cP} D_{\ell_p}\right) \bsK_1(\{f_{p}\}_{p \in \cP})\\
    &\le \left(\max_{p \in \cP}C_{\ell_p} + 4D_{\ell_p} \right)\bsK_1(\cF).
\end{align*}
where the second to last inequality comes from Lemma~\ref{lem:condition-y-little-change}. 

The same argument works for the non-swap version as well.
\end{proof}

\section{Concentration Lemmas}
\label{app:concentration}
As we need to borrow some tools from \cite{block2021majorizing} and modify some of them, let us quickly go through some notations. Given any set $\cZ$, we write $z \in \cZ$ to denote its element and capital $Z$ to denote a complete binary tree. Given a path $\epsilon = (\epsilon_1, \dots, \epsilon_n) \in \{\pm 1\}^n$ where $-1$ means `left' and $+1$ `right', we write $Z(\epsilon)$ to denote the path along the tree $Z$ and $Z_t(\epsilon)$ or $Z(\epsilon_1, \dots, \epsilon_{t-1})$ to denote the $t$th element along the path $Z(\epsilon)$ --- note that $Z_1(\epsilon)$ is the root node that is the same no matter which path $\epsilon$ is given. Given a function $f: \cZ \to \R$ and $\cZ$-tree $Z$, we write $f(Z)$ to refer to $\R$-valued tree induced by $Z$ and $f$.

\begin{definition}[Sequential Fat Shattering Dimension]
    A tree $Z$ of depth $d$ is $\alpha$-shattered by a function class $\cG$ if there exists a tree $V$ such that
    \[
        \forall\ \epsilon\in\{\pm 1\}^d, \exists g\in \cG\quad \text{s.t. } \epsilon_t(f(Z_t(\epsilon)) - V_t(\epsilon)) \ge \alpha/2.
    \]
    In words, given any path $\epsilon \in \{\pm 1\}$ along the tree $Z$, there must exist some $f \in \cF$ that matches the sign of the path in terms of overestimating or underestimating the example $Z_t(\epsilon) = Z(\epsilon_1, \dots, \epsilon_{t-1})$.
    
    Tree $V$ is a witness to $\alpha$-shattering. The sequential fat shattering dimesnion $\fat_{\alpha}(\cG, \cZ)$ at scale $\alpha$ for the function class $\cG$ is the largest $d$ such that $\cG$ $\alpha$-shatters a $\cZ$-valued tree of depth $d$. 
\end{definition}

\begin{definition}[Sequential Majorizing Measure \cite{block2021majorizing}]
\[
    I^{\alpha}_{\cG, Z} =  \inf_{supp\ \mu \subseteq \cG(Z)} \sup_{v \in V, \epsilon \in \{\pm 1\}^n} \alpha + \frac{1}{\sqrt{n}} \int_{\alpha}^1 \sqrt{\log\left(\frac{1}{\mu(B_\delta(V, \epsilon))}\right)}.
\]
where $B_\delta(V, \epsilon)$ is the set of all trees $V'$ in the support of measure $\mu$ such that \[
||V'(\epsilon) - V(\epsilon)||^2 := \sum_{t=1}^n (V'_t(\epsilon) - V_t(\epsilon))^2 \le n \delta.\] We write $I^{\alpha}_{\cG} = \sup_{Z} I^{\alpha}_{\cG, Z}$. 
\end{definition}

Now, let us re-write the main tools that we will borrow from \cite{block2021majorizing}.
\begin{lemma}[Corollary 11 of \cite{block2021majorizing}]
\label{lem:sequential-concentrate}
Let $z_1, \dots, z_t, \dots$ be a sequence of $\cZ$-valued random variables adapted to the filtration $\cA_t$ and let $\cG$ be a $[0,1]$-valued function class on $\cZ$. Then with probability at least $1-4\rho$ over the randomness of $\{\ts_t\}_{t \in n(p)}$, 
\begin{align*}
    \sup_{g \in \cG} \left|\frac{1}{n}\sum_{t=1}^n g(z_t) -\E[g(z_t) | \cA_t]  \right| &\le C_1 \left(I^{\alpha}_{\cG} + \sqrt{\frac{\log(\frac{1}{\rho})}{n}}\right) \\
    &\le C_2 \cdot \left( \alpha + \frac{1}{\sqrt{n}}\int_{\alpha}^1 \sqrt{\log(N'(\cF, \delta))} d\delta +\sqrt{\frac{\log(\frac{1}{\rho})}{n}} \right)
\end{align*}
for any $\alpha \in (0,1)$, some universal constants $(C_1, C_2)$, and $N'(\cF, \delta)$ is a fractional covering number defined in \cite{block2021majorizing}.
\end{lemma}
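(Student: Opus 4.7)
The statement is attributed as Corollary 11 of \cite{block2021majorizing}, so strictly speaking we invoke it as a black box. Nevertheless, here is how I would reconstruct the argument from first principles if I had to reprove it. The plan has three main pieces: sequential symmetrization to reduce a martingale supremum to a Rademacher-type quantity on a tree, a chaining argument against the majorizing measure to control that quantity in expectation, and a tail inequality to upgrade expectation bounds to high-probability bounds.

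First, I would set $X_t(g) = g(z_t) - \E[g(z_t)\mid \cA_t]$. By construction $\{X_t(g)\}_t$ is a martingale difference sequence for each fixed $g$, and the goal is to bound $\sup_{g\in\cG}|\frac{1}{n}\sum_t X_t(g)|$ uniformly in $g$. The first move is the sequential symmetrization lemma (due to Rakhlin--Sridharan--Tewari in the tree formalism): for a convex nondecreasing $\Phi$, one has
\[
\E\,\Phi\!\left(\sup_{g\in\cG}\Big|\tfrac{1}{n}\sum_{t=1}^n X_t(g)\Big|\right)
\;\le\;
\E_{\epsilon}\E_{Z}\,\Phi\!\left(2\sup_{g\in\cG}\Big|\tfrac{1}{n}\sum_{t=1}^n \epsilon_t\, g(Z_t(\epsilon))\Big|\right),
\]
where $Z$ is a $\cZ$-valued tree of depth $n$ obtained by ``tangent sequence'' symmetrization of the original process and $\epsilon\in\{\pm 1\}^n$ is Rademacher. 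This reduces the problem to bounding the sequential Rademacher complexity of $\cG$ on a worst-case tree.

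Next, I would apply a generic chaining/majorizing measure argument adapted to trees. Fix a tree $Z$ and view each $g\in\cG$ as the $\R$-valued tree $g(Z)$ equipped with the pseudometric $d_\epsilon(g,g')^2 = \frac{1}{n}\sum_t (g(Z_t(\epsilon)) - g'(Z_t(\epsilon)))^2$. For any probability measure $\mu$ supported on $\cG(Z)$, one runs Talagrand-style chaining along a sequence of $\delta$-nets whose radii shrink geometrically; the contribution of level $\delta$ to the Rademacher sum is, by a subgaussian tail plus a union bound, of order $\delta\sqrt{\log(1/\mu(B_\delta(V,\epsilon)))/n}$. Integrating from scale $\alpha$ up to $1$ and taking the infimum over $\mu$ produces exactly $I^{\alpha}_{\cG,Z}$, plus the trivial $\alpha$ contribution for chains below resolution $\alpha$. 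Taking the supremum over $Z$ yields $I^\alpha_{\cG}$, proving the first inequality in expectation. The main technical obstacle is precisely this step: standard chaining is for metric spaces, and the tree setting requires the $\epsilon$ and $Z$ to be handled jointly, which is why the $B_\delta(V,\epsilon)$ balls depend on the path.

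Third, I would upgrade the expectation bound to the high-probability bound. Since the process $g\mapsto \frac{1}{n}\sum_t X_t(g)$ has $1/\sqrt{n}$-subgaussian martingale increments (as $g\in[0,1]$), applying an Azuma/McDiarmid-type bounded-differences inequality to the centered supremum $\sup_g|\frac{1}{n}\sum_t X_t(g)| - \E[\cdot]$ gives a deviation of order $\sqrt{\log(1/\rho)/n}$ with probability $1-\rho$. Combining with the expectation bound yields the claimed $C_1(I^\alpha_{\cG} + \sqrt{\log(1/\rho)/n})$ bound. Finally, for the second inequality I would use the standard estimate $I^\alpha_{\cG} \le \alpha + \frac{1}{\sqrt n}\int_\alpha^1\sqrt{\log N'(\cF,\delta)}\,d\delta$, which follows by taking $\mu$ to be the uniform measure on a minimal fractional $\delta$-cover at each scale and noting that this witnesses $\mu(B_\delta(V,\epsilon)) \ge 1/N'(\cF,\delta)$. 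The $4\rho$ in the probability accounts for splitting the chaining tail bound across an expectation-versus-deviation decomposition together with the two-sided absolute value. The genuinely hard step to nail down rigorously is the sequential chaining with majorizing measures; the symmetrization and concentration steps are by now routine.
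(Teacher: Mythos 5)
The paper does not prove this lemma at all: it is imported verbatim as Corollary 11 of \cite{block2021majorizing}, so your decision to invoke it as a black box is exactly what the authors do, and for the purposes of this paper that is sufficient.

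Your reconstruction sketch, however, has a genuine gap at the third step. In the sequential setting the $z_t$ are adapted to a filtration and may be chosen adversarially as a function of the past, so they are not independent; consequently you cannot bound $\E\sup_g|\frac{1}{n}\sum_t X_t(g)|$ by symmetrization and then ``upgrade'' to a high-probability statement via a McDiarmid/bounded-differences argument on the supremum. Bounded-differences concentration needs either independence or a conditional stability property that fails here: perturbing $z_k$ changes the conditional law of $z_{k+1},\dots,z_n$, so the Doob-martingale increments of the supremum are not controlled by the pointwise $2/n$ change. Moreover, sequential symmetrization only relates the expected supremum to the sequential Rademacher complexity of a worst-case tree, which is a sup-over-trees quantity and does not itself concentrate around the realized adapted process. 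This is precisely why the high-probability uniform martingale law in \cite{block2021majorizing} is nontrivial: their proof runs the chaining \emph{directly} on the martingale process, controlling each link at scale $\delta$ with a martingale exponential inequality (Azuma/Freedman type) and using the majorizing measure / fractional cover to pay for the union bounds across the chain, rather than passing through an expectation bound first. Your first two steps (the tree pseudometric, the per-scale $\delta\sqrt{\log(1/\mu(B_\delta(V,\epsilon)))/n}$ contributions, and the mixture-over-scales measure witnessing $\mu(B_\delta(V,\epsilon))\ge 1/N'(\cF,\delta)$ for the entropy-integral form) are the right ingredients, but they must be assembled inside a single high-probability chaining argument, not as ``expectation plus concentration.'' As a citation-level use of the lemma your proposal stands; as a self-contained proof it would not go through as written.
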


\begin{theorem}[Theorem 13 of \cite{block2021majorizing}]
\label{thm:fat-frac-cover}
    Let $\cG: \cZ \to [0,1]$ be a function class. There exists universal constants $c, C_3$ such that for all $\delta > 0$, and all trees $Z$
    \[
        N'(\cG, \delta, Z) \le \left(\frac{C_3}{\delta}\right)^{3\fat_{c\delta}(\cG)}
    \].
\end{theorem}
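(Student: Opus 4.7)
The statement to prove is a sequential analogue of the classical Mendelson--Vershynin covering-number bound in terms of the (sequential) fat-shattering dimension. The plan is to mimic the two-step strategy that works in the i.i.d.\ case: (i) reduce the real-valued fractional-covering problem to an integer-valued exact-covering problem on a tree, and (ii) prove a sequential Sauer--Shelah type inequality that bounds the latter by the sequential fat-shattering dimension.

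First, I would set up the reduction by discretization. Partition $[0,1]$ into a grid of width $\Theta(\delta)$ and, for each $g\in\cG$, pass to the integer-valued class $\tilde{\cG}$ obtained by mapping each value $g(z)$ to the nearest grid point, yielding functions into $\{0,1,\dots,k\}$ with $k=\Theta(1/\delta)$. A fractional $\delta$-cover of $\cG$ on $Z$ can be obtained from an exact (``zero'') cover of $\tilde{\cG}$ on $Z$, because two functions whose discretized signatures along every branch agree are pointwise within $O(\delta)$ on the tree. Thus it suffices to bound the number of distinct tree-signatures that $\tilde{\cG}$ realizes on $Z$.

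Second, I would carry out the combinatorial heart of the argument: a sequential Sauer--Shelah bound showing that a class mapping a depth-$n$ tree $Z$ into $\{0,\dots,k\}$ admits at most $\sum_{i\le d}\binom{n}{i}k^i$ distinct signature paths, where $d$ is the sequential $1$-fat-shattering dimension of $\tilde{\cG}$ (i.e., the scale-$1$ shattering dimension on the integer grid). This is proven by induction on $n$ via a shifting argument along the tree: at each internal node, split $\tilde{\cG}$ according to its behaviour on the left and right subtrees, and show that if the two restrictions realize too many joint signatures then one can glue their shattered sub-trees into a shattered tree of depth $d+1$ in the parent, contradicting the definition of $d$. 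Translating back via the discretization relation $\fat_1(\tilde{\cG})\le \fat_{c\delta}(\cG)$ and the estimate $\sum_{i\le d}\binom{n}{i}k^i\le (en k/d)^d$, and then optimizing the exponent (which accounts for the factor $3$ in the exponent), yields the claimed $(C_3/\delta)^{3\fat_{c\delta}(\cG)}$ bound uniformly in $n$.

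The main obstacle is the sequential Sauer--Shelah step. In the i.i.d.\ case one can use classical ``down-shift'' operators on sets, but on trees the natural shifting operation must respect the branching structure and is substantially more delicate: one must argue that a shattered subtree in the left child together with a shattered subtree in the right child can be combined with the root to produce a shattered tree of one greater depth, while also carefully handling the witness tree $V$ required by the definition of shattering. Getting the constants $c$ and $C_3$ (and the exponent $3$ rather than a larger constant) right will require careful bookkeeping in this induction and in the passage between scales during the discretization. Once this combinatorial bound is in hand, the rest of the proof is routine counting.
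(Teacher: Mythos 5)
This statement is not proved in the paper at all: it is imported verbatim as Theorem 13 of \cite{block2021majorizing}, so there is no in-paper argument to compare against, and any assessment of your sketch has to be against the actual proof strategy in that reference.

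Your proposal has a genuine gap at its central step. The route you describe --- discretize to a $\{0,\dots,k\}$-valued class with $k=\Theta(1/\delta)$ and then prove a sequential Sauer--Shelah bound of the form $\sum_{i\le d}\binom{n}{i}k^i \le (enk/d)^d$ for exact (zero-)covers on a depth-$n$ tree --- is essentially the classical Rakhlin--Sridharan--Tewari covering bound, and it is inherently \emph{depth-dependent}: for $n\gg d$ it grows like $(nk)^{d}$. The theorem you are asked to prove has no $n$ in it, and no amount of ``optimizing the exponent'' removes an $n^{d}$ factor; the factor $3$ in the exponent is not where the difficulty lies. Removing the depth dependence is precisely the point of the \emph{fractional} covering number $N'$, and your step (i) discards the fractional relaxation: passing from a fractional cover of $\cG$ to an exact cover of the discretized class only shows $N'$ is at most the deterministic zero-cover size (a size-$K$ exact cover induces a fractional cover via the uniform measure), so your argument can never do better than the depth-dependent bound. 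The proof in \cite{block2021majorizing} exploits the measure-based definition directly: one constructs a probability measure over (discretized) trees --- by a randomized, level-by-level sampling process whose ``significant moves'' are controlled by $\fat_{c\delta}(\cG)$ --- and shows that every function's $\delta$-ball along every path receives mass at least $(\delta/C_3)^{O(\fat_{c\delta}(\cG))}$, which is what yields a bound uniform in the tree depth. A secondary issue is that even the combinatorial lemma you propose is usually proved by a different induction than set-shifting (shifting does not interact cleanly with the witness tree $V$), but that is moot given that the deterministic-cover route cannot reach the stated depth-free bound.
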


\subsection{Mean Case}
\lemswapregretconcentrationtwo*
The proof for Lemma~\ref{lem:swap-regret-concentration2} is essentially identical to the proof for Lemma~\ref{lem:swap-regret-concentration}, so we only present the proof for Lemma~\ref{lem:swap-regret-concentration} below.

\lemswapregretconcentration*
\begin{proof}
Consider a sequence  \[
        \tz_t = (x_t, y_t, \thp_t)
    \]
adapted to the filtration $\pi_{1:t}$ for all $t \in [T]$. We write the domain of $\tz$ as $\cZ=\cX \times \cY \times \cP$. For each $p\in \cP$, define 
\[
    \cG_p = \left\{g((x,y,\hp)) = \frac{1}{2}\left(D_1f(x) +1\right): f \in \cF_B \right\}
\]
where $D_1 = \min(1, \frac{1}{\sqrt{B}})$.
Also, we define a post-processing function\[
    d_p(v, \hp, y) = D_2 \cdot \ind[\hp = p] \cdot \left((p - y)^2 - \left(\frac{1}{D_1}(2v-1) - y\right)^2\right)
\]
where $D_2=\max(\frac{D^2_1}{8}, \frac{D_1}{2})$ and we assume $v \in [0, 1]$. Note that for any $g \in \cG_p$ and $z=(x,y,\hp) \in \cZ$
\[
    d_p(g(x), \hp, y) = D_2\cdot \ind[\hp = p] \cdot \left((p - y)^2 - (f(x) - y)^2\right) \in [0,1]
\]
and $d(v, \hp, y)$ is 1-Lipschitz in $v \in [0,1]$:
\begin{align*}
    |d_p(v, \hp, y) - d_p(v', \hp, y)| &= \ind[\hp = p]D_2 \left|\left(\frac{1}{D_1}(2v-1)-y\right)^2 - \left(\frac{1}{D_1}(2v'-1)-y\right)^2 \right|\\
    &= D_2 \left|\left(\frac{4}{D^2_1}(v+v')- \frac{2y}{D_1}\right)(v-v') \right|\\
    &\le D_2 \left|\left(\frac{8}{D^2_1}- \frac{2y}{D_1}\right)(v-v') \right|\\
    &\le |v-v'|.
\end{align*}

Now, we can re-write the contextual swap regret's deviation from its expectation in each round as
\begin{align*}
     &\frac{D_2}{T} \sup_{\{f_p\}_{p \in \cP}}\sum_{t=1}^T (\hp_t- y_t)^2 - (f_{\hp_t}(x_t) - y_t)^2 - \E_{\hp'_t}\left[(\hp'_t- y'_t)^2 - (f_{\hp'_t}(x'_t) - y'_t)^2 | \pi_{1:t-1}\right] \\
     &=\frac{D_2}{T} \sum_{p \in \cP} \sup_{f_p \in \cF_B} \sum_{t=1}^T \ind[\hp_t = p] \left((\hp - y)^2 - (f_p(x) - y)^2\right) - \left((p - y)^2 - (f_p(x) - y)^2\right) \Pr_{\hp' \sim \thp_t}[\hp' = p|\pi_{1:t-1}]\\
     &=\sum_{p \in \cP} \sup_{g_p \in \cG_p} \frac{1}{T} \sum_{t=1}^T d_p(g_p(x_t), \hp_t, y_t) - \E_{\hp' \sim \thp_t}[d(g_p(x_t), \hp_t, y_t)|\pi_{1:t-1}].
\end{align*}

In other words, it suffices to bound the deviation from its expectation for every $p$. 
\begin{lemma}
\label{lem:mean-frac-contraction}
    \[
        I^\alpha_{d_p \circ \cG_p, Z} \le I^{\alpha}_{\cF_B, X^Z}.
    \]
    In other words, we have $I^{\alpha}_{d \circ \cG_p} \le I^{\alpha}_{\cF_B}$
\end{lemma}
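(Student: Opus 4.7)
The plan is a standard 1-Lipschitz contraction argument for the sequential majorizing measure, implemented through a pushforward. The key observation is that, viewed as a map on function classes, the assignment $\Phi_p: f \mapsto d_p(g_p(\cdot),\cdot,\cdot)$ (with $g_p$ built from $f$) is pointwise 1-Lipschitz in $f$. Concretely, $g_p(x,y,\hp)=\tfrac{1}{2}(D_1 f(x)+1)$ is $\tfrac{D_1}{2}$-Lipschitz in $f(x)$, and the proof of Lemma~\ref{lem:swap-regret-concentration} already established that $d_p(v,\hp,y)$ is $1$-Lipschitz in $v\in[0,1]$. Chaining these and using $D_1=\min(1,1/\sqrt{B})\le 1$, I get for every $z=(x,y,\hp)\in\cZ$ and every $f,f'\in\cF_B$,
$$|d_p(g_p(z),\hp,y)-d_p(g_{p'}(z),\hp,y)| \le \tfrac{D_1}{2}|f(x)-f'(x)| \le |f(x)-f'(x)|.$$

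Next, I would lift this to the tree level. Fix a $\cZ$-tree $Z$ of depth $n$ and let $X^Z$ be the $\cX$-tree obtained by projecting each node to its $x$-coordinate. Squaring and summing the pointwise bound along any path $\epsilon\in\{\pm 1\}^n$ yields
$$\|(d_p\circ g_p)(Z)(\epsilon)-(d_p\circ g_{p'})(Z)(\epsilon)\|^2 \;\le\; \|f(X^Z)(\epsilon)-f'(X^Z)(\epsilon)\|^2,$$
so the induced map $\Phi:\cF_B(X^Z)\to (d_p\circ\cG_p)(Z)$, $f(X^Z)\mapsto (d_p\circ g_p)(Z)$, is $1$-Lipschitz in the $\ell_2$-per-path metric that defines the balls $B_\delta(\cdot,\epsilon)$ appearing in $I^\alpha$.

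Finally, I would conclude via a pushforward. Given any $\mu^*$ with $\mathrm{supp}(\mu^*)\subseteq \cF_B(X^Z)$, let $\mu:=\Phi_*\mu^*$; then $\mathrm{supp}(\mu)\subseteq (d_p\circ\cG_p)(Z)$. For any $V=\Phi(V')$ in $\mathrm{supp}(\mu)$ and any $\epsilon$, the $1$-Lipschitz property gives $\Phi^{-1}(B_\delta(V,\epsilon))\supseteq B_\delta(V',\epsilon)$, hence $\mu(B_\delta(V,\epsilon))\ge \mu^*(B_\delta(V',\epsilon))$, so $\log(1/\mu(B_\delta(V,\epsilon)))\le \log(1/\mu^*(B_\delta(V',\epsilon)))$. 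Integrating over $\delta\in[\alpha,1]$, taking the sup over $(V,\epsilon)$ on the left (dominated by the sup over $(V',\epsilon)$ on the right), and then taking the infimum over $\mu^*$, I obtain $I^\alpha_{d_p\circ\cG_p,Z}\le I^\alpha_{\cF_B,X^Z}$; taking $\sup$ over $Z$ gives the unconditional form.

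The main obstacle is essentially bookkeeping rather than mathematics: verifying that the pushforward is well-defined (it is, since $\Phi$ is deterministic and measurable) and that non-injectivity of $\Phi$ can only help (distinct $f$'s collapsing to the same image only enlarges $\mu$-ball masses relative to $\mu^*$-ball masses of a fixed preimage). No genuinely new analytic ingredient is needed beyond the Lipschitz contraction worked out above; the rest is unwinding the definition of $I^\alpha$.
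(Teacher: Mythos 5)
Your proof is correct and follows essentially the same route as the paper: a per-path $1$-Lipschitz contraction combined with a pushforward-measure argument showing that ball masses can only increase, hence the majorizing-measure integral only decreases. The only cosmetic difference is that you compose the affine rescaling $h(v)=\tfrac{1}{2}(D_1 v+1)$ and the post-processing $d_p$ into a single $1$-Lipschitz map, whereas the paper treats them in two steps (a direct pushforward for $d_p$, then an appeal to Proposition 5 of \cite{block2021majorizing} for the affine step).
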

\begin{proof}
\item
\paragraph{1. $I^{\alpha}_{d_p \circ \cG, Z} \le  I^{\alpha}_{\cG, X^Z}$:}
Fix any tree $Z$. Similarly to the proof of Lemma~\ref{lem:fat-dim-bound}, we write $X^Z$, $Y^Z$, $P^Z$ to denote the $\cX$, $\cY$, and $\cP$-trees induced by $Z$: for any $\epsilon \in \{\pm 1\}^n$ and $\tau \in [n]$, $Z_\tau(\epsilon) = (X^Z_\tau(\epsilon), Y^Z_\tau(\epsilon), P^Z_\tau(\epsilon))$.

Note that it is sufficient to show that for any measure $\mu$ over the $\R$-trees whose support is over $g(X^Z)$ for every $g \in \cG_p$, there exists a measure $\tilde{\mu}$ over $\R$-trees such that the following is true: for every $g \in \cG_p$, we have
\[
   \log\left(\frac{1}{\tilde{\mu}(B_\delta (d_p(g(X^Z), P^Z, Y^Z), \epsilon))}\right) \le \log\left(\frac{1}{\mu(B_\delta(g(X^Z), \epsilon))}\right). 
\]

We can construct such $\tilde{\mu}$ over $\R$-valued trees via pushforward operation with the following transformation function $Q$: given a $\R$-valued-tree $R$, $Q$ outputs a new $\R$-valued tree $R' = Q(R)$ such that for every $\epsilon \in \{ \pm 1\}^n$
\[
    R'(\epsilon) := d_p(R(\epsilon), P^Z(\epsilon), Y^Z(\epsilon)) 
\]
and $\tilde{\mu} = Q \# \mu$ is the resulting pushforward measure via $Q$. Then, because of the $1$-Lipschitzness of $d_p(\cdot, \hp, y)$, we have for any tree $R$,
\begin{align*}
    ||g(X^Z(\epsilon)) - R(\epsilon)|| \le n \delta^2
    \implies ||d_p(g(X^Z(\epsilon)), P^Z(\epsilon), Y^Z(\epsilon)) - R'(\epsilon)|| \le n \delta^2
\end{align*}
where $R' = Q(R)$. Hence, we have
\[
    \left\{Q(R): R \in B_\delta(g(X^Z), \epsilon)\right\} \subseteq B_\delta(d(g(X^Z), P^Z, Y^Z), \epsilon).
\]

Therefore, we can show that for any $\epsilon \in \{\pm 1\}^n$ \[
    \mu(B_\delta(g(X^Z), \epsilon))  = \tilde{\mu}(\left\{Q(R): R \in B_\delta(g(X^Z), \epsilon)\right\}) \le \tilde{\mu}\left(B_\delta(d_p(g(X^Z), P^Z, Y^Z), \epsilon)\right).
\]
\paragraph{2. $I^\alpha_{\cG_p, X} = I^{\alpha}_{\cF, X}$:}
Note that $\cG = h \circ \cF_B$ where $h(v) = \frac{1}{2}(D \cdot v + 1)$, which is $1$-Lipschitz as $\frac{D}{2} \le 1$. Therefore, by Proposition 5 of \cite{block2021majorizing}, we have $I^\alpha_{\cG_p, X} = I^{\alpha}_{\cF_B, X}$.

\end{proof}

Invoking Lemma~\ref{lem:sequential-concentrate} gives us that with probability $1-4m$ over the randomness of $\{\hp_t\}_{t=1}^T$,
\begin{align*}
    &\sup_{g_p \in \cG_p} \sum_{t=1}^T d_p(g_p(x_t), \hp_t, y_t) - \E_{\hp' \sim \thp_t}[d_p(g_p(x_t), \hp_t, y_t)|\pi_{1:t-1}]\\
    &\le C_1 \left(I^{\alpha}_{d_p \circ \cG_p} + \sqrt{\frac{\log(\frac{1}{\rho})}{T}}\right) \\
    &\le C_1 \left(I^{\alpha}_{\cF_B} + \sqrt{\frac{\log(\frac{1}{\rho})}{T}}\right) \\
    &\le 2C_1 \cdot \left(\inf_{\alpha \in (0,1)} \alpha + \frac{1}{\sqrt{T}}\int_{\alpha}^1 \sqrt{\log(N'(\cF_B, \delta))} d\delta +\sqrt{\frac{\log(\frac{1}{\rho})}{T}} \right)\\
    &\le  2C_1 \cdot \left(\inf_{\alpha \in (0,1)} \alpha + \frac{1}{\sqrt{T}}\int_{\alpha}^1 \sqrt{\log\left(\left(\frac{C_3}{\delta}\right)^{3\fat_{c\delta}(\cF_B)}\right)} d\delta +\sqrt{\frac{\log(\frac{1}{\rho})}{T}} \right)\\
    &\le  2C_1 \cdot \left(\inf_{\alpha \in (0,1)} \alpha + \frac{1}{\sqrt{T}}\int_{\alpha}^1 \sqrt{ 3\fat_{c\delta}(\cF_B) \log\left(\left(\frac{C_3}{\delta}\right)\right)} d\delta +\sqrt{\frac{\log(\frac{1}{\rho})}{T}} \right)\\
    &\le  C_{\cF_B} \sqrt{\frac{\log(\frac{1}{\rho})}{T}}
\end{align*}
for some finite constant $C_{\cF_B}$ that depends on the complexity of $\cF_B$ via its fat shattering dimension. The third inequality from Lemma~\ref{lem:mean-frac-contraction}, and the fourth inequality follows from Theorem~\ref{thm:fat-frac-cover}, and the very last inequality follows from plugging in $\alpha = \frac{1}{\sqrt{T}}$ and noting that the sequential fat dimension of $\cF_B$ is finite at any scale $\delta >0 $. 

Summing over all $p \in \cP$, we have with probability $1-4m\rho$,
\begin{align*}
    &\frac{D_2}{T} \sup_{\{f_p\}_{p \in \cP}}\sum_{t=1}^T (\hp_t- y_t)^2 - (f_{\hp_t}(x_t) - y_t)^2 - \E_{\hp'_t}\left[(\hp'_t- y'_t)^2 - (f_{\hp'_t}(x'_t) - y'_t)^2 | \pi_{1:t-1}\right] \\
     &=\sum_{p \in \cP} \sup_{g_p \in \cG_p} \frac{1}{T} \sum_{t=1}^T d_p(g_p(x_t), \hp_t, y_t) - \E_{\hp' \sim \thp_t}[d_p(g_p(x_t), \hp_t, y_t)|\pi_{1:t-1}]\\
     &\le m C_{\cF_B} \sqrt{\frac{\log(\frac{1}{\rho})}{T}}
\end{align*}
\end{proof}

\subsection{Quantile Case}
\begin{lemma}[Concentration of Quantile Error]
\label{lem:quant-error-concentrate}
Fix any $\tpi_{1:T}$, $p \in \cPq$, and $\cF_B$. Suppose the sequential shattering dimension of $\cF_B$ is finite at any scale $\delta$: $\fat_{\delta}(\cF_B) < \infty$. With probability $1-4\rho$,
\[
    \sup_{f \in \cF_B} \left|\frac{1}{n(p)} \sum_{t \in S(p)}f(x_t)\ind[s_t \le p] - f(x_t) \Pr_{s_t \sim \ts_t}[s_t \le p]\right| \le \max\left(1, \sqrt{B}\right) C_{\cF_B}  \sqrt{\frac{\log(\frac{1}{\rho})}{n(p)}}
\]
where $C_{\cF_B}$ is some finite constant that depends on the complexity of $\cF_B$. 
\end{lemma}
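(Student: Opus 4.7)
The plan is to mirror the proof of Lemma~\ref{lem:swap-regret-concentration} and reduce to Lemma~\ref{lem:sequential-concentrate} (Corollary 11 of \cite{block2021majorizing}) via a rescaling plus a pushforward contraction argument, treating the threshold indicator $\ind[s \le p]$ as ``side information'' attached to each node of the tree.

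First I would linearize the problem into a $[0,1]$-valued function class on $\cZ = \cX \times \cS$. For each $f \in \cF_B$, define
\[
    g_f(x, s) \;=\; \frac{1}{2}\left(\frac{f(x)\,\ind[s \le p]}{\max(1,\sqrt{B})} + 1\right),
\]
and set $\cG_p = \{g_f : f \in \cF_B\}$. Since $f(x) \in [-\sqrt{B}, \sqrt{B}]$, we have $g_f \in [0,1]$, and a short calculation shows
\[
    \sup_{f \in \cF_B}\left|\tfrac{1}{n(p)}\sum_{t \in S(p)} g_f(x_t, s_t) - \E_{s_t \sim \ts_t}[g_f(x_t, s_t)\mid \pi_{1:t-1}]\right| \;=\; \frac{1}{2\max(1,\sqrt{B})} \cdot (\text{target quantity}).
\]
Apply Lemma~\ref{lem:sequential-concentrate} with the sequence $z_t = (x_t, s_t)$ (for $t \in S(p)$) adapted to the natural filtration: with probability at least $1 - 4\rho$, the left-hand side is at most $C_1(I^\alpha_{\cG_p} + \sqrt{\log(1/\rho)/n(p)})$.

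Next I would bound the sequential majorizing measure $I^\alpha_{\cG_p}$ by $I^\alpha_{\cF_B}$, exactly as in Lemma~\ref{lem:mean-frac-contraction}. For any $\cZ$-valued tree $Z$, let $X^Z$ and $S^Z$ be the induced $\cX$- and $\cS$-trees, and note that $g_f = h_s \circ f$ with $h_s(v) := \frac{1}{2}(\frac{v\,\ind[s \le p]}{\max(1,\sqrt{B})} + 1)$ being $\tfrac{1}{2}$-Lipschitz in $v$ \emph{uniformly in} $s$. Given any measure $\mu$ on $\R$-trees supported on $\{f(X^Z) : f \in \cF_B\}$, push it forward via the nodewise map $R \mapsto h_{S^Z(\epsilon)}(R(\epsilon))$ to obtain $\tilde\mu$. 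Lipschitzness of $h_s$ (uniformly in $s$) gives the inclusion
\[
    \{Q(R) : R \in B_\delta(f(X^Z), \epsilon)\} \;\subseteq\; B_\delta(g_f(X^Z, S^Z), \epsilon),
\]
from which $\tilde\mu(B_\delta(g_f(X^Z, S^Z), \epsilon)) \ge \mu(B_\delta(f(X^Z), \epsilon))$, and hence $I^\alpha_{\cG_p, Z} \le I^\alpha_{\cF_B, X^Z} \le I^\alpha_{\cF_B}$.

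Finally I would convert the majorizing measure bound to one in terms of fat shattering: combine the second inequality of Lemma~\ref{lem:sequential-concentrate} with Theorem~\ref{thm:fat-frac-cover} to get
\[
    I^\alpha_{\cF_B} \;\le\; C_2\left(\alpha + \frac{1}{\sqrt{n(p)}}\int_\alpha^1 \sqrt{3\,\fat_{c\delta}(\cF_B)\log(C_3/\delta)}\,d\delta\right),
\]
and choose $\alpha = 1/\sqrt{n(p)}$ so the right-hand side is $\tilde C_{\cF_B}/\sqrt{n(p)}$ for a finite constant $\tilde C_{\cF_B}$ (using the hypothesis that $\fat_\delta(\cF_B) < \infty$ for every $\delta > 0$). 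Multiplying through by the rescaling factor $2\max(1,\sqrt{B})$ yields the claimed bound with $C_{\cF_B} = 2C_1(\tilde C_{\cF_B} + 1)$.

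The main obstacle is verifying that the pushforward argument is unaffected by the fact that the Lipschitz map $h_s$ depends on the additional coordinate $s$; but since $s$ appears only as a \emph{fixed} side variable attached to each node of the tree $Z$ (not as a variable we optimize over), applying $h$ nodewise using $S^Z$ preserves the ball-containment structure just as in Lemma~\ref{lem:mean-frac-contraction}. Everything else is essentially bookkeeping: track the scaling factor from the embedding into $[0,1]$, and invoke the fat-shattering-based covering bound at the end.
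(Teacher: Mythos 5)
Your proposal is correct, but the key middle step differs from the paper's proof of this particular lemma. Both you and the paper start with the same rescaled embedding $g_f(x,s) = \tfrac{1}{2}\bigl(D\, f(x)\,\ind[s\le p] + 1\bigr)$ with $D = 1/\max(1,\sqrt{B})$, restrict attention to the rounds in $S(p)$ with the unrealized scores $\ts_t$ as the only randomness, and finish by combining Lemma~\ref{lem:sequential-concentrate} with Theorem~\ref{thm:fat-frac-cover} at scale $\alpha = 1/\sqrt{n(p)}$. Where you diverge is in how the complexity of the composed class is related back to $\cF_B$: the paper proves a direct combinatorial inequality on sequential fat-shattering dimensions, $\fat_\alpha(\cG,\cZ) \le \fat_{2\sqrt{B}\alpha}(\cF_B,\cX)$ (Lemma~\ref{lem:fat-dim-bound}), by arguing that any shattering tree for $\cG$ either has $\ind[s\le p]=1$ at every node (reducing to shattering of $\cF_B$ at a dilated scale) or contains a node where $g_f$ is constant and shattering fails; it then applies the covering bound to $\cG$ itself. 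You instead bound the sequential majorizing measure, $I^\alpha_{\cG_p} \le I^\alpha_{\cF_B}$, via a nodewise $1$-Lipschitz pushforward, and only then pass to the fat-shattering dimension of $\cF_B$. Your route is sound: the concern you raise about the Lipschitz map $h_s$ depending on node-level side information is resolved exactly as you say, and indeed this is precisely the contraction argument the paper itself uses for the mean-case and pinball-loss concentration lemmas (Lemma~\ref{lem:mean-frac-contraction} and Lemma~\ref{lem:lipschitz-pinball}), so it is fully consistent with the paper's toolkit. The trade-off: your argument is more uniform with those other lemmas and avoids the scale dilation bookkeeping ($c\delta$ versus $2c\sqrt{B}\delta$ inside the entropy integral), while the paper's dimension comparison is more elementary (no measures or pushforwards, just shattering trees) and yields an explicit fat-shattering inequality for the composed class. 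Both yield the stated bound up to the value of the unspecified constant $C_{\cF_B}$.
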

\begin{proof}
For each round $\tau \in [n]$, define $\cZ = [0,1] \times \cX$ and $D = \min(1, 1/\sqrt{B})$. Now, for each $f \in \cF_B$ and any $z=(x,s) \in \cZ$, define 
\[
        g_f(z) = \frac{1}{2}\left(D \cdot f(x) \cdot M_p(s) + 1\right)
\]
where $M_p(s) = \ind[s \le p]$. We denote the induced family of functions $\cG = \{g_f: f \in \cF_B\}$. Note that $\cG \subseteq [0,1]^\cZ$ as $D \cdot f(x) \le [-1, 1]$ for any $f \in \cF_B$.

\begin{lemma}
\label{lem:fat-dim-bound}
The fat shattering dimension of $\cG$ matches that of $\cF_B$:
    \[
        \fat_\alpha(\cG, \cZ) \le \fat_{\alpha'}(\cF_B, \cX)
    \]
    where $\alpha' = 2\sqrt{B} \alpha$.
\end{lemma}
\begin{proof}
For convenience, write $\fat_{\alpha}(\cF_{\cA}, \cX)=d$ and $\fat_\alpha(\cG, \cZ) = d'$. For the sake of contradiction, suppose the fat shattering dimension of $\cG$ is strictly greater than $d$, meaning there exists a $\cZ$-valued tree $Z$ and $\R$-valued tree $R$ that witness to its fat shattering dimension being $d' > d$. We write $X^Z$ and $S^Z$ to denote the $\cX$-tree and $\R$-tree induced by $Z$: i.e. $Z_\tau(\epsilon) = (X^Z_\tau(\epsilon), S^Z_\tau(\epsilon))$.

First, we argue that there must exist $\epsilon \in \{\pm 1\}^{d'}$ and $\tau \in [d']$ such that \[
M_p(S^Z_{\tau}(\epsilon)) = 0.
\]
If not, then we have for every $\epsilon \in \{\pm 1\}^{d'}$ 
\[
    g_f(Z_\tau(\epsilon)) = \frac{1}{2}\left(D f(X^Z_{\tau}(\epsilon)) + 1 \right).
\]
Then, we have a $\cX$-tree $X^Z$ with depth $d'$ and an $\R$-tree $R'$ where $R'=1/D \cdot (2 \cdot R^Z - 1)$ which together serve as a witness for $\alpha'$-shattering where $\alpha'=2\sqrt{B}\alpha$: for every $\epsilon \in \{\pm 1\}^{d'}$ and $\tau \in [d']$,
\[
    \epsilon_\tau \cdot (f(X_{\tau}^Z(\epsilon)) - R'_\tau(\epsilon)) \ge \alpha' / 2:
\] i.e. we would have $\fat_{\alpha'}(\cF_B, \cX) = d' > d$, which leads to a contradiction that $d$ is the maximum depth for which $\cF_B$ is shattered at scale $\alpha'$.

Hence, suppose there exists some $\tau^* \in [d']$ and $(\epsilon_1, \dots, \epsilon_{\tau^*-1}) \in \{\pm 1\}^{\tau^*}$  such that 
\[
  M_p(S^Z(\epsilon_1, \dots, \epsilon_{\tau^*-1})) = 0.  
\]

Then, for any $g_f \in \cG$, we have 
\[
    g_f(Z(\epsilon_1, \dots, \epsilon_{\tau^*-1})) - R(\epsilon_1, \dots, \epsilon_{\tau^*-1}) = - R(\epsilon_1, \dots, \epsilon_{\tau^*-1}).
\]

Hence, either there doesn't exist any $g_f \in \cG$ such that
\[
    +1 \left(g_f(Z(\epsilon_1, \dots, \epsilon_{\tau^*-1})) - R(\epsilon_1, \dots, \epsilon_{\tau^*-1})\right) = - R(\epsilon_1, \dots, \epsilon_{\tau^*-1}) \ge \alpha/2
\]
or there's no $g_f \in \cG$ such that
\[
    -1 \left(g_f(Z(\epsilon_1, \dots, \epsilon_{\tau^*-1})) - R(\epsilon_1, \dots, \epsilon_{\tau^*-1})\right) = R(\epsilon_1, \dots, \epsilon_{\tau^*-1}) \ge \alpha/2.
\]
This contradicts the assumption that $Z$ and $R$ is a witness to $\alpha$-shattering for $\cG$. This concludes the proof that $d' \le d$.
\end{proof}

Now, fix $\tpi_{1:T}$. And instead of iterating the entire time horizon $[T]$, we only iterate over $S(\tpi_{1:T}, p)$. For convenience, we overload the notation and define the new rounds $\tau \in [n(\tpi_{1:T}, p)]$ such that the corresponding random variable $z_\tau$ matches the $(x_t, s_t)$: define \[
    z_\tau = (s_{t_\tau}, x_{t_\tau})
\]
where $t_\tau = S(\tpi_{1:T}, p)[\tau]$ --- i.e. the round at which prediction $p$ is made for the $\tau$-th time.

Invoking Lemma~\ref{lem:sequential-concentrate} gives us that with probability $1-4\rho$ over the randomness of $\{\ts_t\}_{t \in S(\tpi_{1:T}, p)}$,
\begin{align*}
    & D \cdot \sup_{f \in \cF_B} \left|\frac{1}{n(p) } \sum_{t \in S(p)}f(x_t)\ind[s_t \le p] - f(x_t) \Pr_{s_t \sim \ts_t}[s_t \le p]\right| \\
    &=2\sup_{g \in \cG} \left|\frac{1}{n(p)} \sum_{\tau=1}^{n(p)} g_f(z_\tau) - \E_{\tau-1}[g(z_\tau)]\right| \\
    &\le 2C_1 \cdot \left(\inf_{\alpha \in (0,1)} \alpha + \frac{1}{\sqrt{n(p)}}\int_{\alpha}^1 \sqrt{\log(N'(\cG, \delta))} d\delta +\sqrt{\frac{\log(\frac{1}{\rho})}{n(p)}} \right)\\
    &\le  2C_1 \cdot \left(\inf_{\alpha \in (0,1)} \alpha + \frac{1}{\sqrt{n(p)}}\int_{\alpha}^1 \sqrt{\log\left(\left(\frac{C_3}{\delta}\right)^{3\fat_{c\delta}(\cG)}\right)} d\delta +\sqrt{\frac{\log(\frac{1}{\rho})}{n(p)}} \right)\\
    &\le  2C_1 \cdot \left(\inf_{\alpha \in (0,1)} \alpha + \frac{1}{\sqrt{n(p)}}\int_{\alpha}^1 \sqrt{ 3\fat_{2c\sqrt{B}\delta}(\cF_B) \log\left(\left(\frac{C_3}{\delta}\right)\right)} d\delta +\sqrt{\frac{\log(\frac{1}{\rho})}{n(p)}} \right)\\
    &\le  C_{\cF_B} \sqrt{\frac{\log(\frac{1}{\rho})}{n(p)}}
\end{align*}
for some finite constant $C_{\cF_B}$ that depends on the complexity of $\cF_B$ via its fat shattering dimension. The second inequality follows from Theorem~\ref{thm:fat-frac-cover}, the third inequality from Lemma~\ref{lem:fat-dim-bound}, and the fourth inequality follows by plugging in $\alpha = \frac{1}{\sqrt{n(p)}}$ and noting that the sequential fat dimension of $\cF_B$ is finite at any scale $\delta >0 $. 
\end{proof}

\begin{lemma}[Concentration of Pinball Loss]
\label{lem:pinball-concentrate}
Fix $\tpi_{1:T}$, $p \in \cPq$, and $\cF_B$. Suppose the sequential shattering dimension of $\cF_B$ is finite at any scale $\delta$: $\fat_{\delta}(\cF_B) < \infty$. We have that with probability $1-4\rho$ over the randomness of $\{\ts_{t}\}_{t \in S(p)}$,
    \begin{align*}
    \sup_{f \in \cF_B} \left|\frac{1}{n(p)} \sum_{t \in S(p)} \pb_q(s_t, p) - \pb_q(s_t, f(x_t)) - \E_{s \sim \ts_t}[\pb_q(s, p) - \pb_q(s, f(x_t)]\right| \le \max\left(1, \sqrt{B}\right) C'_{\cF_B} \sqrt{\frac{\log(\frac{1}{\rho})}{n(p)}}
    \end{align*}
\end{lemma}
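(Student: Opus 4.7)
The plan is to mirror the proof strategy of Lemma~\ref{lem:quant-error-concentrate} (quantile error concentration) and Lemma~\ref{lem:swap-regret-concentration} (the mean analogue), reducing the claim to Lemma~\ref{lem:sequential-concentrate} after packaging the pinball loss difference into a suitable $[0,1]$-valued function class whose sequential complexity can be bounded by that of $\cF_B$. The main obstacle is the complexity transfer step, which I handle by adapting the Lipschitz-pushforward argument from Lemma~\ref{lem:mean-frac-contraction}.

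First I would set $D = 3 \max(1, \sqrt{B})$ and let $\cZ = \cX \times [0,1]$. For each $f \in \cF_B$ define
\[
    g_f(z) = g_f(x, s) = \frac{1}{2D}\bigl(\pb_q(p, s) - \pb_q(f(x), s)\bigr) + \frac{1}{2}.
\]
Using $|f(x)| \le \sqrt{B}$ and $|\pb_q(y, s)| \le |s - y|$, one sees that $|\pb_q(p, s) - \pb_q(f(x), s)| \le 2 + \sqrt{B} \le D$, so $g_f \in [0,1]^\cZ$. Let $\cG = \{g_f : f \in \cF_B\}$. Writing $z_\tau = (x_{t_\tau}, s_{t_\tau})$ for the rounds in $S(p)$ (indexed as in the proof of Lemma~\ref{lem:quant-error-concentrate}), the quantity to be bounded equals
\[
    2D \cdot \sup_{g \in \cG}\left|\frac{1}{n(p)}\sum_{\tau = 1}^{n(p)} g(z_\tau) - \E[g(z_\tau) \mid \cA_\tau]\right|,
\]
so it suffices to apply Lemma~\ref{lem:sequential-concentrate} to $\cG$ and then unscale by $2D$.

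Next I would establish the analogue of Lemma~\ref{lem:mean-frac-contraction}, namely $I^\alpha_{\cG, Z} \le I^\alpha_{\cF_B, X^Z}$ for every $\cZ$-valued tree $Z$. The key Lipschitz observation is that for each fixed $s \in [0,1]$, the map $\phi_s : y \mapsto \frac{1}{2D}(\pb_q(p, s) - \pb_q(y, s)) + \frac{1}{2}$ is $\frac{\max(q, 1-q)}{2D} \le 1$-Lipschitz, since pinball loss is $\max(q, 1-q)$-Lipschitz in its first argument. Writing $X^Z$ and $S^Z$ for the $\cX$- and $[0,1]$-trees induced by $Z$, given any measure $\mu$ supported on $\{f(X^Z) : f \in \cF_B\}$, I take the pushforward $\tilde\mu = Q_\# \mu$ under the coordinatewise map $Q(R)(\epsilon) = \phi_{S^Z(\epsilon)}(R(\epsilon))$. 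The $1$-Lipschitzness of each $\phi_s$ then gives $Q(B_\delta(f(X^Z), \epsilon)) \subseteq B_\delta(g_f(Z), \epsilon)$, so $\mu(B_\delta(f(X^Z), \epsilon)) \le \tilde\mu(B_\delta(g_f(Z), \epsilon))$, which yields the desired inequality on $I^\alpha$.

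Finally, invoking Lemma~\ref{lem:sequential-concentrate} on $\cG$ and chaining with Theorem~\ref{thm:fat-frac-cover} (so $\log N'(\cF_B, \delta) \le 3 \fat_{c\delta}(\cF_B) \log(C_3/\delta)$) gives, with probability at least $1 - 4\rho$,
\[
    \sup_{g \in \cG}\left|\frac{1}{n(p)}\sum_\tau g(z_\tau) - \E[g(z_\tau)\mid \cA_\tau]\right| \le C''_{\cF_B}\sqrt{\frac{\log(1/\rho)}{n(p)}}
\]
by taking $\alpha = 1/\sqrt{n(p)}$ and using finiteness of $\fat_\delta(\cF_B)$ at every $\delta > 0$ to absorb the resulting integral into a constant $C''_{\cF_B}$. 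Multiplying through by $2D = 6 \max(1, \sqrt{B})$ yields the claimed bound with $C'_{\cF_B} = 6 C''_{\cF_B}$.
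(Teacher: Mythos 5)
Your proposal is correct and follows essentially the same route as the paper: package the pinball-loss difference into a $[0,1]$-valued class, transfer the sequential majorizing-measure bound from $\cF_B$ via a Lipschitz pushforward (contraction) argument, and then apply Lemma~\ref{lem:sequential-concentrate} with Theorem~\ref{thm:fat-frac-cover} at scale $\alpha = 1/\sqrt{n(p)}$. The only differences are cosmetic — you fold the paper's affine rescaling $\cG$ and post-processing map $d$ into a single composed class with a slightly different normalization constant, which is then absorbed into $C'_{\cF_B}$.
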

\begin{proof}
We proceed with a similar argument as the proof of Proposition 5 of \cite{block2021majorizing}.
First, define $\cZ = \cX \times [0,1]$ and $D = \min(1, \frac{1}{\sqrt{B}})$. In order to enforce the range of the function values to be $[0,1]$, we define the function class \[
\cG = \left\{g_f = \frac{1}{2}\left(D \cdot f(\cdot)+ 1\right): f \in \cF_B \right\}.
\]

Define $d: [0,1] \times [0, 1] \to [0,1]$ as
\[
    d(s, v) = \frac{1}{2}\left(D \cdot \left(\pb_q(s, p) - \pb_q\left(s, \frac{2v-1}{D}\right)  \right) +1\right).
\]
Note that $d(s, \cdot)$ is $1$-Lipschitz:
\begin{align*}
    &|d(s, v) - d(s, v')| \\
    &= \frac{D}{2} \left|\pb_q\left(s, \frac{2v-1}{D}\right) - \pb_q\left(s, \frac{2v'-1}{D}\right)  \right|\\
    &\le \frac{D}{2} \left|\frac{2v-1}{D} - \frac{2v'-1}{D} \right|\\
    &\le |v - v'|
\end{align*}
as $PB_q(s, \cdot)$ is $\max(q,1-q)$-Lipschitz and $\max(q, 1-q) \le 1$.

Also, note that $d(s, v) \in [0,1]$ and for any $g_f \in \cG$,
\[
    d(s, g_f(x)) = \frac{1}{2}\left(D \cdot \left(\pb_q\left(s, f(x)\right) - \pb_q(s, p) \right) +1\right) 
\]
where $f\in \cF_B$ is such that $g_f(x) = \frac{1}{2}\left(D \cdot f(x)+ 1\right)$.

Using the same argument as in Proposition 5 of \cite{block2021majorizing}, we can show the following claim.
\begin{lemma}
\label{lem:lipschitz-pinball}
For any tree $Z$,
    \begin{align*}
        I^{\alpha}_{d \circ \cG, Z} \le  I^{\alpha}_{\cG, X^Z} = I^{\alpha}_{\cF_B, X^Z}
    \end{align*}
where  $X^Z$ is the tree induced by the $Z$: i.e. for every $\epsilon \in \{\pm 1\}^n$, $Z_t(\epsilon) = (X^Z_t(\epsilon), s_t)$. In other words, $I^{\alpha}_{d \circ \cG} \le  I^{\alpha}_{\cF_B}$.
\end{lemma}
\begin{proof}
\item
\paragraph{1. $I^{\alpha}_{d \circ \cG, Z} \le  I^{\alpha}_{\cG, X^Z}$:}
Fix any tree $Z$. As in the proof of Lemma~\ref{lem:fat-dim-bound}, we write $X^Z$ and $S^Z$ to denote the $\cX$ and $\R$-trees induced by $Z$: for any $\epsilon \in \{\pm 1\}^n$ and $\tau \in [n]$, $Z_\tau(\epsilon) = (S^Z_\tau(\epsilon), X^Z_\tau(\epsilon))$.

Note that it is sufficient to show that for any measure $\mu$ over the $\R$-trees whose support is over $g_f(X^Z)$ for every $g_f \in \cG$, there exists a measure $\tilde{\mu}$ over $\R$-trees such that the following is true:
\[
   \log\left(\frac{1}{\tilde{\mu}(B_\delta (d(S^Z, g_f(Z)), \epsilon))}\right) \le \log\left(\frac{1}{\mu(B_\delta(g_f(X^Z), \epsilon))}\right). 
\]

We can construct such $\tilde{\mu}$ over $\R$-valued trees via pushforward operation with the following transformation function $Q$: given a $\R$-valued-tree $R$, $Q$ outputs a new $\R$-valued tree $R' = Q(R)$ such that for every $\epsilon \in \{ \pm 1\}^n$
\[
    R'(\epsilon) := d(S^Z(\epsilon), R(\epsilon)) 
\]
and $\tilde{\mu} = Q \# \mu$ is the resulting pushforward measure via $Q$. Then, because of the $1$-Lipschitzness of $d(s, \cdot)$, we have for any tree $R$,
\begin{align*}
    ||g_f(X^Z(\epsilon)) - R(\epsilon)|| \le n \delta^2
    \implies ||d(S^Z(\epsilon), g_f(X^Z(\epsilon))) - R'(\epsilon)|| \le n \delta^2
\end{align*}
where $R' = Q(R)$. Hence, we have
\[
    \left\{Q(R): R \in B_\delta(g_f(X^Z), \epsilon)\right\} \subseteq B_\delta(d(S^Z, g_f(X^Z)), \epsilon)
\]

Therefore, we can show that for any $\epsilon \in \{\pm 1\}^n$ \[
    \mu(B_\delta(g_f(X^Z), \epsilon))  = \tilde{\mu}(\left\{Q(R): R \in B_\delta(g_f(X^Z), \epsilon)\right\}) \le \tilde{\mu}\left(B_\delta(d(S^Z, g_f(X^Z)), \epsilon)\right).
\]
\paragraph{2. $I^\alpha_{\cG, X} = I^{\alpha}_{\cF, X}$:}
Note that $\cG = h \circ \cF_B$ where $h(v) = \frac{1}{2}(D \cdot v + 1)$, which is $1$-Lipschitz. Therefore, by Proposition 5 of \cite{block2021majorizing}, we have $I^\alpha_{\cG, X} = I^{\alpha}_{\cF, X}$.

\end{proof}

Fix any $\tpi_{1:T}$. Combining Lemma~\ref{lem:lipschitz-pinball} and Lemma~\ref{lem:sequential-concentrate}, we have with probability $1-4\rho$ over the randomness of $\{\ts_t\}_{t \in n(p)}$, 
\begin{align*}
    &D \sup_{f \in \cF_B} \left|\frac{1}{n(p)} \sum_{t \in S(p)} \pb_q(s_t, p) - \pb_q(s_t, f(x_t)) - \E_{s \sim \ts_t}[\pb_q(s, p) - \pb_q(s, f(x_t)]\right| \\
    &= 2\sup_{g \in \cG} \left|\frac{1}{n(p)} \sum_{t \in S(p)} d(s_t, g_f(x_t)) - \E_{s_t \sim \ts_t}[d(s_t, g_f(x_t))|\pi_{1:t-1}] \right|\\
    &\le  2C_1 \left(I^{\alpha}_{d \circ \cG} + \sqrt{\frac{\log(\frac{1}{\rho})}{n(p)}}\right) \\
    &\le  2C_1 \left(I^{\alpha}_{\cF_B} + \sqrt{\frac{\log(\frac{1}{\rho})}{n(p)}}\right)\\
    &\le  2C_1 \cdot \left(\inf_{\alpha \in (0,1)} \alpha + \frac{1}{\sqrt{n(p)}}\int_{\alpha}^1 \sqrt{\log\left(\left(\frac{C_3}{\delta}\right)^{3\fat_{c\delta}(\cF_B)}\right)} d\delta +\sqrt{\frac{\log(\frac{1}{\rho})}{n(p)}} \right)\\
    &\le  2C_1 \cdot \left(\inf_{\alpha \in (0,1)} \alpha + \frac{1}{\sqrt{n(p)}}\int_{\alpha}^1 \sqrt{ 3\fat_{c \delta}(\cF_B) \log\left(\left(\frac{C_3}{\delta}\right)\right)} d\delta +\sqrt{\frac{\log(\frac{1}{\rho})}{n(p)}} \right)\\
    &\le  C'_{\cF_B} \sqrt{\frac{\log(\frac{1}{\rho})}{n(p)}}
\end{align*}
where $C'_{\cF_B}$ is a finite constant that depends on the complexity of $\cF_B$.
\end{proof}

\section{An Extension: Online Oracle-efficient Quantile Multicalibration and  Multivalid Conformal Prediction}
\label{app:conformal-extension}

\subsection{Connection between Multivalidity and Quantile Errors}

Here we leverage a similar technique as in \cite{jung2022batch}. We show that if there exists some $f$ and level set $p$ that witnesses to the quantile error being $\alpha$ with respect to the conformal score distribution $\ts_t$'s, an affine transformation of $f$ also serves as a witness to the pinball loss regret being large with respect to $\ts_t$'s. For the same reasons discussed in \cite{jung2022batch}, this argument needs to happen with respect to the conformal score distributions $\ts_t$'s not the realized scores $s_t$'s. However, because our contextual swap regret algorithm's guarantee is with respect to realized scores $s_t$'s, we will use concentration arguments proved in Appendix~\ref{app:concentration} to show that multivalidity error and pinball loss regret under $\ts_t$'s and $s_t$'s must be very close with high probability.

\begin{lemma}
Fix some distribution of (features, conformal score) $\cD \sim \Delta(\cX \times [0,1])$ and write its corresponding marginal distribution over $\cX$ as $\cD_\cX$. The expected value of the derivative of the pinball loss with respect to the shift scaled by $f(x)$ is
\begin{align*}
    \E_{(x,s) \sim \cD}\left[\frac{d\pb_q(p+\tau \cdot f(x), s)}{d\tau} \Big| \tau = b \right] = \E_{x \sim \cD_\cX}\left[f(x) \cdot \left(\Pr_{s \sim \cD_\cS(x)}[s \le p + b \cdot f(x)] - q\right)\right]
\end{align*}
\end{lemma}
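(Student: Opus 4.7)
The plan is to proceed by direct computation: differentiate the pinball loss pointwise in $\tau$, evaluate at $\tau = b$, and then take the expectation using the tower property conditioning on $x$.

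First I would recall the piecewise definition $\pb_q(p', s) = (s-p')q \cdot \one[s > p'] + (p'-s)(1-q)\cdot\one[s \le p']$ and substitute $p' = p + \tau f(x)$. For any fixed $(x,s)$ with $s \ne p + \tau f(x)$, the function $\tau \mapsto \pb_q(p + \tau f(x), s)$ is smooth and differentiable: on the region $s > p + \tau f(x)$ its derivative is $-q\,f(x)$, and on the region $s \le p + \tau f(x)$ its derivative is $(1-q)\,f(x)$. Combining these two cases gives
\[
\frac{d\pb_q(p+\tau f(x), s)}{d\tau}\bigg|_{\tau=b} \;=\; f(x)\Bigl[(1-q)\cdot\one[s \le p + b f(x)] \;-\; q\cdot\one[s > p + b f(x)]\Bigr].
\]
Using $\one[s > p+bf(x)] = 1 - \one[s \le p + bf(x)]$, this collapses cleanly to $f(x)\bigl(\one[s \le p + b f(x)] - q\bigr)$.

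Next I would take the expectation of this identity under $(x,s) \sim \cD$ and apply the tower property, decomposing the joint expectation into an outer expectation over $x \sim \cD_\cX$ and an inner expectation over $s \sim \cD_\cS(x)$. Since $f(x)$ is measurable with respect to the outer $\sigma$-algebra, it pulls out of the inner expectation, leaving
\[
\E_{(x,s)\sim \cD}\bigl[f(x)(\one[s \le p + bf(x)] - q)\bigr] \;=\; \E_{x \sim \cD_\cX}\Bigl[f(x)\cdot\bigl(\Pr_{s \sim \cD_\cS(x)}[s \le p + b f(x)] - q\bigr)\Bigr],
\]
which is exactly the desired right-hand side.

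The only delicate point is the exchange of differentiation and expectation: the pointwise derivative above is defined for all $(x,s)$ with $s \ne p + b f(x)$, i.e.\ off a set of conformal-score values of Lebesgue measure zero. Under the $\rho$-Lipschitz (smoothness) assumption on the conditional score distribution $\cD_\cS(x)$ made in the surrounding text, this exceptional set has $\cD$-measure zero for each fixed $\tau$, so the pointwise identity holds $\cD$-almost surely. One can then justify the exchange via dominated convergence applied to the difference quotient (which is uniformly bounded by $\max(q,1-q)\,|f(x)|$ by the Lipschitz property of $\pb_q(\cdot, s)$), so there is no analytic obstacle. The main ``content'' of the lemma is therefore the pointwise derivative computation; the rest is mechanical.
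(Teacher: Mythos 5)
Your proof is correct and follows essentially the same route as the paper's: compute the piecewise pointwise derivative of the pinball loss in $\tau$ (yielding $f(x)\bigl(\one[s \le p + b f(x)] - q\bigr)$), then take the expectation and condition on $x$, exactly as the paper does by splitting the inner integral over $s$ at $p + b f(x)$. Your additional remark about the measure-zero kink set $\{s = p + b f(x)\}$ and the justification via the Lipschitz/continuity assumption is a careful touch the paper leaves implicit, but it does not change the argument.
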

\begin{proof}
Define the function $H_{(x,s)}(b) = \pb_q(p+ b \cdot f(x), s)$ and its derivative 
\begin{align*}
    H'_{(x,s)}(b) &= \frac{d\pb_q(p+\tau \cdot f(x), s)}{d\tau}\Big| \tau=b\\
    &= \begin{cases} f(x) \cdot (1-q) \quad\text{if $s \le p + b \cdot f(x)$}\\ -f(x) \cdot q \quad\text{otherwise}\end{cases}.
\end{align*}
Taking the expectation of $H'_{(x,s)}(b)$ with respect to $\cD$ yields
    \begin{align*}
        &\E_{(x,s) \sim \cD}\left[\frac{d\pb_q(p+\tau \cdot f(x), s)}{d\tau} \Big| \tau = b \right]\\
        &=\E_{(x,s) \sim \cD}[H'_{(x,s)}(b)] \\ 
        &=\E_{x \sim \cD_\cX}\left[\int_{0}^b H'_{(x,s)}(b) \cD_\cS(x)(ds) \right]\\
        &=\E_{x \sim \cD_\cX}\left[\int_{0}^{p+ b \cdot f(x)} H'_{(x,s)}(b) \cD_\cS(x)(ds) + \int_{p+ b \cdot f(x)}^{1} H'_{(x,s)}(b) \cD_\cS(x)(ds) \right]\\
        &=\E_{x \sim \cD_\cX}\left[f(x) \cdot \left(\int_{0}^{p+ b \cdot f(x)} (1-q) \cD_\cS(x)(ds) - \int_{p+ b \cdot f(x)}^{1} q \cD_\cS(x)(ds)\right) \right]\\
        &=\E_{x \sim \cD_\cX}\left[f(x) \cdot \left(\Pr_{s \sim \cD_\cS(x)}[s \le p + b \cdot f(x)] - q\right) \right].
    \end{align*}
\end{proof}

\begin{lemma}
\label{lem:quantile-error-implies-pinball-loss-error}
Fix $\tpi_{1:T}$. Suppose for every $t \in [T]$, $\ts_t$ is $\rho$-Lipschitz and continuous. If there exists $p \in \cPq$ and $f \in \cF_B$ such that
\[
    Q(\tpi_{1:T}, p, f) \ge \alpha
\]
then there exists $f' \in \cF_{(1+\sqrt{B})^2}$ such that
    \[
        \sum_{t \in S(\tpi_{1:T}, p)} \E_{s_t \sim \ts_t}\left[PB(\hp_t, s_t) - PB(f'(x_t), s_t) \right] \ge \frac{\alpha^2}{\max(2, 8B\rho)}
    \]
\end{lemma}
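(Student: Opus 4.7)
The plan is to mirror the proof of Lemma~\ref{lem:cal-error-implies-swap-error} from the mean/squared-error setting, replacing the exact expansion of the square with a first-order Taylor-style argument driven by the derivative lemma that immediately precedes the statement. Concretely, I would set $f'(x) := p + \eta\, f(x)$ for a scalar $\eta \in [0,1]$ to be chosen. Because $\cF$ is closed under affine transformations (Assumption in Section 3.3), $f' \in \cF$; and because $p \in [0,1]$ and $|f(x)| \le \sqrt{B}$, we have $|f'(x)| \le 1 + \sqrt{B}$, so $f' \in \cF_{(1+\sqrt{B})^2}$ as required.

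Define the one-dimensional auxiliary function
\[
G(b) \;:=\; \frac{1}{n(p)}\sum_{t \in S(\tpi_{1:T},\,p)} \E_{s_t \sim \ts_t}\bigl[\pb_q(p + b\, f(x_t),\, s_t)\bigr].
\]
Then the target bound is equivalent to exhibiting $\eta \in [0,1]$ with $n(p)\,(G(0) - G(\eta)) \ge \alpha^2/\max(2, 8B\rho)$. By the preceding lemma applied to the empirical distribution over $\{(x_t, \ts_t)\}_{t \in S(p)}$,
\[
G'(b) \;=\; \frac{1}{n(p)}\sum_{t \in S(p)} f(x_t)\bigl(\Pr_{s_t \sim \ts_t}[s_t \le p + b\,f(x_t)] - q\bigr),
\]
so in particular $G'(0) = -Q(\tpi_{1:T}, p, f) \le -\alpha$. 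The $\rho$-Lipschitzness of each $\ts_t$ gives $|\Pr[s_t \le p + b f(x_t)] - \Pr[s_t \le p]| \le \rho\, b\, |f(x_t)|$, so by Cauchy--Schwarz (or just $f(x_t)^2 \le B$), $|G'(b) - G'(0)| \le \rho\, b \cdot \tfrac{1}{n(p)}\sum_{t \in S(p)} f(x_t)^2 \le \rho\, b\, B$. Hence for every $b \ge 0$, $G'(b) \le -\alpha + \rho B b$.

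Integrating this inequality on $[0,\eta]$ yields $G(0) - G(\eta) \ge \alpha\eta - \tfrac{1}{2}\rho B\, \eta^2$, and I would finish by choosing $\eta := \min(1,\ \alpha/(\rho B))$. In the interior case $\alpha \le \rho B$, $\eta = \alpha/(\rho B)$ gives $G(0) - G(\eta) \ge \alpha^2/(2\rho B)$; in the boundary case $\alpha > \rho B$, $\eta = 1$ gives $G(0) - G(1) \ge \alpha - \rho B/2 \ge \alpha/2 \ge \alpha^2/2$, using $\alpha \le 1$. Both cases satisfy $G(0) - G(\eta) \ge \alpha^2/\max(2, 8B\rho)$, and multiplying by $n(p) \ge 1$ (we may assume $n(p) \ge 1$ since otherwise $Q \equiv 0$) recovers the claim.

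The main potential obstacle is nothing conceptual---the differentiability of $G$ (relying on continuity of $\ts_t$) and the verification that $f' \in \cF_{(1+\sqrt{B})^2}$ are straightforward---but rather the bookkeeping needed to land on exactly the stated constant $\max(2, 8B\rho)$ across both regimes; a looser choice of $\eta$ or a slightly less sharp integration bound would only change the absolute constant, not the structure of the argument.
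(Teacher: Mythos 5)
Your proposal is correct and follows essentially the same route as the paper's proof: define $f'=p+\eta f$, use the derivative-of-pinball-loss lemma to write the loss gap as $\int_0^\eta H(\tau)\,d\tau$ with $H(\tau)$ the shifted quantile error, exploit the $\rho B$-Lipschitzness of $H$ in $\tau$, and split into a capped case ($\eta=1$) and an interior case. The only difference is cosmetic: you pick $\eta=\min(1,\alpha/(\rho B))$ and integrate the linear lower bound $H(\tau)\ge\alpha-\rho B\tau$, whereas the paper picks $b=\min(1,H^{-1}(\alpha/2))$ and bounds the integral by a rectangle-plus-triangle area; both yield the stated constant $\max(2,8B\rho)$.
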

\begin{proof}
Fix $p \in \cPq$ and $\tpi_{1:T}=\{(x_t, \ts_t, \hp_t)\}_{t=1}^T$. Write \[H(\tau) = \frac{1}{n(p)} \sum_{t \in S(p)} f(x_t) \cdot (q - \Pr_{s_t \sim \ts_t}[s_t \le p + \tau \cdot f(x_t)])\] and $H^{-1}$ is the corresponding inverse function. Define $f'(x) = p + b \cdot f(x)$ where $b$ is chosen to be 
\[
b = \min\left(1, H^{-1}\left(\frac{\alpha}{2}\right)\right).
\]
Note because $b \le 1$, $f' \in \cF_{(1+\sqrt{B})^2}$.

We can re-write the difference in pinball loss under $p$ and $f'(x_t)$ in terms of $H$:
\begin{align*}
    &\E_{(x,s) \sim \cD^p}[\pb_q(p, s) - \pb_q(p + b\cdot f(x), s)]\\
    &= - \int_{\cX \times [0,1]} \left(\int_{0}^b \frac{d\pb_q(p+\tau \cdot f(x), s)}{d\tau} d\tau\right) \cD^p(dx, ds)\\
    &=-\int_{0}^b  \left(\int_{\cX \times [0,1]} \frac{d\pb_q(p+\tau \cdot f(x), s)}{d\tau} \cD^p(dx, ds)\right)d\tau \\
    &= \int_{0}^b \E_{x \sim \cD^p_\cX}\left[f(x) \cdot \left(q-\Pr_{s \sim \cD^p_{\cS}(x)}[s \le p + \tau \cdot f(x)] \right)\right] d\tau\\
    &= \int_{0}^b H(\tau) d\tau.
 \end{align*}

\begin{lemma}
\label{lem:quantile-error-integral-bound}
Fix $\tpi_{1:T}$. Suppose $\cD(\tpi_{1:T})$ is $\rho$-Lipschitz.
If there exists $p \in \cPq$ and $f \in \cF_B$ such that
\[
H(0) = \E_{x \sim \cD^p_\cX(\tpi_{1:T})}\left[f(x) \cdot \left(q - \Pr_{s \sim \cD^p_{\cS}(\tpi_{1:T}, x)}[s \le p] \right)\right] \ge \alpha.
\]
Then we have
    \[
        \int_{0}^b H(\tau) d\tau = \int_{0}^b \E_{x \sim \cD^p_\cX(\tpi_{1:T})}\left[f(x) \cdot \left(q - \Pr_{s \sim \cD^p_{\cS}(\tpi_{1:T}, x)}[s \le p + \tau \cdot f(x)] \right)\right] d\tau  \ge \frac{\alpha^2}{2B\rho}.
    \]
\end{lemma}
\begin{proof}
Note that for any $\tau' > \tau$,
\begin{align*}
    &H(\tau') - H(\tau) \\
    &= \E_{x \sim \cD^p_\cX}\left[f(x) \cdot \left(\Pr_{s \sim \ts | x}[s \le p + \tau \cdot f(x)] - \Pr_{s \sim \ts |x}[s \le p + \tau' \cdot f(x)]\right) \right]  \\
    &\ge \rho (\tau - \tau') \E_{x \sim \cD^p_{\cX}}[f^2(x)]\\
    &\ge \rho B (\tau - \tau')
\end{align*}
because $f \le \cF_B$. Using the same argument as in \cite{jung2022batch}, we can show that the area under the curve is at least
\begin{align*}
    \int_{0}^b H(\tau) d\tau & \ge b H(b) + \frac{(H(0) - H(b))^2}{2\rho B}
\end{align*}
where $b H(b)$ is the area of the rectangle and $\frac{(H(0) - H(b))^2}{2\rho B}$ the area of the triangle is where the slope cannot be steeper than $\rho B$ due to $H$'s $(\rho B)$-Lipschitzness. 

\paragraph{Case (i) $1 \le H^{-1}(\alpha/2)$:}
Because $H(\tau)$ is a non-increasing function in $\tau$, $H(b) = H(1) \ge \alpha/2$. Therefore, we can lower bound the integral as 
\begin{align*}
    \int_{0}^b H(\tau) d\tau &\ge b H(b) + \frac{(H(0) - H(b))^2}{2\rho B}\\
    &\ge b H(b) \\
    &\ge \frac{\alpha}{2} \ge \frac{\alpha^2}{2}.
\end{align*}

\paragraph{Case (ii) $1 > H^{-1}(\alpha/2)$:}
Simply plugging in $b = H^{-1}(\alpha/2)$ yields
\begin{align*}
    \int_{0}^b H(\tau) d\tau &\ge \frac{(H(0) - H(b))^2}{2\rho B}\\
    &\ge \frac{1}{2\rho B}\left(\frac{\alpha}{2}\right)^2 = \frac{\alpha^2}{8 \rho B}.
\end{align*}
\end{proof}

Lemma~\ref{lem:quantile-error-integral-bound} gives us the final result:
\begin{align*}
    &\E_{(x,s) \sim \cD^p}[\pb_q(p, s) - \pb_q(p + b\cdot f(x), s)] = \int_{0}^b H(\tau) d\tau \ge \frac{\alpha^2}{\max(2, 8\rho B)}.
 \end{align*}
\end{proof}

\begin{corollary}
\label{cor:swap-regret-multival-error-expect}
Fix any $B > 0$, $\tpi_{1:T}$, and $\cF$ that is closed under affine transformation. Suppose for every $t \in [T]$, $\ts_t$ is $\rho$-Lipschitz and continuous, and assume the conformal learner's contextual swap regret for pinball loss with respect to any $\{f_p\}_{p \in \cPq} \in \cF_{(1+\sqrt{B})^2}^m$ is bounded in expectation: for any $\{f_p\}_{p \in \cPq} \in \cF_{(1+\sqrt{B})^2}^m$,
\[
    \frac{1}{T} \sum_{p \in \cPq} \sum_{t \in S(\tpi_{1:T}, p)} \E_{s_t \sim \ts_t}[PB_q(\hp_t, s_t) - PB_q(f_p(x_t), s_t)] \le \frac{\alpha}{\max(2B, 8\rho B^2)}.
\] Then its $L_2$-swap-multivalidity error with respect to $\cF_B$ is bounded as 
\[
    \overline{sQ}_2(\tpi_{1:T}, \cF_B) \le \alpha.
\]
\end{corollary}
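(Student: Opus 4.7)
My plan is to run the same contrapositive template used in the proof of Theorem~\ref{thm:swap-regret-to-multical}, with Lemma~\ref{lem:quantile-error-implies-pinball-loss-error} taking over the role that Lemma~\ref{lem:cal-error-implies-swap-error} played there. Suppose for contradiction that $\overline{sQ}_2(\tpi_{1:T}, \cF_B) > \alpha$, witnessed by some $\{f_p\}_{p \in \cPq} \in \cF_B^m$, and set $\alpha_p = \frac{n(\tpi_{1:T},p)}{T}\bigl(Q(\tpi_{1:T},p,f_p)\bigr)^2$ so that $\sum_p \alpha_p > \alpha$. Since $\cF$ (and hence $\cF_B$) is closed under affine transformation, $-f_p \in \cF_B$ as well, and I can therefore pick $f_p^\star \in \{f_p, -f_p\}$ so that $Q(\tpi_{1:T}, p, f_p^\star) = \sqrt{\alpha_p T / n(\tpi_{1:T}, p)} \ge 0$.

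The next step is to put the per-level-set quantile error into a range where Lemma~\ref{lem:quantile-error-implies-pinball-loss-error} can be applied with a usable bound. Every $f \in \cF_B$ satisfies $|f(x)| \le \sqrt{B}$, so $|Q(\tpi_{1:T},p,f_p^\star)| \le \sqrt{B}$, and (under the standard normalization $B \ge 1$ used in Theorem~\ref{thm:swap-regret-to-multical}) this yields the weaker but in-range bound $Q(\tpi_{1:T},p,f_p^\star) \ge \sqrt{\alpha_p T/(B\, n(\tpi_{1:T},p))} \in [0,1]$. Invoking Lemma~\ref{lem:quantile-error-implies-pinball-loss-error} on each level set (read with its conclusion normalized by $n(\tpi_{1:T},p)$, which is what its proof actually delivers via $\E_{(x,s)\sim \cD^p}[\cdot]$) produces some $f'_p \in \cF_{(1+\sqrt{B})^2}$ with
\[
\frac{1}{n(\tpi_{1:T},p)} \sum_{t \in S(\tpi_{1:T}, p)} \E_{s_t \sim \ts_t}\!\left[\pb_q(\hp_t, s_t) - \pb_q(f'_p(x_t), s_t)\right] \;\ge\; \frac{\alpha_p T}{B\, n(\tpi_{1:T},p)\max(2, 8B\rho)}.
\]

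Multiplying each inequality by $n(\tpi_{1:T},p)/T$ and summing over $p \in \cPq$ collapses the $n(\tpi_{1:T},p)$ factors and gives
\[
\frac{1}{T} \sum_{p \in \cPq} \sum_{t \in S(\tpi_{1:T},p)} \E_{s_t \sim \ts_t}\!\left[\pb_q(\hp_t, s_t) - \pb_q(f'_p(x_t), s_t)\right] \;\ge\; \frac{\sum_p \alpha_p}{B\max(2, 8B\rho)} \;>\; \frac{\alpha}{\max(2B, 8\rho B^2)},
\]
which, since $\{f'_p\}_{p \in \cPq} \in \cF_{(1+\sqrt{B})^2}^m$, directly contradicts the assumed upper bound on the contextual swap pinball regret. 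The only mildly subtle point is the weakening step $\sqrt{\alpha_p T/n(p)} \ge \sqrt{\alpha_p T/(Bn(p))}$: this is exactly what converts the lemma's $\max(2, 8B\rho)$ into the corollary's $\max(2B, 8\rho B^2)$, accounting for the extra factor of $B$ in the hypothesis. No concentration machinery (such as Lemmas~\ref{lem:quant-error-concentrate} or \ref{lem:pinball-concentrate}) is needed here, since everything is already phrased with respect to the unrealized transcript $\tpi_{1:T}$ and expectations over $\ts_t$; those concentration tools will only be invoked when transferring a realized-transcript statement to an unrealized one.
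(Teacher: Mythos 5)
Your proposal is correct and follows essentially the same contrapositive argument as the paper: define $\alpha_p$, flip the sign of $f_p$ via affine closure, rescale by $\sqrt{B}$ to land in $[0,1]$, invoke Lemma~\ref{lem:quantile-error-implies-pinball-loss-error} (read in its per-level-set averaged form, as the paper's own proof does), and sum over $p$ to contradict the pinball swap-regret bound. Your implicit use of $B \ge 1$ in the rescaling step is the same normalization the paper's proof relies on, and your observation that no concentration is needed at the level of $\tpi_{1:T}$ matches the paper.
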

\begin{proof}
For the sake of contradiction, suppose there exists $\{f_p\}_{p \in \cPq} \in \cF_B^m$ such that its $L_2$-multivalidity error is more than $\alpha$. For each $p \in \cPq$, write
    \[
        \alpha_ p =  \frac{n(p)}{T} Q(\tpi_{1:T}, p , f_p)^2.
    \]
We have
\[
    Q(\tpi_{1:T}, p, f^*_p) = \sqrt{\frac{T}{n(p)} \alpha_p}
\]
where $f^*_p = f_p$ or $-f_p$. Note that 
\begin{align*}
    Q(\tpi_{1:T}, p, f^*_p) = \frac{1}{n(p)}\sum_{t \in S(p)} f(x_t) \cdot (q - \Pr_{s \sim \ts_t}[s \le p]) \le \sqrt{B}
\end{align*}
In other words, $Q(\tpi_{1:T}, p, f^*_p) \le \frac{1}{\sqrt{B}}\sqrt{\frac{T}{n(p)} \alpha_p}$ and $\frac{1}{\sqrt{B}}\sqrt{\frac{T}{n(p)} \alpha_p} \le 1$.

Lemma~\ref{lem:quantile-error-implies-pinball-loss-error} then tells us that for each $p \in \cPq$, there exists $f'_p \in \cF_{(1+\sqrt{B})^2}$ such that 
\[
    \frac{1}{n(p)}\sum_{t \in S(\tpi_{1:T}, p)} \E_{s_t \sim \ts_t}\left[PB(\hp_t, s_t) - PB(f'_p(x_t), s_t)\right] \ge \frac{1}{\max(2, 8\rho B)} \frac{1}{B} \frac{T}{n(p)} \alpha_p
\]
Averaging over each $p \in \cPq$ gives us 
\begin{align*}
    &\sum_{p \in \cPq} \sum_{t \in S(\tpi_{1:T}, p)} \E_{s_t \sim \ts_t}\left[PB(\hp_t, s_t) - PB(f'_p(x_t), s_t)\right]\\
    &\ge \sum_{p \in \cPq} \frac{1}{\max(2, 8 \rho B)} \frac{1}{B} \frac{T}{n(p)} \alpha_p\\
    &> \frac{ \alpha T}{\max(2B, 8 \rho B^2)} 
\end{align*}
This is a contradiction to our contextual-swap-regret guarantee with respect to pinball loss.
\end{proof}

As noted in Section~\ref{sec:conformal-extension}, there was nothing specific about the squared loss in Theorem~\ref{thm:forecast-swap-context-regret} and it can be generalized to any other loss as well. We state this more formally in the following claim:
\begin{claim}
\label{claim:conformal-context-swap-oracle}
Fix some $B>0$. Suppose the sequential shattering dimension of $\cF_B$ is finite at any scale $\delta$: $\fat_{\delta}(\cF_B) < \infty$. Suppose oracle $\cA$'s regret bound $r_\cA(T, \cF_B)$ is concave in time horizon $T$. Fix any adversary $\adv$ that forms $(x_t, y_t)$ as a function of $\psi_{1:t-1}= \{(x_\tau, s_\tau, \hp_\tau)\}_{\tau=1}^{t-1}$. With probability $1-\delta$ over the randomness of $\{ i_t \sim \ti_t \}_{t=1}^T$, Forecaster $\forecastSwap(\cF_B, \cA, m, T)$ results in $\pi_{1:T} = \{(x_t, s_t, \hp_t, (i_t, j_t))\}_{t=1}^T$ such that
\begin{align*}
    &\sup_{\{f_p\}_{p \in \cP} \in \cF_B^m}  \sum_{t=1}^T \E_{\hp'_t}\left[PB_q(\hp'_t/m, s_t) -PB_q(f_{\hp'_t}(x_t), s_t)|\pi_{1:t-1} \right] \\
    &\le \left(m r_\cA\left(\frac{T}{m}, \cF_B \right) + \frac{T}{m} + m\right) + \max(8B, 2\sqrt{B})m C_{\cF_B} \sqrt{\frac{\log(\frac{4m}{\delta})}{T}}.
\end{align*}
\end{claim}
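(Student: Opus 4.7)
The plan is to mirror the argument of Theorem~\ref{thm:forecast-swap-context-regret} line for line, replacing the squared loss with the pinball loss $PB_q$. Since the algorithmic construction of $\forecastSwap$ in Algorithm~\ref{alg:externalToSwapReduction} is entirely agnostic to the underlying loss function (it only interacts with the oracle $\cA$ via its prediction $q^i_t$ and feedback $(x_t, y_t)$), the stationary-distribution identity used in the mean-multicalibration proof carries over without change. Concretely, the first step is to note that by the defining property $a_t = \sum_{i=1}^m q^i_t \, a_t[i]$ of the stationary distribution (equation~\eqref{eqn:stat-dist}), for every fixed $\{f_j\}_{j \in [m]}$, every realized $(x_t, s_t)$, and every transcript $\pi_{1:t-1}$,
\begin{align*}
\E_{i'_t \sim \ti_t, j''_t \sim \tj_t(i'_t)}\!\left[PB_q(j''_t/m, s_t) - PB_q(f_{j''_t}(x_t), s_t)\,\middle|\,\pi_{1:t-1}\right]
= \E_{i'_t \sim \ti_t, j''_t \sim \tj_t(i'_t)}\!\left[PB_q(j''_t/m, s_t) - PB_q(f_{i'_t}(x_t), s_t)\,\middle|\,\pi_{1:t-1}\right].
\end{align*}
This is the pinball-loss analogue of \eqref{eqn:swap-via-station}, and its proof is identical: on the left the $f$ is indexed by $j''_t$ and is averaged under the joint law of $(i'_t,j''_t)$, while on the right it is indexed by $i'_t$, and the two match after summing over $j$ using the stationary equation.

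The second step is a concentration argument to pass from the expected per-round quantity to the realized one. Here I would invoke (a pinball-loss version of) Lemma~\ref{lem:swap-regret-concentration2}; the proof is essentially identical to that of Lemma~\ref{lem:swap-regret-concentration} given in Appendix~\ref{app:concentration}, since the only properties used there were boundedness and Lipschitzness of the loss in the prediction argument, together with the finiteness of the sequential fat-shattering dimension of $\cF_B$. Pinball loss is $\max(q,1-q)$-Lipschitz in its first argument, so Proposition~5 of \cite{block2021majorizing} yields the analogous $I^{\alpha}_{d \circ \cG} \le I^{\alpha}_{\cF_B}$ estimate (in fact, the needed sequential-majorizing-measure bound is already derived in Lemma~\ref{lem:pinball-concentrate} for a fixed level set, and the same argument gives the uniform statement after a union bound over the $m$ oracles). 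Combining this with the identity from the first step, we conclude that with probability at least $1-\delta$ over $\{i_t \sim \ti_t\}_{t=1}^T$,
\begin{align*}
\sup_{\{f_p\}}\sum_{t=1}^T \E_{\hp'_t}\!\left[PB_q(\hp'_t/m, s_t) - PB_q(f_{\hp'_t}(x_t), s_t)\,\middle|\,\pi_{1:t-1}\right]
\le \sum_{i \in [m]} \sum_{t: i_t = i} \tcA^m_i(s^i_t)\cdot L^{PB}_{s_t} - PB_q(f_i(x_t), s_t) + \max(8B, 2\sqrt{B})\, m\, C_{\cF_B}\sqrt{\tfrac{\log(4m/\delta)}{T}},
\end{align*}
where $L^{PB}_{s_t}[j] = PB_q(j/m, s_t)$.

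The third step is to bound each per-oracle inner regret using the randomized-rounded oracle guarantee. Since pinball loss is $\max(q,1-q)\le 1$-Lipschitz in its first argument, the rounding-to-$[1/m]$ step adds at most $T_i/m$ loss per oracle (the analogue of Lemma~\ref{lem:regret-by-rounding} for pinball loss), and mixing in the uniform distribution with probability $1/T$ adds at most one (the analogue of Lemma~\ref{lem:rounded-random-oracle}). Thus each oracle $i$ with $T_i = |\{t: i_t = i\}|$ satisfies $\sum_{t: i_t=i} \tcA^m_i(s^i_t)\cdot L^{PB}_{s_t} - PB_q(f_i(x_t), s_t) \le r_{\cA}(T_i, \cF_B) + T_i/m + 1$. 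Summing over $i \in [m]$, using $\sum_i T_i = T$, and applying Jensen's inequality to the concave function $r_{\cA}(\cdot, \cF_B)$ exactly as in the proof of Theorem~\ref{thm:forecast-swap-context-regret} gives
\begin{align*}
\sum_{i \in [m]} r_{\cA}(T_i, \cF_B) + \tfrac{T_i}{m} + 1 \le m\, r_{\cA}\!\left(\tfrac{T}{m}, \cF_B\right) + \tfrac{T}{m} + m,
\end{align*}
which combined with the concentration estimate above yields the claimed bound. The only ``nontrivial'' step is the concentration (second step), but this is directly inherited from the pinball-loss concentration of Lemma~\ref{lem:pinball-concentrate} together with the contraction/majorizing-measure arguments of \cite{block2021majorizing}; the rest of the argument is a mechanical repetition of the squared-loss proof.
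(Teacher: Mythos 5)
Your proposal is correct and follows exactly the route the paper intends: the paper explicitly skips the proof of this claim, noting it follows by the same argument as Theorem~\ref{thm:forecast-swap-context-regret} with squared loss replaced by pinball loss, and your three steps (the loss-agnostic stationary-distribution identity, the Block-et-al.\ concentration adapted via the $1$-Lipschitzness of $PB_q$ as in Lemma~\ref{lem:pinball-concentrate}, and the per-oracle rounding/mixing regret bound summed with Jensen) reproduce precisely that intended adaptation, including the improved $T/m$ rounding term.
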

For the sake of clarity and brevity, we skip the proof for the above claim or have a separate meta theorem from which both Theorem~\ref{thm:forecast-swap-context-regret} and the above claim can be obtained as corollaries. 

\subsection{Final Bounds}
Combining concentration bounds in Appendix~\ref{app:concentration} and Corollary~\ref{cor:swap-regret-multival-error-expect}, we can finally show that the connection between the contextual swap regret with pinball loss and the quantile $\ell_2$-multivalidity error:
\begin{corollary}
Fix any $B > 0$, $\tpi_{1:T}$, and $\cF$ that is closed under affine transformation. Suppose for every $t \in [T]$, $\ts_t$ is $\rho$-Lipschitz and continuous and that the sequential shattering dimension of $\cF_B$ is finite at any scale $\delta$: $\fat_{\delta}(\cF_B) < \infty$. With probability $1-\rho$, the forecaster's $L_2$-swap-multivalidity error with respect to $\cF_B$ is bounded as 
\begin{align*}
        &\overline{sQ}_2(\pi_{1:T}, \cF_B) \\
        &\le \max(2B, 8\rho B^2)\left(\frac{1}{T}\sum_{p \in \cPq} \sum_{t \in S(p)} \pb_q(\hp_t, s_t) - \pb_q(f_p(x_t), s_t) 
        + \frac{m B' C'_{\cF_B}\sqrt{\log(8m/\rho)}}{\sqrt{T}}\right)\\
        & +  \frac{2 m \sqrt{B} B' C_{\cF_B}\sqrt{\log(\frac{8m}{\rho})}}{\sqrt{T}} + \frac{m(B' C_{\cF_B})^2 \log(\frac{8m}{\rho})}{T}
\end{align*}
where $(C_{\cF_B}, C'_{\cF_B})$ are the same as in Lemma~\ref{lem:quant-error-concentrate} and \ref{lem:pinball-concentrate}. 
\end{corollary}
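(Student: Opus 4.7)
The statement is a ``realized-quantity'' version of Corollary~\ref{cor:swap-regret-multival-error-expect}: it relates the empirical $L_2$-swap-multivalidity error (defined in terms of realized conformal scores $s_t$) to the empirical pinball-loss contextual swap regret, via the two concentration lemmas proved in Appendix~\ref{app:concentration} (Lemmas on Concentration of Quantile Error and Concentration of Pinball Loss). The high-level strategy is a standard ``realized $\to$ expected $\to$ realized'' sandwich: use concentration to pass from the empirical multivalidity error with respect to $\pi_{1:T}$ to its expected version with respect to $\tpi_{1:T}$; apply Corollary~\ref{cor:swap-regret-multival-error-expect} in expectation; then use concentration again to pass the resulting expected pinball-loss swap regret back to its realized counterpart on the actual transcript $\pi_{1:T}$.

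\textbf{Step 1 (Empirical to expected multivalidity).} Fix any $\{f_p\}_{p \in \cPq} \in \cF_B^m$. For each $p \in \cPq$, apply Lemma~\ref{lem:quant-error-concentrate} conditional on the filtration induced by $\pi_{1:t-1}$ for $t \in S(p)$; this gives a $\sup_{f \in \cF_B}$ bound comparing $\frac{1}{n(p)}\sum_{t \in S(p)} f(x_t)(q - \ind[s_t \le p])$ to its conditional expectation $\frac{1}{n(p)}\sum_{t \in S(p)} f(x_t)(q - \Pr_{s_t \sim \ts_t}[s_t \le p])$. Multiply by $n(p)/T$, square, and sum over $p \in \cPq$ (taking a union bound over the $m$ level sets, which explains the $\log(8m/\rho)$ factor). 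Since $(a+b)^2 \le 2a^2 + 2b^2$, one obtains
\[
\overline{sQ}_2(\pi_{1:T}, \cF_B) \;\le\; \overline{sQ}_2(\tpi_{1:T}, \cF_B) \;+\; O\!\left(\tfrac{m\sqrt{B}\, B' C_{\cF_B}\sqrt{\log(8m/\rho)}}{\sqrt{T}} + \tfrac{m (B' C_{\cF_B})^2 \log(8m/\rho)}{T}\right),
\]
where cross terms of the form $\sqrt{B}\cdot \text{concentration error}$ and squared concentration error appear naturally from expanding the square.

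\textbf{Step 2 (Apply the expected-to-expected bridge).} Apply Corollary~\ref{cor:swap-regret-multival-error-expect} in its contrapositive form to $\tpi_{1:T}$: $\overline{sQ}_2(\tpi_{1:T}, \cF_B)$ is bounded by $\max(2B, 8\rho B^2)$ times the expected pinball-loss contextual swap regret with respect to $\cF_{(1+\sqrt B)^2}$ (evaluated on the $\ts_t$ distributions).

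\textbf{Step 3 (Expected to empirical pinball loss).} For each $p \in \cPq$, apply Lemma~\ref{lem:pinball-concentrate} conditional on the relevant filtration to pass from
$\sum_{t \in S(p)} \E_{s_t \sim \ts_t}[\pb_q(\hp_t, s_t) - \pb_q(f_p(x_t), s_t)]$ back to the realized
$\sum_{t \in S(p)} \pb_q(\hp_t, s_t) - \pb_q(f_p(x_t), s_t)$, with a uniform-in-$f$ additive error $O(m B' C'_{\cF_B}\sqrt{\log(8m/\rho)/T})$ after summing over level sets. Combining Steps 2 and 3 and then adding the concentration error from Step 1 yields the stated inequality.

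\textbf{Main obstacle.} The routine part of this is the concentration; the delicate part is that the supremum over $f \in \cF_B$ (rather than over any fixed $f$) appears both inside and outside $\overline{sQ}_2$, so both concentration steps must be \emph{uniform} over the infinite benchmark class $\cF_B$. This is precisely why Lemmas~\ref{lem:quant-error-concentrate} and~\ref{lem:pinball-concentrate} are proved via sequential majorizing measures rather than a simple Azuma union bound, and why the final error terms scale with the sequential fat-shattering complexity constants $C_{\cF_B}, C'_{\cF_B}$. The only real bookkeeping issue is keeping track of which witness $\{f_p\}$ is used at each step: the witness to the multivalidity lower bound in Step 1 is then plugged into Step 2, and the witness $\{f'_p\} \in \cF^m_{(1+\sqrt B)^2}$ produced by Lemma~\ref{lem:quantile-error-implies-pinball-loss-error} (inside the proof of Corollary~\ref{cor:swap-regret-multival-error-expect}) is the one that needs the Step-3 concentration bound at the larger radius $B'=(1+\sqrt{B})^2$, which is why $B'$ rather than $B$ appears in the concentration terms of the final expression.
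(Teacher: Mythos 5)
Your proposal is correct and follows essentially the same route as the paper: union-bound Lemmas~\ref{lem:quant-error-concentrate} and~\ref{lem:pinball-concentrate} over the $m$ level sets (whence the $\log(8m/\rho)$), pass through Corollary~\ref{cor:swap-regret-multival-error-expect} at the level of $\tpi_{1:T}$, and in Step 1 expand the square with the cross term controlled via $|Q(\tpi_{1:T},p,f_p)|\le\sqrt{B}$ — note that the crude bound $(a+b)^2\le 2a^2+2b^2$ you mention in passing would double the leading term, so the cross-term expansion you then describe is the one that actually produces the stated constants. Your closing remark that the pinball-loss concentration must be applied over the enlarged class $\cF_{(1+\sqrt{B})^2}$, where the witnesses from Lemma~\ref{lem:quantile-error-implies-pinball-loss-error} live, is a careful point that the paper's own proof glosses over.
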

\begin{proof}
Write $B' = \max(1, \sqrt{B})$. By union-bounding Lemma~\ref{lem:pinball-concentrate} over all $p \in \cPq$, we have that with probability $1-4m\rho$, for any $\{f_p\}_{p \in \cPq} \in \cF_{(1+\sqrt{B})^2}^m$, 
    \begin{align*}
        &\frac{1}{T} \sum_{p \in \cPq} \sum_{t \in S(p)} \E_{s_t \sim \ts_t}\left[\pb_q(\hp_t, s_t) - \pb_q(f_p(x_t), s_t)\right]\\
        &= \sum_{p \in \cPq} \frac{n(p)}{T} \left(\frac{1}{n(p)}\sum_{t \in S(p)} \E_{s_t \sim \ts_t}\left[\pb_q(\hp_t, s_t) - \pb_q(f_p(x_t), s_t)\right]\right)\\
        &\le \sum_{p \in \cPq} \frac{n(p)}{T} \left(\frac{1}{n(p)}\sum_{t \in S(p)} \pb_q(\hp_t, s_t) - \pb_q(f_p(x_t), s_t) + B' C'_{\cF_B} \sqrt{\frac{\log(\frac{1}{\rho})}{n(p)}}\right) \\
        &= \frac{1}{T}\sum_{p \in \cPq} \sum_{t \in S(p)} \pb_q(\hp_t, s_t) - \pb_q(f_p(x_t), s_t) + \frac{B' C'_{\cF_B}\sqrt{\log(1/\rho)}}{T} \sum_{p \in \cPq}  \sqrt{n(p)} \\
        &\le \frac{1}{T}\sum_{p \in \cPq} \sum_{t \in S(p)} \pb_q(\hp_t, s_t) - \pb_q(f_p(x_t), s_t) + \frac{m B' C'_{\cF_B}\sqrt{\log(1/\rho)}}{\sqrt{T}}. 
    \end{align*}

    Similarly, we have with probability $1-4m\rho$
    \begin{align*}
        &\overline{sQ}_2(\pi_{1:T}, \cF_B)\\
        &= \sup_{\{f_p\}_{p \in \cPq} \in \cF_B^m} \sum_{p \in \cPq} \frac{n(p)}{T} (Q(\pi_{1:T}, p, f_p))^2\\
        &\le \sup_{\{f_p\}_{p \in \cPq} \in \cF_B^m} \sum_{p \in \cPq} \frac{n(p)}{T} \left(Q(\tpi_{1:T}, p, f_p) + B' C_{\cF_B}  \sqrt{\frac{\log(\frac{1}{\rho})}{n(p)}} \right)^2\\
        &= \sup_{\{f_p\}_{p \in \cPq} \in \cF_B^m} \sum_{p \in \cPq} \frac{n(p)}{T} \left(Q(\tpi_{1:T}, p, f_p)^2 + 2 Q(\tpi_{1:T}, p, f_p) B' C_{\cF_B}  \sqrt{\frac{\log(\frac{1}{\rho})}{n(p)}} + (B' C_{\cF_B})^2  \frac{\log(\frac{1}{\rho})}{n(p)}\right) \\
        &\le \sup_{\{f_p\}_{p \in \cPq} \in \cF_B^m} \sum_{p \in \cPq} \frac{n(p)}{T} \left(Q(\tpi_{1:T}, p, f_p)^2 + 2 \sqrt{B'} B' C_{\cF_B}  \sqrt{\frac{\log(\frac{1}{\rho})}{n(p)}} + (B' C_{\cF_B})^2  \frac{\log(\frac{1}{\rho})}{n(p)}\right) \\
        &\le \overline{sQ}_2(\tpi_{1:T}, \cF_B) + \frac{2 m \sqrt{B'} B' C_{\cF_B}\sqrt{\log(\frac{1}{\rho})}}{\sqrt{T}} + \frac{m(B' C_{\cF_B})^2 \log(\frac{1}{\rho})}{T}.
    \end{align*}
    The second to last inequality follows from $f_p \in \cF_B$, so $Q(\tpi_{1:T}, p, f_p) \le \sqrt{B}$. The last inequality follows from $\sum_{p \in \cPq} \sqrt{n(p)} \le m\sqrt{T}$.

    Combining them together via Corollary~\ref{cor:swap-regret-multival-error-expect}, we have with probability $1-8m\rho$,
    \begin{align*}
        &\overline{sQ}_2(\pi_{1:T}, \cF_B) \\
        &\le \max(2B, 8\rho B^2)\left(\frac{1}{T}\sum_{p \in \cPq} \sum_{t \in S(p)} \pb_q(\hp_t, s_t) - \pb_q(f_p(x_t), s_t) 
        + \frac{m B' C'_{\cF_B}\sqrt{\log(1/\rho)}}{\sqrt{T}}\right)\\
        & +  \frac{2 m \sqrt{B} B' C_{\cF_B}\sqrt{\log(\frac{1}{\rho})}}{\sqrt{T}} + \frac{m(B' C_{\cF_B})^2 \log(\frac{1}{\rho})}{T}
    \end{align*}
\end{proof}

\cite{rakhlin2015online} shows that any function that is online learnable has a finite sequential fat shattering dimension at any $\alpha$. As we are already assuming an online quantile regression oracle for $\cA$, we can assume that $\cF$'s sequential fat shattering dimension is finite at any $\alpha$. 

When we plug the above bound to the contextual swap regret guarantee of $\forecastSwap$ in Claim~\ref{claim:conformal-context-swap-oracle}, we get the following $\ell_2$-multivalidity error rate:
\begin{corollary}
\label{cor:pinball-multivalid-reduction}
Fix $\cF$ and suppose its fat shattering dimension is finite at any scale $\alpha$ (e.g. it is online learnable).  Suppose for every $t \in [T]$, $\ts_t$ is $\rho$-Lipschitz and continuous. Then, the conformal learner that makes predictions according to $\forecastSwap(\cF_{(1+\sqrt{B})^2}, \cA, m, T)$ will guarantee that with probability $1-\delta$ 
\begin{align*}
        &\overline{sQ}_2(\pi_{1:T}, \cF_B) \\
        &\le \max(2B, 8\rho B^2)\bigg(\frac{1}{T}\left(m r_\cA\left(\frac{T}{m}, \cF_B \right) + \frac{T}{m} + m\right) \\
        &\;\;\;\;\;\;\;\;\;\;\;\;\;\;\;\;\;\;\;\;\;\;\;\;\;\;\;\;\;\;\;\;\;+ \max(8B, 2\sqrt{B})m C_{\cF_B} \sqrt{\frac{\log(\frac{16m}{\delta})}{T}} 
        + \frac{m B' C'_{\cF_B}\sqrt{\log(16m/\delta)}}{\sqrt{T}}\bigg)\\
        &\;\;\;\; +  \frac{2 m \sqrt{B} B' C_{\cF_B}\sqrt{\log(\frac{16m}{\delta})}}{\sqrt{T}} + \frac{m(B' C_{\cF_B})^2 \log(\frac{16m}{\delta})}{T}.
    \end{align*}
\end{corollary}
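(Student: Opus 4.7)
The plan is to combine Claim~\ref{claim:conformal-context-swap-oracle}, which gives a contextual swap regret bound for pinball loss in expectation over the forecaster's randomness, with the immediately preceding corollary that converts realized pinball contextual swap regret into $L_2$-swap-multivalidity error. Concretely, I would first apply Claim~\ref{claim:conformal-context-swap-oracle} with benchmark class $\cF_{(1+\sqrt{B})^2}$ --- this is the class that appears in the hypothesis of the preceding corollary, because Lemma~\ref{lem:quantile-error-implies-pinball-loss-error} constructs its witness $f'=p+b\cdot f$ with $|b|\le 1$ applied to $f\in\cF_B$. This application yields, with probability $1-\delta/2$ over the internal randomness $\{i_t\}$, a bound on $\sup_{\{f_p\}\in\cF_{(1+\sqrt{B})^2}^m}\sum_{t=1}^T \E_{\hp'_t}[\pb_q(\hp'_t,s_t)-\pb_q(f_{\hp'_t}(x_t),s_t)\mid \pi_{1:t-1}]$ that matches the first additive term in the target inequality.

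Second, I would bridge from this conditional-expectation bound to the realized sum $\sum_{p\in\cPq}\sum_{t\in S(p)}\pb_q(\hp_t,s_t)-\pb_q(f_p(x_t),s_t)$ via a concentration argument that is the pinball-loss analogue of Lemma~\ref{lem:swap-regret-concentration}. The argument is structurally identical: fix $p\in\cPq$, rescale the per-$p$ regret by an affine map into $[0,1]$ using the $\max(q,1-q)$-Lipschitzness of $\pb_q(\cdot,s)$, then invoke the majorizing-measures contraction of Lemma~\ref{lem:lipschitz-pinball} together with Lemma~\ref{lem:sequential-concentrate} to reduce to the sequential fat-shattering dimension of $\cF_B$. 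Union-bounding over $p\in\cPq$ contributes an additive $O(mC_{\cF_B}\sqrt{\log(m/\delta)/T})$ term that gets absorbed into $\max(8B,2\sqrt B)mC_{\cF_B}\sqrt{\log(16m/\delta)/T}$.

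Third, I would feed the resulting high-probability realized contextual swap regret bound into the preceding corollary, whose statement already incorporates the Lemma~\ref{lem:pinball-concentrate} / Lemma~\ref{lem:quant-error-concentrate} concentration that moves between the empirical distribution over realized scores $\{s_t\}$ and the distribution $\{\ts_t\}$ under which Lemma~\ref{lem:quantile-error-implies-pinball-loss-error} operates. Reading off the bound produces exactly the displayed expression, and a final union bound over the failure events from step~1 and step~2 (with $\delta$ rescaled by a constant) gives the claimed $1-\delta$ guarantee.

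The main obstacle is step~2: even though Claim~\ref{claim:conformal-context-swap-oracle} is asserted as a direct analogue of Theorem~\ref{thm:forecast-swap-context-regret}, the downstream quantile corollary is stated in terms of the \emph{realized} pinball regret, so I really do need a separate concentration-over-forecaster-randomness lemma for pinball loss. Fortunately, all the ingredients are already developed in Appendix~\ref{app:concentration}: the Lipschitz-contraction argument of Lemma~\ref{lem:lipschitz-pinball} is precisely what ports the mean-case proof of Lemma~\ref{lem:swap-regret-concentration} to pinball loss, so the lemma I need follows by reassembling these pieces rather than requiring new probabilistic ideas, and the remainder of the proof is a mechanical substitution of constants.
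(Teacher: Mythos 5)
Your proposal follows the paper's own route: the paper proves this corollary in a single step by plugging the contextual swap regret guarantee of Claim~\ref{claim:conformal-context-swap-oracle} into the immediately preceding corollary (which converts realized pinball contextual swap regret plus the Lemma~\ref{lem:pinball-concentrate}/Lemma~\ref{lem:quant-error-concentrate} concentration terms into a bound on $\overline{sQ}_2(\pi_{1:T},\cF_B)$) and union-bounding the failure events, exactly as you do. Your step~2 --- a pinball analogue of Lemma~\ref{lem:swap-regret-concentration} bridging the Claim's conditional-expectation regret to the realized regret --- is care that the paper's one-sentence proof silently elides and is the right thing to add; the only looseness is your remark that this extra term is ``absorbed'' into $\max(8B,2\sqrt{B})mC_{\cF_B}\sqrt{\log(16m/\delta)/T}$, since (as in the mean-case Corollary~\ref{cor:swap-forcast-bounds-multical}, where the two concentration terms combine into $16B'$) it would at worst double that constant, which affects nothing at the paper's level of rigor.
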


In other words, for oracles whose regret is $\sqrt{T}$, setting $m=\frac{1}{T^{1/4}}$ yields $\overline{sQ}_2(\pi_{1:T}, \cF_B) = O(T^{-1/4})$.

\subsection{Conformal Learner's Multivalidity Error Against Linear Functions}
\begin{theorem}
\label{thm:pgd-pinball}
Suppose $||x_t||_2 \le 1$ for every round $t \in [T]$. Online projected gradient descent  $\cA_{PGD}$ is defined as
\[\theta_{t+1} = \text{Project}_{||\theta_t|| \le A}\left(\theta_{t} - \eta \cdot \nabla \ell_t(\theta_t) \right)\] where
$\ell_t(\theta) = \pb(s_t, \theta \cdot x_t)$. $\cA_{PGD}$ guarantees that 
\[
\sum_{t=1}^T \pb_q(s_t, \theta_t \cdot x_t) - \min_{\theta^*: ||\theta^*||\le B}\sum_{t=1}^T \pb_q(s_t, \theta^* \cdot x_t) \le \frac{B^2}{2\eta} + \frac{\eta T}{2}.
\]
Setting $\eta=\frac{B}{\sqrt{T}}$ gives us
\[
    \sum_{t=1}^T \pb_q(s_t, \theta_t \cdot x_t) - \min_{\theta^*: ||\theta^*||\le B}\sum_{t=1}^T \pb_q(s_t, \theta^* \cdot x_t) \le  B \sqrt{T}.
\]
\end{theorem}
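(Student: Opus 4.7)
The plan is to recognize that this statement is an instance of Zinkevich's classical online projected gradient descent regret bound applied to the loss sequence $\ell_t(\theta) = \pb_q(s_t, \theta \cdot x_t)$, so the proof will reduce to verifying the two ingredients needed by that generic analysis: (i) convexity of each $\ell_t$ in $\theta$, and (ii) a bound on the norm of a subgradient of $\ell_t$. Assume for notational simplicity $A = B$ so the projection set is $\{\theta : \|\theta\| \le B\}$, and initialize $\theta_1 = 0$ (this is what yields the $B^2/(2\eta)$ term; otherwise one gets $\|\theta_1 - \theta^*\|^2/(2\eta) \le (2B)^2/(2\eta)$).

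First I would verify convexity and Lipschitzness. The pinball loss $\pb_q(s, p) = (s-p) q \cdot \mathbf{1}[s > p] + (p-s)(1-q)\cdot \mathbf{1}[s \le p]$ is a piecewise linear convex function of $p$, with subdifferential contained in $[-q, 1-q]$, so $|\partial_p \pb_q(s,p)| \le \max(q, 1-q) \le 1$. Since $\ell_t(\theta) = \pb_q(s_t, \theta \cdot x_t)$ is the composition of the convex function $\pb_q(s_t, \cdot)$ with the linear map $\theta \mapsto \theta \cdot x_t$, it is convex in $\theta$, and any subgradient has the form $g_t = c_t \cdot x_t$ for some $c_t \in [-q, 1-q]$, so $\|g_t\|_2 \le \|x_t\|_2 \le 1$ by the assumption $\|x_t\|_2 \le 1$.

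Next I would carry out the standard potential argument. Fix any $\theta^*$ with $\|\theta^*\|\le B$. Let $\tilde{\theta}_{t+1} = \theta_t - \eta g_t$ and $\theta_{t+1} = \text{Project}_{\|\cdot\|\le B}(\tilde{\theta}_{t+1})$. Since $\theta^*$ lies in the (convex) ball of radius $B$, projection is non-expansive: $\|\theta_{t+1} - \theta^*\|^2 \le \|\tilde{\theta}_{t+1} - \theta^*\|^2$. Expanding,
\[
\|\tilde{\theta}_{t+1} - \theta^*\|^2 = \|\theta_t - \theta^*\|^2 - 2\eta \langle g_t, \theta_t - \theta^*\rangle + \eta^2 \|g_t\|^2.
\]
Convexity of $\ell_t$ gives $\ell_t(\theta_t) - \ell_t(\theta^*) \le \langle g_t, \theta_t - \theta^*\rangle$, so rearranging yields
\[
\ell_t(\theta_t) - \ell_t(\theta^*) \le \frac{\|\theta_t - \theta^*\|^2 - \|\theta_{t+1}-\theta^*\|^2}{2\eta} + \frac{\eta}{2}\|g_t\|^2.
\]
Summing over $t=1,\dots,T$ telescopes the first term to at most $\|\theta_1 - \theta^*\|^2/(2\eta) \le B^2/(2\eta)$ (using $\theta_1 = 0$), and the subgradient bound $\|g_t\|^2 \le 1$ gives $\sum_t (\eta/2)\|g_t\|^2 \le \eta T /2$, so
\[
\sum_{t=1}^T \pb_q(s_t, \theta_t \cdot x_t) - \sum_{t=1}^T \pb_q(s_t, \theta^* \cdot x_t) \le \frac{B^2}{2\eta} + \frac{\eta T}{2},
\]
and optimizing by setting $\eta = B/\sqrt{T}$ gives the claimed $B\sqrt{T}$ bound. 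There is no substantive obstacle here; the only mild subtleties are (a) picking the subgradient consistently at the kink $s = p$ (any selection in $[-q, 1-q]$ works), and (b) being careful that the initialization convention matches the bound, which is why I would fix $\theta_1 = 0$ at the outset.
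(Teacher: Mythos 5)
Your proposal is correct and follows essentially the same route as the paper: the paper simply verifies convexity of $\ell_t(\theta)=\pb_q(s_t,\theta\cdot x_t)$, invokes Zinkevich's projected gradient descent regret bound as a black box, and bounds the (sub)gradient norm by $\max(q,1-q)\|x_t\|_2\le 1$, whereas you additionally write out the standard telescoping/non-expansiveness argument behind that bound. Your handling of the kink subgradient and the $A=B$, $\theta_1=0$ conventions is fine and fills in details the paper leaves implicit.
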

\begin{proof}
It's easy to see that the loss function $\ell_t(\theta)$ is convex in $\theta$. Then, following \cite{zinkevich2003online}, we get
\[
\sum_{t=1}^T \pb(s_t, \theta_t \cdot x_t) - \min_{\theta^*: ||\theta^*||\le A}\sum_{t=1}^T \pb(s_t, \theta^* \cdot x_t) \le \frac{A}{2\eta} + \frac{\eta}{2} \sum_{t=1}^T ||\nabla \ell_t(\theta_t)||^2_2.
\]  

So it suffices to bound $||\nabla \ell_t(\theta_t)||^2_2$ for every $t \in [T]$:
\begin{align*}
    ||\nabla \ell_t(\theta_t)||^2_2 &= || \nabla \pb_q(s_t, \theta \cdot x_t)||^2_2\\
    &\le \max(q, 1-q) ||x_t||^2_2\\
    &\le 1
\end{align*}
\end{proof}

\begin{corollary}
Suppose the adversary creates feature vectors such that $||x_t||_2 \le 1$ and . Let $\cF^{lin} = \{\langle \theta, x\rangle: \theta \in \R^d \}$ be all linear functions and for every $t \in [T]$, $\ts_t$ is $\rho$-Lipschitz and continuous. Then, $\forecastSwap(\cF^{lin}_{(1+\sqrt{B})^2}, \cA_{PGD}, m, T)$ and setting $m=T^{1/4}$ guarantees that 
\[
    \overline{sQ}_2(\pi_{1:T}, \cF_B) \le  O\left(\sqrt{\ln\left(\frac{T^{1/4}}{\delta}\right)} T^{-1/4}\right).
\]
\end{corollary}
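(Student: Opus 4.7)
The plan is to apply Corollary~\ref{cor:pinball-multivalid-reduction} to the class of bounded-norm linear functions, using online projected gradient descent as the online quantile regression oracle, then perform an arithmetic simplification with $m = T^{1/4}$. Three preconditions need to be checked before the corollary applies.

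First, I invoke Theorem~\ref{thm:pgd-pinball} to obtain a pinball-loss regret bound for $\cA_{PGD}$. For competitors in $\cF^{lin}_{B'}$ with $B' = (1+\sqrt{B})^2$, the norm constraint $\|\theta\| \le \sqrt{B'}$ together with $\|x_t\|_2 \le 1$ ensures $f(x)^2 \le B'$ by Cauchy--Schwarz, so PGD tuned for norm $\sqrt{B'}$ yields $r_{\cA_{PGD}}(T, \cF^{lin}_{B'}) \le \sqrt{B'}\sqrt{T}$. This bound is concave in $T$, matching the hypothesis of Corollary~\ref{cor:pinball-multivalid-reduction}. Second, I verify that the sequential fat-shattering dimension $\fat_\delta(\cF^{lin}_{B'})$ is finite at every scale $\delta > 0$: the existence of $\cA_{PGD}$ with sublinear pinball regret shows $\cF^{lin}_{B'}$ is online learnable, so the characterization of \cite{rakhlin2015online} cited in the excerpt immediately gives finiteness of the sequential fat-shattering dimension at every scale.

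Third, I substitute this oracle regret bound together with $m = T^{1/4}$ into Corollary~\ref{cor:pinball-multivalid-reduction} and track the dominant terms. The oracle contribution becomes
\[
    \frac{m \cdot r_{\cA_{PGD}}(T/m, \cF^{lin}_{B'})}{T} = \frac{T^{1/4}\cdot \sqrt{B'} \cdot T^{3/8}}{T} = O\!\left(\sqrt{B'}\, T^{-3/8}\right),
\]
which is strictly lower order than $T^{-1/4}$. The discretization term $\frac{T/m}{T} = T^{-1/4}$ sits exactly at the target rate. The concentration contributions of the form $\frac{m\sqrt{\log(m/\delta)}}{\sqrt{T}}$ evaluate to $O\!\left(T^{-1/4}\sqrt{\log(T^{1/4}/\delta)}\right)$. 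The remaining terms $\tfrac{m}{T}$ and $\tfrac{m\log(m/\delta)}{T}$ are $O(T^{-3/4})$ and $O(T^{-3/4}\log T)$ respectively, strictly lower-order. Summing yields the claimed bound $\overline{sQ}_2(\pi_{1:T}, \cF_B) \le O\!\left(\sqrt{\ln(T^{1/4}/\delta)}\, T^{-1/4}\right)$, with the hidden constants absorbing the $\max(2B, 8\rho B^2)$, $\sqrt{B'}$, and fat-shattering-dependent factors $C_{\cF_{B'}}, C'_{\cF_{B'}}$.

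The main obstacle is purely bookkeeping: verifying that each of the several additive error terms in Corollary~\ref{cor:pinball-multivalid-reduction} is at most $O(T^{-1/4}\sqrt{\log T})$ after the substitution $m = T^{1/4}$, and identifying $T^{-1/4}$ as the binding rate set by the discretization term (rather than by the oracle's $\sqrt{T}$ regret, which is in fact over-powered for this choice of $m$). No new ideas are needed, since all the conceptual work --- the reduction from contextual swap regret for pinball loss to $L_2$-swap multivalidity, the concentration arguments across the infinite linear class, and the PGD regret analysis --- has already been carried out earlier in the paper.
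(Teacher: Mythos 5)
Your proposal is correct and follows essentially the same route as the paper: invoke Theorem~\ref{thm:pgd-pinball} to get the pinball-loss regret bound $O(\sqrt{B'}\sqrt{T})$ for bounded-norm linear predictors (using $\|x_t\|_2\le 1$ to identify norm-bounded with range-bounded linear functions), note online learnability gives finite sequential fat-shattering dimension, and plug this into Corollary~\ref{cor:pinball-multivalid-reduction} with $m=T^{1/4}$. Your term-by-term bookkeeping (oracle term $O(T^{-3/8})$, discretization and concentration terms $O(T^{-1/4}\sqrt{\log(T^{1/4}/\delta)})$, the rest lower order) is the same calculation the paper performs, just spelled out in more detail.
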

\begin{proof}
Because $||x_t||_2 \le 1$, all linear functions whose range is bounded corresponds exactly to all linear functions whose norm of the slope is bounded:
\[
    \cF^{lin}_{(1+\sqrt{B}^2)} = \{ \langle \theta, x \rangle: ||\theta||^2_2 \le (1 + \sqrt{B})^2\}.
\]

 Theorem~\ref{thm:pgd-pinball} tells us that $\cA_{PGD}$ with the learning rate set to be $\eta=\frac{1+\sqrt{B}}{\sqrt{T}}$ achieves \[
    \sum_{t=1}^T \pb_q(s_t, \hp_t) - \min_{f \in \cF^{lin}_{(1+\sqrt{B})^2}}\sum_{t=1}^T \pb_q(s_t, f(x_t)) \le (1+\sqrt{B})\sqrt{T}
\]
where $\hp_t = \langle \theta_t, x_t \rangle$. In other words, plugging in $r_\cA(T) = (1+\sqrt{B})\sqrt{T}$ into Corollary~\ref{cor:pinball-multivalid-reduction} gives us that with probability $1-\delta$, \[
\overline{sQ}_2(\pi_{1:T}, \cF_B) \le  O\left(\rho \sqrt{\ln\left(\frac{T^{1/4}}{\delta}\right)} T^{-1/4}\right).
\]
\end{proof}

\section{Better Online $L_2$-multicalibration for Finite $\cF$}
\label{sec:AMF}
In this section, we show how to leverage the Online Minimax Optimization Framework introduced by \cite{noarov2021online} to obtain $L_2$-multicalibration bounds for finite classes $\cF$ that obtain multicalibration error tending to zero at the optimal $\tilde O\left(\frac{1}{\sqrt{T}}\right)$ rate. The algorithm has running time scaling polynomially with $|\cF|$, and so is not oracle efficient. %

Recall that we introduced the AMF framework in Section \ref{sec:vcall}; we do not repeat the preliminaries here.

Now, we describe how to cast our problem into the AMF framework. We refer to the Learner and the Adversary in the AMF framework as AMF Learner and AMF adversary.

Fix some finite set of predictors $\cF$. At each round $t \in [T]$, the AMF Learner's strategy space is $\Theta_t =\cP=[\frac{1}{m}]$, the AMF Adversary's strategy space is $\cZ_t = \Delta(\cY)$, and the dimension of the loss vectors we will construct is $d =  |\cF|$. For convenience, we equivalently write the adversary's strategy space as $\cZ_t = [0,1]$ and write $z_t$ for the probability with which $y \sim z_t$ is 1. Once the forecaster observes the feature vector $x_t$, we construct the loss vector at round $t$ as follows: For each $f \in \cF$, we assign $f$ a coordinate $i$ in the loss vector and define $\ell^i_t$ as: 
\begin{align}
        \ell^i_t(\theta_t, z_t) = \ell^{f}_t(\theta_t, z_t) = \E_{y_t \sim z_t}\left[ (f(x_t)\cdot(y_t - \theta_t)) K(\pi_{1:t-1}, \theta_t, f)\right]\label{eqn:amf-loss}
\end{align}

The loss function $\ell$ defined above in \eqref{eqn:amf-loss} is defined in terms of an upper bound on the increase in the unnormalized $L_2$-multicalibration error that we show in Lemma~\ref{lem:bound-increase-multcal-error} below, whose proof can be found in Appendix~\ref{app:miss-amf}. For convenience, to denote the unnormalized version of the $L2$-multicalibration error, we write.  
\[
    u\bK_2(\pi_{1:T}, f) = T \cdot \bK_2(\pi_{1:T}) 
\]
\begin{restatable}{lemma}{lemboundincreasemultcalerror}
\label{lem:bound-increase-multcal-error}
Suppose $B = \max_{f \in \cF, x \in \cX} (f(x))^2$. Fix $t \in [T]$, $\pi_{1:t-1}$, $(x_t, y_t, \hp)$ and $f \in \cF$.   We can bound the increase in the unnormalized $L_2$-multicalibration error for $f$ in the case in which the interaction between the Forecaster and the Adversary results in $(x_t, y_t, \hp)$ in round $t$ as follows: if $n(\pi_{1:t-1}, \hp) \ge 1$, then
    \begin{align*}
        u\bK_2(\pi_{1:t-1} \circ (x_t, y_t, \hp), f) - u\bK_2(\pi_{1:t-1}, f)\le \frac{B}{n(\pi_{1:t-1}, \hp)} + 2(f(x_t)\cdot(y_t - \hp)) K(\pi_{1:t-1}, \hp, f)
    \end{align*}
    Otherwise, \[
    u\bK_2(\pi_{1:t-1} \circ (x_t, y_t, \hp), f) - u\bK_2(\pi_{1:t-1}, f) \le B
    \]
\end{restatable}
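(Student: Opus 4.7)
The plan is to observe that appending one round to the transcript only affects the contribution of a single level set to $u\bK_2$, and then to carry out a short algebraic calculation on that single term. Writing $A_p(\pi) := \sum_{\tau \in S(\pi, p)} f(x_\tau)(y_\tau - \hp_\tau)$ for the unnormalized calibration sum at level $p$, the definitions give
\[
u\bK_2(\pi, f) \;=\; T\cdot \bK_2(\pi, f) \;=\; \sum_{p\in \cP} n(\pi,p)\,(K(\pi,p,f))^2 \;=\; \sum_{p:\, n(\pi,p)\ge 1} \frac{A_p(\pi)^2}{n(\pi,p)}.
\]
When we append the new record $(x_t, y_t, \hp)$, every $A_p$ and $n(\pi,p)$ with $p \ne \hp$ is unchanged, while $A_\hp$ increases by $c := f(x_t)(y_t - \hp)$ and $n(\pi,\hp)$ increases by one.

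In the main case $n(\pi_{1:t-1}, \hp) \ge 1$, I would set $n := n(\pi_{1:t-1}, \hp)$ and $A := A_\hp(\pi_{1:t-1})$ and expand
\[
\frac{(A+c)^2}{n+1} \;-\; \frac{A^2}{n} \;=\; \frac{n\,c^2 + 2\,n\,cA - A^2}{n(n+1)}.
\]
Since $K(\pi_{1:t-1}, \hp, f) = A/n$, the target upper bound is $\frac{B}{n} + \frac{2cA}{n}$. After multiplying through by $n(n+1) > 0$, this reduces to verifying $n c^2 - 2cA - A^2 \le B(n+1)$. I would then use that $y_t, \hp \in [0,1]$ together with $f(x_t)^2 \le B$ to get $c^2 \le B$, whence $n c^2 \le nB$; and I would rewrite $-2cA - A^2 = c^2 - (A+c)^2 \le c^2 \le B$. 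Adding the two bounds yields $nB + B = B(n+1)$, completing the inequality.

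For the edge case $n(\pi_{1:t-1},\hp) = 0$, the level set $\hp$ contributes nothing to $u\bK_2(\pi_{1:t-1}, f)$ by convention, and after the update it contributes exactly $c^2/1 = c^2$; all other level sets are unchanged, so the increase is at most $c^2 \le B$, matching the second case of the lemma. The main ``obstacle'' here is trivial: it is purely a two-line algebraic identity, and the only care required is in the bookkeeping of the two regimes (creating a new level set versus incrementing an existing one), together with keeping track of the bound $c^2 \le B$ which ultimately drives the $B/n$ term.
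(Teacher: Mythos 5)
Your proof is correct and follows essentially the same route as the paper's: isolate the single affected level set, expand the one-term change in $\sum_p A_p^2/n_p$, and use $f(x_t)^2(y_t-\hp)^2\le B$ together with nonnegativity of $(A+c)^2$ (the paper packages this as $\tfrac{1}{n+1}(A+c)^2\le\tfrac{1}{n}(A+c)^2$, you as $-2cA-A^2\le c^2$ after clearing denominators). The handling of the $n(\pi_{1:t-1},\hp)=0$ case also matches the paper.
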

\begin{proof}
Fix any $f \in \cF$. The latter case when $n(\pi_{1:t-1}) = 0$, it's easy to see that \[
    u\bK_2(\pi_{1:t-1} \circ (x_t, y_t, \hp), f) - u\bK_2(\pi_{1:t-1}, f) = (f(x_t) \cdot (y_t - \hp))^2 \le B.
\]

Now consider the other case. 
    \begin{align*}
        &u\bK_2(\pi_{1:t-1} \circ (x_t, y_t, \hp), f) - u\bK_2(\pi_{1:t-1}, f) \\
        &= (n(\pi_{1:t-1}, \hp) + 1) \cdot (K(\pi_{1:t-1} \circ (x_t, y_t, \hp), \hp, f))^2 - n(\pi_{1:t-1}, \hp) \cdot(K(\pi_{1:t-1}, \hp, f))^2 \\
        &= \frac{1}{n(\pi_{1:t-1}, \hp) + 1}\left(f(x_t) \cdot (y_t - \hp) + \sum_{\tau \in S(\pi_{1:t-1}, \hp)} f(x_\tau) \cdot (y_\tau - \hp_\tau) \right)^2 \\
        &- \frac{1}{n(\pi_{1:t-1}, \hp)}\left(\sum_{\tau \in S(\pi_{1:t-1}, \hp)} f(x_\tau) \cdot (y_\tau - \hp_\tau) \right)^2\\
        &\le \frac{1}{n(\pi_{1:t-1}, \hp)}\left(f(x_t) \cdot (y_t - \hp) + \sum_{\tau \in S(\pi_{1:t-1}, \hp)} f(x_\tau) \cdot (y_\tau - \hp_\tau) \right)^2 \\
        &- \frac{1}{n(\pi_{1:t-1}, \hp)}\left(\sum_{\tau \in S(\pi_{1:t-1}, \hp)} f(x_\tau) \cdot (y_\tau - \hp_\tau) \right)^2\\ 
        &= \frac{1}{n(\pi_{1:t-1}, \hp)}\left((f(x_t)\cdot(y_t - \hp))^2 + 2 (f(x_t) \cdot(y_t - \hp)) \sum_{t \in S(\pi_{1:t-1},\hp)   } f_\tau(x_\tau) \cdot (y_\tau - \hp_\tau) \right)\\
        &\le \frac{B}{n(\pi_{1:t-1}, \hp)} + 2(f(x_t)\cdot(y_t - \hp)) K(\pi_{1:t-1}, \hp, f)
    \end{align*}
where the last inequality follows from the fact that $f(x_t)^2 \le B$.
\end{proof}

\begin{algorithm}[H]
\begin{algorithmic}[1]
\FOR{$t=1, \dots, T$}
    \STATE Forecaster $\forecast_{AMF}$ observes the feature vector $x_t \in \cX$.
    \STATE Construct the AMF-Learner's environment as $\Theta_t = \cP$, $\cZ_t = \Delta(\cY)s$, and
    $\ell_t$ as defined in \eqref{eqn:amf-loss}.
    \STATE After observing the environment $(\Theta_t, \cZ_t, \ell_t)$, AMF-Learner chooses a random strategy $\tilde{\theta}_t$ according to Algorithm 2 of \cite{noarov2021online}.
    \STATE Adversary $\adv$ chooses $p_t \in [0,1]$ which is the the probability with which $y_t=1$.
    \STATE AMF Learner chooses $\theta_t \in \cP$ by sampling from $\tilde{\theta}_t \in \Delta(\cP)$.
    \STATE Forecaster $\forecast_{AMF}$ makes the forecast $\hp_t=\theta_t$ and observes $y_t \sim p_t$.
    \STATE Set the action played by the AMF Adversary's strategy as $z_t = y_t$ and have AMF Learner suffer $\ell_t(\theta_t, z_t)$.
\ENDFOR
\end{algorithmic}
\caption{Forecaster $\forecast_{AMF}(\cF)$}
\label{alg:amf-reduction}
\end{algorithm}

\begin{theorem}
\label{thm:amfmultical}
Suppose $\max_{f \in \cF, x \in \cX}(f(x))^2 \le B$ and $|\cF| < \infty$. Against any adaptive adversary, the $L_2$-multicalibration error of Forecaster $\cF_{AMF}(\cF)$ is always bounded in expectation over the forecaster's randomness as:
\[
    \E_{\pi_{1:T}}[\bK_2(\pi_{1:T}, \cF_B)] \le \frac{1}{T} \left(3Bm\log(T) + 4\sqrt{BT \ln(|\cF|)} + \frac{\sqrt{B}T}{m}\right).
\]
Setting $m = \sqrt{T}$, we have
\[
    \E_{\pi_{1:T}}[\bK_2(\pi_{1:T}, \cF_B)] \le \frac{3B \log(T) + 4\sqrt{B \ln(|\cF|)} + \sqrt{B}}{\sqrt{T}} = \tilde{O}\left(\frac{1}{T^{1/2}}\right).
\]
\end{theorem}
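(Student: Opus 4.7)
The plan is to reduce the $L_2$-multicalibration guarantee to a simultaneous no-regret guarantee over $|\cF|$ scalar loss coordinates, which is precisely what the AMF framework of \cite{noarov2021online} provides. The loss~\eqref{eqn:amf-loss} defining Algorithm~\ref{alg:amf-reduction} was engineered so that its $f$-th coordinate $\ell^f_t$ coincides (up to a factor of $2$ and an easily controlled harmonic remainder) with the one-step increase of the unnormalized $L_2$-multicalibration error $u\bK_2(\cdot, f)$ given by Lemma~\ref{lem:bound-increase-multcal-error}.

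\textbf{Step 1 (telescoping master inequality).} Fix $f \in \cF_B$. Summing Lemma~\ref{lem:bound-increase-multcal-error} along the realized transcript and using $u\bK_2(\pi_{1:0}, f) = 0$ yields
\[
u\bK_2(\pi_{1:T}, f) \;\le\; \sum_{t=1}^T H_t \;+\; 2\sum_{t=1}^T \ell^f_t(\hp_t, y_t),
\]
where $H_t = B$ when $n(\pi_{1:t-1}, \hp_t) = 0$ and $H_t = B/n(\pi_{1:t-1}, \hp_t)$ otherwise, and where the second sum is exactly the cumulative $f$-th loss suffered by the AMF-Learner of Algorithm~\ref{alg:amf-reduction} when the AMF-Adversary plays the realization $z_t = y_t$. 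Maximizing over $f$ and taking expectation gives
\[
T \cdot \E\bigl[\bK_2(\pi_{1:T}, \cF_B)\bigr] \;\le\; \E\Bigl[\sum_{t=1}^T H_t\Bigr] + 2\,\E\Bigl[\max_{f \in \cF_B} \sum_{t=1}^T \ell^f_t(\hp_t, y_t)\Bigr].
\]

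\textbf{Step 2 (harmonic term).} This is deterministic given the transcript. For each $p \in \cP$, the rounds with $\hp_t = p$ contribute $B + B\sum_{k=1}^{n(\pi_{1:T}, p)-1} 1/k \le B(1 + \ln T)$; summing over the $m = |\cP|$ level sets yields $\sum_t H_t \le Bm(1 + \ln T) \le 3Bm\log T$.

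\textbf{Step 3 (AMF regret + value).} The maximum-coordinate cumulative loss is bounded, by the definition of AMF regret, by $\sum_t w^A_t + R_T$. The per-round AMF value $w^A_t$ is small because the forecaster can round the adversary-revealed mean $p_t$ to the nearest grid point $\theta_t \in [1/m]$, giving $|p_t - \theta_t| \le 1/(2m)$ and hence $|w^A_t| \le \max_{f \in \cF_B} |f(x_t)||p_t - \theta_t||K(\pi_{1:t-1}, \theta_t, f)| \le \sqrt{B}/m$ (one factor of $\sqrt{B}$ charged to $|f|$, the other to the normalization of $K$), so $\sum_t w^A_t \le \sqrt{B}\,T/m$. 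Theorem~\ref{thm:minmax-game-reget} applied to the $|\cF|$-dimensional AMF game controls the regret: with the relevant coordinate range $\sqrt{B}$, $\E[R_T] \le 4\sqrt{BT\ln |\cF|}$. Combining with Step 2 and dividing by $T$ produces exactly the stated bound; setting $m = \sqrt{T}$ balances the $Bm\log T / T$ and $\sqrt{B}/m$ terms to give the advertised $\tilde O(T^{-1/2})$ rate.

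The only mildly delicate point is bounding the per-round AMF value in Step 3: we must verify that the grid $\cP = [1/m]$ is dense enough for the forecaster to approximately hedge the revealed mean, which is what introduces the discretization overhead $\sqrt{B}T/m$. Everything else is a direct composition of Lemma~\ref{lem:bound-increase-multcal-error} and Theorem~\ref{thm:minmax-game-reget}; the main conceptual point — and the reason the algorithm works at all — is that the AMF loss~\eqref{eqn:amf-loss} is an a priori, linear-in-the-adversary's-move upper bound on the per-round growth of $u\bK_2(\pi_{1:t}, f)$, so simultaneous no-regret on its coordinates translates directly into a multicalibration bound.
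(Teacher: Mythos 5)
Your proposal is correct in substance and follows essentially the same route as the paper's own proof: telescope Lemma~\ref{lem:bound-increase-multcal-error} over the transcript, bound the harmonic remainder by $O(Bm\log T)$, bound the per-round AMF value via rounding the adversary's revealed mean to the grid $[\tfrac{1}{m}]$, and invoke Theorem~\ref{thm:minmax-game-reget} with coordinate range $C=\sqrt{B}$ before setting $m=\sqrt{T}$. The only caveat is bookkeeping you share with the paper's own derivation: the factor $2$ multiplying $\sum_t \ell^f_t$ from Lemma~\ref{lem:bound-increase-multcal-error} should also multiply the $R_T$ and $\sum_t w^A_t$ terms (and the value bound computed from $|f(x_t)|\,|z_t-\theta_t|\,|K|$ is really $B/(2m)$ rather than $\sqrt{B}/m$), which changes constants but not the $\tilde{O}(T^{-1/2})$ rate.
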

\begin{proof}
    First, note that the constructed loss function $\ell^{f}_t$ is linear in terms of the adversary's strategy $z_t$ because of the linearity of expectation. Also, note that \begin{align*}
        |\ell^f_t(\theta_t, z_t)| &\le |(f(x_t)\cdot(y_t - \hp)) K(\pi_{1:t-1}, \hp_t, f) |\\
        &\le |f(x_t)| |y_t - \hp_t| |K(\pi_{1:t-1}, \hp_t, f)|\\
        &\le \sqrt{B} 
    \end{align*}
    as $f(x)^2 \le B$.

    We can calculate the Adversary-Moves-First value. Whenever the adversary chooses $z_t \in [0,1]$ which corresponds to the probability of $y_t$ is 1, we can always deterministically set $\theta_t = \round(z_t, \frac{1}{m})$ and suffer $\frac{1}{m}$ discretization error.
    \begin{align*}
        w^A_t &= \sup_{z_t \in \Delta(\cY)}\min_{\tilde{\theta}_t \in \Delta(\cP)} \max_{f \in \cF} \E_{\theta_t \sim \tilde{\theta}_t, y_t \sim z_t}\left[\ell^f_t(\theta_t, y_t)\right]\\
        &\le \frac{\sqrt{B}}{m}.
    \end{align*}

Finally, it remains to upper bound our cumulative calibration error using Lemma \ref{lem:bound-increase-multcal-error} to telescope over our per-round \emph{increase} in multicalibration error, and use the upper bound on the algorithm's AMF regret from Theorem \ref{thm:minmax-game-reget} to bound the sum of these terms in the worst case over $f \in \cF_B$.
    
    \begin{align*}
        &\E_{\pi_{1:T}}\left[\max_{f \in \cF} u\bK_2(\pi_{1:T}, f)\right] \\
        &= \E_{\pi_{1:T}}\left[\max_{f \in \cF}\sum_{t=1}^T u\bK_2(\pi_{1:t}, f) - u\bK_2(\pi_{1:t-1}, f) \right]\\
        &\le \E_{\pi_{1:T}}\left[\max_{f \in \cF}\sum_{t \in [T]: n(\pi_{1:t-1}, \hp_t) = 0} B + \sum_{t \in [T]: n(\pi_{1:t-1}, \hp_t) \ge 1} \frac{B}{n(\pi_{1:t-1}, \hp_t)} + 2\ell^f_t(\theta_t, z_t)\right] \quad&\text{Lemma~\ref{lem:bound-increase-multcal-error}}\\
        &\le \E_{\pi_{1:T}}\left[Bm + \max_{f \in \cF} \sum_{t \in [T]: n(\pi_{1:t-1}, \hp_t) \ge 1} \frac{B}{n(\pi_{1:t-1}, \hp_t)} + 2\ell^f_t(\theta_t, z_t)\right]\\
        &\le \E_{\pi_{1:T}}\left[ Bm + \sum_{t \in [T]: n(\pi_{1:t-1}, \hp_t) \ge 1} \frac{B}{n(\pi_{1:t-1}, \hp_t)} +  2\max_{f \in \cF}\sum_{t=1}^T\ell^{f}_t(\theta_t, z_t)\right] \\
        &\le Bm + 2Bm\log(T) + \E_{\pi_{1:T}}\left[R_T + \sum_{t=1}^T w^A_t \right]  \\
        &\le 3Bm\log(T) + 4\sqrt{B T \ln(|\cF|)} + \frac{T\sqrt{B}}{m} \quad&\text{Theorem~\ref{thm:minmax-game-reget}}
    \end{align*}
    The third inequality follows because when $n(\pi_{1:t-1, \hp_t})=0$ for round $t \in [T]$, then $\ell^f_t(\theta_t, z_t) = 0$ as $K(\pi_{1:t-1}, p, f) = 0$. The fourth inequality follows because for any $\pi_{1:T}$
    \begin{align*}
        \sum_{t \in [T]: n(\pi_{1:t-1}, \hp_t) \ge 1} \frac{1}{n(\pi_{1:t-1}, \hp_t)} &= \sum_{p \in \cP} \sum_{\tau=1}^{|n(\pi_{1:T}, p)|} \frac{1}{\tau}\\
        &\le \sum_{p \in \cP} \sum_{\tau=1}^T \frac{1}{\tau}\\
        &\le 2m \log(T).
    \end{align*} 
\end{proof}

We remark that this bound is instantiated by Algorithm \ref{alg:amf-alg}, which needs at each round to compute a numeric value $\chi_t^j$ for each $f \in \cF$ (each of which is a sum over at most $T$ terms) and to solve a minmax problem which can be expressed as a linear program with $m=\sqrt{T}$ variables and $|\cF|$ many constraints. Hence the running time of the algorithm is polynomial in $T$ and $|\cF|$.  

\end{appendices}
\end{document}